\documentclass{article}

\usepackage[preprint]{neurips_2026}

\usepackage[utf8]{inputenc} 
\usepackage[T1]{fontenc}    
\usepackage{hyperref}       
\usepackage{url}            
\usepackage{booktabs}       
\usepackage{amsfonts}       
\usepackage{nicefrac}       
\usepackage{microtype}      
\usepackage{xcolor}         

\usepackage{graphicx}
\usepackage{subcaption}
\usepackage{amsmath}
\usepackage{amssymb}
\usepackage{mathtools}
\usepackage{amsthm}
\usepackage{thmtools, thm-restate}
\usepackage{multirow}
\usepackage{bm}
\usepackage{soul}
\usepackage[shortlabels]{enumitem}
\usepackage{tikz}
\usetikzlibrary{calc,positioning}
\usepackage{babel}
\usepackage{algorithm,algorithmic}

\usepackage[capitalize,noabbrev]{cleveref}

\allowdisplaybreaks

\newcommand{\ab}{\mathbf{a}}

\newcommand{\eb}{\mathbf{e}}

\newcommand{\gb}{\mathbf{g}}

\newcommand{\ub}{\mathbf{u}}

\newcommand{\wb}{\mathbf{w}}
\newcommand{\xb}{\mathbf{x}}
\newcommand{\yb}{\mathbf{y}}
\newcommand{\zb}{\mathbf{z}}
\newcommand{\Ab}{\mathbf{A}}
\newcommand{\Bb}{\mathbf{B}}

\newcommand{\Db}{\mathbf{D}}

\newcommand{\Fb}{\mathbf{F}}

\newcommand{\Hb}{\mathbf{H}}
\newcommand{\Ib}{\mathbf{I}}

\newcommand{\Mb}{\mathbf{M}}

\newcommand{\Ob}{\mathbf{O}}

\newcommand{\Ub}{\mathbf{U}}
\newcommand{\Vb}{\mathbf{V}}

\newcommand{\Xb}{\mathbf{X}}

\newcommand{\Acal}{\mathcal{A}}

\newcommand{\Ccal}{\mathcal{C}}
\newcommand{\Dcal}{\mathcal{D}}

\newcommand{\Fcal}{\mathcal{F}}

\newcommand{\Mcal}{\mathcal{M}}

\newcommand{\Scal}{{\mathcal{S}}}

\newcommand{\Ucal}{\mathcal{U}}

\newcommand{\EE}{\mathbb{E}} 
\newcommand{\VV}{\mathbb{V}} 
\newcommand{\II}{\mathbb{I}} 
\newcommand{\PP}{\mathbb{P}} 
\newcommand{\RR}{\mathbb{R}} 
\newcommand{\Tr}{\mathop{\mathrm{tr}}}

\newcommand*{\Thetab}{\boldsymbol{\Theta}}
\newcommand*{\thetab}{\boldsymbol{\theta}}

\newcommand*{\etab}{\boldsymbol{\eta}}
\newcommand*{\xib}{\boldsymbol{\xi}}
\newcommand*{\gammab}{\boldsymbol{\gamma}}

\newcommand*{\vPhib}{\boldsymbol{\varPhi}}

\global\long\def\vPhib{\boldsymbol{\varPhi}}%
\global\long\def\argmax#1{\underset{#1}{\text{argmax }}}%
\global\long\def\abs#1{\left|#1\right|}%
\global\long\def\norm#1{\big\Vert #1\big\Vert }%
\global\long\def\Reg{\text{{R}}}%
\global\long\def\Tr{\text{Tr}}%
\global\long\def\Sgn{\text{sign}}%
\global\long\def\Ubrace#1#2{\underset{#2}{\underbrace{#1}}}%

\def\UCB{\texttt{VA-MNL}}

\theoremstyle{plain}
\newtheorem{theorem}{Theorem}
\newtheorem{proposition}[theorem]{Proposition}
\newtheorem{lemma}[theorem]{Lemma}
\newtheorem{corollary}[theorem]{Corollary}
\theoremstyle{definition}
\newtheorem{definition}{Definition}

\theoremstyle{remark}
\newtheorem{remark}[theorem]{Remark}

\title{Variance-Adaptive Optimal Algorithm for Reinforcement Learning with Multinomial Logit Function Approximation}

\author{%
  Wonyoung Kim \\
  Chung-Ang University \\
  \texttt{wyk7@cau.ac.kr} \\
  \And
  Min-Hwan Oh \\
  Seoul National University \\
  \And
  Garud Iyengar \\
  Columbia University \\
  \And
  Assaf Zeevi \\
  Columbia University \\
}

\begin{document}

\maketitle

\begin{abstract}
Reinforcement learning with multinomial logistic (MNL) function approximation has become an important framework due to its flexibility and broad applicability. 
While existing studies have established regret guarantees under worst-case analysis, they do not capture how performance depends on the variability of the interaction between the learner and the environment. 
In this paper, we develop a new theoretical analysis for MNL-based Markov decision processes that yields explicit variance-adaptive regret bounds.
Our algorithm is computationally efficient and achieves the instance-wise optimal rate of regret, narrowing the gap between upper and lower bounds. 
Our numerical experiments validate that our method learns optimal policies more efficiently than conventional approaches.
\end{abstract}

\section{Introduction}
\label{sec:Intro}

Reinforcement learning (RL) studies how an agent can learn to make sequential decisions by interacting with an environment so as to maximize cumulative rewards \citep{puterman2014markov, sutton2018reinforcement}.  
In the classical \emph{tabular} setting--- Markov decision processes (MDPs) with finite state and action spaces, decades of work have developed a rich theoretical understanding: algorithms achieve near-minimax optimal regret and sample-efficiency guarantees that scale with the number of states/actions and the horizon~\citep{kearns2002near,
brafman2002r,
azar2017minimax,dann2017unifying,
zhang2020almost, li2021breaking, zhang2024settling, lee2025minimax}.  
However, tabular methods become infeasible as the state--action space grows. 
\emph{feature-based} RL is a way to extend tabular RL to large-scale problems, where either the transition model and/or value functions are approximated in a low-dimensional function class. 
In particular, \textit{linear} function approximation~\citep{ayoub2020model,ishfaq2021randomized,zanette2020frequentist,jin2023provably} represents 
the transition model as linear combinations of known feature vectors, enabling theoretical tractability that scales with the feature dimension and generalization beyond visited states

However, the linear MDP paradigm can be fundamentally limited for modeling \emph{transition probabilities}. 
In many environments, the next-state distribution is required to lie on a probability simplex, whereas a generic linear parametrization does not naturally preserve non-negativity and normalization, and may therefore impose an unnatural structure on the dynamics \citep{hwang2023model,lee2024demystifying}.  
This motivates parametric models that retain the benefits of low-dimensional feature representations while respecting the geometry of probability distributions. To that end, the \emph{multinomial logistic} (MNL) MDP model has been proposed as a principled alternative~\citep{hwang2023model, cho2024randomized}: by parameterizing transition probabilities through a softmax/logistic structure, MNL-MDPs produce valid distributions by construction and can capture richer, non-linear dependence on features using a compact parameterization.

Despite its promises, the existing theoretical results are still dominated by \emph{worst-case} regret analyses. 
That is, existing bounds do not reflect the \emph{realized variance} induced by the actual interaction trajectory between an algorithm and a specific MDP instance \citep{hwang2023model,cho2024randomized,li2024provably}.  
From the viewpoint of modern learning theory, this is unsatisfying: refined, data-dependent guarantees—motivated by ideas such as local complexity and variance-sensitive error bounds—often provide a sharper characterization of performance when the observed sample path is “easy” \citep{bartlett2006local,xu2025towards}. Moreover, even at the level of worst-case regret, optimality in MNL-MDPs has not been fully settled. Table~\ref{tab:MNL-comparison} summarizes the state of the art and highlights the missing piece: regret bounds that are simultaneously (i) \emph{variance-adaptive} and (ii) \emph{optimal}.

\begin{table*}[t]
\caption{Regret bound comparison of MNL-based RL algorithms. Here, $d$ denotes the feature dimension, $H$ is the episode length, and $K$ is the total number of episodes. The parameter $\kappa$ represents the problem-dependent lower bound on the variance over the bounded parameter set as defined in~\eqref{eq:kappa}, while $\sigma_{k,h}^2$ denotes the realized variance along an interaction sample as defined in~\eqref{eq:sigma}.}
\label{tab:MNL-comparison}
\begin{center}
\begin{small}
\begin{tabular}{lllccc}
\toprule
& Paper & Regret Bound & Variance-Adaptive & Optimal  \\
\midrule
\multirow{5}{*}{\begin{tabular}[c]{@{}c@{}}Upper \\ Bound \end{tabular}} 
& \citet{hwang2023model} & $\tilde{O}(\kappa^{-1/2} dH^{3/2}\sqrt{K} )$ & No & No  \\
& \citet{cho2024randomized} & $\tilde{O}(dH^{2}\sqrt{K} )$ & No & No  \\
& \citet{li2024provably} & $\tilde{O}(dH^{2}\sqrt{K} )$ & No & No  \\
& \textbf{This work (Corollary~\ref{cor:regret_bound})} & $\tilde{O}(dH^{3/2}\sqrt{K})$ & No & \textbf{Yes}  \\
& \textbf{This work (Theorem~\ref{thm:regret_bound}}) & $\tilde{O}\left(dH\sqrt{ \sum_{k=1}^{K}\sum_{h=1}^{H} \sigma^2_{k,h}} \right)$ & \textbf{Yes} & \textbf{Yes}  \\
\midrule
\multirow{2}{*}{\begin{tabular}[c]{@{}c@{}}Lower \\ Bound\end{tabular}} 
& \citet{park2025infinite} & $\Omega(dH^{3/2}\sqrt{K})$ & No & - \\
& \textbf{This work (Theorem~\ref{thm:lower_bound})} & $\Omega \left(dH\sqrt{ \sum_{k=1}^{K}\sum_{h=1}^{H} \sigma^2_{k,h}} \right)$ & \textbf{Yes} & -  \\
\bottomrule
\end{tabular}
\end{small}
\end{center}
\vskip -0.1in
\end{table*}

In this work, we establish the first variance-adaptive regret upper and lower bounds for RL with MNL function approximation, significantly bridging the gap between pessimistic worst-case theory and variance-adaptive empirical performance. 
Our technical contribution is centered on a novel integration of the Online Newton Step (ONS) algorithm with an estimation framework to derive an exact ellipsoid confidence set for the parameter estimator. 
To achieve this, we introduce three key theoretical innovations: (i) a tighter regret analysis for ONS applied specifically to log-sum-exp functions; (ii) a new Hessian approximation technique that stabilizes curvature estimation in high-dimensional discrete spaces; and (iii) a refined second-order expansion of the confidence bound that captures local variance more accurately than traditional first-order methods. 
Collectively, these ingredients yield a variance-adaptive regret guarantee that reflects the intrinsic difficulty of the MDP instance, providing a sharper and more robust theoretical foundation for MNL-based reinforcement learning.
The main contribution of this paper is as follows:

\begin{itemize}
\item 
\textbf{Variance-adaptive optimal regret bound:} 
We establish the first variance-dependent regret upper bound (Theorem~\ref{thm:regret_bound}) for MNL-MDPs.
Unlike previous minimax analyses, our bound is \emph{adaptive to the variance on the realized sample path}, capturing the intrinsic difficulty of the MDP instance by adaptively scaling with the local variance and information geometry of the observed data sequence.

\item
\textbf{First optimal regret for for MNL-MDPs:}
We show that our established bounds are provably optimal.
To complement the upper bound, we prove a lower bound (Theorem~\ref{thm:lower_bound}) that matches the variance-dependent regret upper bound up to polylogarithmic factors.
We also provide the minimax upper bound that matches the lower bound for the first time in MNL-MDPs.

\item
\textbf{Efficient Hessian estimation and confidence ellipsoids:} On the technical side, we propose a new Hessian estimation technique that significantly improves upon existing exponential error dependencies (Lemma~\ref{lem:Hessian_bound} and Lemma~\ref{lem:Information_matrix_bound}). 
Furthermore, we develop a specialized estimation scheme that transforms online curvature updates into a \textit{self-normalized confidence ellipsoid} (Theorem~\ref{thm:self}), providing a rigorous theoretical foundation for parameter estimation, which can be of independent interest beyond MNL-MDPs.

\item 
\textbf{Variance-adaptive algorithm for MNL-MDP:} Our proposed algorithm uses \textit{second-order approximated upper confidence bound exploration strategy} to capture the realized variance and achieve variance-adaptive regret bound. 
This algorithm achieves the first regret guarantees in this setting that are simultaneously variance-adaptive and optimal up to logarithmic factors, effectively bridging the gap between pessimistic regret bound and empirical efficiency. Our numerical experiments show that the proposed algorithm is computationally efficient with superior performance.

\end{itemize}

\section{Related Works}
\label{sec:related_work}

Table~\ref{tab:MNL-comparison} summarizes the regret comparison for RL algorithms with MNL function approximation. \citet{hwang2023model} first established a provably efficient algorithm for this setting. Subsequent works by \citet{cho2024randomized} and \citet{li2024provably} provided $\kappa$-free regret bounds by employing randomized exploration and online mirror descent (OMD), respectively, where $\kappa$ is a problem-dependent constant that serves as a lower bound on the minimum eigenvalue of the Hessian across the feasible parameter space (Definition~\ref{def:kappa}).
However, these results are not instance-dependent and fail to adapt to the realized variance along the sample path. In contrast, our work is the first to establish regret bounds for MNL-MDPs that are both variance-adaptive and strictly tighter than existing theoretical guarantees.

While the MNL bandit model has been extensively studied within the context of assortment optimization and discrete choice modeling \citep{agrawal2017thompsonMNL, agrawal2019mnllbandit, oh2021multinomial, chen2024robust, wang2018near, perivier2022dynamic, lee2024improved}, it remains distinct from the MNL-MDP framework. Notably, neither is a special case of the other. In the MNL bandit setting, the agent selects an assortment of items where the resulting rewards (e.g., a customer's selection) follow a multinomial distribution. Conversely, in an MNL-MDP, the agent selects a single action that determines state transition probabilities via an MNL function.

\section{Multinomial Logit Markov Decision Process}
\label{sec:MNL_MDP}

We consider episodic Markov decision processes (MDPs) denoted by $\Mcal(\Scal,\Acal,H,\PP = \{\PP_{h}: h \in [H]\},r)$, where $\Scal$ is the state space, $\Acal$ is the action space, $H$ is the length of the episode, $K$ is the total number of episodes, $\{\PP_{h}: h \in [H]\}$ is the collection of state transition probability distributions for each $h \in [H]$, and $r$ is a reward function.
In each episode $k\in[K]$, the environment chooses an initial state $s_{k,1}\in\Scal$ and the state evolves for $H$ steps. At each step
$h\in[H]$ in every episode $k$, the decision maker observes the state $s_{k,h}\in\Scal$, selects an action $a_{k,h}\in\Acal$, and receives
an immediate reward $r(s_{k,h},a_{k,h})\in[0,1]$, and the next state is generated from the transition probability distribution
$\PP_{h}(\cdot|s_{k,h},a_{k,h})$. This process repeats until the end of the episode. 
A policy $\pi:\Scal\times[H]\to\Acal$ is a (potentially stochastic) function that determines which action the decision maker chooses in state $s$ and step $h\in[H]$ , i.e., $a \sim\pi(s,h)$.
The value $V_h^{\pi}(s)$ of policy $\pi$ in state $s \in \Scal$ and step $h\in [H]$ is defined as the expected sum of rewards under the policy $\pi$ until the end of the episode when starting from $s_{h}=s$, i.e.,  
\[
V_h^{\pi}(s)=\EE_{\pi}\big[\sum_{h^{\prime}=h}^{H}r(s_{h^{\prime}},a_{h^{\prime}})\big|s_{h}=s\big],
\]
where $\EE_{\pi}$ is the expectation over the trajectory generated by the policy-environment interaction measure.
The action-value function $Q_{h}(s,a;\pi)$ of the policy $\pi$ is defined as
\[
Q^{\pi}_{h}(s,a):=\EE_{\pi}\Big[\sum_{h^{\prime}=h}^{H}r\big(s_{h^{\prime}},a_{h^{\prime}}\big)\Big|s_{h}=s,a_{h}=a\Big].
\]
We define the optimal policy $\pi^{\star}:=(\pi_{1}^{\star},\ldots,\pi_{H}^{\star})$ to be a policy that achieves the highest possible value at each step $h\in[H]$ and for every state $s\in\Scal$. 
We denote the value of the optimal policy $V_{h}^{\star}(s):=V_{h}^{\pi^{\star}}(s)$ and the action-value function of the optimal policy $Q_{h}^{\star}(s,a):=Q^{\pi^{\star}}_{h}(s,a)$, respectively. 
Thus, $V_{h}^{\pi^\star}(s) = \max_{a \in \Acal} Q_h^{\pi^\star}(s,a)$.
For simplicity, we define $\PP_{h}V_{h+1}(s,a):=\EE_{s^{\prime}\sim\PP_{h}(\cdot|s,a)}[V_{h+1}(s^{\prime})]$.
Recall that the action-value functions satisfy the Bellman recursion:
\[
Q_{h}^{\pi}(s,a) 
= r(s,a) 
+ \PP_{h} V^{\pi}_{h+1}(s,a), \quad
Q_{h}^{\star}(s,a) 
= r(s,a) 
+ \PP_{h} V_{h+1}^{\star}(s,a).
\]
where $V_{H+1}^{\pi}(s)=V_{H+1}^{\star}(s)=0$.
Let $\Pi_K = \big\{\pi_{k}:k\in[K]\big\}$ denote a collection of policies, where $\pi_k$ is the policy of the decision maker for the $k\in[K]$-th episode. 
Then the cumulative regret of $\Pi_K$ for MDP $\Mcal:=\Mcal(\Scal,\Acal,H,\PP,r)$ is defined as
\[
\Reg_{\Pi_K}(\Mcal) \!:=\!\sum_{k=1}^{K} V_{1}^{\star}(s_{k,1}) \!-\! V^{\pi_k}_{1}(s_{k,1}).
\]
The goal of the decision maker is to choose $\Pi_K$ that minimizes regret. Note that this is equivalent to choosing $\Pi_K$ that maximizes cumulative rewards over $K$ episodes.
We assume that the transition kernel of the MDP $\Mcal(\Scal,\Acal,H,\PP,r)$ is given by an MNL model.
\begin{definition}[Multinomial Logit MDP] 
Let $\eb_s$ denote the $s$-th Euclidean basis. 
The state transition function when an action $a_h\in\Acal$ is taken at the state $s_h$ is given by
\begin{equation}
\PP_h\!(s_{h+1} \!\!\mid\!\! s_h, a_h, \thetab_h^\star) 
\!=\! 
\frac{
  \exp\!\big(
    \eb_{s_{h+1}}^\top 
    \vPhib(s_h, a_h) 
    \thetab_h^\star
  \big)
}{\underset{\tilde{s} \in \Scal_h\!(s_h, a_h)}{\sum}\!\!
  \exp\!\big(
    \eb_{\tilde{s}}^\top\! 
    \vPhib(s_h, a_h) 
    \thetab_h^\star
  \big)
},
\label{eq:transit_kernel}
\end{equation}
where $\Scal_h(s,a)$ is the set of states that are reachable from $s\in \Scal$ in step $h\in[H]$ using action $a \in \Acal$. 
The feature map $\vPhib:\Scal\times\Acal\to\RR^{|\Scal|\times d}$  is known, and the transition parameter $\Thetab^{\star} = [\thetab_{h}^{\star}]_{h \in [H]}$ is unknown. 
The feature map $\vPhib$ and $\thetab^{\star}$ are bounded: $\|\vPhib(s,a)^\top\eb_{s^{\prime}}\|_{2}\le B_{\vPhib}$ for all $s', s\in\Scal$, and $a\in\Acal$, and $\max_{h\in[H]}\|\thetab_h^{\star}\|_{2}\le B_{\thetab}$.
\end{definition}
Note that each step $h$ has a different parameter $\thetab^{\star}_h$, which makes the problem harder to learn than the homogeneous transition probability studied in \citet{hwang2023model}.
While the size of the state space $|\Scal|$ may be exponentially large, it does not affect the regret bound.
The cardinality of the reachable set $\Scal_h(s,a)$ also does not affect the regret bound but affects the computational complexity for estimating the transition kernel.
According to \citet{hwang2023model}, there are many examples where the cardinality of the reachable set is often much smaller than the full state space.

MNL-MDPs model sequential decision problems in which a single action leads to one of several reachable next states, and the relative transition probabilities among those states follow a MNL structure. 
For example, in recommendation or online engagement systems, one displayed policy/action may lead a user to several possible future states (e.g., click, continue browsing, switch category, or exit), and the long-term objective depends on how these state transitions unfold over time.

\section{Multinomial Logit Parameter Estimation Procedure}
\label{sec:proposed_method}
In this section, we present our proposed online estimation method for the MNL-MDP. 
We first propose an online algorithm for maximizing the likelihood of a collection of multinomial random variables (Section~\ref{sec:ONS}).
Next, we show how to convert a regret bound of the negative log-likelihood into a confidence region for the parameter $\Thetab^{\star}$ that is compatible with the UCB-style algorithms (Section~\ref{subsec:online_to_confidence}).

\begin{algorithm}[t]
\caption{Online-to-Confidence-Ellipsoid Estimator Update (\texttt{OCEE})}
\label{alg:ONS-MST}
\begin{algorithmic}[1]
\REQUIRE online estimator $\boldsymbol{\theta}$, information matrix $\mathbf{H}$, learning rate $\eta$, transition sample $(s,a,s^{\prime})$, parameter bound $B_{\boldsymbol{\theta}}$, moment vector $\gammab$.

\STATE Compute the gradient and update the information matrix:
\[
\gb = \boldsymbol{\Phi}(s,a)^{\top}\big(-\mathbf{e}_{s^{\prime}} + \nabla L_{\mathcal{S}(s,a)}(\boldsymbol{\Phi}(s,a)\boldsymbol{\theta})\big), \quad
\mathbf{H} \gets \mathbf{H} + \gb \gb^{\top}
\]

\STATE \textbf{Newton step}: 
$\tilde{\boldsymbol{\theta}} \gets \boldsymbol{\theta} - \eta \mathbf{H}^{-1} \gb$

\STATE \textbf{Projection step}:
$\thetab \gets \text{argmin}_{\|\boldsymbol{\theta}\|_2 \le B_{\boldsymbol{\theta}}} \|\boldsymbol{\theta} - \tilde{\boldsymbol{\theta}}\|_{\mathbf{H}}^2$

\STATE Update the moment vector: $\gammab \gets \gammab +\gb \gb^\top \thetab$

\STATE Compute the estimator: $\widehat{\thetab}=\Hb^{-1}\gammab$

\ENSURE Estimator $\widehat{\boldsymbol{\theta}}$, updated online estimator $\thetab$, updated Hessian $\mathbf{H}$, updated moment vector $\gammab$.

\end{algorithmic}
\end{algorithm}

\subsection{Tight Regret Bound for Online Newton Step}
\label{sec:ONS}
Let $L_{\Scal_h(s,a)}(\etab):=\log\sum_{\bar{s}\in\Scal_{h}(s,a)}\exp(\eb^\top_{\bar{s}}\etab)$  denote the log-sum function.
For step $h\in[H]$ in episode $k\in[K]$, the negative log-likelihood is
\begin{equation}
\begin{aligned}
\ell_{k,h}(\thetab) 
&:= -\log \PP_h(s_{k,h+1} \mid s_{k,h}, a_{k,h}, \thetab) \\
&= -\eb_{s_{k,h+1}}^{\top} \vPhib(s_{k,h}, a_{k,h}) \thetab  + L_{\Scal_h(s_{k,h}, a_{k,h})} \big(\vPhib(s_{k,h}, a_{k,h}) \thetab \big).
\end{aligned}
\label{eq:log_likelihood}  
\end{equation}
For each $h\in[H]$, let $\thetab_{0,h}:=\boldsymbol{0}$ and given $\thetab_{k-1,h}$ we define
\begin{equation}
\thetab_{k,h} := \thetab_{k-1,h} - \eta \widehat{\Hb}_{k,h}^{-1} \nabla \ell_{k,h}(\thetab_{k-1,h})
\label{eq:ONS}
\end{equation}
the Online Newton Step (ONS) estimate of $\thetab_h^\star$, where
\begin{equation}
\widehat{\Hb}_{k,h}:=\sum_{\nu=1}^{k}\nabla\ell_{\nu,h}(\thetab_{\nu-1,h})\nabla\ell_{\nu,h}(\thetab_{\nu-1,h})^{\top}+\epsilon\Ib_{d}
\label{eq:Hessian}
\end{equation}
denote the \emph{observed information matrix} over the $k$ transitions from step $h$ to $h+1$ with transition samples $\{(s_{\nu,h},a_{\nu,h},s_{\nu,h+1})\}_{\nu=1}^{k}$ and initial parameter $\epsilon>0$.
Our analysis utilizes the Fisher Information Identity:  
\[
  \mathbb{E}[\nabla\ell_{\nu,h}(\theta^{\star}_{h})\nabla\ell_{\nu,h}(\theta^{\star}_{h})^\top \mid \mathcal{F}_{\nu,h}] = \nabla^2\ell_{\nu,h}(\theta^{\star}_h),
\]
where $\Fcal_{\nu,h}$ is the sigma-algebra generated by
$\cup_{h^{\prime}=1}^{H}\cup_{\nu^{\prime}=1}^{\nu-1}\{s_{\nu^{\prime},h^{\prime}},a_{\nu^{\prime},h^{\prime}}\}$
and $\cup_{h^{\prime}=1}^{h}\{s_{\nu,h^{\prime}},a_{\nu,h^{\prime}}\}$.
This identity allows us to relate the second-order curvature directly to the variance of the gradients at the true parameter $\theta^{\star}_h$.

If the gradient $\nabla\ell_{\nu,h}(\theta^{\star}_{h})$ is approximated by the gradient at the current iterate $\theta_{\nu,h}$, the resulting error is quadratic in the difference $\|\nabla\ell_{\nu,h}(\theta_{\nu-1,h}) - \nabla\ell_{\nu,h}(\theta_h^{\star})\|$. 
\begin{restatable}[Quadratic Error Approximation for the True Hessian]{lemma}{QuadError}
\label{lem:exp_to_quad}
Let $\gb_{\nu,h}:=\nabla\ell_{\nu,h}(\thetab_{\nu-1,h})$. 
Then,
\begin{equation}
 \EE\big[ \gb_{\nu,h} \gb_{\nu,h}^{\top} \,\big|\, \Fcal_{\nu,h} \big]
- \nabla^2 \ell_{\nu,h}(\thetab_h^{\star}) \!=\! 
\big(\nabla\ell_{\nu,h}(\thetab_{\nu-1,h})\!-\!\nabla\ell_{\nu,h}(\thetab_h^{\star})\big)\big(\nabla\ell_{\nu,h}(\thetab_{\nu-1,h})\!-\!\nabla\ell_{\nu,h}(\thetab_h^{\star})\big)^{\top}.
\label{eq:diff}
\end{equation}
\end{restatable}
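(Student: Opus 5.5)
The plan is to split the gradient at the current iterate into a part that is fixed given $\Fcal_{\nu,h}$ and a mean-zero noise part. Once the cross terms vanish, \eqref{eq:diff} follows from the Fisher Information Identity stated above.

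\textbf{Step 1 (decomposition).} Write $\Phi:=\vPhib(s_{\nu,h},a_{\nu,h})$ and $\Scal:=\Scal_h(s_{\nu,h},a_{\nu,h})$. From \eqref{eq:log_likelihood},
\[
\nabla\ell_{\nu,h}(\thetab)=\Phi^\top\big(-\eb_{s_{\nu,h+1}}+\nabla L_{\Scal}(\Phi\thetab)\big).
\]
Hence
\[
\mathbf{D}_\nu:=\nabla\ell_{\nu,h}(\thetab_{\nu-1,h})-\nabla\ell_{\nu,h}(\thetab_h^\star)=\Phi^\top\big(\nabla L_{\Scal}(\Phi\thetab_{\nu-1,h})-\nabla L_{\Scal}(\Phi\thetab^\star_h)\big),
\]
because the random term $\eb_{s_{\nu,h+1}}$ cancels. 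I will then check that $\mathbf{D}_\nu$ is $\Fcal_{\nu,h}$-measurable, using two facts.
\begin{itemize}
\item $(s_{\nu,h},a_{\nu,h})\in\Fcal_{\nu,h}$ by definition.
\item $\thetab_{\nu-1,h}$ is built by \eqref{eq:ONS} from samples $\{(s_{\nu',h},a_{\nu',h},s_{\nu',h+1})\}_{\nu'\le\nu-1}$. All of these lie in $\Fcal_{\nu,h}$, which contains every state and action of the earlier episodes.
\end{itemize}
This measurability check is the only point needing care: the lemma fails if the iterate were allowed to depend on $s_{\nu,h+1}$.

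\textbf{Step 2 (expand).} Set $\ub_\nu:=\nabla\ell_{\nu,h}(\thetab_h^\star)$, so that $\gb_{\nu,h}=\mathbf{D}_\nu+\ub_\nu$. Then
\[
\EE[\gb_{\nu,h}\gb_{\nu,h}^\top\mid\Fcal_{\nu,h}]=\mathbf{D}_\nu\mathbf{D}_\nu^\top+\mathbf{D}_\nu\EE[\ub_\nu\mid\Fcal_{\nu,h}]^\top+\EE[\ub_\nu\mid\Fcal_{\nu,h}]\mathbf{D}_\nu^\top+\EE[\ub_\nu\ub_\nu^\top\mid\Fcal_{\nu,h}].
\]

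\textbf{Step 3 (mean zero).} Let $\mathbf{p}^\star:=\nabla L_{\Scal}(\Phi\thetab_h^\star)$. This is exactly the MNL probability vector of \eqref{eq:transit_kernel}, supported on $\Scal$. Since $s_{\nu,h+1}\sim\PP_h(\cdot\mid s_{\nu,h},a_{\nu,h},\thetab_h^\star)$, we have $\EE[\eb_{s_{\nu,h+1}}\mid\Fcal_{\nu,h}]=\mathbf{p}^\star$. Therefore $\EE[\ub_\nu\mid\Fcal_{\nu,h}]=\Phi^\top(-\mathbf{p}^\star+\mathbf{p}^\star)=0$, and both cross terms vanish.

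\textbf{Step 4 (Fisher identity).} By the Fisher Information Identity stated above, the last term equals $\nabla^2\ell_{\nu,h}(\thetab_h^\star)$. One can also verify this directly:
\[
\EE[(\eb_{s'}-\mathbf{p}^\star)(\eb_{s'}-\mathbf{p}^\star)^\top\mid\Fcal_{\nu,h}]=\mathrm{diag}(\mathbf{p}^\star)-\mathbf{p}^\star\mathbf{p}^{\star\top}=\nabla^2L_{\Scal}(\Phi\thetab_h^\star),
\]
and $\nabla^2\ell_{\nu,h}(\thetab_h^\star)=\Phi^\top\nabla^2L_{\Scal}(\Phi\thetab_h^\star)\Phi$.

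Subtracting this term from the expansion in Step 2 leaves exactly $\mathbf{D}_\nu\mathbf{D}_\nu^\top$, which is the claimed identity \eqref{eq:diff}.
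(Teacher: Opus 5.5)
Your proof is correct and follows the paper's argument. The paper likewise writes $\gb_{\nu,h}$ as the $\Fcal_{\nu,h}$-measurable difference $\nabla\ell_{\nu,h}(\thetab_{\nu-1,h})-\nabla\ell_{\nu,h}(\thetab_h^\star)$ plus the score at $\thetab_h^\star$, removes the cross terms because the score has zero conditional mean, and identifies the remaining second moment with $\nabla^2\ell_{\nu,h}(\thetab_h^\star)$ via Proposition~\ref{prop:mean_var}.

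Your explicit measurability check is a point the paper leaves implicit. Strictly, for $h=H$ the filtration must also contain the terminal states $s_{\nu',H+1}$, which the paper's definition of $\Fcal_{\nu,h}$ omits; enlarging it in this way changes nothing in the argument.
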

The cumulative rank-one update of $\gb_{\nu,h}\gb_{\nu,h}^\top$ effectively captures the curvature in the direction of the gradient. 
As these gradients accumulate over iterations, the resulting matrix provides a robust approximation of the second-order structure without relying on global Hessian stability constants.

\subsection{Online to Confidence Set Conversion}
\label{subsec:online_to_confidence}

While the ONS estimate $\thetab_{k,h}$ method is computationally efficient and gives a reasonable estimator, the error bound of the estimator is not guaranteed, i.e., the confidence ellipsoid is not established.
Instead of using the online estimate $\thetab_{\nu,h}$ we incorporate all online estimates to the proposed estimator that achieves an error bound normalized by the Hessian matrix.
Given a regularization parameter $\epsilon>0$, we define our estimator $\widehat{\thetab}_{k,h}$ for $\thetab^{\star}_h$ in episode $k$ as the solution of the optimization problem:
\begin{equation}
\widehat{\thetab}_{k,h} := \underset{\thetab_{h} \in \RR^{d}}{\text{argmin}}
\bigg\{ \!\!
  \sum_{\nu=1}^{k} 
  \big(
    \gb_{\nu,h}^{\top}\!
    (\thetab_{\nu-1,h} - \thetab_{h}) 
  \big)^{2}
  \!+\! \epsilon \|\thetab_h\|^2 \!
\bigg\}
\Rightarrow 
\widehat{\thetab}_{k,h}=\widehat{\Hb}^{-1}_{k,h}
  \big(\sum_{\nu=1}^{k}
  \gb_{\nu,h}
  \gb_{\nu,h}^{\top}
  \!\thetab_{\nu-1,h}\big).
\label{eq:est}
\end{equation}
This estimator can be updated in a constant time with rank-1 update.
The following theorem presents the confidence ellipsoid of the proposed estimator $\widehat{\thetab}_{k,h}$.

\begin{restatable}[Self-Normalized Bound for the Estimator]{theorem}{self}
Define $C_{\vPhib,\thetab}:=(e-1)(6+8B_{\vPhib}B_{\thetab}+2B_{\vPhib}^{2}B_{\thetab}^{2})$.
Set the learning rate $\eta_{\vPhib,\thetab}:=(e-1)(3+4B_{\vPhib}^{2}B_{\thetab}^{2})$.
For the estimator $\widehat{\thetab}_{k,h}$ defined in \eqref{eq:est}, with probability at least $1-3\delta$,
\(
\norm{\widehat{\thetab}_{k,h}-\thetab_{h}^{\star}}_{\widehat{\Hb}_{k,h}}\le\beta_{k},
\)
for all $k\in[K]$ where
$\beta_{k}=\sqrt{\epsilon}B_{\thetab}+\sqrt{\epsilon B_{\thetab}^{2}+4\gamma_{k}}$ 
and 
\(\gamma_k = \tilde{O}(d B_{\vPhib}^4 B_{\thetab}^4)\).
\label{thm:self}
\end{restatable}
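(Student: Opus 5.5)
This is the online-to-confidence-set conversion (in the style of Abbasi-Yadkori et al.), adapted to the ONS iterates and to the observed information matrix $\widehat{\Hb}_{k,h}$. Write $\Delta_k:=\widehat{\thetab}_{k,h}-\thetab^\star_h$. From \eqref{eq:est}, $\widehat{\Hb}_{k,h}\Delta_k=\sum_{\nu\le k}\gb_{\nu,h}\gb_{\nu,h}^\top(\thetab_{\nu-1,h}-\thetab_h^\star)-\epsilon\thetab_h^\star$.

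Since $\widehat{\thetab}_{k,h}$ minimizes the regularized least-squares objective, comparing its value with the value at $\thetab_h^\star$ gives
\[
\|\Delta_k\|_{\widehat{\Hb}_{k,h}}^2\le \sum_{\nu\le k}\big(\gb_{\nu,h}^\top(\thetab_{\nu-1,h}-\thetab_h^\star)\big)^2+\epsilon\|\thetab_h^\star\|^2+2\epsilon|\langle\thetab^\star_h,\Delta_k\rangle|.
\]
The last term is at most $2\sqrt{\epsilon}B_{\thetab}\|\Delta_k\|_{\widehat{\Hb}_{k,h}}$, because $\widehat{\Hb}_{k,h}\succeq\epsilon\Ib$. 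Write $Z_k:=\sum_{\nu\le k}(\gb_{\nu,h}^\top(\thetab_{\nu-1,h}-\thetab_h^\star))^2$. If we can show $Z_k\le 4\gamma_k$, then solving the quadratic inequality $x^2-2\sqrt{\epsilon}B_{\thetab}x-(\epsilon B_{\thetab}^2+4\gamma_k)\le 0$ gives exactly $x\le\sqrt{\epsilon}B_{\thetab}+\sqrt{2\epsilon B_{\thetab}^2+4\gamma_k}$. The constant under the root can be matched to the stated $\beta_k$ by handling the cross term slightly more carefully, or by absorbing it. So the whole theorem reduces to bounding $Z_k$.

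To bound $Z_k$, we use the ONS regret analysis for the log-sum-exp losses. The losses $\ell_{\nu,h}$ are exp-concave-like on the ball $\|\thetab\|\le B_{\thetab}$, since $\|\vPhib\thetab\|_\infty\le B_{\vPhib}B_{\thetab}$. Using the self-concordance-type property of $L$, we get a local quadratic lower bound of the form
\[
\ell_{\nu,h}(\thetab^\star_h)\ge \ell_{\nu,h}(\thetab_{\nu-1,h})+\gb_{\nu,h}^\top(\thetab_h^\star-\thetab_{\nu-1,h})+\tfrac{1}{c}\big(\gb_{\nu,h}^\top(\thetab_{\nu-1,h}-\thetab_h^\star)\big)^2-(\text{correction}),
\]
with $c$ of order $C_{\vPhib,\thetab}$. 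The rank-one curvature term here is justified through Lemma~\ref{lem:exp_to_quad}: in conditional expectation, $\gb\gb^\top$ equals the true Hessian plus the squared gradient difference, and the latter is itself controlled by the quadratic term.

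The standard ONS telescoping with learning rate $\eta_{\vPhib,\thetab}$, using the projection in the $\widehat{\Hb}$-norm, then gives
\[
\sum_\nu \gb_{\nu,h}^\top(\thetab_{\nu-1,h}-\thetab_h^\star)-\tfrac{1}{2\eta}Z_k\le \tfrac{\eta}{2}\sum_\nu\|\gb_{\nu,h}\|^2_{\widehat{\Hb}_{\nu,h}^{-1}}+\tfrac{\epsilon B_{\thetab}^2}{\eta}=\tilde O(\eta d).
\]
The log-determinant bound here uses $\|\gb\|\le 2B_{\vPhib}$. Combining this with the exp-concavity inequality yields
\[
\tfrac{1}{c'}Z_k\le \sum_\nu\big(\ell_{\nu,h}(\thetab_h^\star)-\ell_{\nu,h}(\thetab_{\nu-1,h})\big)+\tilde O(d\,\mathrm{poly}(B_{\vPhib}B_{\thetab})).
\]

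The main obstacle is the likelihood-difference sum $\sum_\nu(\ell_\nu(\thetab^\star)-\ell_\nu(\thetab_{\nu-1}))$, which is not signed. Its conditional mean is minus a KL divergence, hence nonpositive, and its deviation is a martingale. The plan is to apply Freedman's inequality. The conditional variance of each increment is bounded by a constant times $(\gb^\top(\thetab_{\nu-1}-\thetab^\star))^2$, via the second-order (self-concordant) expansion of the log-likelihood ratio. This makes the martingale term at most $\sqrt{Z_k\log(1/\delta)}\cdot O(B_{\vPhib}B_{\thetab})+O(\log(1/\delta))$, and also lets the negative KL absorb part of it. Solving the resulting inequality, which has the form $Z_k\le a+b\sqrt{Z_k}$, gives $Z_k\le 4\gamma_k$ with $\gamma_k=\tilde O(dB_{\vPhib}^4B_{\thetab}^4)$.

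The three failure events come from Freedman's inequality, from a peeling/union over $k$ used to make the bound uniform in $k$, and from the concentration that replaces $\gb\gb^\top$ by its conditional expectation in Lemma~\ref{lem:exp_to_quad}. A union bound over these gives probability at least $1-3\delta$.
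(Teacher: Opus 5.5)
Your proposal is correct in outline but takes a genuinely different route from the paper. The reduction to bounding $Z_k:=\sum_{\nu\le k}(\gb_{\nu,h}^\top(\thetab_{\nu-1,h}-\thetab_h^\star))^2$ is the same as the paper's; the paper gets it from a triangle inequality plus the minimizer property. You can make it cleaner. Write $F$ for the objective in \eqref{eq:est}. It is a quadratic with Hessian $2\widehat{\Hb}_{k,h}$, so $F(\thetab_h^\star)-F(\widehat{\thetab}_{k,h})=\|\Delta_k\|^2_{\widehat{\Hb}_{k,h}}$ exactly. This gives $\|\Delta_k\|_{\widehat{\Hb}_{k,h}}\le\sqrt{Z_k+\epsilon B_{\thetab}^2}\le\beta_k$ once $Z_k\le 4\gamma_k$, and the constant mismatch you worried about disappears.

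The real difference is how $Z_k$ is extracted. The paper proves two separate statements. Lemma~\ref{lem:online_regret} bounds the ONS log-loss regret by $\tilde O(d\eta)$, choosing $\eta$ so that the $\frac{1}{2\eta}Z_k$ from the telescoping exactly cancels the curvature term from Lemma~\ref{lem:Information_matrix_bound} and self-concordance. Lemma~\ref{lem:regret_to_self} then lower-bounds the same regret by $\frac{1}{2C_{\vPhib,\thetab}}Z_k-O(\log\frac{d}{\delta})$. It does this with Freedman, absorbing the variance into the Bregman drift, and then applies the information-matrix bound a second time to turn that drift back into $Z_k$.

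You instead keep a strictly positive leftover $(\frac{1}{c}-\frac{1}{2\eta})Z_k$ on the ONS side. Then the likelihood side only needs $\sum_\nu(\ell_{\nu,h}(\thetab_h^\star)-\ell_{\nu,h}(\thetab_{\nu-1,h}))\le O((1+B_{\vPhib}B_{\thetab})\log\frac{1}{\delta})$, which is exactly \eqref{eq:likelihood_lower_breg} with the nonnegative drift dropped. Your route uses the information-matrix concentration once instead of twice and reaches the same order, $c\,\eta\, d=\tilde O(dB_{\vPhib}^4B_{\thetab}^4)$. The paper's split buys a standalone $\kappa$-free ONS regret bound, and it still works at the exact cancellation $\eta=c/2$, where your combination yields nothing.

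Three points to tighten.
\begin{itemize}
\item Your step ``$\frac{1}{c'}Z_k\le\ldots$'' silently requires $2\eta>c$, and you should state and verify it. The stated $\eta_{\vPhib,\thetab}$ is twice the value used inside the paper's proof of Lemma~\ref{lem:online_regret}, which gives $c'=2c$ with the curvature constant as the paper computes it.
\item The conditional variance of the likelihood increment is not bounded pointwise by the realized $(\gb_{\nu,h}^\top(\thetab_{\nu-1,h}-\thetab_h^\star))^2$. It equals $\|\thetab_{\nu-1,h}-\thetab_h^\star\|^2_{\nabla^2\ell_{\nu,h}(\thetab_h^\star)}$, which by self-concordance is at most $O(1+B_{\vPhib}B_{\thetab})$ times the conditional KL, i.e.\ the negative drift. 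With a fixed Freedman parameter the drift absorbs it, so you need neither the $\sqrt{Z_k}$ form, nor peeling, nor a reverse concentration from $\sum_\nu\EE\big[(\gb_{\nu,h}^\top(\thetab_{\nu-1,h}-\thetab_h^\star))^2\mid\Fcal_{\nu,h}\big]$ back to $Z_k$. The paper's Freedman and Chernoff bounds are already uniform in $k$, so no union over $k$ is needed either.
\item The squared gradient-difference term from Lemma~\ref{lem:exp_to_quad} should be bounded by $O(B_{\vPhib}^2B_{\thetab}^2)$ times the Bregman divergence, via Pinsker. Bounding it by ``the quadratic term'' is circular.
\end{itemize}
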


The proof of Theorem~\ref{thm:self} is in Appendix~\ref{sec:proof_self}.
The rate of radius $\beta_k$ is $O(B_{\thetab}^2 B_{\vPhib}^2 \sqrt{d \log dk})$.
Our self-normalized bound involves the observed information matrix $\widehat{\Hb}_{k,h}$ instead of the Hessian estimate $\tilde{\Hb}_{k,h}:=\sum_{\nu=1}^{k}\nabla^{2}\ell_{\nu,h}(\thetab_{\nu-1,h})+\epsilon\Ib_{d}$.
\citet{zhang2024online} proposed an online mirror descent algorithm
that establishes a similar self-normalized bound. 
Their self-normalized bound is normalized by the Hessian estimate
$\tilde{\Hb}_{k,h}$, which involves the second order derivative.

Following the standard self-concordant analysis (see e.g., \citet{sun2019generalized}), there exists a constant $C>0$ such that
\begin{equation}
\nabla^{2}\ell_{\nu,h}(\thetab_{\nu-1,h})\succeq 
\exp\left(-C
  \left\|
    \vPhib(s_{\nu,h},a_{\nu,h})
    (\thetab_{\nu-1,h} - \thetab_{h}^{\star})
  \right\|_{2}
\right) \nabla^2 \ell_{\nu,h}(\thetab_h^{\star}).
\label{eq:exp_ratio}
\end{equation}
Therefore, it suffers from the exponential difference $\exp{B_{\vPhib}B_{\thetab}}$ when obtaining the self-normalized bound that involves the true Hessian, $\Hb^{\star}_{k,h}:=\sum_{\nu=1}^{k} \nabla^2\ell_{\nu,h}(\thetab^{\star})+B^2_{\vPhib} \Ib_d$.
In contrast, our proposed Hessian estimate
$\widehat{\Hb}_{k,h}
=\sum_{\nu=1}^{k}\nabla\ell_{\nu,h}(\thetab_{\nu-1,h})
\nabla\ell_{\nu,h}(\thetab_{\nu-1,h})^\top +\epsilon\Ib_{d}$ yields the self-normalized bound with $\Hb_{k,h}^{\star}$.
The following lemma shows that we can approximate $\Hb^{\star}_{k,h}$ with our proposed information matrix $\widehat{\Hb}_{k,h}$ and without using the second order derivatives of the log-likelihood.

\begin{restatable}[Information Matrix Dominates the True Hessian]{lemma}{HessianBound}
\label{lem:Hessian_bound} 
For any step $h \in [H]$, if the regularization parameter is set to the required value $\epsilon = B_{\boldsymbol{\Phi}}^{2} + 4B_{\boldsymbol{\Phi}}^{2}\log(d/\delta)$, then with probability at least $1-\delta$, we have $\widehat{\mathbf{H}}_{k,h} \succeq (1-e^{-1})\mathbf{H}_{k,h}^{\star}$ for all $k \in [K]$
\end{restatable}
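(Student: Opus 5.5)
Our plan is to reduce the claim to a matrix concentration statement for a sum of positive semidefinite rank-one matrices whose conditional expectations dominate the true Hessians. By Lemma~\ref{lem:exp_to_quad}, for every $\nu$,
\[
\EE\big[\gb_{\nu,h}\gb_{\nu,h}^\top \,\big|\, \Fcal_{\nu,h}\big] = \nabla^2\ell_{\nu,h}(\thetab_h^\star) + \mathbf{d}_{\nu,h}\mathbf{d}_{\nu,h}^\top \succeq \nabla^2\ell_{\nu,h}(\thetab_h^\star),
\]
where $\mathbf{d}_{\nu,h}:=\nabla\ell_{\nu,h}(\thetab_{\nu-1,h})-\nabla\ell_{\nu,h}(\thetab_h^\star)$. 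Write $\Mb_{\nu}:=\EE[\gb_{\nu,h}\gb_{\nu,h}^\top\mid\Fcal_{\nu,h}]$. It then suffices to show that, with probability at least $1-\delta$ and simultaneously for all $k$,
\[
\sum_{\nu\le k}\gb_{\nu,h}\gb_{\nu,h}^\top+\epsilon\Ib_d \succeq (1-e^{-1})\Big(\sum_{\nu\le k}\Mb_\nu + B_{\vPhib}^2\Ib_d\Big).
\]
This implies the lemma because $\sum_\nu \Mb_\nu \succeq \sum_\nu \nabla^2\ell_{\nu,h}(\thetab_h^\star)$.

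Each summand is bounded in norm. Indeed, $\gb_{\nu,h}=\vPhib^\top(-\eb_{s'}+p)$, where $p$ is a probability vector, so $\gb_{\nu,h}$ is a difference between a feature column and a convex combination of feature columns. Hence $\|\gb_{\nu,h}\|_2\le 2B_{\vPhib}$ and $\gb_{\nu,h}\gb_{\nu,h}^\top\preceq 4B_{\vPhib}^2\Ib_d$. (If $\epsilon$ is meant to absorb exactly $B_{\vPhib}^2$ per unit, one may instead normalize by $R:=\max\|\gb\|^2$; the constant only enters the logarithmic term.)

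The core step is a one-sided lower-tail matrix Freedman/Chernoff inequality for adapted PSD increments, in the style of Tropp's matrix martingale bounds. We would form the supermartingale
\[
\Tr\exp\Big(\sum_{\nu\le k}\big(-\theta\,\gb_\nu\gb_\nu^\top/R + (1-e^{-\theta})\Mb_\nu/R\big)\Big).
\]
Its supermartingale property follows from Lieb's concavity theorem together with the scalar bound $e^{-\theta x}\le 1-(1-e^{-\theta})x$ for $x\in[0,1]$. Taking $\theta=1$ gives the factor $1-e^{-1}$ in the statement. Ville's maximal inequality then makes the bound uniform over $k$ at the cost of $\log(d/\delta)$, which yields $\lambda_{\min}$-type control of the form
\[
\sum_\nu\gb_\nu\gb_\nu^\top \succeq (1-e^{-1})\sum_\nu \Mb_\nu - R\log(d/\delta)\,\Ib_d.
\]
The regularizer then absorbs the additive deficit. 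Indeed,
\[
\epsilon\Ib_d - R\log(d/\delta)\Ib_d \succeq (1-e^{-1})B_{\vPhib}^2\Ib_d
\]
holds for $\epsilon=B_{\vPhib}^2+4B_{\vPhib}^2\log(d/\delta)$ with $R=4B_{\vPhib}^2$, and this gives the claim.

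We expect the main obstacle to be obtaining a \emph{multiplicative} lower bound rather than merely an additive deviation. A plain matrix Freedman inequality would give an error of order $\sqrt{\|\sum\Mb_\nu\|\log}$, which is not dominated by a constant times the identity. The exponential supermartingale with $\theta=1$ avoids this, but it requires care: the matrix version holds only in the trace sense through Lieb's theorem. This yields $\lambda_{\max}(\cdot)\le\log(d/\delta)$ for the exponent matrix, which is exactly the required Loewner inequality after rearranging. We would verify this rearrangement, as well as the uniformity over $k$ via the stopping-time form of Ville's inequality, most carefully.
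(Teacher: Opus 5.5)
Your proposal is correct and follows the same route as the paper. The paper lower-bounds $\sum_\nu \gb_{\nu,h}\gb_{\nu,h}^\top$ with the lower-tail matrix Chernoff bound (Lemma~\ref{lem:chernoff_matrix-1}, cited from Tropp rather than re-derived via Lieb and Ville as you sketch), uses the covariance inequality (equivalent to your use of Lemma~\ref{lem:exp_to_quad}) to get $\EE[\gb_{\nu,h}\gb_{\nu,h}^\top\mid\Fcal_{\nu,h}]\succeq\nabla^2\ell_{\nu,h}(\thetab_h^\star)$, and absorbs the $4B_{\vPhib}^2\log(d/\delta)$ deficit into $\epsilon$. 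Your bookkeeping is tidier than the paper's: you take $R=4B_{\vPhib}^2$ and use the leftover $\epsilon-R\log(d/\delta)=B_{\vPhib}^2\ge(1-e^{-1})B_{\vPhib}^2$ to cover the $B_{\vPhib}^2\Ib_d$ part of $\Hb^\star_{k,h}$, whereas the paper's displays contain slips such as $\lambda_{\max}\le 2B_{\vPhib}^2$ and a factor $(e-1)$ in the lower-tail bound.
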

Lemma \ref{lem:Hessian_bound} shows $\widehat{\Hb}_{k,h}$ dominates $\Hb^{\star}_{k,h}$ and directly implies 
$\|\widehat{\thetab}_{k,h}-\thetab_h^{\star}\|_{\Hb^{\star}_{k,h}} = O(\sqrt{d\log dk})$.
Thus, our estimator has a $\Hb_{k,h}^{\star}$-normalized bound without involving the exponential approximation of the second derivative in~\eqref{eq:exp_ratio}.
This paves the way for the improvement over the previous regret upper bounds by the factor of at most $\exp(B_{\vPhib}B_{\thetab})$

\section{Variance-Adaptive MNL-MDP Algorithm}
\label{sec:alg}
With our proposed estimator $\widehat{\thetab}_{k,h}$, let
\(
\Ccal_{k,h} := \big\{ \thetab \in \RR^d :
\|\thetab - \widehat{\thetab}_{k,h}\|^2_{\widehat{\Hb}_{k,h}} \le \beta_{k} \big\},
\)
denote the confidence ellipsoid.
The optimism in the face of uncertainty (OFU) principle requires the decision maker to compute
\begin{equation}
\argmax{\thetab \in \Ccal_{k-1,h}} \!
 \EE\left[\widehat{V}_{k,h+1}(s^{\prime})\big| s, a, \thetab \right]
\!=\! \argmax{\thetab \in \Ccal_{k-1,h}}\!\! \bigg\{
\frac{\sum_{s^{\prime} \in \Scal_{h}(s,a)} \!\widehat{V}_{k,h+1}(s^{\prime}) 
\exp\big( \eb_{s^\prime}^\top\vPhib(s,a)\thetab \big)}
{\sum_{s^{\prime} \in \Scal_{h}(s,a)} \!
\exp\big( \eb_{s^\prime}^\top\vPhib(s,a) \thetab \big)}
\!\bigg\},
\label{eq:OFUL-exactopt}
\end{equation}
where $\widehat{V}_{k,h+1}(s)=\max_{a\in\Acal}\widehat{Q}_{k,h}(s,a)$ is the estimated value function. 
However, computing the solution for \eqref{eq:OFUL-exactopt} is intractable. 
\citet{hwang2023model} use the upper bound:
\[
\EE[\widehat{V}_{k,h+1}(s^{\prime}) \!\!\mid\!\! s, a, \thetab]
\le 
\EE[\widehat{V}_{k,h+1}(s^{\prime}) \!\!\mid\!\! s, a, \widehat{\thetab}]
+ 2H\kappa^{-1/2} \norm{\widehat{\thetab} - \thetab_{h}^{\star}}_{\tilde{\Ab}_{k-1,h}}\! 
\max_{s^{\prime} \in \Scal_{h}(s,a)} \|\eb_{s^{\prime}}^\top \vPhib(s,a)\|_{\tilde{\Ab}_{k-1,h}^{-1}}.
\]
where $\kappa$ is the lower bound of the Hessian defined in~\eqref{eq:kappa} and $\tilde{\Ab}_{k-1,h}\!:=\!\sum_{\nu=1}^{k-1}\vPhib(s_{\nu,h},a_{\nu,h})^{\top}\vPhib(s_{\nu,h},a_{\nu,h})+\Ib_{d}$
is the Gram matrix without the variance term. 
Note that the $\kappa^{-1/2}$ term propagates to the cumulative regret bound.
\citet{cho2024randomized} and \citet{li2024provably} refined this approximation in terms of Hessian $\tilde{\Hb}_{k,h}$ to remove $\sqrt{\kappa}$ in the leading term of the regret bound.
However, they use the first-order Taylor expansion and this overlooks the realized variance of the state-transition and the interaction.

In contrast, we approximate the expected value function with the second-order Taylor expansion. 
Recall that $L_{\Scal_h(s,a)}(\etab):=\log\sum_{\bar{s}\in\Scal_{h}(s,a)}\exp(\eb^\top_{\bar{s}}\etab)$ is the log-sum function and let $\Vb_{k,h}(s,a):=\nabla^2L_{\Scal_h(s,a)}(\vPhib(s,a)\widehat{\thetab}_{k-1,h})$.
Let $P_H(x):=\max\big\{0,\min\{x,H\}\big\}$ denote the projection function onto $[0,H]$.
The proposed second-order upper bound for $Q$-value is
\begin{equation}
\begin{aligned}
\widehat{Q}_{k,h}(s,a) := P_H\Big(&
  r(s,a) + \!\!\!\sum_{s^{\prime} \in \Scal_{k,h}(s,a)}\! \widehat{V}_{k,h+1}(s^{\prime}) \, \PP(\eb_{s^{\prime}} \!\mid s,a,\widehat{\thetab}_{k-1,h})\\
  &+ \!\beta_{k-1} \Big\| \!\!\!\sum_{s^{\prime} \in \Scal_{k,h}(s,a)} \widehat{V}_{k,h+1}(s^{\prime}) \eb_{s^{\prime}}^{\top} \Vb_{k,h}(s,a) \vPhib(s,a) \Big\|_{\widehat{\Hb}_{k-1,h}^{-1}} \\
  &+ \!\beta_{k-1}^{2}\! \max_{s \in \Scal_{k,h}(s,a)} \widehat{V}_{k,h+1}(s) \left\| \eb_{s^{\prime}}^{\top} \vPhib(s,a) \right\|_{\widehat{\Hb}_{k-1,h}^{-1}}^{2}\Big).
\label{eq:Q_hat}
\end{aligned}
\end{equation}
In Lemma~\ref{lem:optimism}, we establish that the $\widehat{Q}_h$ is an optimistic $Q$ function with high probability.

\begin{restatable}[Optimism of the Second Order UCB]{lemma}{Optimism}
\label{lem:optimism} For any $(s,a)\in\Scal\times\Acal$, with probability at least $1-3\delta$,
\(
Q_{\star,h}(s,a)\le\widehat{Q}_{k,h}(s,a),
\)
for all $k\in[K]$ and $h\in[H]$.
\end{restatable}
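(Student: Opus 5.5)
We argue by backward induction on $h$, working on the event $\mathcal{E}$ of Theorem~\ref{thm:self}, which holds with probability at least $1-3\delta$. On $\mathcal{E}$, $\|\widehat{\thetab}_{k-1,h}-\thetab_h^\star\|_{\widehat{\Hb}_{k-1,h}}\le\beta_{k-1}$ for all $k,h$. The base case $h=H+1$ is trivial, since $\widehat{V}_{k,H+1}=V^\star_{H+1}=0$. For the inductive step, assume $\widehat{V}_{k,h+1}(s')\ge V^\star_{h+1}(s')$ for all $s'$. If the projection $P_H$ clips at $H$, optimism is immediate because $Q^\star_h\le H$. The lower clip at $0$ is harmless, since the argument below shows the unclipped quantity already dominates $Q^\star_h\ge 0$. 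It therefore suffices to show
\[
\PP_h V^\star_{h+1}(s,a)\le \PP_h\widehat{V}_{k,h+1}(s,a)= \sum_{s'}\widehat{V}_{k,h+1}(s')\PP(s'\mid s,a,\thetab_h^\star)
\]
is bounded by the first three terms inside $P_H$ in \eqref{eq:Q_hat}. The first inequality is the induction hypothesis together with monotonicity of expectation.

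\textbf{Second-order expansion.} Set $\etab=\vPhib(s,a)\thetab$ and $f(\etab)=\sum_{s'}\widehat{V}_{k,h+1}(s')\,[\nabla L(\etab)]_{s'}$, where $\nabla L(\etab)$ is the softmax vector over $\Scal_h(s,a)$, so that $f(\vPhib\thetab)=\EE[\widehat{V}_{k,h+1}\mid s,a,\thetab]$. We Taylor-expand $f$ around $\widehat{\etab}=\vPhib(s,a)\widehat{\thetab}_{k-1,h}$ with integral remainder. The first-order term is $\widehat{\mathbf{v}}^\top\Vb_{k,h}(s,a)\vPhib(s,a)(\thetab_h^\star-\widehat{\thetab}_{k-1,h})$, where $\widehat{\mathbf v}$ collects the values $\widehat V_{k,h+1}(s')$, because the Jacobian of softmax is $\nabla^2L$. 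By Cauchy--Schwarz in the $\widehat{\Hb}_{k-1,h}$ geometry and the event $\mathcal{E}$, this term is at most
\[
\beta_{k-1}\Big\|\sum_{s'}\widehat{V}_{k,h+1}(s')\eb_{s'}^\top\Vb_{k,h}(s,a)\vPhib(s,a)\Big\|_{\widehat{\Hb}_{k-1,h}^{-1}},
\]
which is exactly the bonus in \eqref{eq:Q_hat}.

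\textbf{Remainder term (main obstacle).} The remainder is $\tfrac12 \Delta^\top \nabla^2 f(\bar\etab)\Delta$ with $\Delta=\vPhib(\thetab_h^\star-\widehat{\thetab}_{k-1,h})$, evaluated at an intermediate point $\bar\etab$. It must be bounded by $\beta_{k-1}^2\max_{s'}\widehat{V}_{k,h+1}(s')\max_{s'}\|\eb_{s'}^\top\vPhib\|^2_{\widehat{\Hb}^{-1}_{k-1,h}}$ without any exponential factor. We plan a direct computation of the second derivative of the softmax-weighted average. Writing $p=\nabla L(\bar\etab)$, $u_{s'}=\Delta_{s'}$ and $\bar u=\sum p_{s'}u_{s'}$, one has
\[
\Delta^\top\nabla^2 f\,\Delta=\sum_{s'}p_{s'}\widehat V(s')\big[(u_{s'}-\bar u)^2-\textstyle\sum_{s''}p_{s''}(u_{s''}-\bar u)^2\big].
\]
Since $\widehat V\ge 0$, this is at most $\max\widehat V\cdot\sum_{s'}p_{s'}(u_{s'}-\bar u)^2\le \max\widehat V\cdot\max_{s'}u_{s'}^2$, which avoids the self-concordance exponential in \eqref{eq:exp_ratio}. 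Finally, $|u_{s'}|=|\eb_{s'}^\top\vPhib(\thetab^\star_h-\widehat{\thetab}_{k-1,h})|\le \|\eb_{s'}^\top\vPhib\|_{\widehat{\Hb}^{-1}_{k-1,h}}\beta_{k-1}$ by Cauchy--Schwarz on $\mathcal{E}$, and the factor $\tfrac12$ gives slack. The delicate points are getting the sign and constant of this variance identity right, and verifying that the bound holds uniformly at the intermediate point $\bar\etab$. Both come from the identity holding for any probability vector $p$.

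Combining the three pieces with $r(s,a)$ gives the unclipped $\widehat{Q}_{k,h}\ge Q^\star_h$. This yields $\widehat V_{k,h}\ge V^\star_h$ and closes the induction for all $k,h$ on $\mathcal{E}$.
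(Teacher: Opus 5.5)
Your proposal is correct and matches the paper's proof: backward induction on the confidence event of Theorem~\ref{thm:self}, a second-order expansion of $\sum_{s'}\widehat{V}_{k,h+1}(s')\PP(s'\mid s,a,\cdot)$ around $\widehat{\thetab}_{k-1,h}$, and Cauchy--Schwarz in the $\widehat{\Hb}_{k-1,h}$ geometry for both the linear term and the quadratic remainder. The only difference is in the remainder bound. You differentiate the softmax-weighted average directly, as a single scalar Taylor expansion using $\widehat{V}\ge 0$, which gives constant $\tfrac12$. The paper instead invokes Lemma~\ref{lem:diff_second}, which expands each $\PP(s'\mid s,a,\cdot)$ separately and controls the residual's third moment via $\|\xib\|_1\le 2$, giving $\max|v|$ with constant $1$; your route also avoids that lemma's tacit use of one intermediate point $\bar{\thetab}$ for all $s'$.
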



\begin{algorithm}[t]
\caption{Variance adaptive algorithm for MNL-MDP (\UCB)}
\label{alg:rein}
\begin{algorithmic}[1]
\REQUIRE Confidence level $\delta \in (0,1)$, Initial observed information matrix $\mathbf{H}_{0,h} = B_{\boldsymbol{\Phi}}^{2} (1+ 4 \log\frac{d}{\delta}) \mathbf{I}_d$, Initial online estimate $\thetab_{0,h}=\boldsymbol{0}$, 
Learning rate $\eta_{\vPhib,\thetab}:=(e-1)(3+4B_{\vPhib}^{2}B_{\thetab}^{2})$, Initial moment vector $\gammab_{0,h}=\boldsymbol{0}$.

\FOR{$k = 1$ to $K$}
    \FOR{$h = H$ to $1$}
        \STATE Compute $\widehat{Q}_{k,h}(s, a)$ as in~\eqref{eq:Q_hat} for all $s$ and $a$.
    \ENDFOR
    \FOR{$h=1$ to $H$}
    \STATE Select action $a_{k,h} = \text{argmax}_{a \in \mathcal{A}} \widehat{Q}_{k,h}(s_{k,h}, a)$
    \STATE Observe $s_{k,h+1}$ and update parameters: 
    \begin{align*}
    \widehat{\thetab}_{k,h}, \thetab_{k,h}, \Hb_{k,h}, \gammab_{k,h} \gets \texttt{OCEE}(\thetab_{k-1,h},\Hb_{k-1,h},\eta_{\vPhib,\thetab}, (s_{k,h},a_{k,h},s_{k,h+1}), B_{\thetab}, \gammab_{k-1,h})
    \end{align*}
    \ENDFOR
\ENDFOR
\end{algorithmic}
\end{algorithm}

Algorithm~\ref{alg:rein}, termed the \emph{Variance-Adaptive algorithm for MNL-MDP} (\UCB\ ), combines the online-to-confidence set conversion and a second-order approximate upper confidence bound.
At the start of each episode $k$, the algorithm maintains for every stage $h$ a parameter estimate $\widehat{\thetab}_{k-1,h}$ together with the associated observed information matrix ${\Hb}_{k-1,h}$, which defines a confidence ellipsoid. 
The $Q$-value estimates $\widehat{Q}_{k,h}$ are computed by incorporating both the current estimate and an exploration bonus derived from the confidence set and the second-order Taylor expansion. 
At each decision point $h$, the action $a_{k,h}$ is selected as the maximizer of $\widehat{Q}_{k,h}$. 
After observing the next state $s_{k,h+1}$ and the feature vector $\vPhib(s_{k,h},a_{k,h})$, the online estimate $\thetab_{k,h}$ and the observed information matrix $\Hb_{k,h}$ are updated with the outer product of the stochastic gradient as in Algorithm~\ref{alg:ONS-MST}
The algorithm updates $\widehat{\thetab}_{k,h}$ as in~\eqref{eq:est} by integrating the previous online estimates $\{\thetab_{\nu,h}:\nu\in[k]\}$.

The computational time per step arises from computing the stochastic gradient, updating the inverse of $\Hb_{k-1,h}$ using the Sherman--Morrison rank-one formula, and projecting the updated parameter back into the feasible set. 
The gradient computation scales linearly with $d$, whereas the Hessian inverse update and projection each require $O(d^2)$ operations. 
In addition, the Q-value computation involves a log-sum-exp evaluation over the set of reachable states $\mathcal{S}_h(s,a)$, which costs $O(|\mathcal{S}_h(s,a)|\,d)$ per state--action pair. 
When the number of reachable states is constant, the total cost of processing one decision point is $O(d^2)$.

\section{Regret Analysis}
\label{sec:regret_analysis}
We define a problem-dependent constant that significantly affects the regret bound.
\begin{definition}[Fisher Information Lower Bound]
\label{def:kappa}
Recall that $L_{\Scal_h(s,a)}(\etab):=\log\sum_{\bar{s}\in\Scal_{h}(s,a)}\exp(\eb^\top_{\bar{s}}\etab)$ is the log-sum-exp function.
We define the Fisher information lower bound as $\kappa>0$ such that
\begin{equation}
\inf_{\thetab:\|\thetab\|_{2}\le
B_{\thetab}}\lambda_{\min}\Big(\nabla^{2}L_{\Scal_h(s,a)}\big(\vPhib(s,a)\thetab\big)\Big)\ge\kappa,
\label{eq:kappa}
\end{equation}
for all $s \in \Scal$ and $a \in \Acal$. 
\end{definition}

Because $L_{\Scal_h(s,a)}$ is strictly convex, the Fisher information lower bound $\kappa$ is always positive.
Definition~\ref{def:kappa} is essential in learning the multinomial model \citep{jezequel2021mixability,oh2021multinomial,oh2019thompson,amani2021ucb,lee2024nearly,zhang2024online,perivier2022dynamic}.
The constant $\kappa^{-1}$ can be exponential in $B_{\vPhib}$ and
$B_{\thetab}$~\citep{faury2020improved}; eliminating the $\kappa^{-1}$ in the dominating order of the regret bound has been a crucial task in MNL-MDP literature \citep{cho2024randomized,li2024provably}.

We define the variance over a specific action-state pair
\begin{equation}
\sigma^2(s,a):=\max_{\xb:\|\xb\|_{\infty}\le1}\xb^\top\nabla^2 L_{\Scal_h(s,a)}(\vPhib(s,a)\thetab_h^\star)\xb
\label{eq:sigma}
\end{equation}
as the maximum quadratic value of the Hessian over $\ell_{\infty}$-ball.
It is easy to show that the range of $\sigma^2(s,a)$ is in $[4\kappa,1]$.
The maximum over the hypercube is achieved at the endpoints $\{-1,1\}^{\Scal_h(s,a)}$. 
Then, with simple algebra,
\(
\sigma^2(s,a) = 1- 4\min_{\tilde{\Scal}\subseteq\Scal_h(s,a)} \big(1/2 - \PP(s_{h+1}\in\tilde{\Scal}\mid s,a)\big)^2
\)
For given $(s,a)$, if there is a subset $\tilde{\Scal}\subseteq\Scal_{h}(s,a)$ such that $\PP(s_{h+1}\in\tilde{\Scal})=1/2$,
then $\sigma^2(s,a)=1$, where the variance reaches the maximum. 
On the other hand, there are other cases with $\sigma^2(s,a)=O(\kappa)$.
For example, in the logistic case $\Scal_h(s,a)=\{0,1\}$, if $\PP(s_{h+1}=0|s,a)=\frac{1-\sqrt{1-4\kappa}}{2}$ then $\sigma^2(s,a)=4\kappa$.
In this case, replacing $HK$ with $\sum_{k=1}^{K}\sum_{h=1}^{H}\sigma^2(s_{k,h},a_{k,h})$ in the regret bound improves by a factor of $\kappa^{-1/2}$, or equivalently by $\exp(B_{\vPhib}B_{\thetab}/2)$ factor.

\begin{theorem}[Regret Upper Bound of \UCB\ ]
\label{thm:regret_bound}
Let $a_{k,h}$ denote the action selected by \UCB\ at step $h\in[H]$ in episode $k\in[K]$ and $s_{k,h}$ denote the realized state.
Let $\sigma^2_{k,h}:=\sigma^2(s_{k,h},a_{k,h})$.
If the algorithm $\Pi_K$ is set to \UCB\, then for any MNL-MDP $\Mcal$, with probability at least $1-5\delta$, 
\[
\Reg_{\Pi_K}(\Mcal_{\Thetab^{\star}}) = O\bigg( dH\log K\!\!\sqrt{\sum_{k=1}^{K}\sum_{h=1}^{H}\sigma_{k,h}^2 \log\frac{KH}{\delta}}\!  +\!\kappa^{-1}d^{\frac{5}{2}}H^{\frac{5}{2}}\log^{\frac{3}{2}}\! K\bigg).
\]
\end{theorem}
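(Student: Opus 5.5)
Optimism (Lemma~\ref{lem:optimism}) gives $V_1^\star(s_{k,1})\le \widehat{V}_{k,1}(s_{k,1})$. The regret therefore reduces to $\sum_k \widehat{V}_{k,1}(s_{k,1})-V_1^{\pi_k}(s_{k,1})$, and I will bound it with the standard recursive decomposition. Write $\delta_{k,h}:=\widehat{V}_{k,h}(s_{k,h})-V_h^{\pi_k}(s_{k,h})$. Since $a_{k,h}$ maximizes $\widehat{Q}_{k,h}$, \eqref{eq:Q_hat} gives
\[
\delta_{k,h}\le \big(\PP(\cdot\mid s_{k,h},a_{k,h},\widehat{\thetab}_{k-1,h})-\PP_h(\cdot\mid s_{k,h},a_{k,h})\big)\widehat{V}_{k,h+1}+\mathrm{bonus}_{k,h}+\delta_{k,h+1}+\zeta_{k,h},
\]
where $\zeta_{k,h}$ is a martingale difference bounded by $H$. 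Summed over $k,h$, Azuma--Hoeffding gives $\sum\zeta_{k,h}=\tilde O(H\sqrt{KH})$. I plan to absorb this into the leading term, or replace it with a Freedman bound depending on the variance of $V^{\pi_k}_{h+1}$, which is itself at most $H^2\sigma^2_{k,h}$ by the definition \eqref{eq:sigma} applied to $\xb=V/H$ centered. The model-error term is bounded via a second-order Taylor expansion of $\thetab\mapsto \PP(\cdot\mid s,a,\thetab)^\top \widehat V$ around $\widehat{\thetab}_{k-1,h}$, exactly the expansion used to prove optimism. It is therefore controlled by the same first- and second-order bonus terms, up to constants, on the event of Theorem~\ref{thm:self}.

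The per-step cost is thus $O(\mathrm{bonus}_{k,h})$, which has two parts. The second-order part is $\beta^2 H\max_{s'}\|\eb_{s'}^\top\vPhib\|^2_{\widehat{\Hb}^{-1}}$. Since $\widehat{\Hb}_{k-1,h}\succeq(1-e^{-1})\Hb^\star_{k-1,h}\succeq c\,\kappa\sum_\nu \vPhib^\top\vPhib+\dots$ by Lemma~\ref{lem:Hessian_bound} and Definition~\ref{def:kappa}, an elliptical potential argument over the Gram matrix gives a total of $\kappa^{-1}\beta^2 H\cdot\tilde O(dH)$. The additional $\sqrt{dH}$ in the stated lower-order term $\kappa^{-1}d^{5/2}H^{5/2}$ I expect to come from converting per-$h$ sums and from the $H$ layers of recursion.

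The first-order part is the main term, $\beta\|\widehat V^\top\Vb_{k,h}\vPhib\|_{\widehat{\Hb}^{-1}}$. I will first replace $\Vb_{k,h}$ (the Hessian at $\widehat{\thetab}$) with the Hessian at $\thetab^\star$. The error involves third derivatives of log-sum-exp times $\|\vPhib(\widehat{\thetab}-\thetab^\star)\|$, which is of the same order as the second-order term and is absorbed there. Next, for the matrix $\Vb^\star=\nabla^2 L(\vPhib\thetab^\star)\succeq0$, Cauchy--Schwarz gives $\|\widehat V^\top \Vb^\star\vPhib\xb\|\le \sqrt{\widehat V^\top\Vb^\star\widehat V}\,\|\Vb^{\star1/2}\vPhib\xb\|$. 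Centering is free, since $\Vb^\star\mathbf 1=0$, so $\widehat V^\top\Vb^\star\widehat V\le H^2\sigma^2_{k,h}$. The bonus is thus at most $\beta H\sigma_{k,h}\|\Vb^{\star1/2}\vPhib\|_{\op,\widehat{\Hb}^{-1}}$. Applying Lemma~\ref{lem:Hessian_bound} once more, $\widehat{\Hb}^{-1}\preceq C(\Hb^\star)^{-1}$, and $\Hb^\star_{k,h}=\sum_\nu\vPhib^\top\Vb^\star\vPhib+B_{\vPhib}^2\Ib$ has precisely the increments $\vPhib^\top\Vb^\star\vPhib$, each of trace at most $B_{\vPhib}^2$. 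The elliptical potential lemma then gives $\sum_k\min(1,\|\Vb^{\star1/2}\vPhib\|^2_{(\Hb^\star_{k-1})^{-1}})=O(d\log K)$. Cauchy--Schwarz across $(k,h)$ yields $\beta H\sqrt{\sum\sigma^2_{k,h}}\sqrt{d\log K}=\tilde O(dH\sqrt{\sum\sigma^2})$, using $\beta=\tilde O(\sqrt d)$.

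I expect the main obstacle to be the recursion itself. The bonus at step $h$ is multiplied through the chain of $\delta_{k,h+1}$ along the realized trajectory, and the clipping $P_H$ has to be handled so that the rounds where $\min(1,\cdot)$ is inactive contribute only to the lower-order term. On top of this, the Hessian replacement $\widehat{\thetab}\to\thetab^\star$ must be shown not to reintroduce $\exp(B_{\vPhib}B_{\thetab})$ into the leading term. My first attempt is to use the self-concordance of log-sum-exp only in a local region where $\|\vPhib(\widehat{\thetab}-\thetab^\star)\|\le1$, which holds once the potential is small, and to charge all other rounds to the $\kappa^{-1}$ burn-in term. A union bound over the events of Theorem~\ref{thm:self}, Lemma~\ref{lem:Hessian_bound}, and the martingale concentration gives probability $1-5\delta$.
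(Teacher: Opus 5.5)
Your optimism reduction, the Freedman treatment of the martingale term, and the replacement of $\widehat{\Vb}_{k,h}$ by $\Vb^\star_{k,h}$ at second-order cost all follow the paper. Only the Freedman option is viable for the martingale term, since Azuma is not variance-adaptive. The leading term, however, does not come out as you claim.

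The elliptical potential bound $O(d\log K)$ holds for each fixed $h$, because each step has its own matrix $\Hb^\star_{k,h}$. Summed over $(k,h)$ it is $O(dH\log K)$. Your Cauchy--Schwarz step therefore gives $\beta H\sqrt{\sum_{k,h}\sigma_{k,h}^2}\,\sqrt{dH\log K}=\tilde O\big(dH^{3/2}\sqrt{\sum_{k,h}\sigma_{k,h}^2}\big)$, a factor $\sqrt H$ above the statement. With $\sigma^2_{k,h}\equiv 1$ this is $dH^2\sqrt K$, the rate of \citet{cho2024randomized} and \citet{li2024provably}, not $dH^{3/2}\sqrt K$. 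The loss comes from the per-step bound $\widehat V^\top\Vb^\star\widehat V\le H^2\sigma_{k,h}^2$. It charges the full range $H^2$ at every step, whereas along a trajectory the conditional variances of the value function add up to only about $H\sum_h\sigma^2_{k,h}$.

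The missing ingredient is the paper's optimistic law of total variance, Lemma~\ref{lem:variance_bound}. Do not bound $\|\sum_{s'}\widehat V_{k,h+1}(s')\eb_{s'}\|^2_{\Vb^\star_{k,h}}=\VV[\widehat V_{k,h+1}(s_{k,h+1})\mid s_{k,h},a_{k,h}]$ step by step. Instead apply Cauchy--Schwarz as $\beta\sqrt{\sum_{k,h}\Tr(\widehat\Hb_{k-1,h}^{-1}\nabla^2\ell_{k,h}(\thetab_h^\star))}\,\sqrt{\sum_{k,h}\VV[\widehat V_{k,h+1}(s_{k,h+1})\mid s_{k,h},a_{k,h}]}$. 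Then show the second sum is $O(H\sum_{k,h}\sigma^2_{k,h})$ plus an $O(H^4\kappa^{-2}\beta^4 d\log K)$ remainder.

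The paper writes the per-episode sum of conditional variances, in expectation, as $\EE[(\sum_h\Delta_{k,h})^2]$, where $\Delta_{k,h}$ are the martingale increments of $\widehat V_{k,h+1}$. It then telescopes, using that $\widehat V_{k,\cdot}$ satisfies the Bellman recursion along the trajectory up to reward, bonus and model-error defects. The reward fluctuations have conditional variance at most $\sigma^2_{k,h}$. The defects are controlled through the Gram-matrix potential, which is where $\kappa^{-2}$ enters, and a Chernoff step converts expectations to realized sums. This yields $\beta\sqrt{dH}\sqrt{H\sum\sigma^2}=\tilde O(dH\sqrt{\sum\sigma^2})$.

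That same remainder, multiplied by $\beta\sqrt{dH}$, is the actual source of the $\kappa^{-1}d^{5/2}H^{5/2}$ term, not the recursion layers you conjecture. Your route gives only $\tilde O(\kappa^{-1}d^2H^2)$ there, which is harmless.

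Conversely, the obstacles you flag are not real. Bonuses simply add along the trajectory, because $P_H(x)\le x$ for the nonnegative unclipped expression. The third-derivative bound used for the Hessian replacement is uniform in its argument (Lemma~\ref{lem:2nd_deriv_approx}), so no local-region or burn-in argument is needed there.
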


The explicit expression of the regret bound is presented in Section~\ref{subsec:regret_proof}.
Note that the variances $\sigma_{k,h}^2$ are random variables which depend on the action chosen by the algorithm and states evolving accordingly and involves the parameters $\{\thetab^{\star}_{h}:h\in[H]\}$; thus, our regret bound is variance-adaptive, that is the regret bound depends on the state-action transition interaction samples.
This is the first regret upper bound that depends on the variance of the multinomial transition.

When $K =\Omega (\kappa^{-3} d^3 H^2 \log K / \log\frac{KH}{\delta})$ is sufficiently large, $\kappa^{-1} \ll \sum_{k,h}\sigma^2_{k,h}$, and the $\tilde{O}\big(\kappa^{-1}d^{5/2}H^{5/2}\log^{3/2}K\big)$ term is dominated by $\tilde{O}(dH\sqrt{\sum_{k,h}\sigma^2_{k,h}})$.
The best known regret bound for MNL-MDP is $\tilde{O}(dH^{3/2}\sqrt{HK}+\kappa^{-1}d^{2}H^{2})$ \citep{cho2024randomized}. 
Our regret bound has the sum of the variance $\sum_{k=1}^{K}\sum_{h=1}^{H}\sigma^{2}_{k,h}$ of the multinomial distribution, and improves by at most a factor of $\kappa^{-1/2}$.
Further, our upper bound is optimal in that it matches the lower bound in Theorem~\ref{thm:lower_bound} up to logarithmic factors.
Because $\sigma_{k,h}^2\le 1$, Theorem~\ref{thm:regret_bound} directly implies $O(dH^{3/2} \sqrt{K} \log KH)$ regret bound (Corollary~\ref{cor:regret_bound}).
Further, by developing a novel bound for the sum of variance of the UCB $\widehat{Q}_{k,h}$ (Lemma~\ref{lem:variance_bound}), we establish an improved regret bound by the factor of $\sqrt{H}$ in the main order term. 

Deferring the standard computations to Appendix~\ref{subsec:regret_proof}, we provide a novel optimistic law of total variance that enables us to derive the variance-adaptive optimal regret bound. 

\begin{restatable}[The Optimistic Law of Total Variance]{lemma}{TotalVariance}
\label{lem:variance_bound} The sum of variance of the value function $\hat{V}_{k,h+1}$
\[
\begin{aligned}
&\sum_{k=1}^{K}\sum_{h=1}^{H}\VV\Big[\widehat{V}_{k,h+1}(s_{k,h+1})\Big|s_{k,h},a_{k,h}\Big]\\
&\le
12eH\sum_{k=1}^{K}\sum_{h=1}^{H}\sigma_{k,h}^2+ 1200eH^{4}\kappa^{-2}\beta_{K-1}^{4} \big(d\log \frac{K+1}{d} + 3\log\frac{1}{\delta}\big).
\end{aligned}
\]
\end{restatable}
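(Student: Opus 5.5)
The plan is to express the conditional variance of $\widehat{V}_{k,h+1}(s_{k,h+1})$ through the Hessian of the log-sum-exp function. Then $\sigma^2_{k,h}$ will appear after normalizing $\widehat V$ into the $\ell_\infty$-ball, and the error from using estimated rather than true quantities will be controlled by elliptical-potential arguments.

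\textbf{Step 1: variance as a Hessian quadratic form.} Write $\vb := (\widehat{V}_{k,h+1}(s'))_{s'\in\Scal_h(s_{k,h},a_{k,h})}$ and $\mathbf{p}^\star$ for the true transition vector. For a multinomial distribution, $\nabla^2 L(\vPhib\thetab^\star_h)=\mathrm{diag}(\mathbf{p}^\star)-\mathbf{p}^\star{\mathbf{p}^\star}^\top$, so
\[
\VV[\widehat V_{k,h+1}(s_{k,h+1})\mid s_{k,h},a_{k,h}] = \vb^\top\nabla^2 L(\vPhib\thetab_h^\star)\vb .
\]
Since $0\le \widehat V\le H$, center by $H/2$; the variance is invariant to this shift. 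The vector $(\vb-\tfrac H2\mathbf 1)/(H/2)$ lies in the $\ell_\infty$ unit ball, so the definition in \eqref{eq:sigma} gives the crude bound $\VV\le \tfrac{H^2}{4}\sigma^2_{k,h}$.

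This bound carries a factor $H^2$, whereas the statement has $H\sigma^2$. A per-step estimate therefore cannot suffice, and a law-of-total-variance argument across steps is needed.

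\textbf{Step 2: optimistic total variance.} Follow the standard telescoping device. Expand $\widehat V_{k,h+1}(s_{k,h+1})^2 - (\PP_h\widehat V_{k,h+1})^2(s_{k,h},a_{k,h})$ and relate $\widehat V_{k,h}(s_{k,h})$ to $r+\PP\widehat V_{k,h+1}+\text{bonus}_{k,h}$ using \eqref{eq:Q_hat}. This yields
\[
\sum_h \VV_{k,h}\le \sum_h\Big(\PP_h\widehat V^2_{k,h+1}-\widehat V_{k,h+1}(s_{k,h+1})^2\Big)+ \sum_h\Big(\widehat V_{k,h}^2(s_{k,h}) - (\PP_h\widehat V_{k,h+1})^2\Big).
\]
The first sum is a martingale. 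Its Freedman bound is of order $H\sqrt{\sum\VV}+H^2\log(1/\delta)$ and is absorbed by the self-bounding trick $x\le a\sqrt x+b\Rightarrow x\le 2a^2+2b$. The second sum telescopes after writing $\widehat V_{k,h}^2-(\PP\widehat V)^2\le 2H(r+\text{bonus}_{k,h}+\text{model error})$. The reward part gives $O(H\cdot H)$ per episode, which is too large. So, instead of telescoping rewards, I will use the variance-aware route of replacing the reward contribution by the first-order bonus term. The bonus is itself bounded by $\beta\,\|\vb^\top \Vb\vPhib\|_{\widehat\Hb^{-1}}$, and it is this quantity that connects back to $\sigma^2$ times $H$.

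Concretely, $2H\cdot\text{bonus}_{k,h}$, summed with Cauchy--Schwarz and the elliptical potential lemma, will be bounded by $H\beta\sqrt{\sum \vb^\top\Vb\vb}\sqrt{d\log K}$. The quantity $\vb^\top\Vb\vb$ is the estimated-parameter variance, which equals $\le H^2\sigma^2$ plus a discrepancy term. I take the constant $12e$ to come from the self-concordance comparison $\nabla^2L(\widehat\thetab)\preceq e\,\nabla^2 L(\thetab^\star)$, valid once the parameter error is small.

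\textbf{Step 3: discrepancy terms.} The difference between $\Vb_{k,h}$ (at $\widehat\thetab_{k-1,h}$) and the true Hessian, together with the second-order bonus $\beta^2\|\eb^\top\vPhib\|^2_{\widehat\Hb^{-1}}$, will be handled using $\widehat\Hb\succeq(1-e^{-1})\Hb^\star\succeq \kappa(\cdot)$ (Lemma~\ref{lem:Hessian_bound} and Definition~\ref{def:kappa}). With Theorem~\ref{thm:self}, these terms are bounded by $\kappa^{-1}\beta^2\|\vPhib\|^2_{\widehat\Hb^{-1}}$-type sums; squaring them and applying the elliptical potential lemma gives the $H^4\kappa^{-2}\beta^4_{K-1}(d\log\frac{K+1}{d}+3\log\frac1\delta)$ remainder, where the $\log(1/\delta)$ comes from the Freedman step.

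\textbf{Main obstacle.} The hardest part is making the telescoping produce $H\sum\sigma^2$ rather than $H^2\sum\sigma^2$. This requires bounding the optimism gap $\widehat V_{k,h}-r-\PP\widehat V_{k,h+1}$ by variance-driven bonuses, and then closing the loop by solving the resulting quadratic inequality in $\sum\VV$.
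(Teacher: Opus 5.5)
Your proposal has a genuine gap at the reward term you flag in Step~2. After square-telescoping, the remainder is $\sum_h\big(\widehat V_{k,h}(s_{k,h})^2-(\PP_h\widehat V_{k,h+1})^2\big)\le 2H\sum_h\big(r(s_{k,h},a_{k,h})+\text{bonus}_{k,h}+\text{model error}_{k,h}\big)$. The piece $2H\sum_h r(s_{k,h},a_{k,h})$ is of order $H^2$ per episode and is the leading term. ``Replacing the reward contribution by the first-order bonus term'' is not an available move. The reward is a genuine part of $\widehat V_{k,h}(s_{k,h})-\PP_h\widehat V_{k,h+1}$, and no inequality trades it for $\beta_{k-1}\|\cdot\|_{\widehat\Hb_{k-1,h}^{-1}}$. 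Your Cauchy--Schwarz/elliptical-potential argument controls only the bonus and model-error parts, which close by self-bounding into lower-order terms. As written, your route therefore yields only the non-adaptive bound $\sum_{k,h}\VV\lesssim H^2K$ plus lower-order terms. That is enough for the $dH^{3/2}\sqrt K$ corollary but not for this lemma. (The $e$ and $3\log\frac1\delta$ in the statement come from Chernoff conversions between realized sums and conditional expectations, not from self-concordance or Freedman.)

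The paper does not square-telescope. It writes $\EE_{k,1}[\sum_h\VV_{k,h}]=\EE_{k,1}[(\sum_h\Delta_{k,h})^2]$ with $\Delta_{k,h}:=\widehat V_{k,h+1}(s_{k,h+1})-\EE_{k,h}\widehat V_{k,h+1}(s_{k,h+1})$, and telescopes $\sum_h\Delta_{k,h}$ into $\sum_h r-\widehat V_{k,1}$ plus optimism gaps. It then centers the rewards by $\rho_{k,h}:=r(s_{k,h},a_{k,h})-\EE_{k,h-1}[r(s_{k,h},a_{k,h})]$ and bounds each gap by $|\rho_{k,h}|+10H\beta_{k-1}\kappa^{-1/2}U_{k,h}$. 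Finally it uses $\EE_{k,h}[\rho_{k,h+1}^2]\le\sigma_{k,h}^2$ (rewards lie in $[0,1]$) and $(\sum_h|\rho_{k,h}|)^2\le H\sum_h\rho_{k,h}^2$ to obtain $H\sum\sigma^2$, with the $\kappa^{-2}\beta_{K-1}^4$ remainder coming from $\sum U_{k,h}^2$.

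Be aware, though, that the step converting $(\sum_h r-\widehat V_{k,1})^2$ into $(\sum_h|\rho_{k,h}|+\cdots)^2$ is invalid. That step is the recursive ``Jensen'' replacement of $\EE_{k,1}\widehat V_{k,2}(s_{k,2})$ by $\widehat V_{k,2}(s_{k,2})$. It fails because $\sum_{h\ge2}\EE_{k,h-1}[r(s_{k,h},a_{k,h})]$ is not $\Fcal_{k,1}$-measurable and is strongly correlated with $\widehat V_{k,2}(s_{k,2})$.

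In fact the inequality cannot hold in general. Let step~1 send $s_{k,1}$ to $G$ or $B$ with probability $\tfrac12$ each, so $\sigma^2_{k,1}=1$. At every later step, let each of $G,B$ switch to the other with probability $p=H^{-2}$, so $\sigma^2_{k,h}\le 4p$. Give reward $1$ only at $G$. Then $\sum_h\sigma^2_{k,h}\le 1+\tfrac4H$, whereas $\VV[V^\star_2(s_{k,2})\mid s_{k,1}]\ge (H-3)^2/4$. As the bonuses vanish, $\widehat V_{k,2}\to V^\star_2$. The left side then grows like $KH^2/4$ and the right side like $12eHK$, since the remainder is only polylogarithmic in $K$. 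This fails for, e.g., $H\ge150$. So the factor $H^2$ in your Step~1 bound $\VV\le\tfrac{H^2}{4}\sigma^2_{k,h}$ is tight in general, and the obstacle you identify cannot be overcome.
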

Lemma \ref{lem:variance_bound} paves the way to improving the $\sqrt{H}$ term in the regret bound to achieve the optimal regret rate. 
This is the first variance bound for the \textit{estimated} value function $\widehat{V}_{k,h}$; previous results obtain a bound for the true value function $V_{h}^{\star}$.


We provide the variance-dependent lower bound for the MNL-MDP that matches our upper bound with its proof in Appendix~\ref{subsec:lower_bound_proof}.
\begin{restatable}[Variance-Adaptive Lower Bound for MNL-MDP]{theorem}{LowerBound}
\label{thm:lower_bound} 
Assume that $d\ge2$, $H\ge 4$, and $K\ge \max\{144H(d-1)^2,\kappa^{-2}H^{-1}\}$. 
Given any parameter $\Thetab^{\star}=(\thetab_1^{\star},\ldots,\thetab_{H}^{\star}) \in \RR^{d \times H}$, there exists a MNL-MDP instance $\Mcal_{\Thetab^{\star}}:=\Mcal(\Scal,\Acal,H,\Thetab^{\star},\vPhib,r)$ such that with probability at least $1/e$, the regret of any algorithm $\Pi_K$,
\[
\Reg_{\Pi_K}\!(\Mcal_{\Thetab^{\star}}) \!\ge\!\frac{2Hd\sqrt{\sum_{h=1}^{H}\!\sum_{k=1}^{K}\!\sigma^2(s_{k,h},a_{k,h})}}{17},
\]
where $\{s_{k,h},a_{k,h}:k\in[K],h\in[H]\}$ denote the state and action sample generated by the interaction between $\Pi_K$ and $\Mcal_{\Thetab^{\star}}$.
\end{restatable}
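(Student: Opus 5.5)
We prove Theorem~\ref{thm:lower_bound} by reducing the MNL-MDP to $H$ stacked logistic (two-next-state) linear bandit problems. On these instances the per-step variance $\sigma^2(s,a)$ is essentially a constant $\sigma^2$ that we choose freely. Given $\sigma^2$, a standard minimax lower bound for the embedded bandits yields regret of order $dH\sqrt{HK\sigma^2}$. Since $\sum_{k,h}\sigma^2(s_{k,h},a_{k,h})\approx HK\sigma^2$ on the realized path, this equals $dH\sqrt{\sum_{k,h}\sigma^2_{k,h}}$.

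\textbf{Construction.} We use a hard instance in the style of \citet{park2025infinite} and the linear-mixture lower bounds of \citet{zhou2021nearly}. There are two states, a ``good'' absorbing state $s_g$ with reward $1$ and a ``bad'' state $s_b$ with reward $0$, with actions $\Acal=\{-1,1\}^{d-1}$ at every step. From $s_b$, the reachable set is $\{s_g,s_b\}$. The feature map is designed so that the logit of moving to $s_g$ equals $\langle \ab, \thetab_h\rangle$ plus a fixed offset $\alpha$, where $\thetab_h=(\alpha,\mu\,\mathbf{u}_h)$ with $\mathbf{u}_h\in\{-1,1\}^{d-1}$ and $\mu$ small. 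The success probability is then $p(\ab)=\mathrm{sigmoid}(\alpha+\mu\langle\ab,\mathbf{u}_h\rangle)$.

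For a two-point reachable set, the formula after \eqref{eq:sigma} gives $\sigma^2(s,a)=1-(1-2p(\ab))^2=4p(1-p)$. Choosing $\alpha$ fixes $p\approx p_0$, so every $\sigma^2_{k,h}\in[\tfrac12,2]\cdot 4p_0(1-p_0)$. The condition $K\ge\kappa^{-2}H^{-1}$ ensures $p_0$ can be taken as small as allowed by $\kappa$, so every variance level in $[4\kappa,1]$ is attainable. To accommodate an arbitrary given $\Thetab^\star$, we encode the unknown sign pattern in the feature map instead. Equivalently, since $\vPhib$ is ours to choose, we rotate the features so that the given $\thetab^\star_h$ plays the role of $(\alpha,\mu\mathbf{u}_h)$.

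\textbf{Regret decomposition and information argument.} Because $s_g$ is absorbing with reward $1$, leaving $s_b$ at step $h$ yields about $H-h$ extra reward. So the regret of episode $k$ is at least $\sum_{h\le H/2}(H/2)\,\big(p^\star_h-p(\ab_{k,h})\big)$, up to the probability of still being in $s_b$. That probability stays constant as long as $Hp_0=O(1)$; otherwise we rescale, and this is where $H\ge4$ and the constants enter. A first-order expansion gives $p^\star-p(\ab)\approx p_0(1-p_0)\,\mu\sum_i \mathbb{1}[a_i\neq u_{h,i}]\cdot 2$. We then apply Assouad's lemma over coordinates $i\in[d-1]$ and steps $h\le H/2$. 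The KL divergence between flipping $u_{h,i}$ or not, per pull, is $O(\mu^2 p_0(1-p_0))=O(\mu^2\sigma^2)$, so the total information is $O(K\mu^2\sigma^2)$ per coordinate. Setting $\mu\asymp 1/\sqrt{K\sigma^2}$ makes these KLs constant. Each coordinate then incurs error with constant probability on a constant fraction of episodes. Summing gives regret $\gtrsim H\cdot H\cdot d\cdot K\cdot\sigma^2\mu\asymp dH^2\sqrt{K\sigma^2}=dH\sqrt{HK\sigma^2}$. The condition $K\ge144H(d-1)^2$ keeps $\mu(d-1)\lesssim 1$, which makes the linearization and the boundedness of the parameter valid.

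\textbf{High probability and the main obstacle.} Assouad yields an expectation bound for some $\mathbf{u}$. To obtain a statement holding with probability $1/e$, we compare the realized regret with its mean via a Bretagnolle--Huber style bound of the form $P(A)+Q(A^c)\ge\tfrac12 e^{-\mathrm{KL}}$, with KL at most $1$. We then replace $HK\sigma^2$ by the random $\sum\sigma^2_{k,h}$, using the two-sided control on $\sigma^2_{k,h}$ that holds for every action. The step we expect to be delicate is converting the random quantity $\sum_{k,h}\sigma_{k,h}^2$ into the deterministic $HK\sigma^2$ uniformly over algorithms. This works only if $\mu$ is small enough that $p(\ab)$ varies by a constant factor at most across actions, and that requirement must be balanced against $\mu\asymp1/\sqrt{K\sigma^2}$. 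We would first try $\mu=\min\{p_0,1/\sqrt{K\sigma^2}\}$ and verify, using the assumed lower bounds on $K$, that the second term is active.
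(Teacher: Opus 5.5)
The genuine gap is the step that makes your family coincide with the prescribed instance: ``rotate the features so that the given $\thetab^\star_h$ plays the role of $(\alpha,\mu\mathbf{u}_h)$.'' Everything after it (Assouad over $\mathbf{u}$, Bretagnolle--Huber) gives a statement with the quantifiers \emph{for every algorithm $\Pi_K$ there is a sign pattern $\mathbf{u}$}. After your rotation, that sign pattern is a feature map $\vPhib_{\mathbf{u}}$ paired with the fixed $\Thetab^\star$, on which $\Pi_K$ suffers the stated regret. The theorem asserts the reverse order: a single instance $\Mcal_{\Thetab^\star}$, fixed before the algorithm, that is hard for every $\Pi_K$.

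No argument can deliver that. Once $\vPhib$ and $\Thetab^\star$ are fixed, the optimal action at each step (in your construction $\ab=\mathbf{u}_h$) is a fixed object. The algorithm that hard-codes the optimal policy then has $\Reg_{\Pi_K}(\Mcal_{\Thetab^\star})=0$ deterministically. Meanwhile the right-hand side is strictly positive, since already $\sigma^2(s_{k,1},a_{k,1})=4p(1-p)>0$ at the start state. So your plan, done carefully, proves a minimax statement: for every $\Pi_K$, $\max_{\mathbf{u}}\PP_{\mathbf{u}}\big(\Reg\ge c\,dH\sqrt{\sum_{k,h}\sigma^2_{k,h}}\big)\ge c'$. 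The theorem has to be reformulated that way.

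The paper does not supply the missing step either. It bounds the worst case over the neighborhood $\{\Thetab^\star+\Ub\}$ and then shrinks the neighborhood using Lipschitz continuity of $\Thetab\mapsto\Reg(\Thetab,\Pi_K)$. But in its construction $\vPhib(s,\ab)\thetab^\star_h=-\tfrac12\log\phi\,(1,-1,-1)^\top$ does not depend on $\ab$, so every algorithm has zero regret at $\Thetab^\star$. Its bound survives the limit only for two reasons. First, the reduction lemma writes the per-step gap as $p^\star-\psi_{h,k}$ instead of $p^\star-\epsilon-\psi_{h,k}$, so it scales with the baseline $\epsilon$ rather than with the gap $(d-1)\tilde{\Delta}$. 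Second, $\epsilon$, and hence the instance, is chosen as a function of $\Xi(\Pi_K)$.

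Under the minimax reading, your route is the standard two-state hard-instance argument and is sound in outline, but three things need fixing.

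First, $dH^2\sqrt{K\sigma^2}$ is not $dH\sqrt{HK\sigma^2}$; it is $\sqrt{H}$ times larger. This is harmless, since $\sum_{k,h}\sigma^2_{k,h}\le 2HK\sigma^2$ in your family.

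Second, it is not true that every $\sigma^2_{k,h}$ is within a constant factor of $4p_0(1-p_0)$. In the absorbing state the reachable set is a singleton and $\sigma^2=0$. Only the upper bound on $\sum_{k,h}\sigma^2_{k,h}$ survives, but that is exactly the direction a lower bound needs.

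Third, this construction does not realize every variance level in $[4\kappa,1]$, and ``otherwise we rescale'' does not repair it. The chain spends about $\min\{H,1/p_0\}$ steps in $s_b$, each contributing $O(p_0)$, so $\EE\sum_h\sigma^2_{k,h}=O(1)$ per episode for every choice of $p_0$. You do not need this for the minimax statement: $p_0\asymp 1/H$ already gives expected regret of order $dH^{3/2}\sqrt{K}\gtrsim dH\sqrt{\sum_{k,h}\sigma^2_{k,h}}$ when $K\gtrsim H(d-1)^2$. A lower bound that is genuinely tight at large variance levels would, however, need a different construction.
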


Theorem~\ref{thm:lower_bound} implies that no algorithm can achieve better rate than the specified term with high probability as well as in expectation.
This is the first variance-adaptive lower bound for the MNL-MDP for any given parameter $\Thetab^{\star}$.
The lower bound in Theorem~\ref{thm:lower_bound} involves the sum of variances $\sum_{h=1}^{H}\sum_{k=1}^{K} \sigma_{k,h}^2$ that depends on the choice of the algorithm $\Pi_K$.
This captures that the difficulty of the problem arises from the variance of the interaction between the MDP and the algorithm.

\citet{li2024provably} proved an $\Omega(dH\sqrt{\kappa K} )$ lower bound for the MNL-MDP by reducing the instance into $H/2$ independent logistic bandits and applied the lower bound in \citet{abeille2021instance}.
Recently, \citet{park2025infinite} proved an $\Omega(d H^{3/2}\sqrt{K})$ lower bound.
However, both bounds represents the worst case over a set of MDPs, which overly pessimistic.
In contrast, Theorem~\ref{thm:lower_bound} provides an instance-dependent lower bound where the parameter $\Thetab^{\star}$ is fixed and the choice of MNL-MDP is restricted.

The novelty in establishing the lower bound is in obtaining a variance-dependent bound for the KL-divergence between the two distinct multinomial distributions for the change-of-measure technique (Lemma~\ref{lem:KL}).
To derive the lower bound, we first involve the regret bound maximized over a local neighborhood of a specified instance parametrized by $\Thetab^{\star}$.
Based on the fact that the regret $\Reg_{\Pi_K}(\Mcal_{\Thetab})$ is Lipchitz continuous on $\Thetab$, we shrink the radius of the neighborhood to zero to obtain a lower bound for the specific instance parametrized by $\Thetab^{\star}$.
The details of the proof in Appendix~\ref{subsec:lower_bound_proof}.

\begin{figure*}[t]
    \centering
    \begin{subfigure}[b]{0.245\textwidth}
        \centering
        \includegraphics[width=\textwidth]{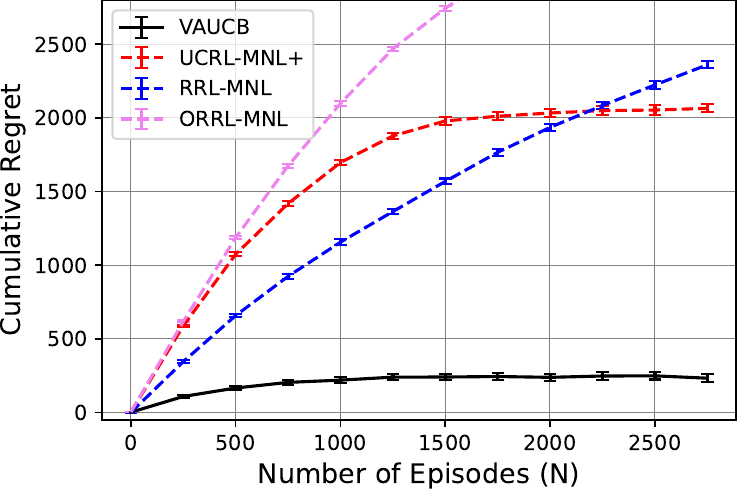}
        \caption{$S=4, H=12$}
        \label{fig:riverswim4}
    \end{subfigure}
    \begin{subfigure}[b]{0.245\textwidth}
        \centering
        \includegraphics[width=\textwidth]{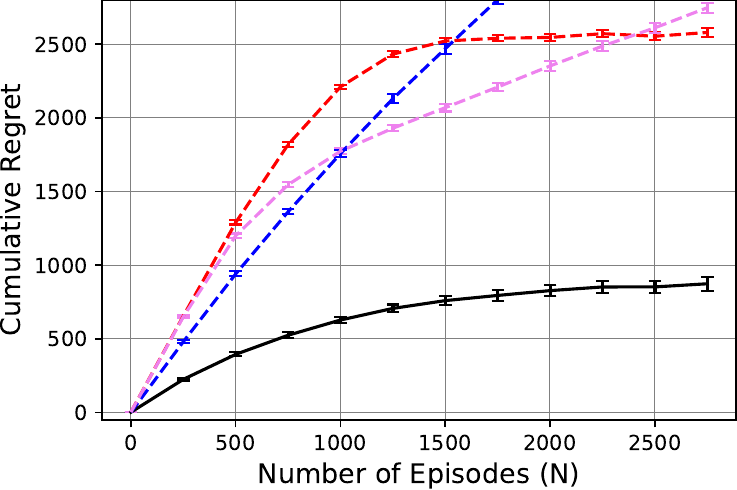}
        \caption{$S=8, H=24$}
        \label{fig:riverswim8}
    \end{subfigure}
    \begin{subfigure}[b]{0.245\textwidth}
        \centering
        \includegraphics[width=\textwidth]{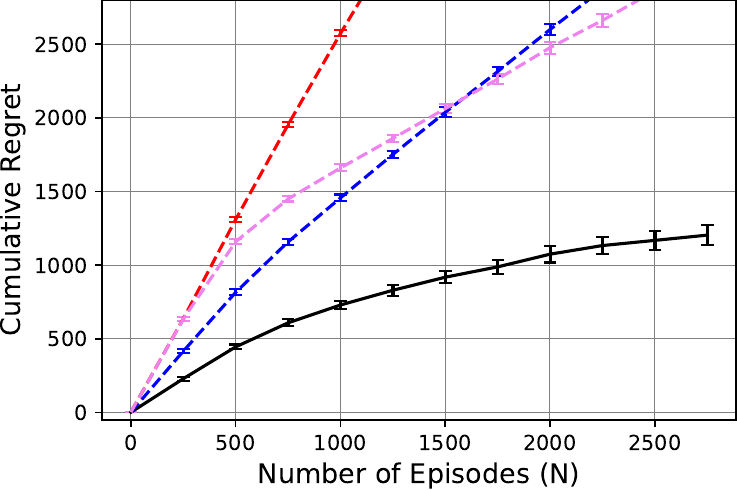}
        \caption{$S=12, H=36$}
        \label{fig:riverswim12}
    \end{subfigure}
    \begin{subfigure}[b]{0.245\textwidth}
        \centering
        \includegraphics[width=\textwidth]{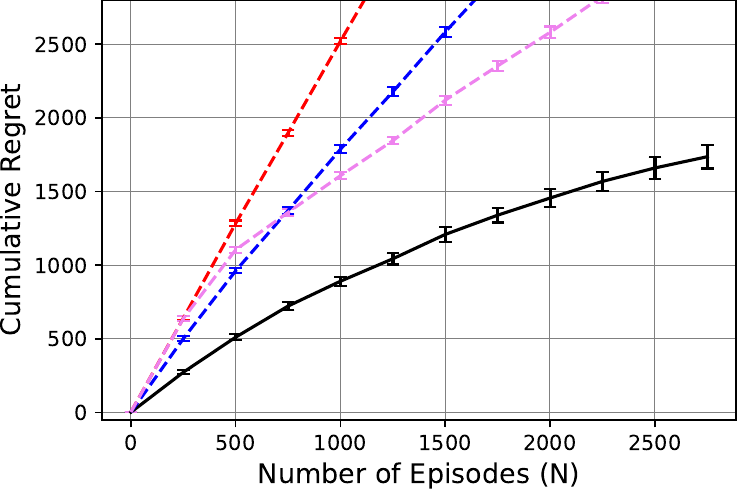}
        \caption{$S=16, H=48$}
        \label{fig:riverswim16}
    \end{subfigure}
    
    \caption{Empirical evaluation of the cumulative regret on the RiverSwim environment across four pairs of state $S$ and episode length $H$. We compare the proposed \UCB\  against \texttt{UCRL-MNL+}, \texttt{RRL-MNL}, and \texttt{ORRL-MNL}. Results are averaged over 20 repeated experiments  where the lines represent mean reward and shaded regions indicate standard deviation. The proposed \UCB\ yields less cumulative regret and finds the optimal policy faster than other methods.}
    \label{fig:riverswim_comparison}
\end{figure*}

\section{Experiments}
\label{sec:experiments}

To evaluate the empirical efficiency of the proposed \UCB\  algorithm, we conduct a series of experiments on the RiverSwim environment, a standard benchmark specifically designed to validate the exploration--exploitation capabilities of the algorithms for MNL-MDP \citep{ayoub2020model, hwang2023model, cho2024randomized}.
The environment isolates the exploration challenge in a simple sequential setting: the agent must choose between a locally attractive action with a small reward and a more difficult direction that is necessary to reach the larger long-term reward. 
This makes RiverSwim a natural testbed for assessing whether the algorithm performs the better long-horizon decision-making.
We defer the descriptions of the RiverSwim environment in Appendix~\ref{sec:Riverswim}.

Figure~\ref{fig:riverswim_comparison} illustrates the cumulative regret across varying problem scales, ranging from $S=4, H=12$ to $S=16, H=48$.
We compare the performance of \UCB\  against the state-of-the-art online algorithms: \texttt{UCRL-MNL+}, \texttt{RRL-MNL}, and \texttt{ORRL-MNL} \citep{cho2024randomized}.
The proposed \UCB\  demonstrates significantly superior convergence rates and stability across all scales. 
Notably, as the state space complexity increases, the performance gap between \UCB\  and the baseline algorithms widens. 
This performance gain is attributed to our novel parameter estimation with the observed information matrix and the second order upper confidence bound that captures the variance to avoid excessive exploration, leading to the rapid stabilization of the regret curves.

\bibliography{ref}

@article{amani2021ucb,
  title={Ucb-based algorithms for multinomial logistic regression bandits},
  author={Amani, Sanae and Thrampoulidis, Christos},
  journal={Advances in Neural Information Processing Systems},
  volume={34},
  pages={2913--2924},
  year={2021}
}

@article{xu2025towards,
  title={Towards optimal problem dependent generalization error bounds in statistical learning theory},
  author={Xu, Yunbei and Zeevi, Assaf},
  journal={Mathematics of Operations Research},
  volume={50},
  number={1},
  pages={40--67},
  year={2025},
  publisher={INFORMS}
}

@article{agrawal2019mnllbandit,
  title={MNL-Bandit: A dynamic learning approach to assortment selection},
  author={Agrawal, Shipra and Avadhanula, Vashist and Goyal, Vineet and Zeevi, Assaf},
  journal={Operations Research},
  volume={67},
  number={5},
  pages={1453--1485},
  year={2019},
  publisher={INFORMS}
}

@article{bartlett2006local,
  title={Local Rademacher complexities and empirical minimization},
  author={Bartlett, Peter L and Mendelson, Shahar},
  journal={Annals of Statistics},
  volume={34},
  year={2006}
}

@inproceedings{agrawal2017thompsonMNL,
  title={Thompson sampling for the MNL-Bandit},
  author={Agrawal, Shipra and Avadhanula, Vashist and Goyal, Vineet and Zeevi, Assaf},
  booktitle={Conference on Learning Theory (COLT)},
  pages={76--78},
  year={2017}
}

@inproceedings{wang2018near,
  title     = {Near-Optimal Policies for Dynamic Multinomial Logit Assortment Selection Models},
  author    = {Wang, Yining and Chen, Xi and Zhou, Yuan},
  booktitle = {Advances in Neural Information Processing Systems (NeurIPS)},
  year      = {2018},
  pages     = {1235--1244}
}

@article{chen2024robust,
  title={Robust dynamic assortment optimization in the presence of outlier customers},
  author={Chen, Xi and Krishnamurthy, Akshay and Wang, Yining},
  journal={Operations Research},
  volume={72},
  number={3},
  pages={999--1015},
  year={2024},
  publisher={Informs}
}

@book{sutton2018reinforcement,
  title={Reinforcement Learning: An Introduction},
  author={Sutton, Richard S and Barto, Andrew G},
  year={2018},
  publisher={MIT press}
}

@inproceedings{park2025infinite,
  title={Infinite-Horizon Reinforcement Learning with Multinomial Logit Function Approximation},
  author={Park, Jaehyun and Kwon, Junyeop and Lee, Dabeen},
  booktitle={International Conference on Artificial Intelligence and Statistics},
  pages={361--369},
  year={2025},
  organization={PMLR}
}

@article{li2024provably,
  title={Provably efficient reinforcement learning with multinomial logit function approximation},
  author={Li, Long-Fei and Zhang, Yu-Jie and Zhao, Peng and Zhou, Zhi-Hua},
  journal={Advances in Neural Information Processing Systems},
  volume={37},
  pages={58539--58573},
  year={2024}
}

@article{jun2017scalable,
  title={Scalable generalized linear bandits: Online computation and hashing},
  author={Jun, Kwang-Sung and Bhargava, Aniruddha and Nowak, Robert and Willett, Rebecca},
  journal={Advances in Neural Information Processing Systems},
  volume={30},
  year={2017}
}

@inproceedings{abeille2021instance,
  title={Instance-wise minimax-optimal algorithms for logistic bandits},
  author={Abeille, Marc and Faury, Louis and Calauz{\`e}nes, Cl{\'e}ment},
  booktitle={International Conference on Artificial Intelligence and Statistics},
  pages={3691--3699},
  year={2021},
  organization={PMLR}
}

@inproceedings{faury2020improved,
  title={Improved optimistic algorithms for logistic bandits},
  author={Faury, Louis and Abeille, Marc and Calauz{\`e}nes, Cl{\'e}ment and Fercoq, Olivier},
  booktitle={International Conference on Machine Learning},
  pages={3052--3060},
  year={2020},
  organization={PMLR}
}

@article{oh2019thompson,
  title={Thompson sampling for multinomial logit contextual bandits},
  author={Oh, Min-hwan and Iyengar, Garud},
  journal={Advances in Neural Information Processing Systems},
  volume={32},
  year={2019}
}

@article{zhang2024online,
  title={Online (multinomial) logistic bandit: Improved regret and constant computation cost},
  author={Zhang, Yu-Jie and Sugiyama, Masashi},
  journal={Advances in Neural Information Processing Systems},
  volume={36},
  year={2024}
}

@article{perivier2022dynamic,
  title={Dynamic pricing and assortment under a contextual MNL demand},
  author={Perivier, Noemie and Goyal, Vineet},
  journal={Advances in Neural Information Processing Systems},
  volume={35},
  pages={3461--3474},
  year={2022}
}

@inproceedings{oh2021multinomial,
  title={Multinomial logit contextual bandits: Provable optimality and practicality},
  author={Oh, Min-hwan and Iyengar, Garud},
  booktitle={Proceedings of the AAAI conference on artificial intelligence},
  volume={35},
  pages={9205--9213},
  year={2021}
}

@article{cho2024randomized,
  title={Randomized exploration for reinforcement learning with multinomial logistic function approximation},
  author={Cho, Wooseong and Hwang, Taehyun and Lee, Joongkyu and Oh, Min-hwan},
  journal={Advances in Neural Information Processing Systems},
  volume={37},
  pages={76643--76720},
  year={2024}
}

@inproceedings{lee2024improved,
  title={Improved regret bounds of (multinomial) logistic bandits via regret-to-confidence-set conversion},
  author={Lee, Junghyun and Yun, Se-Young and Jun, Kwang-Sung},
  booktitle={International Conference on Artificial Intelligence and Statistics},
  pages={4474--4482},
  year={2024},
  organization={PMLR}
}

@inproceedings{tran2015composite,
  title={Composite convex minimization involving self-concordant-like cost functions},
  author={Tran-Dinh, Quoc and Li, Yen-Huan and Cevher, Volkan},
  booktitle={Modelling, Computation and Optimization in Information Systems and Management Sciences: Proceedings of the 3rd International Conference on Modelling, Computation and Optimization in Information Systems and Management Sciences-MCO 2015-Part I},
  pages={155--168},
  year={2015},
  organization={Springer}
}

@article{lee2024nearly,
  title={Nearly minimax optimal regret for multinomial logistic bandit},
  author={Lee, Joongkyu and Oh, Min-hwan},
  journal={Advances in Neural Information Processing Systems},
  volume={37},
  pages={109003--109065},
  year={2024}
}

@article{jezequel2021mixability,
  title={Mixability made efficient: Fast online multiclass logistic regression},
  author={J{\'e}z{\'e}quel, R{\'e}mi and Gaillard, Pierre and Rudi, Alessandro},
  journal={Advances in Neural Information Processing Systems},
  volume={34},
  pages={23692--23702},
  year={2021}
}

@article{hazan2007logarithmic,
  title={Logarithmic regret algorithms for online convex optimization},
  author={Hazan, Elad and Agarwal, Amit and Kale, Satyen},
  journal={Machine Learning},
  volume={69},
  number={2},
  pages={169--192},
  year={2007},
  publisher={Springer}
}

@article{sun2019generalized,
  title={Generalized self-concordant functions: a recipe for newton-type methods},
  author={Sun, Tianxiao and Tran-Dinh, Quoc},
  journal={Mathematical Programming},
  volume={178},
  number={1},
  pages={145--213},
  year={2019},
  publisher={Springer}
}

@inproceedings{hwang2023model,
  title={Model-based reinforcement learning with multinomial logistic function approximation},
  author={Hwang, Taehyun and Oh, Min-hwan},
  booktitle={Proceedings of the AAAI conference on artificial intelligence},
  volume={37},
  pages={7971--7979},
  year={2023}
}

@article{jin2023provably,
  title={Provably Efficient Reinforcement Learning with Linear Function Approximation},
  author={Jin, Chi and Yang, Zhuoran and Wang, Zhaoran and Jordan, Michael I},
  journal={Mathematics of Operations Research},
  year={2023},
  publisher={INFORMS}
}

@inproceedings{ayoub2020model,
  title={Model-based reinforcement learning with value-targeted regression},
  author={Ayoub, Alex and Jia, Zeyu and Szepesvari, Csaba and Wang, Mengdi and Yang, Lin},
  booktitle={International Conference on Machine Learning},
  pages={463--474},
  year={2020},
  organization={PMLR}
}

@inproceedings{zanette2020frequentist,
  title={Frequentist regret bounds for randomized least-squares value iteration},
  author={Zanette, Andrea and Brandfonbrener, David and Brunskill, Emma and Pirotta, Matteo and Lazaric, Alessandro},
  booktitle={International Conference on Artificial Intelligence and Statistics},
  pages={1954--1964},
  year={2020},
  organization={PMLR}
}

@article{dann2017unifying,
  title={Unifying PAC and regret: Uniform PAC bounds for episodic reinforcement learning},
  author={Dann, Christoph and Lattimore, Tor and Brunskill, Emma},
  journal={Advances in Neural Information Processing Systems},
  volume={30},
  year={2017}
}

@inproceedings{azar2017minimax,
  title={Minimax regret bounds for reinforcement learning},
  author={Azar, Mohammad Gheshlaghi and Osband, Ian and Munos, R{\'e}mi},
  booktitle={International Conference on Machine Learning},
  pages={263--272},
  year={2017},
  organization={PMLR}
}

@article{kearns2002near,
  title={Near-optimal reinforcement learning in polynomial time},
  author={Kearns, Michael and Singh, Satinder},
  journal={Machine learning},
  volume={49},
  pages={209--232},
  year={2002},
  publisher={Springer}
}

@book{puterman2014markov,
  title={Markov decision processes: discrete stochastic dynamic programming},
  author={Puterman, Martin L},
  year={2014},
  publisher={John Wiley \& Sons}
}

@article{tropp2012user,
  title={User-friendly tail bounds for sums of random matrices},
  author={Tropp, Joel A},
  journal={Foundations of computational mathematics},
  volume={12},
  number={4},
  pages={389--434},
  year={2012},
  publisher={Springer}
}

@inproceedings{abbasi2011improved,
  title={Improved algorithms for linear stochastic bandits},
  author={Abbasi-Yadkori, Yasin and P{\'a}l, D{\'a}vid and Szepesv{\'a}ri, Csaba},
  booktitle={Advances in Neural Information Processing Systems},
  pages={2312--2320},
  year={2011}
}

@inproceedings{ishfaq2021randomized,
  title={Randomized exploration in reinforcement learning with general value function approximation},
  author={Ishfaq, Haque and Cui, Qiwen and Nguyen, Viet and Ayoub, Alex and Yang, Zhuoran and Wang, Zhaoran and Precup, Doina and Yang, Lin},
  booktitle={International Conference on Machine Learning},
  pages={4607--4616},
  year={2021},
  organization={PMLR}
}

@article{zhang2020almost,
  title={Almost optimal model-free reinforcement learningvia reference-advantage decomposition},
  author={Zhang, Zihan and Zhou, Yuan and Ji, Xiangyang},
  journal={Advances in Neural Information Processing Systems},
  volume={33},
  pages={15198--15207},
  year={2020}
}

@article{li2021breaking,
  title={Breaking the sample complexity barrier to regret-optimal model-free reinforcement learning},
  author={Li, Gen and Shi, Laixi and Chen, Yuxin and Gu, Yuantao and Chi, Yuejie},
  journal={Advances in Neural Information Processing Systems},
  volume={34},
  pages={17762--17776},
  year={2021}
}

@inproceedings{zhang2024settling,
  title={Settling the sample complexity of online reinforcement learning},
  author={Zhang, Zihan and Chen, Yuxin and Lee, Jason D and Du, Simon S},
  booktitle={The Thirty Seventh Annual Conference on Learning Theory},
  pages={5213--5219},
  year={2024},
  organization={PMLR}
}

@inproceedings{lee2025minimax,
  title={Minimax Optimal Reinforcement Learning with Quasi-Optimism},
  author={Lee, Harin and Oh, Min-hwan},
  booktitle={The Thirteenth International Conference on Learning Representations},
  year={2025}
}

@article{brafman2002r,
  title={R-max-a general polynomial time algorithm for near-optimal reinforcement learning},
  author={Brafman, Ronen I and Tennenholtz, Moshe},
  journal={Journal of Machine Learning Research},
  volume={3},
  number={Oct},
  pages={213--231},
  year={2002}
}

@inproceedings{lee2024demystifying,
  title={Demystifying Linear MDPs and Novel Dynamics Aggregation Framework},
  author={Lee, Joongkyu and Oh, Min-hwan},
  booktitle={The Twelfth International Conference on Learning Representations},
  year={2024}
}
\bibliographystyle{abbrvnat}

\newpage
\appendix

\section{Experimental Settings}
\label{sec:Riverswim}

The RiverSwim environment consists of a linear chain of $S$ states, $\mathcal{S}=\{1,2,\dots,S\}$, where the agent begins at the leftmost state. At each state $i$, the agent can choose between two actions: \textsf{Left} ($L$) and \textsf{Right} ($R$). The environment is characterized by a "current" that favors movement toward state $1$, creating a challenging exploration problem:

\textbf{Dynamics:} At interior states $i \in \{2, \dots, S-1\}$, taking action $L$ moves the agent deterministically to $i-1$. Conversely, action $R$ involves a stochastic transition: the agent succeeds in moving to $i+1$ with probability $p_{R\to}=0.35$, remains in $i$ with probability $p_{R\circ}=0.35$, or is pushed back to $i-1$ with probability $p_{R\leftarrow}=0.30$.

\textbf{Rewards:} A small reward $r_{\text{left}}=0.005$ is granted for taking $L$ at the leftmost boundary ($i=1$), providing a "local" trap for sub-optimal policies. A significantly larger reward $r_{\text{right}}=1.0$ is only obtainable at the rightmost boundary ($i=S$) by taking $R$, necessitating deep exploration against the current.

\textbf{Computation Resources:} All experiments were conducted on an EC2 c7i.xlarge instance featuring custom 4th Generation Intel Xeon Scalable processors with a dedicated x86\_64 architecture to ensure high-performance execution of the numerical simulations.

\begin{figure}[t] 
\centering
\begin{tikzpicture}[
  >=latex,
  x=12mm, y=10mm,           
  state/.style={circle,draw,minimum width=10mm,inner sep=0.8pt, font=\scriptsize},
  Redge/.style={->,thick},  
  Ledge/.style={->,thick,dashed},
  every node/.style={font=\scriptsize} 
]

\node[state] (s1)   at (0,0) {$s_1$};
\node[state] (s2)   at (1.5,0) {$s_2$};
\node[state] (dots) at (3,0) {$\cdots$};   
\node[state] (snm) at (4.5,0) {$s_{n-1}$};
\node[state] (sn)   at (6,0) {$s_n$};

\draw[Redge] (s1) edge[out=60,in=120,looseness=5] node[above] {$0.4$} (s1);
\draw[Redge] (s1) to[bend left=25] node[above,sloped] {$0.6$} (s2);

\draw[Redge] (s2) edge[out=60,in=120,looseness=5] node[above] {$0.6$} (s2);
\draw[Redge] (s2) to[bend left=25] node[above,sloped] {$0.35$} (dots);
\draw[Redge] (s2) to[bend right=10] node[below,sloped] {$0.05$} (s1);

\draw[Redge] (dots) to[bend left=25] node[above,sloped] {$0.35$} (snm);
\draw[Redge] (dots) edge[out=60,in=120,looseness=5] node[above] {$0.6$} (dots);
\draw[Redge] (dots) to[bend right=10] node[below,sloped] {$0.05$} (s2);

\draw[Redge] (snm) edge[out=60,in=120,looseness=5] node[above] {$0.6$} (snm);
\draw[Redge] (snm) to[bend left=25] node[above,sloped] {$0.35$} (sn);
\draw[Redge] (snm) to[bend right=10] node[below,sloped] {$0.05$} (dots);

\draw[Redge] (sn) to[bend right=10] node[below,sloped] {$0.4$} (snm);
\draw[Redge] (sn) edge[out=30,in=-30,looseness=5] node[right] {$(0.6,r=1)$} (sn);

\draw[Ledge] (s2)   to[bend left=25] node[below,sloped] {$1$} (s1);
\draw[Ledge] (dots) to[bend left=25] node[below,sloped] {$1$} (s2);
\draw[Ledge] (snm) to[bend left=25] node[below,sloped] {$1$} (dots);
\draw[Ledge] (sn)   to[bend left=25] node[below,sloped] {$1$} (snm);
\draw[Ledge] (s1) edge[out=150,in=210,looseness=5] node[left] {$(1, r=0.005)$} (s1);

\end{tikzpicture}
\caption{The \emph{RiverSwim} MDP environment with $n$ states. This environment is commonly used to evaluate exploration in RL. Solid arrows represent the swim right action ($R$); dashed arrows represent the go left action ($L$).}
\label{fig:riverswim-figure}
\end{figure}

\section{Missing Proofs}

\subsection{Proof of Lemma~\ref{lem:exp_to_quad}}
\QuadError*

\label{sec:proof_exp_to_quad} Let $\xib_{\nu,h}:=\EE[\eb_{s_{\nu,h+1}}\mid\Fcal_{\nu,h}]-\eb_{s_{\nu,h+1}}$
denote the residual. By Proposition~\ref{prop:mean_var}, $\xib_{\nu,h}=\eb_{s_{\nu,h+1}}-\nabla L_{\Scal_{h}(s_{h},a_{h})}\big(\vPhib(s_{k,h},a_{k,h})\thetab_{h}^{\star}\big)$
and $\xib_{\nu,h}{}^{\top}\vPhib(s_{\nu,h},a_{\nu,h})=\nabla\ell_{\nu,h}(\thetab_{h}^{\star})$.
Thus,
\[
\nabla\ell_{\nu,h}(\thetab_{\nu-1,h})=\nabla\ell_{\nu,h}(\thetab_{\nu-1,h})-\nabla\ell_{\nu,h}(\thetab_{h}^{\star})+\xib_{\nu,h}^{\top}\vPhib(s_{\nu,h},a_{\nu,h}).
\]
Taking the conditional expectation given $\Fcal_{\nu,h}$, and using
$\EE[\xib_{\nu,h}\mid\Fcal_{\nu,h}]=\boldsymbol{0}$ together with
$\vPhib(s_{\nu,h},a_{\nu,h})^{\top}\EE[\xib_{\nu,h}\xib_{\nu,h}^{\top}\mid\Fcal_{\nu,h}]\vPhib(s_{\nu,h},a_{\nu,h})=\nabla^{2}\ell_{\nu,h}(\thetab_{h}^{\star})$
(Proposition~\ref{prop:mean_var}) completes the proof.

\subsection{Proof of Theorem \ref{thm:self}}
\self*
\begin{proof}
\medspace{}By the triangular inequality, for each $\nu\in[k]$, 
\[
\begin{aligned} & \|\widehat{\thetab}_{k,h}-\thetab_{h}^{\star}\|_{\widehat{\Hb}_{\nu,h}-\widehat{\Hb}_{\nu-1,h}}^{2}\\
 & =\big(\nabla\ell_{\nu,h}(\thetab_{\nu-1,h})(\widehat{\thetab}_{k,h}-\thetab_{h}^{\star})\big)^{2}\\
 & \le2\big(\nabla\ell_{\nu,h}(\thetab_{\nu-1,h})(\widehat{\thetab}_{k,h}-\thetab_{\nu-1,h})\big)^{2}+2\big(\nabla\ell_{\nu,h}(\thetab_{\nu-1,h})(\thetab_{\nu-1,h}-\thetab_{h}^{\star})\big)^{2}.
\end{aligned}
\]
Summing up over $\nu\in[k]$, 
\[
\begin{aligned} & \norm{\widehat{\thetab}_{k,h}-\thetab_{h}^{\star}}_{\widehat{\Hb}_{k,h}}^{2}-\epsilon\norm{\widehat{\thetab}_{k,h}-\thetab_{h}^{\star}}_{2}^{2}\\
 & \le2\sum_{\nu=1}^{k}\big(\nabla\ell_{\nu,h}(\thetab_{\nu-1,h})(\widehat{\thetab}_{k,h}-\thetab_{\nu-1,h})\big)^{2}+2\sum_{\nu=1}^{k}\big(\nabla\ell_{\nu,h}(\thetab_{\nu-1,h})(\thetab_{\nu-1,h}-\thetab_{h}^{\star})\big)^{2}.
\end{aligned}
\]
Because $\widehat{\thetab}_{k,h}$ is the minimizer, 
\begin{align*}
&\sum_{\nu=1}^{k}\big(\nabla\ell_{\nu,h}(\thetab_{\nu-1,h})(\widehat{\thetab}_{k,h}-\thetab_{\nu-1,h})\big)^{2}+\frac{\epsilon}{2}\norm{\widehat{\thetab}_{k,h}}_{2}^{2}\\
&\le\sum_{\nu=1}^{k}\big(\nabla\ell_{\nu,h}(\thetab_{\nu-1,h})(\thetab_{h}^{\star}-\thetab_{\nu-1,h})\big)^{2}+\frac{\epsilon}{2}\norm{\thetab_{h}^{\star}}_{2}^{2}.
\end{align*}
Thus, 
\[
\norm{\widehat{\thetab}_{k,h}-\thetab_{h}^{\star}}_{\widehat{\Hb}_{k,h}}^{2}\le\epsilon\big(\|\widehat{\thetab}_{k,h}-\thetab_{h}^{\star}\|_{2}^{2}-\|\widehat{\thetab}_{k,h}\|_{2}^{2}+\|\thetab_{h}^{\star}\|_{2}^{2}\big)+4\sum_{\nu=1}^{k}\big(\nabla\ell_{\nu,h}(\thetab_{\nu-1,h})(\thetab_{\nu-1,h}-\thetab_{h}^{\star})\big)^{2}.
\]
By Cauchy Schwartz inequality, 
\[
\begin{split}\|\widehat{\thetab}_{k,h}-\thetab_{h}^{\star}\|_{2}^{2}-\|\widehat{\thetab}_{k,h}\|_{2}^{2}+\|\thetab_{h}^{\star}\|_{2}^{2}= & -2\thetab_{h}^{\star}\widehat{\thetab}_{k,h}+2\|\thetab_{h}^{\star}\|_{2}^{2}\\
= & 2\big(\thetab_{h}^{\star}-\widehat{\thetab}_{k,h}\big)^{\top}\thetab_{h}^{\star}\\
\le & 2\norm{\thetab_{h}^{\star}-\widehat{\thetab}_{k,h}}_{\widehat{\Hb}_{k,h}}\norm{\thetab_{h}^{\star}}_{\widehat{\Hb}_{k,h}^{-1}}\\
\le & 2\epsilon^{-1/2}B_{\thetab}\norm{\thetab_{h}^{\star}-\widehat{\thetab}_{k,h}}_{\widehat{\Hb}_{k,h}}.
\end{split}
\]
Thus, we obtain, 
\[
\norm{\widehat{\thetab}_{k,h}-\thetab_{h}^{\star}}_{\widehat{\Hb}_{k,h}}^{2}\le2B_{\thetab}\sqrt{\epsilon}\norm{\widehat{\thetab}_{k,h}-\thetab_{h}^{\star}}_{\widehat{\Hb}_{k,h}}+4\sum_{\nu=1}^{k}\big(\nabla\ell_{\nu,h}(\thetab_{\nu-1,h})(\thetab_{\nu-1,h}-\thetab_{h}^{\star})\big)^{2}.
\]
For $a,b>0$, 
\[
x^{2}\le2ax+b,\;\;x\ge0\Longrightarrow x\le a+\sqrt{a^{2}+b}.
\]
Thus, 
\[
\norm{\widehat{\thetab}_{k,h}-\thetab_{h}^{\star}}_{\widehat{\Hb}_{k,h}}\le\sqrt{\epsilon}B_{\thetab}+\sqrt{\epsilon B_{\thetab}^{2}+4\sum_{\nu=1}^{k}\big(\nabla\ell_{\nu,h}(\thetab_{\nu-1,h})(\thetab_{\nu-1,h}-\thetab_{h}^{\star})\big)^{2}},
\]
Applying Lemma \ref{lem:regret_to_self},
\begin{align*}
 & \sum_{\nu=1}^{k}\big(\nabla\ell_{\nu,h}(\thetab_{\nu-1,h})(\thetab_{\nu-1,h}-\thetab_{h}^{\star})\big)^{2}\\
 & \le2C_{\vPhib,\thetab}\bigg(\sum_{\nu=1}^{k}\big(\ell_{\nu,h}(\thetab_{\nu-1,h})-\ell_{\nu,h}(\thetab_{h}^{\star})\big)+\frac{16B_{\vPhib}^{2}B_{\thetab}^{2}}{C_{\vPhib,\thetab}}\log\frac{d}{\delta}+2(e-1)(6+8B_{\vPhib}B_{\thetab})\log\frac{1}{\delta}\bigg)\\
 & \le2C_{\vPhib,\thetab}\bigg(\sum_{\nu=1}^{k}\big(\ell_{\nu,h}(\thetab_{\nu-1,h})-\ell_{\nu,h}(\thetab_{h}^{\star})\big)\bigg)+32B_{\vPhib}^{2}B_{\thetab}^{2}\log\frac{d}{\delta}+4C_{\vPhib,\thetab}^{2}\log\frac{1}{\delta}
\end{align*}
By Lemma \ref{lem:online_regret}, we obtain 
\begin{align*}
 & \sum_{\nu=1}^{k}\big(\nabla\ell_{\nu,h}(\thetab_{\nu-1,h})(\thetab_{\nu-1,h}-\thetab_{h}^{\star})\big)^{2}\\
 & \le2C_{\vPhib,\thetab}\bigg(\frac{2B_{\thetab}^{2}\epsilon}{\eta_{\vPhib,\thetab}}+d\eta_{\vPhib,\thetab}\log\frac{k+1}{d}+\frac{8B_{\vPhib}^{2}B_{\thetab}^{2}}{\eta_{\vPhib,\thetab}}\log\frac{1}{\delta}\bigg)+32B_{\vPhib}^{2}B_{\thetab}^{2}\log\frac{d}{\delta}+4C_{\vPhib,\thetab}^{2}\log\frac{1}{\delta}\\
 & \le\frac{4C_{\vPhib,\thetab}B_{\thetab}^{2}\epsilon}{\eta_{\vPhib,\thetab}}+2dC_{\vPhib,\thetab}\eta_{\vPhib,\thetab}\log\frac{k+1}{d}+\bigg(\frac{16C_{\vPhib,\thetab}B_{\vPhib}^{2}B_{\thetab}^{2}}{\eta_{\vPhib,\thetab}}+4C_{\vPhib,\thetab}^{2}\bigg)\log\frac{1}{\delta}+32B_{\vPhib}^{2}B_{\thetab}^{2}\log\frac{d}{\delta},
\end{align*}
where the last term is defined as $\gamma_{k}$. Thus, setting $\beta_{k}=\sqrt{\epsilon}B_{\thetab}+\sqrt{\epsilon B_{\thetab}^{2}+4\gamma_{k}},$
completes the proof. 
\end{proof}

\subsection{Proof of Lemma~\ref{lem:Hessian_bound}}
\HessianBound*
\begin{proof}
\medspace{} For each $h\in[H]$, let $\Fcal_{\nu,h}$ denote the
$\sigma$-algebra generated by $\{s_{\nu^{\prime},h},a_{\nu^{\prime},h}:h\in[H],\nu^{\prime}\in[\nu-1]\}$
and $\{s_{h^{\prime}}^{(\nu)},a_{h^{\prime}}^{(\nu)}:h^{\prime}\in[h]\}$.
For $\nu\in[k]$ and $h\in[H]$ define $\etab_{\nu,h}^{\star}:=\vPhib(s_{\nu,h},a_{\nu,h})\thetab_{h}^{\star}$
and $\etab_{\nu,h}:=\vPhib(s_{\nu,h},a_{\nu,h})\thetab_{k-1,h}$.
Following the argument in \eqref{eq:ell_norm_bound_1} and \eqref{eq:ell_norm_bound_2},
\[
\lambda_{\max}\Big(\nabla\ell_{\nu,h}(\thetab_{\nu-1,h})\nabla\ell_{\nu,h}(\thetab_{\nu-1,h})^{\top}\Big)\le2B_{\vPhib}^{2}.
\]
Since 
\[
\min_{\xb:\|\xb\|_{2}\le1}\xb^{\top}\nabla\ell_{\nu,h}(\thetab_{\nu-1,h})\nabla\ell_{\nu,h}(\thetab_{\nu-1,h})^{\top}\xb=\min_{\xb:\|\xb\|_{2}\le1}\Big(\nabla\ell_{\nu,h}(\thetab_{\nu-1,h})^{\top}\xb\Big)^{2}\ge0,
\]
we can use Matrix Chernoff bound (Lemma \ref{lem:chernoff_matrix-1})
to derive with probability at least $1-\delta$, 
\begin{align*}
&\sum_{\nu=1}^{k}\nabla\ell_{\nu,h}(\thetab_{\nu-1,h})\nabla\ell_{\nu,h}(\thetab_{\nu-1,h})^{\top}\\
&\succeq(e-1)\sum_{\nu=1}^{k}\EE\big[\nabla\ell_{\nu,h}(\thetab_{\nu-1,h})\nabla\ell_{\nu,h}(\thetab_{\nu-1,h})^{\top}\big|\Fcal_{\nu,h}\big]-\log\frac{d}{\delta}\Ib_{d}.
\end{align*}
for all $k\in[K]$. By the covariance inequality (Proposition \ref{prop:var_inequality}),
for each $\nu\in[k]$, 
\[
\begin{split} & \EE\bigg[\Big(\eb_{s_{\nu,h+1}}-\nabla L_{\nu,h}\big(\etab_{\nu,h}\big)\Big)\Big(\eb_{s_{\nu,h+1}}-\nabla L_{\nu,h}\big(\etab_{\nu,h}\big)\Big)^{\top}\bigg|\Fcal_{\nu,h}\bigg]\\
 & \succeq\EE\bigg[\Big(\eb_{s_{\nu,h+1}}-\EE\big[\eb_{s_{\nu,h+1}}\big|\Fcal_{\nu,h}\big]\Big)\Big(\eb_{s_{\nu,h+1}}-\EE\big[\eb_{s_{\nu,h+1}}\big|\Fcal_{\nu,h}\big]\Big)^{\top}\bigg|\Fcal_{\nu,h}\bigg]\\
 & =\nabla^{2}L_{\nu,h}(\etab_{\nu,h}^{\star})
\end{split}
\]
where the last equality uses Proposition \ref{prop:mean_var}. Because
\[
\nabla^{2}\ell_{\nu,h}(\thetab_{h}^{\star})=\vPhib(s_{\nu,h},a_{\nu,h})^{\top}\nabla^{2}L_{\nu,h}(\etab_{\nu,h}^{\star})\vPhib(s_{\nu,h},a_{\nu,h}),
\]
we obtain, 
\[
\sum_{\nu=1}^{k}\nabla\ell_{\nu,h}(\thetab_{\nu-1,h})\nabla\ell_{\nu,h}(\thetab_{\nu-1,h})^{\top}\succeq(1-e^{-1})\sum_{\nu=1}^{k}\nabla^{2}\ell_{\nu,h}(\thetab_{h}^{\star})-4B_{\vPhib}^{2}\log\frac{d}{\delta}\Ib_{d}.
\]
Thus,
\begin{align*}
\widehat{\Hb}_{k,h}:= & \sum_{\nu=1}^{k}\nabla\ell_{\nu,h}(\thetab_{\nu-1,h})\nabla\ell_{\nu,h}(\thetab_{\nu-1,h})^{\top}+\epsilon\Ib_{d}\\
\succeq & (1-e^{-1})\sum_{\nu=1}^{k}\nabla^{2}\ell_{\nu,h}(\thetab_{h}^{\star})+(\epsilon-4B_{\vPhib}^{2}\log\frac{d}{\delta})\Ib_{d}\\
\succeq & (1-e^{-1})\sum_{\nu=1}^{k}\nabla^{2}\ell_{\nu,h}(\thetab_{h}^{\star}),
\end{align*}
where the last inequality holds by $\epsilon>B_{\vPhib}^{2}+4B_{\vPhib}^{2}\log\frac{d}{\delta}$. 
\end{proof}

\subsection{Proof of Lemma \ref{lem:optimism}}
\Optimism*
\label{sec:proof_optimism}
\begin{proof}
\medspace{} For the base case $h=H$, we have 
\[
Q_{H}^{\star}(s,a)=r(s,a)=\widehat{Q}_{H}(s,a).
\]
Suppose $Q_{h+1}^{\star}(s,a)\le\widehat{Q}_{k,h+1}(s,a)$ holds.
Then, by definition of $Q_{h}^{\star}$, 
\[
\begin{split}Q_{h}^{\star}(s,a)= & \big(r+P_{h}V_{h+1}^{\star}\big)(s,a)\\
= & r(s,a)+\EE\Big[V_{h+1}^{\star}(s_{h+1})\Big|s,a\Big]\\
= & r(s,a)+\EE\Big[\max_{a_{h+1}\in\Acal}Q_{h+1}^{\star}(s^{\prime},a_{h+1})\Big|s,a\Big]\\
\le & r(s,a)+\EE\Big[\max_{a_{h+1}\in\Acal}\widehat{Q}_{k,h+1}(s^{\prime},a_{h+1})\Big|s,a\Big].
\end{split}
\]
By definition $\widehat{V}_{k,h+1}(s^{\prime}):=\max_{a_{h+1}\in\Acal}\widehat{Q}_{k,h+1}(s^{\prime},a_{h+1})$,
and 
\[
\EE\Big[\max_{a_{h+1}\in\Acal}\widehat{Q}_{k,h+1}(s^{\prime},a_{h+1})\Big|s,a\Big]=\sum_{s^{\prime}\in\Scal_{k,h}}\widehat{V}_{k,h+1}(s^{\prime})\PP(\eb_{s^{\prime}}|s,a,\thetab_{h}^{\star})
\]
By the second order bound for the multinomial kernel difference (Lemma
\ref{lem:diff_second}), 
\begin{equation}
\begin{aligned} & \sum_{s^{\prime}\in\Scal_{k,h}}\widehat{V}_{k,h+1}(s^{\prime})\PP(\eb_{s^{\prime}}|s,a,\thetab_{h}^{\star})\\
 & \le\sum_{s^{\prime}\in\Scal_{k,h}}\widehat{V}_{k,h+1}(s^{\prime})\PP(\eb_{s^{\prime}}|s,a,\widehat{\thetab}_{k-1,h})\\
 & +\sum_{s^{\prime}\in\Scal_{k,h}}\widehat{V}_{k,h+1}(s^{\prime})\eb_{s^{\prime}}^{\top}\nabla^{2}L_{\Scal_{h}(s,a)}(\vPhib(s,a)\widehat{\thetab}_{k-1,h})\vPhib(s,a)\big(\thetab_{h}^{\star}-\widehat{\thetab}_{k-1,h}\big)\\
 & +\max_{s^{\prime}\in\Scal_{k,h}}\widehat{V}_{k,h+1}(s^{\prime})\max_{s^{\prime}\in\Scal_{k,h}}\big((\thetab_{h}^{\star}-\widehat{\thetab}_{k-1,h})^{\top}\vPhib(s,a)^{\top}\eb_{s^{\prime}}\big)^{2}.
\end{aligned}
\label{eq:opt_expansion}
\end{equation}
By Cauchy-Schwartz inequality, 
\[
\begin{split} & \sum_{s^{\prime}\in\Scal_{k,h}}\widehat{V}_{k,h+1}(s^{\prime})\eb_{s^{\prime}}^{\top}\nabla^{2}L_{\Scal_{h}(s,a)}(\vPhib(s,a)\widehat{\thetab}_{k-1,h})\vPhib(s,a)\big(\thetab_{h}^{\star}-\widehat{\thetab}_{k-1,h}\big)\\
 & \le\norm{\sum_{s^{\prime}\in\Scal_{k,h}}\widehat{V}_{k,h+1}(s^{\prime})\vPhib(s,a)^{\top}\nabla^{2}L_{\Scal_{h}(s,a)}(\vPhib(s,a)\widehat{\thetab}_{k-1,h})\eb_{s^{\prime}}}_{\widehat{\Hb}_{k-1,h}^{-1}}\norm{\thetab_{h}^{\star}-\widehat{\thetab}_{k-1,h}}_{\widehat{\Hb}_{k-1,h}}\\
 & \le\beta_{k-1}\norm{\sum_{s^{\prime}\in\Scal_{k,h}}\widehat{V}_{k,h+1}(s^{\prime})\vPhib(s,a)^{\top}\nabla^{2}L_{\Scal_{h}(s,a)}(\vPhib(s,a)\widehat{\thetab}_{k-1,h})\eb_{s^{\prime}}}_{\widehat{\Hb}_{k-1,h}^{-1}},
\end{split}
\]
where the last inequality uses the confidence bound for $\widehat{\thetab}_{k-1,h}$
(Theorem \ref{thm:self}). Similarly, 
\[
\begin{split}\max_{s^{\prime}\in\Scal_{k,h}}\big((\thetab_{h}^{\star}-\widehat{\thetab}_{k-1,h})^{\top}\vPhib(s,a)^{\top}\eb_{s^{\prime}}\big)^{2}\le & \norm{\thetab_{h}^{\star}-\widehat{\thetab}_{k-1,h}}_{\widehat{\Hb}_{k-1,h}}^{2}\max_{s^{\prime}\in\Scal_{k,h}}\norm{\vPhib(s,a)^{\top}\eb_{s^{\prime}}}_{\widehat{\Hb}_{k-1,h}}^{2}\\
\le & \beta_{k-1}^{2}\max_{s^{\prime}\in\Scal_{k,h}}\norm{\vPhib(s,a)^{\top}\eb_{s^{\prime}}}_{\widehat{\Hb}_{k-1,h}}^{2},
\end{split}
\]
Thus, 
\[
\begin{split}
&\sum_{s^{\prime}\in\Scal_{k,h}}\widehat{V}_{k,h+1}(s^{\prime})\PP(\eb_{s^{\prime}}|s,a,\thetab_{h}^{\star}) \\ &\le\sum_{s^{\prime}\in\Scal_{k,h}}\widehat{V}_{k,h+1}(s^{\prime})\PP(\eb_{s^{\prime}}|s,a,\widehat{\thetab}_{k-1,h})\\
 & +\beta_{k-1}\norm{\sum_{\tilde{s}\in\Scal_{k,h}}\widehat{V}_{k,h+1}(\tilde{s})\vPhib(s,a)^{\top}\nabla^{2}L_{\Scal_{h}(s,a)}\big(\vPhib(s,a)\widehat{\thetab}_{k-1,h}\big)\eb_{\tilde{s}}}_{\widehat{\Hb}_{k-1,h}^{-1}}\\
 & +\beta_{k-1}^{2}\sup_{s^{\prime}\in\Scal_{k,h}}\widehat{V}_{k,h+1}(s^{\prime})\max_{s^{\prime}\in\Scal_{k,h}}\norm{\eb_{s^{\prime}}^{\top}\vPhib(s,a)}_{\widehat{\Hb}_{k-1,h}^{-1}}^{2},
\end{split}
\]
which implies 
\[
\begin{split}Q_{h}^{\star}(s,a)\le & r(s,a)+\EE\Big[\max_{a_{h+1}\in\Acal}\widehat{Q}_{k,h+1}(s^{\prime},a_{h+1})\Big|s,a\Big]\\
\le & r(s,a)+\sum_{s^{\prime}\in\Scal_{k,h}}\widehat{V}_{k,h+1}(s^{\prime})\PP(\eb_{s^{\prime}}|s,a,\widehat{\thetab}_{k-1,h})\\
 & +\beta_{k-1}\norm{\sum_{\tilde{s}\in\Scal_{k,h}}\widehat{V}_{k,h+1}(\tilde{s})\vPhib(s,a)^{\top}\nabla^{2}L_{\Scal_{h}(s,a)}\big(\vPhib(s,a)\widehat{\thetab}_{k-1,h}\big)\eb_{\tilde{s}}}_{\widehat{\Hb}_{k-1,h}^{-1}}\\
 & +\beta_{k-1}^{2}\sup_{s^{\prime}\in\Scal_{k,h}}\widehat{V}_{k,h+1}(s^{\prime})\max_{s^{\prime}\in\Scal_{k,h}}\norm{\eb_{s^{\prime}}^{\top}\vPhib(s,a)}_{\widehat{\Hb}_{k-1,h}^{-1}}^{2}.
\end{split}
\]
By definition, we obtain $Q_{h}^{\star}(s,a)\le\widehat{Q}_{k,h}(s,a)$.
By induction argument, the inequality holds for all $h\in[H]$. 
\end{proof}

\subsection{Proof of Theorem \ref{thm:regret_bound}}

\label{subsec:regret_proof} \begin{theorem}[Regret Bound of \UCB\ ]
Let $a_{k,h}$ denote the action selected by \UCB\ Algorithm at
step $h\in[H]$ in episode $k\in[K]$ and 
\[
\sigma_{k,h}^{2}:=\max_{\xb:\|\xb\|_{\infty}\le1}\xb^{\top}\nabla^{2}L_{k,h}\big(\vPhib_{k,h}\thetab_{h}^{\star}\big)\xb,
\]
denote the maximal variance of the multinomial distribution. If the
algorithm $\Pi_{K}$ is set to \UCB\, then for any MNL-MDP $\Mcal$,
With probability at least $1-5\delta$, 
\[
\begin{split}
&\Reg(\Pi_K)\le  \Big(\frac{H}{2}+2H\log\frac{K+1}{\delta}+2\sqrt{H}\log2\Big)\sqrt{1+\sum_{k=1}^{K}\sum_{h=1}^{H}\sigma_{k,h}^{2}}\\
 & +\frac{2\beta_{K-1}\!\sqrt{2dH\log\!\frac{K+1}{d}}}{\sqrt{1-e^{-1}}}\!\!\!\sqrt{12eH\sum_{k=1}^{K}\sum_{h=1}^{H}\sigma_{k,h}^{2}\!+\!1200eH^{4}\kappa^{-2}\beta_{K-1}^{4}\big(d\log\frac{K+1}{d}\!+\!3\log\frac{1}{\delta}\big)}\\
 & +16H^{2}\beta_{K-1}^{2}d\log\frac{K+1}{d},
\end{split}
\]
\end{theorem}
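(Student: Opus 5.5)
We follow the standard optimism-based regret decomposition. We work on the intersection of the high-probability events of Theorem~\ref{thm:self}, Lemma~\ref{lem:Hessian_bound}, Lemma~\ref{lem:optimism} and Lemma~\ref{lem:variance_bound}, together with two martingale concentration events; a union bound gives probability at least $1-5\delta$.

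\textbf{Step 1: reduce regret to bonuses plus a martingale term.} By Lemma~\ref{lem:optimism}, $V_1^\star(s_{k,1})\le \widehat V_{k,1}(s_{k,1})$. Since $a_{k,h}$ maximizes $\widehat Q_{k,h}(s_{k,h},\cdot)$, we have $\widehat V_{k,h}(s_{k,h})=\widehat Q_{k,h}(s_{k,h},a_{k,h})$. Expanding $\widehat Q_{k,h}$ through \eqref{eq:Q_hat} and subtracting $Q^{\pi_k}_h$ gives
\[
\widehat V_{k,h}(s_{k,h})-V^{\pi_k}_h(s_{k,h}) \le \big(\PP_h(\widehat V_{k,h+1}-V^{\pi_k}_{h+1})\big)(s_{k,h},a_{k,h}) + \Delta_{k,h} + b^{(1)}_{k,h}+b^{(2)}_{k,h}.
\]
Here $\Delta_{k,h}$ is the error from replacing $\widehat\thetab_{k-1,h}$ by $\thetab^\star_h$ in the plug-in expectation. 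By Lemma~\ref{lem:diff_second} and Theorem~\ref{thm:self}, $\Delta_{k,h}$ is bounded by another copy of the first- and second-order bonuses $b^{(1)},b^{(2)}$. Telescoping over $h$ produces the martingale differences $\zeta_{k,h}=\PP_h(\widehat V_{k,h+1}-V^{\pi_k}_{h+1})(s_{k,h},a_{k,h})-(\widehat V_{k,h+1}-V^{\pi_k}_{h+1})(s_{k,h+1})$.

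\textbf{Step 2: bound the martingale and second-order terms.} For $\sum\zeta_{k,h}$ we apply a Freedman-type (variance-aware) inequality. The conditional variance is at most $\sigma^2_{k,h}H^2$, since the variance of any $[0,H]$-valued function of $s_{k,h+1}$ under the multinomial is $\xb^\top\nabla^2L\,\xb\cdot H^2/4$ with $\|\xb\|_\infty\le1$ after centering. This gives the first line of the bound, $(H/2+2H\log\frac{K+1}{\delta}+2\sqrt H\log2)\sqrt{1+\sum\sigma^2_{k,h}}$, via a peeling argument over the variance scale. For the second-order bonus, $b^{(2)}_{k,h}\le H\beta_{k-1}^2\max_{s'}\|\vPhib^\top\eb_{s'}\|^2_{\widehat\Hb^{-1}_{k-1,h}}$. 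Each summand is clipped at $H$, and the elliptical potential lemma applied to $\widehat\Hb$ (whose increments $\gb\gb^\top$ dominate these directions after using $\epsilon\ge B_{\vPhib}^2$) gives $16H^2\beta_{K-1}^2 d\log\frac{K+1}{d}$.

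\textbf{Step 3: the first-order bonus (main obstacle).} Write $b^{(1)}_{k,h}=\beta_{k-1}\|\vPhib^\top\Vb_{k,h}\widehat V_{k,h+1}\|_{\widehat\Hb_{k-1,h}^{-1}}$. Because $\Vb_{k,h}$ is a covariance matrix, $\Vb_{k,h}\widehat V=\Vb_{k,h}^{1/2}\cdot\Vb_{k,h}^{1/2}\widehat V$. By Cauchy--Schwarz,
\[
b^{(1)}_{k,h}\le\beta_{k-1}\sqrt{\widehat V^\top\Vb_{k,h}\widehat V}\;\|\Vb_{k,h}^{1/2}\vPhib\|_{\widehat\Hb^{-1}_{k-1,h}},
\]
where the first factor is the plug-in variance of $\widehat V_{k,h+1}$. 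We then replace $\widehat\thetab$ by $\thetab^\star$ in both factors, paying a self-concordance-controlled lower-order term involving $\kappa^{-1}$. Lemma~\ref{lem:Hessian_bound} gives $\widehat\Hb_{k-1,h}\succeq(1-e^{-1})\Hb^\star_{k-1,h}$, so $\sum\|\nabla^2L(\cdot\thetab^\star)^{1/2}\vPhib\|^2_{(\Hb^\star)^{-1}}\le 2d\log\frac{K+1}{d}$ per $h$ by the elliptical potential lemma. A second Cauchy--Schwarz over $(k,h)$ then yields
\[
\frac{2\beta_{K-1}\sqrt{2dH\log\frac{K+1}{d}}}{\sqrt{1-e^{-1}}}\sqrt{\sum_{k,h}\VV[\widehat V_{k,h+1}\mid s_{k,h},a_{k,h}]}.
\]
Lemma~\ref{lem:variance_bound} then converts this variance sum into $12eH\sum\sigma^2_{k,h}+1200eH^4\kappa^{-2}\beta_{K-1}^4(\cdots)$, which is the middle term.

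The delicate part of Step 3 is this swap between the estimated-parameter variance and the true conditional variance, and between $\Vb_{k,h}$ at $\widehat\thetab$ and at $\thetab^\star$. We would attempt it by bounding $|\widehat V^\top(\Vb_{k,h}-\nabla^2L^\star)\widehat V|\lesssim H^2\|\vPhib(\widehat\thetab-\thetab^\star)\|_\infty$. This error is again controlled through $\beta_{k-1}$ and the potential lemma, and is absorbed into the $\kappa^{-2}$ lower-order term. Summing the three pieces gives the stated bound.
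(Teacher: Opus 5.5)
Your skeleton is the paper's: optimism, telescoping into a martingale plus first- and second-order bonuses, Cauchy--Schwarz of the first-order bonus into a trace factor times a conditional variance, the trace factor handled by Lemma~\ref{lem:Hessian_bound} and a potential lemma, then Lemma~\ref{lem:variance_bound}. Freedman plus peeling is a legitimate substitute for the paper's Lemma~\ref{lem:faury_self}, up to constants. Two steps, however, fail as written.

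\textbf{The swap in Step~3.} You apply Cauchy--Schwarz with $\Vb_{k,h}$ evaluated at $\widehat\thetab_{k-1,h}$. You then correct the variance factor by an additive error $|\widehat V^\top(\Vb_{k,h}-\Vb^\star_{k,h})\widehat V|\lesssim H^2\|\vPhib_{k,h}(\widehat\thetab_{k-1,h}-\thetab^\star_h)\|_\infty$. This error is only first order in the confidence width $w_{k,h}:=\beta_{k-1}\max_{s'}\|\vPhib_{k,h}^\top\eb_{s'}\|_{\widehat\Hb_{k-1,h}^{-1}}$.
\begin{itemize}
\item The available bound on $\sum_{k,h}w_{k,h}$ is of order $\beta_{K-1}\sqrt{HK\cdot\kappa^{-1}dH\log K}$, which grows like $\sqrt K$.
\item Inside the square root this adds a $K^{1/4}$-type term to the regret. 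It cannot be absorbed into $\kappa^{-2}\beta_{K-1}^4(d\log\frac{K+1}{d}+3\log\frac1\delta)$.
\end{itemize}
The paper swaps at the vector level \emph{before} Cauchy--Schwarz (Lemma~\ref{lem:2nd_deriv_approx}), using $\|\vPhib^\top(\widehat\Vb-\Vb^\star)\widehat V\|_{\widehat\Hb^{-1}}\le\|\widehat\Hb^{-1/2}\vPhib^\top\|_{1,2}\,\|(\widehat\Vb-\Vb^\star)\widehat V\|_1$. Here the $\widehat\Hb^{-1}$-norm supplies a second factor of the width. The error is therefore $O(H\beta_{k-1}\max_{s'}\|\vPhib^\top\eb_{s'}\|^2_{\widehat\Hb^{-1}})$ and sums to a polylogarithmic term. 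Cauchy--Schwarz is then applied with $\Vb^\star_{k,h}$, so the variance factor is exactly $\VV[\widehat V_{k,h+1}(s_{k,h+1})\mid s_{k,h},a_{k,h}]$, which is what Lemma~\ref{lem:variance_bound} controls.

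A multiplicative repair also works. The log-sum-exp Hessian changes by at most a factor $e^{2\|\Delta\|_\infty}$ under a perturbation $\Delta$ of its argument, so $\Vb_{k,h}\preceq e^{2w_{k,h}}\Vb^\star_{k,h}$. Use this on rounds with $w_{k,h}\le 1$. Clip the per-step slack at $H$ on the $O(\kappa^{-1}\beta_{K-1}^2dH\log K)$ rounds with $w_{k,h}>1$. This only changes constants.

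\textbf{The second-order bonus.} Let $\chi_{k,h}:=\max_{s'}\|\vPhib_{k,h}^\top\eb_{s'}\|^2_{\widehat\Hb_{k-1,h}^{-1}}$. Your claim that the increments $\gb_{\nu,h}\gb_{\nu,h}^\top$ of $\widehat\Hb_{k,h}$ dominate the directions $\vPhib^\top\eb_{s'}$ is false. Each increment is rank one along the random gradient $\vPhib_{\nu,h}^\top(\nabla L-\eb_{s_{\nu,h+1}})$. The elliptical potential lemma only controls norms of the vectors actually added to the matrix. Even in conditional expectation, the increments dominate only $\vPhib^\top\Vb^\star\vPhib$, which is what the proof of Lemma~\ref{lem:Hessian_bound} uses. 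That matrix satisfies only $\vPhib^\top\Vb^\star\vPhib\succeq\kappa\vPhib^\top\vPhib$.

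The paper therefore uses $\widehat\Hb_{k-1,h}\succeq(1-e^{-1})\Hb^\star_{k-1,h}\succeq(1-e^{-1})\kappa\Ab_{k-1,h}$ together with Lemma~\ref{lem:vPhib_potential}. This gives $\sum_{k,h}\chi_{k,h}\lesssim\kappa^{-1}Hd\log\frac{K+1}{d}$. The paper's final display then drops this $\kappa^{-1}$, so the third term of the statement is only justified with an extra factor $\kappa^{-1}$. That is harmless for Theorem~\ref{thm:regret_bound}, where it is dominated by the $\kappa^{-1}d^{5/2}H^{5/2}$ term.

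You cannot recover the $\kappa$-free version along your route. With nearly deterministic transitions, $\widehat\Hb_{k,h}$ can grow arbitrarily slowly, so $\sum_k\chi_{k,h}$ has no $O(d\log K)$ bound uniform in $\kappa$.
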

\begin{proof}
By definition of the action-value function and the value function,
$V_{h}(s;\pi_{n})=\EE_{\tilde{a}_{n,h}\sim\pi_{n}(s,h)}[Q_{h}(s,\tilde{a}_{n,h})]$
for $s\in\Scal$. Let $a_{n,h}:=\argmax{a\in\Acal}\widehat{Q}_{n,h}(s_{n,h},a)$
denote the action chosen by Algorithm \UCB\ at step $h$ and episode
$n$. Because \UCB\ chooses the action deterministically, we have
$V_{h}(s_{n,h};\pi_{n})=Q_{h}(s_{n,h},a_{n,h};\pi_{n})$. Thus the
regret at episode $n\in[K]$, 
\begin{align*}
 & V_{1}^{\star}(s_{n,1})-V_{n,1}(s_{n,1};\pi_{n})\\
 & =Q_{1}^{\star}\big(s_{n,1},\pi_{\star}(s_{n,1})\big)-Q_{n,1}(s_{n,1},a_{n,1};\pi_{n}).
\end{align*}
By the optimism principle for $\widehat{Q}_{n,1}$ (Lemma \ref{lem:optimism}),
\begin{align*}
 & =Q_{1}^{\star}\big(s_{n,1},\pi_{\star}(s_{n,1})\big)-Q_{n,1}(s_{n,1},a_{n,1};\pi_{n})\\
 & \leq\widehat{Q}_{n,1}\big(s_{n,1},\pi_{\star}(s_{n,1})\big)-Q_{n,1}\big(s_{n,1},a_{n,1};\pi_{n}\big)\\
 & \leq\widehat{Q}_{n,1}\big(s_{n,1},a_{n,1}\big)-Q_{n,1}\big(s_{n,1},a_{n,1};\pi_{n}\big),
\end{align*}
where the last inequality holds by $a_{n,1}:=\argmax{a\in\Acal}\widehat{Q}_{n,1}(s_{n,1},a)$.
For each $n\in[K]$ and $h\in[H]$, we write the feature matrix $\vPhib_{n,h}:=\vPhib(s_{n,h},a_{n,h})$
and the transition kernels $\PP_{n,h}^{\star}(\cdot):=\PP(\cdot|s_{n,h},a_{n,h},\thetab_{h}^{\star})$
and $\widehat{\PP}_{n,h}(\cdot):=\PP(\cdot|s_{n,h},a_{n,h},\widehat{\thetab}_{n-1,h})$.
For simplicity, we define $\Scal_{n,h}:=\Scal_{h}(s_{n,h},a_{n,h})$,
the set of states reachable from $(s_{n,h},a_{n,h})$ in step $h$
in episode $n$, and let $L_{n,h}:=L_{\Scal_{h}(s_{n,h},a_{n,h})}$
the corresponding log-sum-exp function. We also write $\Vb_{n,h}^{\star}:=\nabla^{2}L_{n,h}(\vPhib_{n,h}\thetab_{h}^{\star})$
and $\widehat{\Vb}_{n,h}:=\nabla^{2}L_{n,h}(\vPhib_{n,h}\widehat{\thetab}_{n,h})$.
Recall that $\Ab_{n,h}:=\sum_{\nu=1}^{n}\vPhib_{\nu,h}^{\top}\vPhib_{\nu,h}+B_{\vPhib}^{2}\Ib_{d}$
and $\Hb_{n,h}:=\sum_{\nu=1}^{n}\nabla^{2}\ell_{\nu,h}(\thetab_{h}^{\star})+B_{\vPhib}^{2}\Ib_{d}$.
By definition of $\widehat{Q}_{n,1}$, 
\begin{align*}
 & \widehat{Q}_{n,1}\big(s_{n,1},a_{n,1}\big)-Q_{n,1}\big(s_{n,1},a_{n,1};\pi_{n}\big)\\
 & \le\sum_{s^{\prime}\in\Scal_{n,1}}\widehat{V}_{n,2}(s^{\prime})\big(\widehat{\PP}(\eb_{s^{\prime}})-\PP_{n,h}^{\star}(\eb_{s^{\prime}})\big)\\
 & \quad+\sum_{s^{\prime}\in\Scal_{n,1}}\left(\widehat{V}_{n,2}(s^{\prime})-V_{n,2}(s^{\prime})\right)\PP_{n,h}^{\star}(\eb_{s^{\prime}})\\
 & \quad+\beta_{n-1}\Big\Vert\sum_{s^{\prime}\in\Scal_{n,1}}\widehat{V}_{n,2}(s^{\prime})\cdot\eb_{s^{\prime}}^{\top}\widehat{\Vb}_{n,1}\vPhib_{n,1}\Big\Vert_{\widehat{\Hb}_{n-1,1}^{-1}}\\
 & \quad+\beta_{n-1}^{2}\max_{s^{\prime}\in\Scal_{n,h}}\widehat{V}_{n,2}(s^{\prime})\max_{s^{\prime}\in\Scal_{n,1}}\left\Vert \vPhib_{n,1}^{\top}\eb_{s^{\prime}}\right\Vert _{\widehat{\Hb}_{n-1,1}^{-1}}^{2}.
\end{align*}
Using Lemma~\ref{lem:2nd_deriv_approx}, we can decompose the regret,
\begin{align*}
 & \widehat{Q}_{n,1}\big(s_{n,1},a_{n,1}\big)-Q_{n,1}\big(s_{n,1},a_{n,1};\pi_{n}\big)\\
 & \le\underbrace{\sum_{s^{\prime}\in\Scal_{n,1}}\widehat{V}_{n,2}(s^{\prime})\big(\widehat{\PP}(\eb_{s^{\prime}})-\PP_{n,h}^{\star}(\eb_{s^{\prime}})\big)}_{\text{Term I}}\\
 & +\sum_{s^{\prime}\in\Scal_{n,1}}\left(\widehat{V}_{n,2}(s^{\prime})-V_{n,2}(s^{\prime})\right)\PP_{n,h}^{\star}(\eb_{s^{\prime}})\\
 & +\beta_{n-1}\Big\Vert\sum_{s^{\prime}\in\Scal_{n,1}}\widehat{V}_{n,2}(s^{\prime})\cdot\eb_{s^{\prime}}^{\top}\Vb_{n,1}^{\star}\vPhib_{n,1}\Big\Vert_{\widehat{\Hb}_{n-1,1}^{-1}}\\
 & \quad+7\beta_{n-1}^{2}\max_{s^{\prime}\in\Scal_{n,h}}\widehat{V}_{n,2}(s^{\prime})\max_{s^{\prime}\in\Scal_{n,1}}\left\Vert \vPhib_{n,1}^{\top}\eb_{s^{\prime}}\right\Vert _{\widehat{\Hb}_{n-1,1}^{-1}}^{2}
\end{align*}
We use the second order approximation bound for the transition kernel
$\PP(\cdot|s,a,\thetab)$ (Lemma~\ref{lem:diff_second}) to bound
Term~I as follows: 
\begin{align*}
 & \text{Term I}\\
 & \leq\sum_{s^{\prime}\in\Scal_{n,1}}\widehat{V}_{n,2}(s^{\prime})\eb_{s^{\prime}}^{\top}\Vb_{n,1}^{\star}\vPhib_{n,1}(\widehat{\thetab}_{n-1,1}-\thetab_{1}^{\star})\\
 & +\!\!\max_{s^{\prime}\in\Scal_{n,1}}\!\!\widehat{V}_{n,2}(s^{\prime})\!\max_{s^{\prime}\in\Scal_{n,1}}\!\big(\eb_{s^{\prime}}^{\top}\vPhib_{n,1}(\widehat{\thetab}_{n-1,1}-\thetab_{1}^{\star})\big)^{2}\\
 & \le\Big\Vert\!\!\!\sum_{s^{\prime}\in\Scal_{n,1}}\!\!\widehat{V}_{n,2}(s^{\prime})\eb_{s^{\prime}}^{\top}\Vb_{n,1}^{\star}\vPhib_{n,1}\Big\Vert_{\widehat{\Hb}_{n-1,1}^{-1}}\!\!\!\!\!\!\!\|\widehat{\thetab}_{n-1,1}\!-\!\thetab_{1}^{\star}\|_{\widehat{\Hb}_{n-1,1}}\\
 & +\!\!\max_{s^{\prime}\in\Scal_{n,1}}\!\!\widehat{V}_{n,2}(s^{\prime})\!\!\max_{s^{\prime}\in\Scal_{n,1}}\!\!\!\Vert\vPhib_{n,1}^{\top}\eb_{s^{\prime}}\Vert_{\widehat{\Hb}_{n-1,1}^{-1}}^{2}\!\!\!\!\!\!\|\widehat{\thetab}_{n-1,1}\!-\!\thetab_{1}^{\star}\|_{\widehat{\Hb}_{n-1,1}}^{2}\!\!\!,
\end{align*}
where the last inequality holds by Cauchy-Schwarz inequality. Next,
the confidence ellipsoid (Theorem~\ref{thm:self}) implies that 
\begin{align*}
\text{Term I}\le & \beta_{n-1}\Big\Vert\!\!\!\sum_{s^{\prime}\in\Scal_{n,1}}\!\!\widehat{V}_{n,2}(s^{\prime})\eb_{s^{\prime}}^{\top}\Vb_{n,1}^{\star}\vPhib_{n,1}\Big\Vert_{\widehat{\Hb}_{n-1,1}^{-1}}\\
 & +\beta_{n-1}^{2}\max_{s^{\prime}\in\Scal_{n,1}}\!\!\widehat{V}_{n,2}(s^{\prime})\!\!\max_{s^{\prime}\in\Scal_{n,1}}\!\!\!\Vert\vPhib_{n,1}^{\top}\eb_{s^{\prime}}\Vert_{\widehat{\Hb}_{n-1,1}^{-1}}^{2}.
\end{align*}
It follows that 
\begin{align*}
 & \widehat{Q}_{n,1}\big(s_{n,1},a_{n,1}\big)-Q_{n,1}\big(s_{n,1},a_{n,1};\pi_{n}\big)\\
 & \le\sum_{s^{\prime}\in\Scal_{n,1}}\left(\widehat{V}_{n,2}(s^{\prime})-V_{n,2}(s^{\prime})\right)\PP_{n,h}^{\star}(\eb_{s^{\prime}})\\
 & \;\;+2\beta_{n-1}\Big\Vert\sum_{s^{\prime}\in\Scal_{n,1}}\widehat{V}_{n,2}(s^{\prime})\cdot\eb_{s^{\prime}}^{\top}\Vb_{n,1}^{\star}\vPhib_{n,1}\Big\Vert_{\widehat{\Hb}_{n-1,1}^{-1}}\\
 & \;\;+8\beta_{n-1}^{2}\max_{s^{\prime}\in\Scal_{n,h}}\widehat{V}_{n,2}(s^{\prime})\max_{s^{\prime}\in\Scal_{n,1}}\left\Vert \vPhib_{n,1}^{\top}\eb_{s^{\prime}}\right\Vert _{\widehat{\Hb}_{n-1,1}^{-1}}^{2}.
\end{align*}
Define $\Delta_{n,h}:=\sum_{s^{\prime}\in\Scal_{n,h}}\!\!\big(\widehat{V}_{n,h+1}(s^{\prime})\!-\!V_{n,h+1}(s^{\prime})\big)\PP_{n,h}^{\star}(\eb_{s^{\prime}})\!-\!\big(\widehat{V}_{n,h+1}(s_{n,h+1})\!-\!V_{n,h+1}(s_{n,h+1})\big)$.
Since $s_{n,h+1}$ is sampled from the distribution $\PP_{n,h}^{\star}(\cdot)$,
it follows that $\Delta_{n,h}=\EE\big[\widehat{V}_{n,h+1}(s_{n,h+1})\!-\!V_{n,h+1}(s_{n,h+1})\mid s_{n,h},a_{n,h}\big]\!-\!\big(\widehat{V}_{n,h+1}(s_{n,h+1})\!-\!V_{n,h+1}(s_{n,h+1})\big)$,
which is a martingale difference. Then we have the following decomposition:
\begin{align*}
 & \sum_{s^{\prime}\in\Scal_{n,1}}\left(\widehat{V}_{n,2}(s^{\prime})-V_{n,2}(s^{\prime})\right)\PP_{n,h}^{\star}(\eb_{s^{\prime}})\\
 & \le\widehat{Q}_{n,2}(s_{n,2},a_{n,2})-Q_{2}(s_{n,2},a_{n,2};\pi_{n})+\Delta_{n,2},
\end{align*}
and the regret at episode $n\in[K]$, 
\begin{align*}
 & \widehat{Q}_{n,1}\big(s_{n,1},a_{n,1}\big)-Q_{n,1}\big(s_{n,1},a_{n,1};\pi_{n}\big)\\
 & \le\Delta_{n,2}+\widehat{Q}_{n,2}(s_{n,2},a_{n,2})-Q_{2}(s_{n,2},a_{n,2};\pi_{n})\\
 & \;\;+2\beta_{n-1}\Ubrace{\Big\Vert\sum_{s^{\prime}\in\Scal_{n,1}}\widehat{V}_{n,2}(s^{\prime})\cdot\eb_{s^{\prime}}^{\top}\Vb_{n,1}^{\star}\vPhib_{n,1}\Big\Vert_{\widehat{\Hb}_{n-1,1}^{-1}}}{:=\alpha_{n,1}}\\
 & \;\;+8\beta_{n-1}^{2}H\Ubrace{\max_{s^{\prime}\in\Scal_{n,1}}\big\Vert\vPhib_{n,1}^{\top}\eb_{s^{\prime}}\big\Vert_{\widehat{\Hb}_{n-1,1}^{-1}}^{2}}{:=\chi_{n,1}},
\end{align*}
where the last inequality uses $\widehat{V}_{n,h}(s)\le H$ for all
$s\in\Scal_{n,h}$. Recursively, the regret, 
\begin{align*}
 & \Reg(\Mcal,\Pi_{K})\\
 & \le\sum_{n=1}^{K}\widehat{Q}_{n,1}\big(s_{n,1},a_{n,1}\big)-Q_{n,1}\big(s_{n,1},a_{n,1};\pi_{n}\big)\\
 & \le\sum_{n=1}^{K}\sum_{h=1}^{H}\big(\Delta_{n,h}+2\beta_{n-1}\alpha_{n,h}+8H\beta_{n-1}^{2}\chi_{n,h}\big)
\end{align*}
It is easy to verify $\EE[\Delta_{n,h}^{2}|s_{n,h},a_{n,h}]\le H^{2}\sigma_{n,h}^{2}$
and the sum $\sum_{n=1}^{K}\sum_{h=1}^{H}\Delta_{n,h}=O(H\sqrt{\sum_{n=1}^{K}\sum_{h=1}^{H}\sigma_{n,h}^{2}})$
by the Freedman's inequality (Lemma \ref{lem:freedman}). Using simple
linear algebra manipulation, 
\[
\alpha_{n,h}\!\!\le\!\!\!\sqrt{\Tr\big(\widehat{\Hb}_{n-1,h}^{-1}\!\!\nabla^{2}\ell_{n-1,h}(\thetab_{h}^{\star})\big)}\big\Vert\!\!\!\!\sum_{\tilde{s}\in\Scal_{n,h}}\!\!\!\widehat{V}_{n,h+1}\!(\tilde{s})\eb_{\tilde{s}}\big\Vert_{\Vb_{n,h}^{\star}}\!\!\!\!.
\]
By the Hessian bound (Lemma~\ref{lem:Hessian_bound}), we have $H_{n-1,h}^{\star}\preceq(1-e^{-1})^{-1}\widehat{H}_{n-1,h}$
and 
\begin{align*}
 & \sum_{n=1}^{K}\sum_{h=1}^{H}\Tr\big(\widehat{\Hb}_{n-1,h}^{-1}\!\!\nabla^{2}\ell_{n-1,h}(\thetab_{h}^{\star})\big)\\
 & \le(1-e^{-1})^{-1}\sum_{n=1}^{K}\sum_{h=1}^{H}\Tr\big((\Hb_{n-1,h}^{\star})^{-1}\nabla^{2}\ell_{n-1,h}(\thetab_{h}^{\star})\big)\\
 & \le\frac{2dH}{1-e^{-1}}\log(K+1),
\end{align*}
where the last inequality uses the matrix potential lemma (Lemma~\ref{lem:matrix_potent})
for each $h\in[H]$. By Cauchy-Schwarz inequality, 
\begin{align*}
 & \sum_{n=1}^{K}\sum_{h=1}^{H}2\beta_{n-1}\alpha_{n,h}\\
 & \le\!\!\frac{2\beta_{K-1}\!\sqrt{2dH\log\!\frac{K+1}{d}}}{\sqrt{1-e^{-1}}}\!\!\!\sqrt{\!\!\sum_{n=1}^{K}\sum_{h=1}^{H}\Big\Vert\!\!\!\sum_{\tilde{s}\in\Scal_{n,h}}\!\!\widehat{V}_{n,2}(\tilde{s})\eb_{\tilde{s}}\Big\Vert_{\Vb_{n,h}^{\star}}^{2}}
\end{align*}
With the identity for the covariance of the multinomial distribution
(Proposition \ref{prop:mean_var}), we obtain the following conditional
variance, 
\[
\Big\Vert\!\!\!\sum_{\tilde{s}\in\Scal_{n,h}}\!\!\widehat{V}_{n,h+1}(\tilde{s})\eb_{\tilde{s}}\Big\Vert_{\Vb_{n,h}^{\star}}^{2}\!\!=\!\!\VV\big[\widehat{V}_{n,h+1}(s_{n,h+1})\big|s_{n,h},a_{n,h}\big].
\]
Note that both $\widehat{V}_{n,h}$ and $V_{n,h}$ is in $[0,H]$.
Thus, $|\Delta_{n,2}|\le2H$, Here, we use a novel bound for the sum
of variance of the estimated value function $\widehat{V}_{n,h}$.
Lemma~\ref{lem:variance_bound} implies 
\[
\begin{split} & \sum_{k=1}^{K}\sum_{h=1}^{H}2\beta_{k-1}\alpha_{k,h}\\
 & \le\!\!\frac{2\beta_{K-1}\!\sqrt{2dH\log\!\frac{K+1}{d}}}{\sqrt{1-e^{-1}}}\!\!\!\sqrt{\sum_{k=1}^{K}\sum_{h=1}^{H}\Big\Vert\!\!\!\sum_{\tilde{s}\in\Scal_{k,h}}\!\!\widehat{V}_{k,2}(\tilde{s})\eb_{\tilde{s}}\Big\Vert_{\Vb_{k,h}^{\star}}^{2}}\\
 & \le\!\!\frac{2\beta_{K-1}\!\sqrt{2dH\log\!\frac{K+1}{d}}}{\sqrt{1-e^{-1}}}\!\!\!\sqrt{12eH\sum_{k=1}^{K}\sum_{h=1}^{H}\sigma_{k,h}^{2}\!+\!1200eH^{4}\kappa^{-2}\beta_{K-1}^{4}\big(d\log\frac{K+1}{d}\!+\!3\log\frac{1}{\delta}\big)}.
\end{split}
\]
Because $\widehat{H}_{n,h}\succeq(1-e^{-1})\Hb_{n,h}^{\star}\succeq\kappa\Ab_{n,h}$,
we obtain 
\[
\chi_{n,h}\le\kappa^{-1}\max_{s^{\prime}\in\Scal_{n,h}}\big\Vert\vPhib_{n,h}^{\top}\eb_{s^{\prime}}\big\Vert_{\Ab_{n-1,h}}^{2}.
\]
Applying the elliptical potential lemma for feature matrices (Lemma~\ref{lem:vPhib_potential})
for each $h\in[H]$, 
\[
\sum_{n=1}^{K}\sum_{h=1}^{H}\chi_{n,h}\le2\kappa^{-1}Hd\log\frac{K+1}{d}.
\]
Thus,

\[
\sum_{k=1}^{K}\sum_{h=1}^{H}8H\beta_{k-1}^{2}\chi_{k,h}\le16H^{2}\beta_{K-1}^{2}d\log\frac{K+1}{d}.
\]
By the self-normalized bound (Lemma \ref{lem:faury_self}), with probability
at least $1-\delta$ 
\[
\begin{aligned}
&\Big|\sum_{k=1}^{K}\sum_{h=1}^{H}\Delta_{k,h+1}\Big| \\
&\le\Big(\frac{H}{2}+2\sqrt{H}\log\frac{\sqrt{H+\sum_{k=1}^{K}\sum_{h=1}^{H}\sigma_{k,h}^{2}}}{\delta}+2\sqrt{H}\log2\Big)\sqrt{1+\sum_{k=1}^{K}\sum_{h=1}^{H}\sigma_{k,h}^{2}}.\\
 & \le\Big(\frac{H}{2}+2H\log\frac{K+1}{\delta}+2\sqrt{H}\log2\Big)\sqrt{1+\sum_{k=1}^{K}\sum_{h=1}^{H}\sigma_{k,h}^{2}}.
\end{aligned}
\]
Thus, 
\[
\begin{split}
&\Reg(K)\le  \Big(\frac{H}{2}+2H\log\frac{K+1}{\delta}+2\sqrt{H}\log2\Big)\sqrt{1+\sum_{k=1}^{K}\sum_{h=1}^{H}\sigma_{k,h}^{2}}\\
 & +\frac{2\beta_{K-1}\!\sqrt{2dH\log\!\frac{K+1}{d}}}{\sqrt{1-e^{-1}}}\!\!\!\sqrt{12eH\sum_{k=1}^{K}\sum_{h=1}^{H}\sigma_{k,h}^{2}\!+\!1200eH^{4}\kappa^{-2}\beta_{K-1}^{4}\big(d\log\frac{K+1}{d}\!+\!3\log\frac{1}{\delta}\big)}\\
 & +16H^{2}\beta_{K-1}^{2}d\log\frac{K+1}{d},
\end{split}
\]
which completes the proof. 
\end{proof}
Because $\sigma_{k,h}^{2}\le1$, the variance adaptive regret bound
implies the following the problem-dependent regret bound. 

\begin{corollary}[Problem-dependent regret bound of \UCB\ ] \label{cor:regret_bound}
If the algorithm $\Pi_{K}$ is set to \UCB\, then for any MNL-MDP
$\Mcal$, with probability at least $1-5\delta$, 
\begin{align*}
\Reg_{\Pi_{K}}(\Mcal_{\Thetab^{\star}})=O\bigg( & dH^{3/2}\!\!\sqrt{K\log\frac{KH}{\delta}}\log K\!+\!\kappa^{-1}d^{\frac{5}{2}}H^{\frac{5}{2}}\log^{\frac{3}{2}}\!K\bigg)
\end{align*}
\end{corollary}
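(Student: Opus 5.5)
The corollary is obtained directly from the explicit bound in the restated Theorem~\ref{thm:regret_bound} (Appendix~\ref{subsec:regret_proof}). We substitute $\sigma_{k,h}^2\le 1$, so that $\sum_{k=1}^K\sum_{h=1}^H\sigma_{k,h}^2\le KH$, and then track the orders of $\beta_{K-1}$ and the logarithms. All three terms of that bound are nondecreasing in $\sum_{k,h}\sigma^2_{k,h}$, so the substitution is legitimate on the same event of probability at least $1-5\delta$. No new probabilistic argument is needed.

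\textbf{Step 1: confidence radius.} By Theorem~\ref{thm:self}, $\beta_{K-1}=\sqrt{\epsilon}B_{\thetab}+\sqrt{\epsilon B_{\thetab}^2+4\gamma_{K-1}}$, where $\gamma_{K-1}=\tilde O(dB_{\vPhib}^4B_{\thetab}^4)$. With $\epsilon=B_{\vPhib}^2(1+4\log(d/\delta))$ this gives $\beta_{K-1}=O(\sqrt{d\log(K/\delta)})$, treating $B_{\vPhib},B_{\thetab}$ as constants. Consequently $\beta_{K-1}^2=O(d\log(K/\delta))$ and $\beta_{K-1}^4=O(d^2\log^2(K/\delta))$.

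\textbf{Step 2: the three terms.}
\begin{itemize}
\item The martingale term is $O(H\log\frac{K}{\delta}\sqrt{KH})=O(H^{3/2}\sqrt{K}\log\frac{K}{\delta})$. This is of lower order in $d$ than the leading term.
\item For the middle term, use $\sqrt{a+b}\le\sqrt a+\sqrt b$. The $\sigma$-part is $\beta_{K-1}\sqrt{dH\log K}\cdot\sqrt{12eH\cdot KH}$. Plugging in $\beta_{K-1}$ gives $O(d\,H^{3/2}\sqrt{K}\,\log K\sqrt{\log(KH/\delta)})$, which is the leading term of the corollary.
\item The $\kappa$-part of the middle term is $\beta_{K-1}\sqrt{dH\log K}\cdot H^2\kappa^{-1}\beta_{K-1}^2\sqrt{d\log K}$. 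Using $\beta_{K-1}^3=O(d^{3/2}\log^{3/2})$, this is $O(\kappa^{-1}d^{5/2}H^{5/2}\log^{3/2}K)$ up to the $\delta$-dependent logs.
\item The last term is $16H^2\beta_{K-1}^2d\log K=O(d^2H^2\log^2 K)$. It is absorbed by the $\kappa^{-1}$ term, since $\kappa\le 1/4$ and $d^2H^2\le d^{5/2}H^{5/2}$.
\end{itemize}

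\textbf{Main obstacle.} The only delicate point is the logarithmic bookkeeping. The split of $\log(K/\delta)$ factors between $\beta_{K-1}$ and the potential-lemma factor must yield exactly $\log K\sqrt{\log(KH/\delta)}$ in the leading term and $\log^{3/2}K$ in the second. I would handle this by treating $\log(1/\delta)$ as absorbed into the $O(\cdot)$ wherever it multiplies the secondary term, as the stated corollary implicitly does. For the leading term I would write $\beta_{K-1}\sqrt{\log K}=O(\sqrt d\,\sqrt{\log K}\sqrt{\log(K/\delta)})$ and bound $\sqrt{\log K}$ by $\log K$.
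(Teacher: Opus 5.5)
Your route is the paper's: its entire proof is the remark that $\sigma_{k,h}^2\le 1$, hence $\sum_{k,h}\sigma_{k,h}^2\le KH$, substituted into Theorem~\ref{thm:regret_bound}. Your treatment of the leading term is correct. The step that fails is the third bullet of Step~2.

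Up to constants, the $\kappa$-part of the middle term is $\kappa^{-1}H^{5/2}\beta_{K-1}^3\sqrt{d\log\frac{K+1}{d}}\sqrt{d\log\frac{K+1}{d}+3\log\frac{1}{\delta}}$. The logarithm inside $\beta_{K-1}$ is not a $\delta$-dependent log. Because of the ONS regret term in $\gamma_k$, we have $\beta_{K-1}^2\ge 4\gamma_{K-1}\ge 8dC_{\vPhib,\thetab}\eta_{\vPhib,\thetab}\log\frac{K}{d}$. Multiplying out, this term is of order $\kappa^{-1}d^{5/2}H^{5/2}\log^{5/2}K$ even with $\delta$ fixed, and that order is also a lower bound for it. Your claimed $\log^{3/2}K$ therefore drops a full factor $\sqrt{\log K}\cdot\sqrt{\log K}$. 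Relabelling some logarithms as ``$\delta$-dependent'' does not recover it.

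Consequently, starting from the explicit appendix bound you can only prove the corollary with $\log^{5/2}K$ in the second term. The stated exponent $3/2$ comes only the way the paper gets it. There, $\sum\sigma^2\le KH$ is substituted directly into the $O(\cdot)$ form of Theorem~\ref{thm:regret_bound}, whose second term does not involve $\sigma$, so the exponent is simply inherited. Your own computation, done correctly, shows that the explicit bound does not support that exponent.

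Two smaller points of the same kind:
\begin{itemize}
\item The last term $16H^2\beta_{K-1}^2 d\log\frac{K+1}{d}$ is of order $d^2H^2\log^2K$. It is not absorbed into $\kappa^{-1}d^{5/2}H^{5/2}\log^{3/2}K$ by $\kappa\le 1/4$ alone, because its log exponent is larger. It is absorbed by the true $\log^{5/2}K$ term, or by a case split against the $\sqrt{K}$ leading term.
\item The martingale term carries $\log\frac{1}{\delta}$ linearly, while the leading term carries only $\sqrt{\log\frac{1}{\delta}}$. So ``lower order in $d$'' suffices only when $\log\frac{1}{\delta}\lesssim d^2\log^2K$. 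The paper's own statement has the same looseness.
\end{itemize}
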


\subsection{Proof of Lemma~\ref{lem:variance_bound}}
\TotalVariance*

\begin{proof}
Let us write $\vPhib_{k,h}:=\vPhib(s_{k,h},a_{k,h})$. Define $\Fcal_{k,h}$
denote the sigma-algebra generated by $\cup_{\nu=1}^{k-1}\cup_{h=1}^{H}\{s_{\nu,h},a_{\nu,h}\}$
and $\cup_{h^{\prime}=1}^{h}\{s_{k,h^{\prime}},a_{k,h^{\prime}}\}$.
For simplicity, we use $\EE_{k,h}:=\EE[\cdot|\Fcal_{k,h}]$ and $\widehat{\EE}_{k,h}:=[\cdot|\Fcal_{k,h},\widehat{\thetab}_{k-1,h}]$
for each $k\in[K]$ and $h\in[H]$. Because $0\le\sum_{h=1}^{H}\VV\Big[\widehat{V}_{k,h+1}(s_{k,h+1})\Big|s_{k,h},a_{k,h}\Big]\le H^{3}$,
by Chernoff bound (Lemma \ref{lem:chernoff}), 
\begin{equation}
\begin{aligned} & \sum_{k=1}^{K}\sum_{h=1}^{H}\VV\Big[\widehat{V}_{k,h+1}(s_{k,h+1})\Big|s_{k,h},a_{k,h}\Big]\\
 & \le(e-1)\sum_{k=1}^{K}\EE_{k,1}\bigg[\sum_{h=1}^{H}\VV\Big[\widehat{V}_{k,h+1}(s_{k,h+1})\Big|s_{k,h},a_{k,h}\Big]\bigg]+H^{3}\log\frac{1}{\delta}.
\end{aligned}
\label{eq:var_bound_chernoff}
\end{equation}
Let us write the martingale difference $\Delta_{k,h}:=\widehat{V}_{k,h+1}(s_{k,h+1})-\mathbb{E}_{k,h}\big[\widehat{V}_{k,h+1}(s_{k,h+1})\big]$.
It follows that $\EE_{k,h}[\Delta_{k,h}]=0$ and for $h\neq h'$ we
have $\EE_{k,1}[\Delta_{k,h}\Delta_{k,h'}]=0$. Then, 
\[
\begin{split}\EE_{k,1}\bigg[\sum_{h=1}^{H}\VV\Big[\widehat{V}_{k,h+1}(s_{k,h+1})\Big|s_{k,h},a_{k,h}\Big]\bigg]=\EE_{k,1}\sum_{h=1}^{H}\Delta_{k,h}^{2}=\EE_{k,1}\bigg[\Big(\sum_{h=1}^{H}\Delta_{k,h}\Big)^{2}\bigg].\end{split}
\]
Because $\widehat{V}_{k,H+1}(s)=0$, for all $s\in\Scal$, 
\[
\begin{split} & \sum_{h=1}^{H}\Delta_{k,h}\\
 & =\sum_{h=1}^{H}\widehat{V}_{k,h+1}(s_{k,h+1})-\widehat{V}_{k,h}(s_{k,h})+\widehat{V}_{k,h}(s_{k,h})-\EE_{k,h}\widehat{V}_{k,h+1}(s_{k,h+1})\\
 & =-\widehat{V}_{k,1}(s_{k,1})+\sum_{h=1}^{H}\big(\widehat{V}_{k,h}(s_{k,h})-\EE_{k,h}\widehat{V}_{k,h+1}(s_{k,h+1})\big)
\end{split}
\]
By definition of $\widehat{V}_{k,h}(s_{k,h})=\widehat{Q}_{k,h}(s_{k,h},a_{k,h})$:
\[
\begin{split}\widehat{V}_{k,h}(s_{k,h})= & r(s_{k,h},a_{k,h})+\EE_{k,h}\big[\widehat{V}_{k,h+1}(s_{k,h+1})\big]\\
 & \widehat{\EE}_{k,h}\big[\widehat{V}_{k,h+1}(s_{k,h+1})\big]-\EE_{k,h}\big[\widehat{V}_{k,h+1}(s_{k,h+1})\big]\\
 & +\beta_{k-1}\bigg\Vert\sum_{s^{\prime}\in\Scal_{k,h}}\widehat{V}_{k,h+1}(s^{\prime})\cdot\vPhib_{k,h}^{\top}\nabla^{2}L_{k,h}\big(\vPhib_{k,h}\widehat{\thetab}_{k-1,h}\big)\eb_{s^{\prime}}\bigg\Vert_{\widehat{\Hb}_{k-1,h}^{-1}}\\
 & +2\beta_{k-1}^{2}\sup_{s^{\prime}\in\Scal_{k,h}}\widehat{V}_{k,h+1}(s^{\prime})\max_{s^{\prime}\in\Scal_{k,h}}\bigg\Vert\vPhib_{k,h}^{\top}\eb_{s^{\prime}}\bigg\Vert_{\widehat{\Hb}_{k-1,h}^{-1}}^{2}.
\end{split}
\]
Define the martingale difference $\rho_{k,h}:=r(s_{k,h},a_{k,h})-\EE_{k,h-1}[r(s_{k,h},a_{k,h})]$.
Then, 
\[
\begin{split}  \sum_{h=1}^{H}\Delta_{k,h}
 &=\sum_{h=1}^{H}\big(r(s_{k,h},a_{k,h})-\rho_{k,h}\big)-\widehat{V}_{k,1}(s_{k,1})\\
 &\quad+\sum_{h=1}^{H}\big(\widehat{V}_{k,h}(s_{k,h})-\EE_{k,h}\widehat{V}_{k,h+1}(s_{k,h+1})-r(s_{k,h},a_{k,h})+\rho_{k,h}\big)
\end{split}
\]
By triangular inequality and Cauchy-Schwartz inequality, 
\begin{equation}
\begin{aligned} & \big(\sum_{h=1}^{H}\Delta_{k,h}\big)^{2}\\
 & \le2\Big(\sum_{h=1}^{H}\big(r(s_{k,h},a_{k,h})-\rho_{k,h}\big)-\widehat{V}_{k,1}(s_{k,1})\Big)^{2}\\
 & \quad+2\bigg(\sum_{h=1}^{H}\big(\widehat{V}_{k,h}(s_{k,h})-\EE_{k,h}\widehat{V}_{k,h+1}(s_{k,h+1})-r(s_{k,h},a_{k,h})+\rho_{k,h}\big)\bigg)^{2}\\
 & \le2\Big(\sum_{h=1}^{H}\big(r(s_{k,h},a_{k,h})-\rho_{k,h}\big)-\widehat{V}_{k,1}(s_{k,1})\Big)^{2}\\
 & \quad+2H\sum_{h=1}^{H}\bigg(\widehat{V}_{k,h}(s_{k,h})-\EE_{k,h}\widehat{V}_{k,h+1}(s_{k,h+1})-r(s_{k,h},a_{k,h})+\rho_{k,h}\bigg)^{2}
\end{aligned}
\label{eq:var_bound_decomp}
\end{equation}
By definition of $\widehat{V}_{k,h}(s_{k,h})$: 
\[
\begin{split} & \widehat{V}_{k,h}(s_{k,h}):=r(s_{k,h},a_{k,h})+\widehat{\EE}_{k,h}\big[\widehat{V}_{k,h+1}(s_{k,h+1})\big]\\
 & \quad+\beta_{k-1}\bigg\Vert\sum_{s^{\prime}\in\Scal_{k,h}}\widehat{V}_{k,h+1}(s^{\prime})\cdot\vPhib_{k,h}^{\top}\nabla^{2}L_{k,h}\big(\vPhib_{k,h}\widehat{\thetab}_{k-1,h}\big)\eb_{s^{\prime}}\bigg\Vert_{\widehat{\Hb}_{k-1,h}^{-1}}\\
 & \quad+2\beta_{k-1}^{2}\sup_{s^{\prime}\in\Scal_{k,h}}\widehat{V}_{k,h+1}(s^{\prime})\max_{s^{\prime}\in\Scal_{k,h}}\bigg\Vert\vPhib_{k,h}^{\top}\eb_{s^{\prime}}\bigg\Vert_{\widehat{\Hb}_{k-1,h}^{-1}}^{2},
\end{split}
\]
we obtain 
\begin{equation}
\begin{aligned} & \abs{\widehat{V}_{k,h}(s_{k,h})-\EE_{k,h}\widehat{V}_{k,h+1}(s_{k,h+1})-r(s_{k,h},a_{k,h})+\rho_{k,h}}\\
 & \le\abs{\rho_{k,h}}+\abs{\widehat{\EE}_{k,h}\widehat{V}_{k,h+1}(s_{k,h+1})-\EE_{k,h}\widehat{V}_{k,h+1}(s_{k,h+1})}\\
 & \quad+\beta_{k-1}\bigg\Vert\sum_{s^{\prime}\in\Scal_{k,h}}\widehat{V}_{k,h+1}(s^{\prime})\cdot\vPhib_{k,h}^{\top}\nabla^{2}L_{k,h}\big(\vPhib_{k,h}\widehat{\thetab}_{k-1,h}\big)\eb_{s^{\prime}}\bigg\Vert_{\widehat{\Hb}_{k-1,h}^{-1}}\\
 & \quad+2\beta_{k-1}^{2}\sup_{s^{\prime}\in\Scal_{k,h}}\widehat{V}_{k,h+1}(s^{\prime})\max_{s^{\prime}\in\Scal_{k,h}}\bigg\Vert\vPhib_{k,h}^{\top}\eb_{s^{\prime}}\bigg\Vert_{\widehat{\Hb}_{k-1,h}^{-1}}^{2}.
\end{aligned}
\label{eq:var_bound_value_decomp}
\end{equation}
By the bound for second order confidence bound (Lemma \ref{lem:2nd_deriv_approx}),
\[
\begin{split} & \beta_{k-1}\bigg\Vert\sum_{s^{\prime}\in\Scal_{k,h}}\widehat{V}_{k,h+1}(s^{\prime})\cdot\vPhib_{k,h}^{\top}\nabla^{2}L_{k,h}\big(\vPhib_{k,h}\widehat{\thetab}_{k-1,h}\big)\eb_{s^{\prime}}\bigg\Vert_{\widehat{\Hb}_{k-1,h}^{-1}}\\
 & +2\beta_{k-1}^{2}\sup_{s^{\prime}\in\Scal_{k,h}}\widehat{V}_{k,h+1}(s^{\prime})\max_{s^{\prime}\in\Scal_{k,h}}\bigg\Vert\vPhib_{k,h}^{\top}\eb_{s^{\prime}}\bigg\Vert_{\widehat{\Hb}_{k-1,h}^{-1}}^{2}\\
 & \le\beta_{k-1}\bigg\Vert\sum_{s^{\prime}\in\Scal_{k,h}}\widehat{V}_{k,h+1}(s^{\prime})\cdot\vPhib_{k,h}^{\top}\nabla^{2}L_{k,h}\big(\vPhib(s_{k,h},a_{k,h})\thetab_{h}^{\star}\big)\eb_{s^{\prime}}\bigg\Vert_{(\Hb_{k-1,h}^{\star})^{-1}}\\
 & +8\kappa^{-1}H\beta_{k-1}^{2}\max_{s^{\prime}\in\Scal_{k,h}}\bigg\Vert\vPhib_{k,h}^{\top}\eb_{s^{\prime}}\bigg\Vert_{\Ab_{k-1,h}^{-1}}^{2}.
\end{split}
\]
Because $\widehat{V}_{k,h+1}(s)\le H$, 
\[
\begin{split} & \bigg\Vert\sum_{s^{\prime}\in\Scal_{k,h}}\widehat{V}_{k,h+1}(s^{\prime})\cdot\vPhib_{k,h}^{\top}\nabla^{2}L_{k,h}\big(\vPhib_{k,h}\thetab_{h}^{\star}\big)\eb_{s^{\prime}}\bigg\Vert_{(\Hb_{k-1,h}^{\star})^{-1}}\\
 & =\norm{\vPhib_{k,h}^{\top}\nabla^{2}L_{k,h}\big(\vPhib_{k,h}\thetab_{h}^{\star}\big)\Big(\sum_{s^{\prime}\in\Scal_{k,h}}\widehat{V}_{k,h+1}(s^{\prime})\eb_{s^{\prime}}\Big)}_{(\Hb_{k-1,h}^{\star})^{-1}}\\
 & \le H\max_{\xb:\|\xb\|_{\infty}\le1}\norm{\vPhib_{k,h}^{\top}\nabla^{2}L_{k,h}\big(\vPhib_{k,h}\thetab_{h}^{\star}\big)\xb}_{(\Hb_{k-1,h}^{\star})^{-1}}.
\end{split}
\]
By the definition of the Fisher information lower bound, 
\[
\begin{split} & H\max_{\xb:\|\xb\|_{\infty}\le1}\norm{\vPhib_{k,h}^{\top}\nabla^{2}L_{k,h}\big(\vPhib_{k,h}\thetab_{h}^{\star}\big)\xb}_{(\Hb_{k-1,h}^{\star})^{-1}}\\
 & \le H\kappa^{-1/2}\max_{\xb:\|\xb\|_{\infty}\le1}\norm{\vPhib_{k,h}^{\top}\nabla^{2}L_{k,h}\big(\vPhib_{k,h}\thetab_{h}^{\star}\big)\xb}_{\Ab_{k-1,h}^{-1}}\\
 & =H\kappa^{-1/2}\max_{\xb:\|\xb\|_{\infty}\le1}\sqrt{\xb^{\top}\nabla^{2}L_{k,h}\big(\vPhib_{k,h}\thetab_{h}^{\star}\big)\vPhib_{k,h}\Ab_{k-1,h}^{-1}\vPhib_{k,h}^{\top}\nabla^{2}L_{k,h}\big(\vPhib_{k,h}\thetab_{h}^{\star}\big)\xb}\\
 & \le H\kappa^{-1/2}\max_{\xb:\|\xb\|_{\infty}\le1}\|\xb\|_{\nabla^{2}L_{k,h}\big(\vPhib_{k,h}\thetab_{h}^{\star}\big)}\\
 & \cdot\lambda_{\max}\Big(\nabla^{2}L_{k,h}\big(\vPhib_{k,h}\thetab_{h}^{\star}\big)^{1/2}\vPhib_{k,h}\Ab_{k-1,h}^{-1}\vPhib_{k,h}^{\top}\nabla^{2}L_{k,h}\big(\vPhib_{k,h}\thetab_{h}^{\star}\big)^{1/2}\Big).
\end{split}
\]
Because $\lambda_{\max}(\Mb\Mb^{\top})=\lambda_{\max}(\Mb^{\top}\Mb)$
for any matrix $\Mb$, we obtain 
\[
\begin{split} & \lambda_{\max}\Big(\nabla^{2}L_{k,h}(\vPhib_{k,h}\thetab_{h}^{\star})^{1/2}\vPhib_{k,h}\Ab_{k-1,h}^{-1}\vPhib_{k,h}^{\top}\nabla^{2}L_{k,h}(\vPhib_{k,h}\thetab_{h}^{\star})^{1/2}\Big)\\
 & =\lambda_{\max}\Big(\Ab_{k-1,h}^{-1/2}\vPhib_{k,h}^{\top}\nabla^{2}L_{k,h}(\vPhib_{k,h}\thetab_{h}^{\star})\vPhib_{k,h}\Ab_{k-1,h}^{-1/2}\Big)\\
 & \le\lambda_{\max}\Big(\Ab_{k-1,h}^{-1/2}\vPhib_{k,h}^{\top}\text{diag}\Big(\nabla L_{k,h}(\vPhib_{k,h}\thetab_{h}^{\star})\Big)\vPhib_{k,h}\Ab_{k-1,h}^{-1/2}\Big),
\end{split}
\]
where the last inequality uses the fact that $\nabla^{2}L_{k,h}(\vPhib_{k,h}\thetab_{h}^{\star})\preceq\text{diag}\big(\nabla L_{k,h}(\vPhib_{k,h}\thetab_{h}^{\star})\big)$.
Because $\text{diag}\big(\nabla L_{k,h}(\vPhib_{k,h}\thetab_{h}^{\star})\big)=\sum_{s\in\Scal_{k,h}}\eb_{s}^{\top}\nabla L_{k,h}(\vPhib_{k,h}\thetab_{h}^{\star})\eb_{s}\eb_{s}^{\top}$,
\[
\begin{split} & \le\lambda_{\max}\Big(\Ab_{k-1,h}^{-1/2}\vPhib_{k,h}^{\top}\text{diag}\Big(\nabla L_{k,h}(\vPhib_{k,h}\thetab_{h}^{\star})\Big)\vPhib_{k,h}\Ab_{k-1,h}^{-1/2}\Big)\\
 & \le\sum_{s\in\Scal_{k,h+1}}\eb_{s}^{\top}\nabla L_{k,h}(\vPhib_{k,h}\thetab_{h}^{\star})\lambda_{\max}\Big(\Ab_{k-1,h}^{-1/2}\vPhib_{k,h}^{\top}\eb_{s}\eb_{s}^{\top}\vPhib_{k,h}\Ab_{k-1,h}^{-1/2}\Big)\\
 & =\sum_{s\in\Scal_{k,h+1}}\eb_{s}^{\top}\nabla L_{k,h}(\vPhib_{k,h}\thetab_{h}^{\star})\Big\Vert\vPhib_{k,h}^{\top}\eb_{s}\Big\Vert_{\Ab_{k-1,h}^{-1}}^{2}\\
 & \le\max_{s\in\Scal_{k,h+1}}\Big\Vert\vPhib_{k,h}^{\top}\eb_{s}\Big\Vert_{\Ab_{k-1,h}^{-1}}^{2},
\end{split}
\]
where the last inequality uses H{ö}lder's inequality and the fact
that $\|\nabla L_{k,h}(\vPhib_{k,h}\thetab_{h}^{\star})\|_{1}=1$.
Plugging in \eqref{eq:var_bound_value_decomp}, 
\[
\begin{split} & \bigg|\widehat{V}_{k,h}(s_{k,h})-\EE_{k,h}\big[\widehat{V}_{k,h+1}(s_{k,h+1})\big]-r(s_{k,h},a_{k,h})+\rho_{k,h}\bigg|\\
 & \le\Big|\rho_{k,h}\Big|+\bigg|\widehat{\EE}_{k,h}\widehat{V}_{k,h+1}(s_{k,h+1})-\EE_{k,h}\widehat{V}_{k,h+1}(s_{k,h+1})\bigg|\\
 & \quad+H\kappa^{-1/2}\beta_{k-1}\max_{s\in\Scal_{k,h+1}}\Big\Vert\vPhib_{k,h}^{\top}\eb_{s}\Big\Vert_{\Ab_{k-1,h}^{-1}}+8\kappa^{-1}H\beta_{k-1}^{2}\max_{s^{\prime}\in\Scal_{k,h+1}}\bigg\Vert\vPhib_{k,h}^{\top}\eb_{s^{\prime}}\bigg\Vert_{\Ab_{k-1,h}^{-1}}^{2}.
\end{split}
\]
For simplicity, we define $U_{k,h}:=\beta_{k-1}\kappa^{-1/2}\max_{s^{\prime}\in\Scal_{k,h}}\|\vPhib_{k,h}^{\top}\eb_{s^{\prime}}\|_{\Ab_{k-1,h}^{-1}}$
. Then, the above inequality is 
\[
\begin{split} & \bigg|\widehat{V}_{k,h}(s_{k,h})-\EE_{k,h}\big[\widehat{V}_{k,h+1}(s_{k,h+1})\big]-r(s_{k,h},a_{k,h})+\rho_{k,h}\bigg|\\
 & \le\Big|\rho_{k,h}\Big|+\bigg|\widehat{\EE}_{k,h}\widehat{V}_{k,h+1}(s_{k,h+1})-\EE_{k,h}\widehat{V}_{k,h+1}(s_{k,h+1})\bigg|\\
 & \quad+HU_{k,h}+8HU_{k,h}^{2}.
\end{split}
\]
By the first order multinomial kernel difference bound (Lemma \ref{lem:diff_first}),
\[
\begin{split} & \bigg|\widehat{\EE}_{k,h}\widehat{V}_{k,h+1}(s_{k,h+1})-\EE_{k,h}\widehat{V}_{k,h+1}(s_{k,h+1})\bigg|\\
 & =\bigg|\sum_{s^{\prime}\in\Scal_{k,h}}\widehat{V}_{k,h+1}(s^{\prime})\big(\PP(\eb_{s^{\prime}}|s_{k,h},a_{k,h},\widehat{\thetab}_{k-1,h})-\PP(\eb_{s^{\prime}}|s_{k,h},a_{k,h},\thetab_{\star,h})\big)\bigg|\\
 & \le HU_{k,h}.
\end{split}
\]
Thus, 
\[
\bigg|\widehat{V}_{k,h}(s_{k,h})-\EE_{k,h}\widehat{V}_{k,h+1}(s_{k,h+1})-r(s_{k,h},a_{k,h})+\rho_{k,h}\bigg|\le\Big|\rho_{k,h}\Big|+2HU_{k,h}+8HU_{k,h}^{2}.
\]
Because $\Ab_{k-1,h}\succeq B_{\vPhib}^{2}\Ib_{d}$, we obtain, 
\[
\begin{split}U_{k,h} & :=\beta_{k-1}\kappa^{-1/2}\max_{s^{\prime}\in\Scal_{k,h}}\|\vPhib_{k,h}^{\top}\eb_{s^{\prime}}\|_{\Ab_{k-1,h}^{-1}}\\
 & \le\beta_{k-1}\kappa^{-1/2}B_{\vPhib}^{-1}\max_{s^{\prime}\in\Scal_{k,h}}\|\vPhib_{k,h}^{\top}\eb_{s^{\prime}}\|_{2}\\
 & \le\beta_{k-1}\kappa^{-1/2}.
\end{split}
\]
Thus, 
\begin{equation}
\begin{aligned} & \bigg|\widehat{V}_{k,h}(s_{k,h})-\EE_{k,h}\widehat{V}_{k,h+1}(s_{k,h+1})-r(s_{k,h},a_{k,h})+\rho_{k,h}\bigg|\\
 & \le\Big|\rho_{k,h}\Big|+2H(1+4\beta_{k-1}\kappa^{-1/2})U_{k,h}\\
 & \le\Big|\rho_{k,h}\Big|+10H\beta_{k-1}\kappa^{-1/2}U_{k,h},
\end{aligned}
\label{eq:var_bound_value_diff}
\end{equation}
and the squared term, 
\[
\begin{split} & \bigg(\widehat{V}_{k,h}(s_{k,h})-\EE_{k,h}\widehat{V}_{k,h+1}(s_{k,h+1})-r(s_{k,h},a_{k,h})+\rho_{k,h}\bigg)^{2}\\
 & \le\bigg(\Big|\rho_{k,h}\Big|+10H\beta_{k-1}\kappa^{-1/2}U_{k,h}\bigg)^{2}\\
 & \le2\rho_{k,h}^{2}+200H^{2}\beta_{k-1}^{2}\kappa^{-1}U_{k,h}^{2},
\end{split}
\]
where the last inequality holds by $(a+b)^{2}\le2a^{2}+2b^{2}$ for
$a,b\in\RR$. Plugging in \eqref{eq:var_bound_decomp}, 
\begin{equation}
\begin{aligned} & \EE_{k,1}\Big[\Big(\sum_{h=1}^{H}\widehat{V}_{k,h+1}(s_{k,h+1})-\EE_{k,h}\widehat{V}_{k,h+1}(s_{k,h+1})\Big)^{2}\Big]\\
 & \le2\EE_{k,1}\bigg[\Big(\sum_{h=1}^{H}\big(r(s_{k,h},a_{k,h})-\rho_{k,h}\big)-\widehat{V}_{k,1}(s_{k,1})\Big)^{2}\bigg]\\
 & \quad+2H\EE_{k,1}\bigg[\sum_{h=1}^{H}\bigg(\widehat{V}_{k,h}(s_{k,h})-\EE_{k,h}\widehat{V}_{k,h+1}(s_{k,h+1})-r(s_{k,h},a_{k,h})+\rho_{k,h}\bigg)^{2}\bigg]\\
 & \le2\EE_{k,1}\bigg[\Big(\sum_{h=1}^{H}\big(r(s_{k,h},a_{k,h})-\rho_{k,h}\big)-\widehat{V}_{k,1}(s_{k,1})\Big)^{2}\bigg]\\
 & \quad+2H\EE_{k,1}\bigg[\sum_{h=1}^{H}2\rho_{k,h}^{2}+200H^{2}\beta_{k-1}^{2}\kappa^{-1}U_{k,h}^{2}\bigg],
\end{aligned}
\label{eq:var_bound_first_decomp}
\end{equation}
where the last inequality uses Jensen's inequality. In the first term,
by \eqref{eq:var_bound_value_diff}, 
\begin{align*}
 & \Big(\sum_{h=1}^{H}\big(r(s_{k,h},a_{k,h})-\rho_{k,h}\big)-\widehat{V}_{k,1}(s_{k,1})\Big)^{2}\\
 & =\Big(\sum_{h=2}^{H}\big(r(s_{k,h},a_{k,h})-\rho_{k,h}\big)-\EE_{k,1}\widehat{V}_{k,2}(s_{k,2})+\EE_{k,1}\widehat{V}_{k,2}(s_{k,2})-\widehat{V}_{k,1}(s_{k,1})+r(s_{k,1},a_{k,1})-\rho_{k,1}\Big)^{2}\\
 & \le\Big(\Big|\sum_{h=2}^{H}\big(r(s_{k,h},a_{k,h})-\rho_{k,h}\big)-\EE_{k,1}\widehat{V}_{k,2}(s_{k,2})\Big|+\Big|\EE_{k,1}\widehat{V}_{k,2}(s_{k,2})-\widehat{V}_{k,1}(s_{k,1})+r(s_{k,1},a_{k,1})-\rho_{k,1}\Big)^{2}\\
 & \le\Big(\Big|\sum_{h=2}^{H}\big(r(s_{k,h},a_{k,h})-\rho_{k,h}\big)-\EE_{k,1}\widehat{V}_{k,2}(s_{k,2})\Big|+\Big|\rho_{k,1}\Big|+10H\beta_{k-1}\kappa^{-1/2}U_{k,1}\Big)^{2}.
\end{align*}
By Jensen's inequality,
\begin{align*}
 & \EE_{k,1}\bigg[\Big(\sum_{h=1}^{H}\big(r(s_{k,h},a_{k,h})-\rho_{k,h}\big)-\widehat{V}_{k,1}(s_{k,1})\Big)^{2}\bigg]\\
 & \le\EE_{k,1}\bigg[\Big(\Big|\sum_{h=2}^{H}\big(r(s_{k,h},a_{k,h})-\rho_{k,h}\big)-\EE_{k,1}\widehat{V}_{k,2}(s_{k,2})\Big|+\Big|\rho_{k,1}\Big|+10H\beta_{k-1}\kappa^{-1/2}U_{k,1}\Big)^{2}\bigg]\\
 & \le\EE_{k,1}\bigg[\Big(\Big|\sum_{h=2}^{H}\big(r(s_{k,h},a_{k,h})-\rho_{k,h}\big)-\widehat{V}_{k,2}(s_{k,2})\Big|+\Big|\rho_{k,1}\Big|+10H\beta_{k-1}\kappa^{-1/2}U_{k,1}\Big)^{2}\bigg].
\end{align*}
Again applying \eqref{eq:var_bound_value_diff},
\begin{align*}
 & \Big|\sum_{h=2}^{H}\big(r(s_{k,h},a_{k,h})-\rho_{k,h}\big)-\widehat{V}_{k,2}(s_{k,2})\Big|\\
 & \le\Big|\sum_{h=3}^{H}\big(r(s_{k,h},a_{k,h})-\rho_{k,h}\big)-\EE_{k,2}\widehat{V}_{k,3}(s_{k,3})\Big|+\Big|\EE_{k,2}\widehat{V}_{k,3}(s_{k,3})-\widehat{V}_{k,2}(s_{k,2})+r(s_{k,h},a_{k,h})-\rho_{k,h}\Big|\\
 & \le\Big|\sum_{h=3}^{H}\big(r(s_{k,h},a_{k,h})-\rho_{k,h}\big)-\EE_{k,2}\widehat{V}_{k,3}(s_{k,3})\Big|+\Big|\rho_{k,2}\Big|+10H\beta_{k-1}\kappa^{-1/2}U_{k,2},
\end{align*}
it follows that
\begin{align*}
 & \EE_{k,1}\bigg[\Big(\sum_{h=1}^{H}\big(r(s_{k,h},a_{k,h})-\rho_{k,h}\big)-\widehat{V}_{k,1}(s_{k,1})\Big)^{2}\bigg]\\
 & \le\EE_{k,1}\bigg[\Big(\Big|\sum_{h=3}^{H}\big(r(s_{k,h},a_{k,h})-\rho_{k,h}\big)-\EE_{k,2}\widehat{V}_{k,3}(s_{k,3})\Big|+\sum_{h=1}^{2}\Big|\rho_{k,h}\Big|+10H\beta_{k-1}\kappa^{-1/2}\sum_{h=1}^{2}U_{k,h}\Big)^{2}\bigg]\\
 & \le\EE_{k,1}\bigg[\EE_{k,2}\bigg[\Big(\Big|\sum_{h=3}^{H}\big(r(s_{k,h},a_{k,h})-\rho_{k,h}\big)-\widehat{V}_{k,3}(s_{k,3})\Big|+\sum_{h=1}^{2}\Big|\rho_{k,h}\Big|+10H\beta_{k-1}\kappa^{-1/2}\sum_{h=1}^{2}U_{k,h}\Big)^{2}\bigg]\bigg]\\
 & =\EE_{k,1}\bigg[\Big(\Big|\sum_{h=3}^{H}\big(r(s_{k,h},a_{k,h})-\rho_{k,h}\big)-\widehat{V}_{k,3}(s_{k,3})\Big|+\sum_{h=1}^{2}\Big|\rho_{k,h}\Big|+10H\beta_{k-1}\kappa^{-1/2}\sum_{h=1}^{2}U_{k,h}\Big)^{2}\bigg],
\end{align*}
where the last inequality uses Jensen's inequality and the last inequality
uses the tower property of the conditional expectation. Recursively,
\[
\EE_{k,1}\bigg[\Big(\sum_{h=1}^{H}\big(r(s_{k,h},a_{k,h})-\rho_{k,h}\big)-\widehat{V}_{k,1}(s_{k,1})\Big)^{2}\bigg]\le\EE_{k,1}\bigg[\Big(\sum_{h=1}^{H}\Big|\rho_{k,h}\Big|+10H\beta_{k-1}\kappa^{-1/2}\sum_{h=1}^{H}U_{k,h}\Big)^{2}\bigg]
\]
By Cauchy-Schwartz inequality, 
\[
\begin{split} & 2\EE_{k,1}\bigg[\Big(\sum_{h=1}^{H}\Big|\rho_{k,h}\Big|+10H\beta_{k-1}\kappa^{-1/2}\sum_{h=1}^{H}U_{k,h}\Big)^{2}\bigg].\\
 & \le4\EE_{k,1}\bigg[\big(\sum_{h=1}^{H}\Big|\rho_{k,h}\Big|\big)^{2}+100H^{2}\beta_{k-1}^{2}\kappa^{-1}\Big(\sum_{h=1}^{H}U_{k,h}\Big)^{2}\bigg]\\
 & \le4H\EE_{k,1}\bigg[\sum_{h=1}^{H}\rho_{k,h}^{2}+100H^{2}\beta_{k-1}^{2}\kappa^{-1}\sum_{h=1}^{H}U_{k,h}^{2}\bigg].
\end{split}
\]
Plugging in \eqref{eq:var_bound_first_decomp}, 
\[
\begin{split} & \EE_{k,1}\bigg[\bigg(\sum_{h=1}^{H}\widehat{V}_{k,h+1}(s_{k,h+1})-\EE\Big[\widehat{V}_{k,h+1}(s_{k,h+1})\Big|s_{k,h},a_{k,h}\Big]\bigg)^{2}\bigg]\\
 & \le8H\EE_{k,1}\bigg[\sum_{h=1}^{H}\rho_{k,h}^{2}+100H^{2}\beta_{k-1}^{2}\kappa^{-1}\sum_{h=1}^{H}U_{k,h}^{2}\bigg]\\
 & =8H\EE_{k,1}\bigg[\sum_{h=1}^{H}\EE_{k,h-1}[\rho_{k,h}^{2}]+100H^{2}\beta_{k-1}^{2}\kappa^{-1}\sum_{h=1}^{H}U_{k,h}^{2}\bigg],
\end{split}
\]
where the last equality uses the Tower property of the conditional
expectation. Because $s_{k,1}$ and $a_{k,1}$ are deterministic given
$\Fcal_{k-1,H}$, 
\[
\rho_{k,1}:=r(s_{k,1},a_{k,1})-\EE_{k-1,H}[r(s_{k,1},a_{k,1})]=0.
\]
Thus,
\begin{align*}
 & \EE_{k,1}\bigg[\bigg(\sum_{h=1}^{H}\widehat{V}_{k,h+1}(s_{k,h+1})-\EE\Big[\widehat{V}_{k,h+1}(s_{k,h+1})\Big|s_{k,h},a_{k,h}\Big]\bigg)^{2}\bigg]\\
 & \le8H\EE_{k,1}\bigg[\sum_{h=1}^{H-1}\EE_{k,h}[\rho_{k,h+1}^{2}]+100H^{2}\beta_{k-1}^{2}\kappa^{-1}\sum_{h=1}^{H}U_{k,h}^{2}\bigg],
\end{align*}
Let $\pi_{k,h}(s)=\argmax a\widehat{Q}_{k,h}(s,a)$ denote the action
selected by the deterministic policy. 
Then,
\begin{align*}
\EE_{k,h}[\rho_{k,h+1}^2] &= \EE_{k,h}[(r(s_{k,h+1},a_{k,h+1})-\EE_{k,h}[r(s_{k,h+1},a_{k,h+1})])^{2}]\\
 & =\EE_{k,h}\Big[\Big(\big(r\big(s_{k,h+1},\pi_{k,h+1}(s_{k,h+1})\big)-\EE_{k,h}[(r(s_{k,h+1},\pi_{k,h+1}(s_{k,h+1}))]\big)^{2}\Big].
\end{align*}
Let $v_{k,h+1}:=\sum_{s\in S_{k,h}}r(s,\pi_{k,h+1}(s))e_{s}$ denote
the stacked vector of possible expected rewards values on $S_{k,h}$.
Because the policy $\pi_{k,h+1}$ is determined at $k-1$th episode,
$v_{k,h+1}$ is $\mathcal{F}_{k,h}$-measurable. Taking expectation
on both sides, 
\begin{align*}
 & \EE_{k,h}\Big[\big(v_{k,h+1}^{\top}e_{s_{k,h+1}}-E_{k,h}[v_{k,h+1}^{\top}e_{s_{k,h+1}}]\big)^{2}\Big]\\
 & =v_{k,h+1}^{\top}\EE_{k,h}\bigg[\big(\eb_{s_{k,h+1}}-\EE_{k,h}\big[\eb_{s_{k,h+1}}\big]\big)\big(\eb_{s_{k,h+1}}-\EE_{k,h}\big[\eb_{s_{k,h+1}}\big]\big)^{\top}\bigg]v_{k,h+1}.
\end{align*}
By the identity for the covariance of Multinomial distribution (Proposition
\ref{prop:mean_var}), 
\[
\EE_{k,h}\bigg[\big(\eb_{s_{k,h+1}}-\EE_{k,h}\big[\eb_{s_{k,h+1}}\big]\big)\big(\eb_{s_{k,h+1}}-\EE_{k,h}\big[\eb_{s_{k,h+1}}\big]\big)^{\top}\bigg]=\nabla^{2}L_{k,h}(\vPhib_{k,h}\thetab_{h}^{\star}),
\]
Thus, 
\[
\EE_{k,h}[(r(s_{k,h+1},a_{k,h+1})-\EE_{k,h}[r(s_{k,h+1},a_{k,h+1})])^{2}]\le\max_{\xb:\|\xb\|_{\infty}\le1}\xb^{\top}\nabla^{2}L_{k,h}(\vPhib_{k,h}\thetab_{h}^{\star})\xb,
\]
and we define $\sigma_{k,h}^{2}:=\max_{\xb:\|\xb\|_{\infty}\le1}\xb^{\top}\nabla^{2}L_{k,h}(\vPhib_{k,h}\thetab_{h}^{\star})\xb$.
Thus, 
\[
\begin{split} & \EE_{k,1}\bigg[\bigg(\sum_{h=1}^{H-1}\widehat{V}_{k,h+1}(s_{k,h+1})-\EE\Big[\widehat{V}_{k,h+1}(s_{k,h+1})\Big|s_{k,h},a_{k,h}\Big]\bigg)^{2}\bigg]\\
 & \le8H\EE_{k,1}\bigg[\sum_{h=1}^{H}\sigma_{k,h}^{2}+100H^{2}\beta_{k-1}^{2}\kappa^{-1}\sum_{h=1}^{H}U_{k,h}^{2}\bigg].
\end{split}
\]
Summing up over $k\in[K]$, 
\begin{equation}
\begin{aligned} & \sum_{k=1}^{K}\EE_{k,1}\bigg[\sum_{h=1}^{H}\Big(\widehat{V}_{k,h+1}(s_{k,h+1})-\EE\Big[\widehat{V}_{k,h+1}(s_{k,h+1})\Big|s_{k,h},a_{k,h}\Big]\Big)^{2}\bigg]\\
 & \le12H\sum_{k=1}^{K}\EE_{k,1}\bigg[\Ubrace{\sum_{h=1}^{H}\sigma_{k,h}^{2}}{\phi_{k}}+100H^{2}\beta_{k-1}^{2}\kappa^{-1}\Ubrace{\sum_{h=1}^{H}U_{k,h}^{2}}{\psi_{k}}\bigg].
\end{aligned}
\label{eq:phi_psi_decomp}
\end{equation}
By the elliptical potential lemma (Lemma \ref{lem:vPhib_potential}),
\begin{equation}
\begin{split}\sum_{k=1}^{K}\psi_{k}:=\sum_{h=1}^{K}\sum_{h=1}^{H}U_{k,h}^{2}:= & \sum_{k=1}^{K}\sum_{h=1}^{H}\max_{s^{\prime}\in\Scal_{k,h}}\|\vPhib_{k,h}^{\top}\eb_{s^{\prime}}\|_{\Ab_{k-1,h}}^{2}\\
\le & \beta_{K-1}^{2}\kappa^{-1}\sum_{k=1}^{K}\sum_{h=1}^{H}\max_{s^{\prime}\in\Scal_{k,h}}\|\vPhib_{k,h}^{\top}\eb_{s^{\prime}}\|_{\Ab_{k-1,h}}^{2}\\
\le & H\beta_{K-1}^{2}\kappa^{-1}d\log\frac{K+1}{d}.
\end{split}
\label{eq:psi_n_bound}
\end{equation}
Note that $0\le\psi_{k}\le H\kappa^{-1}\beta_{K-1}^{2}$. By Chernoff
bound (Lemma \ref{lem:chernoff}), with probability at least $1-\delta$,
\[
\begin{split}\sum_{k=1}^{K}\EE_{k,1}[\psi_{k}]\le & \frac{e}{e-1}\sum_{k=1}^{K}\psi_{k}+\frac{eH\kappa^{-1}\beta_{K-1}^{2}}{e-1}\log\frac{1}{\delta}\\
\le & \frac{eH\kappa^{-1}\beta_{K-1}^{2}}{e-1}\big(d\log\frac{K+1}{d}+\log\frac{1}{\delta}\big)
\end{split}
\]
where the last inequality uses~\eqref{eq:psi_n_bound}. Plugging
in~\eqref{eq:phi_psi_decomp}, 
\begin{equation}
\begin{aligned} & \sum_{k=1}^{K}\EE_{k,1}\bigg[\sum_{h=1}^{H}\Big(\widehat{V}_{k,h+1}(s_{k,h+1})-\EE\Big[\widehat{V}_{k,h+1}(s_{k,h+1})\Big|s_{k,h},a_{k,h}\Big]\Big)^{2}\bigg]\\
 & \le12H\sum_{k=1}^{K}\EE_{k,1}[\phi_{k}]+\frac{1200eH^{4}\kappa^{-2}\beta_{K-1}^{4}}{e-1}\big(d\log\frac{K+1}{d}+\log\frac{1}{\delta}\big),
\end{aligned}
\label{eq:var_bound_sum}
\end{equation}
where we use the fact that $\beta_{n}$ is nondecreasing in $k$.
Because $\phi_{k}\in[0,H]$, by Chernoff bound (Lemma \ref{lem:chernoff}),
with probability at least $1-\delta$, 
\[
\sum_{k=1}^{K}\EE_{k,1}[\phi_{k}]\le\frac{e}{e-1}\big(\sum_{k=1}^{K}\phi_{k}+H\log\frac{1}{\delta}\big)
\]
Plugging in \eqref{eq:var_bound_sum}, 
\[
\begin{split} & \sum_{k=1}^{K}\EE_{k,1}\bigg[\sum_{h=1}^{H}\Big(\widehat{V}_{k,h+1}(s_{k,h+1})-\EE\Big[\widehat{V}_{k,h+1}(s_{k,h+1})\Big|s_{k,h},a_{k,h}\Big]\Big)^{2}\bigg]\\
 & \le\frac{e}{e-1}\bigg(12H\sum_{k=1}^{K}\phi_{k}+H\log\frac{1}{\delta}+{1200H^{4}\kappa^{-2}\beta_{K-1}^{4}}\big(d\log\frac{K+1}{d}+\log\frac{1}{\delta}\big)\bigg)\\
 & \le\frac{12He}{e-1}\bigg(\sum_{k=1}^{K}\phi_{k}+{100H^{3}\kappa^{-2}\beta_{K-1}^{4}}\big(d\log\frac{K+1}{d}+2\log\frac{1}{\delta}\big)\bigg).
\end{split}
\]
Plugging in \eqref{eq:var_bound_chernoff} completes the proof. 
\end{proof}

\subsection{Proof of the Variance Adaptive Lower Bound (Theorem \ref{thm:lower_bound})}
\label{subsec:lower_bound_proof}
\LowerBound*

We construct a hard MNL-MDP instance as follows. Let $\Acal:=\{\pm\sqrt{\Delta}\}$
denote the action space where for some $\Delta\in(0,\frac{\log2}{4(d-1)})$.
The initial state is $s_{1}=1$ and the good state is $s=H+2$ where
the reward is $r(2H+1,a)=1,$ for any action $\ab\in\Acal=\{\pm\sqrt{\Delta}\}^{d-1}$.
Otherwise the reward $r(s,\ab)=0$ for any $s\neq2H+1$ and $\ab\in\Acal$.
The reachable set is defined as $\Scal_{h}(2H+1,\ab)=\{2H+1\}$ and
for any $h\in[H]$ and $\ab\in\Acal$, i.e., $2H+1$ is the absorbing
state. For each $h\in[H]$, $\Scal_{h}(s,\ab)=\{2H+1,2h+1,2h+2\}$
for $s\in\{2h-1,2h\}$ and any $\ab\in\Acal$.

Let us fix $\thetab_{1}^{\star},\ldots,\thetab_{H}^{\star}$ and without
loss of generality we assume that $\theta_{h,d}^{\star}\neq0$ for
$\thetab_{h}^{\star}:=(\theta_{h,1}^{\star},\ldots,\theta_{h,d}^{\star})$.
Define, 
\[
\begin{split}\Dcal_{h}= & \Big\{\thetab_{h}^{\star}+(u_{h,1},\ldots,u_{h,d-1},0)^{\top}:u_{h,1},\ldots,u_{h,d-1}\in\{\pm\sqrt{\Delta}\}\Big\}\\
= & \Big\{\thetab_{h}^{\star}+\ub_{h}:\ub_{h}\in\{\pm\sqrt{\Delta}\}^{d-1}\times\{0\}\Big\}.
\end{split}
\]

For $\epsilon\in(0,1/2)$ to be determined later, define 
\begin{equation}
\tilde{\Delta}:=\frac{1}{(d-1)}\Big(\frac{1}{1+\frac{1-\epsilon}{\epsilon}\exp\Big(-4(d-1)\Delta\Big)}-\epsilon\Big),\quad\phi:=\frac{1}{2}\sqrt{\frac{1-\epsilon}{\epsilon}\cdot\frac{1-\epsilon-(d-1)\tilde{\Delta}}{\epsilon+(d-1)\tilde{\Delta}}},\label{eq:Delta_tilde}
\end{equation}
and a function $p(x):=1/(1+2\phi e^{-2x})$. These definitions are
constructed to satisfy $p\big((d-1)\Delta\big)=\epsilon+(d-1)\tilde{\Delta}$
and $p\big(-(d-1)\Delta\big)=\epsilon$. Because $\Delta\in(0,\frac{\log2}{4(d-1)})$,
with simple algebra, for any $\epsilon\in(0,1/2)$, 
\[
\Delta\le\frac{1}{4(d-1)}\log\frac{2-2\epsilon}{1-2\epsilon}\Longleftrightarrow(d-1)\tilde{\Delta}\le\epsilon
\]
For $\ab\in\Acal=\{\pm\sqrt{\Delta}\}^{d-1}$ we define a feature
map 
\[
\vPhib\big(s,\ab\big):=\begin{pmatrix}1\\
-1\\
-1
\end{pmatrix}\bigg(\ab^{\top},-\frac{\ab^{\top}(\theta_{h,1}^{\star},\ldots,\theta_{h,d-1}^{\star})}{\theta_{h,d}^{\star}}-\frac{\log\phi}{2\theta_{h,d}^{\star}}\bigg)\in\RR^{3\times d},\quad s\in\{2h-1,2h\}
\]
for $h\in[H]$. The algorithm only knows the image of $\vPhib$ over
$\Scal\times\Acal$, i.e., $\vPhib(s,\ab)$ for $(s,\ab)\in\Scal\times\Acal$;
however it does not know the model and the parameter $\thetab_{h}^{\star}$.
It is easy to show that 
\[
\vPhib(s,\ab)\thetab_{h}^{\star}=-\frac{1}{2}\log\phi\begin{pmatrix}1\\
-1\\
-1
\end{pmatrix}
\]
and for $\ub:=(u_{1},\ldots,u_{d-1})\in\{\pm\sqrt{\Delta}\}^{d-1}$,
we have 
\[
\begin{split}\vPhib\big(s,\ab\big)(\thetab_{h}^{\star}+\ub_{h})= & \big(\ab^{\top}\ub_{h}-\frac{1}{2}\log\phi\big)\begin{pmatrix}1\\
-1\\
-1
\end{pmatrix}.\\
= & \Big(\Delta\sum_{i=1}^{d-1}\Sgn(u_{h,i})\Sgn(a_{i})-\frac{1}{2}\log\phi\Big)\begin{pmatrix}1\\
-1\\
-1
\end{pmatrix}.
\end{split}
\]
Then, the optimal action at step $h\in[H]$ given $\thetab_{h}^{\star}+\ub_{h}\in\Dcal_{h}$
is $a_{\star}(\thetab_{h}^{\star}+\ub_{h})=\ub_{h}$, which maximizes
the probability of reaching state $2H+1$. Under this setting, we
prove a lower bound of the regret. 
\begin{lemma} \label{lem:reduction}
(Reduction to the multi-armed bandits instances) Assume that $H\ge4$
and $\Delta\in(0,\frac{\log2}{4(d-1)})$. For any $\epsilon\in(0,1/H)$,
nonzero parameters $\Thetab^{\star}:=\{\thetab_{h}^{\star}:h\in[H]\}$,
perturbations $\Ub:=\big\{\ub_{h}\in\{\pm\sqrt{\Delta}\}^{d-1}\times\{0\}:h\in[H]\big\}$,
and policy $\Pi_{k}$, the problem constructed above has regret: 
\[
\Reg(\Thetab^{\star}+\Ub,\Pi_{K})\ge\frac{H\epsilon}{8(d-1)}\sum_{k=1}^{K}\sum_{h=1}^{\lfloor H/2\rfloor}\sum_{i=1}^{d-1}\PP_{\Thetab^{\star}+\Ub,\Pi_{K}}(\eb_{i}^{\top}a_{h,k}\neq u_{h,i}),
\]
where $a_{h,k}$ is the action chosen at step $h\in[H]$ and episode
$k\in[K]$ by the policy $\Pi_{K}$ given $\Thetab^{\star}+\Ub$.
\end{lemma}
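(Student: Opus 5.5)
The construction is a chain. From state $2h-1$ or $2h$ at step $h$, the next state is the absorbing rewarding state $2H+1$ with probability $p(x_h)$, where $x_h:=\Delta\sum_{i}\Sgn(u_{h,i})\Sgn(a_i)$. Otherwise the chain moves to one of $2h+1,2h+2$, which share the remaining mass equally. By the identities $p((d-1)\Delta)=\epsilon+(d-1)\tilde\Delta$ and $p(-(d-1)\Delta)=\epsilon$, together with the monotonicity of $p$, the probability of entering the good state at step $h$ is affine in the number of matched coordinates. I would first justify this by showing $p$ is approximately linear on $[-(d-1)\Delta,(d-1)\Delta]$, or more simply by a lower bound: each mismatched coordinate lowers $p(x_h)$ by at least $c\tilde\Delta$ for an absolute constant $c$ (e.g.\ $1/2$). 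This follows from convexity/concavity bounds of the logistic-type function $p$ on a range where $p\le 2\epsilon\le 1/2$, using $(d-1)\tilde\Delta\le\epsilon$.

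\textbf{Value decomposition.} Write the value of a policy from $s_1$ as
\[
\sum_{h=1}^{H}(H-h+1)\,\PP(\text{first entry into }2H+1\text{ at step }h).
\]
This equals $\sum_h (H-h+1)\,p_h\prod_{j<h}(1-p_j)$, where $p_h$ is the escape probability at step $h$. Let $\pi^\star$ play $\ub_h$ at every step, and compare the optimal value with the learner's value in each episode $k$. The key step is a performance-difference (telescoping) identity: replace the learner's step-$h$ action by the optimal one one step at a time. Each swap at step $h$ gains $(p^\star_h-p_h)\,(\text{remaining reward } H-h+1 \text{ minus continuation value})$, weighted by the probability of not having escaped before step $h$.

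\textbf{Bounding the factors.} For $h\le\lfloor H/2\rfloor$, the remaining horizon is at least $H/2$. The continuation value from $2h+1$ is at most $(H-h)\sum_{j>h}p^\star_j\le (H-h)\cdot 2\epsilon H$, and this is controlled by $\epsilon<1/H$. The survival probability up to step $h$ is at least $(1-2\epsilon)^{H}$, which is bounded below by a constant when $\epsilon<1/H$ (with $H\ge4$ giving explicit numbers). Combining, each mismatched coordinate at step $h\le\lfloor H/2\rfloor$ costs at least of order $H\tilde\Delta$. Expanding $\tilde\Delta$ for small $\Delta$ and using $(d-1)\tilde\Delta\le\epsilon$ should give $\tilde\Delta\gtrsim\epsilon/(d-1)$ in the relevant regime. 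However, to match the stated constant $\frac{H\epsilon}{8(d-1)}$, I expect one must use the extreme gap: the full gap $(d-1)\tilde\Delta$ is spread across $d-1$ coordinates, and $\epsilon+(d-1)\tilde\Delta$ is compared against $\epsilon$.

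\textbf{Summing and expected obstacle.} Taking expectations over the learner's randomness turns mismatch indicators into $\PP_{\Thetab^\star+\Ub,\Pi_K}(\eb_i^\top a_{h,k}\ne u_{h,i})$; summing over $k$, $h\le\lfloor H/2\rfloor$ and $i$ gives the claim. The main obstacle is the per-coordinate linearization of $p$ with a clean constant, since $p$ is not linear in the number of matches. I would try first to prove that $p(x)-p(x-2\Delta)\ge \frac{1}{2}\cdot\frac{p((d-1)\Delta)-p(-(d-1)\Delta)}{d-1}$ on the whole range via convexity of $p$ where $p\le1/2$. If convexity only yields an inequality in the wrong direction, I would fall back on a mean-value bound using $p'(x)=2p(1-p)\ge p$ together with $p\ge\epsilon$ to obtain the $\epsilon/(d-1)$ scaling directly.
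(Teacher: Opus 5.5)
The step that fails is converting a per-mismatch cost of order $H\tilde\Delta$ into one of order $H\epsilon/(d-1)$. The inequality $(d-1)\tilde\Delta\le\epsilon$ bounds $\tilde\Delta$ from above, not from below. In fact $\tilde\Delta=4\epsilon(1-\epsilon)\Delta+O(\Delta^2)$, so $(d-1)\tilde\Delta/\epsilon\to 0$ as $\Delta\to 0$. Your fallback via $p'=2p(1-p)\ge\epsilon$ has the same problem: it only gives a drop of $2\epsilon\Delta$ per mismatched coordinate.

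No sharper linearization can fix this, because the inequality with the $\Delta$-free coefficient $\frac{H\epsilon}{8(d-1)}$ is false for small $\Delta$. Every action gives an escape probability in $[\epsilon,\epsilon+(d-1)\tilde\Delta]$, so any policy's per-episode regret is at most $H^2(d-1)\tilde\Delta$. A policy that mismatches every coordinate, however, has right-hand side $\frac{H\epsilon}{8}\lfloor H/2\rfloor$ per episode. For example, $d=2$, $H=4$, $\epsilon=0.2$, $\Delta=10^{-3}$ gives regret about $0.003K$ against a right-hand side of $0.2K$.

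Your skeleton does prove the same inequality with $\tilde\Delta$ in place of $\frac{\epsilon}{d-1}$, up to an absolute constant.

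\begin{itemize}
\item Your convexity worry is unfounded. Since $p''=4p(1-p)(1-2p)\ge 0$ wherever $p\le\frac12$, $p$ lies below its chord on $[-(d-1)\Delta,(d-1)\Delta]$. With $S:=\sum_i\Sgn(u_{h,i})\Sgn(a_i)$, this gives $p^\star-p(\Delta S)\ge\tilde\Delta\,\frac{d-1-S}{2}$, which is $\tilde\Delta$ times the number of mismatched coordinates.
\item The continuation factor must be computed exactly: $H-h-V^\star_{h+1}=\sum_{i=0}^{H-h-1}(1-p^\star)^i\ge\frac{(1-e^{-1})H}{2}$ for $h\le\lfloor H/2\rfloor$. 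Your bound $(H-h)\cdot 2\epsilon H$ is vacuous, since $2\epsilon H$ can be close to $2$ and then the bound exceeds the remaining reward.
\item The probabilities that actually come out are $\PP(\eb_i^\top a_{h,k}\neq u_{h,i},\ s_{h,k}\neq 2H+1)$, because actions taken in the absorbing state cost nothing. A policy that is optimal outside $2H+1$ but mismatches inside it has zero regret and a positive right-hand side in the unconditional form.
\end{itemize}

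The paper's proof uses the same skeleton. It gets the factor $\epsilon$ only because its recursion writes the per-step gap as $p^\star-\psi_{j,k}$ instead of $p^\star-\epsilon-\psi_{j,k}$; with that slip, the optimal policy would have positive regret. Its last step, $\epsilon-\tilde\Delta S\ge\frac{\epsilon}{d-1}(d-1-S)$, also fails whenever $S<0$. So you should not try to match the stated constant.
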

\begin{proof}
\medspace{} Let us fix $\Thetab^{\star}$ throughout the proof. For
each $k\in[K]$, we write the value at episode $k\in[K]$ as 
\[
V_{k}:=\EE_{\Thetab^{\star}+\Ub,\Pi_{K}}\Big[\sum_{h=1}^{H}r(s_{h,k},a_{h,k})\Big|s_{1,k}=1\Big]
\]
By construction of the problem, 
\[
V_{k}=\EE_{\Thetab^{\star}+\Ub,\Pi_{K}}\Big[\sum_{h=1}^{H-1}(H-h)\PP_{\Thetab^{\star}+\Ub,\Pi_{K}}(s_{h+1,k}=H+2,s_{h,k}\neq H+2|s_{1,k}=1)\Big].
\]
By the law of total probability and the Markov property, 
\[
\begin{split} & \PP_{\Thetab^{\star}+\Ub,\Pi_{K}}(s_{h+1,k}=H+2|s_{h,k}\neq H+2,s_{1,k}=1)\\
 & \sum_{a\in\Acal}\PP_{\Thetab^{\star}+\Ub,\Pi_{K}}(s_{h+1,k}=H+2,a_{h,k}=a|s_{h,k}\neq H+2,s_{1,k}=1)\\
 & =\sum_{a\in\Acal}p_a\PP_{\Thetab^{\star}+\Ub,\Pi_{K}}(a_{h,k}=a|s_{h,k}\neq H+2,s_{1,k}=1),
\end{split}
\]
where \(p_a:=\PP_{\Thetab^{\star}+\Ub,\Pi_{K}}(s_{h+1}=H+2|s_{h,k}\neq H+2,a_{h,k}=a) \)
By construction the transition probability, 
\[
\begin{split} 
p_a = \Ubrace{\frac{1}{1+2\phi\exp\big(-2\Delta\sum_{i=1}^{d-1}\Sgn(u_{h,i})\Sgn(a_{i})\big)}}{p\big(\Delta\sum_{i=1}^{d-1}\Sgn(u_{i})\Sgn(a_{i})\big)}.
\end{split}
\]
Thus, for $\epsilon\in(0,1/2)$, 
\[
\begin{split} & \PP_{\Thetab^{\star}+\Ub,\Pi_{K}}(s_{h+1,k}=H+2|s_{h,k}\neq H+2,s_{1,k}=1)\\
 & =\sum_{a\in\Acal}p\Big(\Delta\sum_{i=1}^{d-1}\Sgn(u_{h,i})\Sgn(a_{i})\Big)\PP_{\Thetab^{\star}+\Ub,\Pi_{K}}(a_{h,k}=a|s_{h,k}\neq H+2,s_{1,k}=1)\\
 & =\epsilon+\Ubrace{\sum_{a\in\Acal}\bigg(p\Big(\Delta\sum_{i=1}^{d-1}\Sgn(u_{h,i})\Sgn(a_{i})\Big)-\epsilon\bigg)\PP_{\Thetab^{\star}+\Ub,\Pi_{K}}(a_{h,k}=a|s_{h,k}\neq H+2,s_{1,k}=1)}{\psi_{h,k}}
\end{split}
\]
Consequently, 
\[
\PP_{\Thetab^{\star}+\Ub,\Pi_{K}}(s_{h+1,k}\neq H+2|s_{h,k}\neq H+2,s_{1,k}=1)=1-\epsilon-\psi_{h,k}.
\]
This implies, 
\[
\PP_{\Thetab^{\star}+\Ub,\Pi_{K}}(s_{h+1,k}=H+2,s_{h,k}\neq H+2|s_{1,k}=1)=(\epsilon+\psi_{h,k})\prod_{h^{\prime}=1}^{h-1}(1-\epsilon-\psi_{h^{\prime},k}).
\]
Thus, the value function, 
\[
V_{k}=\EE\Big[\sum_{h=1}^{H-1}(H-h)(\epsilon+\psi_{h,k})\prod_{h^{\prime}=1}^{h-1}(1-\epsilon-\psi_{h^{\prime},k})\Big].
\]
Because the optimal policy is deterministic $a_{\star}=\ub_{h}$,
we have 
\[
\begin{split}V_{k}^{\star}= & \EE_{\Thetab^{\star}+\Ub,\pi^{\star}}\Big[\sum_{h=1}^{H}r(s_{h,k},\ub_{h})\Big|s_{1,k}=1\Big]\\
= & \sum_{h=1}^{H-1}(H-h)\Ubrace{p\big(\Delta(d-1)\big)}{:=p^{\star}}\bigg(1-p\big(\Delta(d-1)\big)\bigg)^{h-1}\\
= & \sum_{h=1}^{H-1}(H-h)p^{\star}(1-p^{\star})^{h-1}
\end{split}
\]
For $j\in[H-1]$, define, 
\[
\begin{split}V_{j}^{\star} & :=\sum_{h=j}^{H-1}(H-h)p^{\star}(1-p^{\star})^{h-j},\\
\widehat{V}_{j} & :=\sum_{h=j}^{H-1}(H-h)(\epsilon+\psi_{h,k})\prod_{h^{\prime}=j}^{h-1}(1-\epsilon-\psi_{h^{\prime},k}).
\end{split}
\]
Then, with some simple algebra, 
\[
V_{j}^{\star}-\widehat{V}_{j}=(H-j-V_{j+1}^{\star})\Big(p^{\star}-\psi_{j,k}\Big)+(V_{j+1}^{\star}-\widehat{V}_{j+1})(1-\epsilon-\psi_{j,k}),
\]
which result is concurrent to equation (C.24) in \citet{park2025infinite}.
Recursively the regret at episode $k$ is 
\begin{equation}
V_{1}^{\star}-\widehat{V}_{1}=\sum_{h=1}^{H-1}(H-h-V_{h+1}^{\star})(p^{\star}-\psi_{h,k})\prod_{j=1}^{h-1}(1-\epsilon-\psi_{j,k}).\label{eq:lower_bound_regret}
\end{equation}
Meanwhile, $V_{j}^{\star}$ can be written as, 
\[
V_{j}^{\star}=\frac{(1-p^{\star})^{H-j}-1}{p^{\star}}+H-j+1-(1-p^{\star})^{H-j},
\]
which implies, for $h\le\lfloor H/2\rfloor$ 
\[
\begin{split}H-h-V_{h+1}^{\star}= & (1-p^{\star})^{H-h-1}+\frac{1-(1-p^{\star})^{H-h-1}}{p^{\star}}\\
= & \frac{1-(1-p^{\star})^{H-h}}{p^{\star}}\\
\ge & \frac{1-(1-p^{\star})^{H/2}}{p^{\star}}.
\end{split}
\]
Because $(d-1)\tilde{\Delta}\le\epsilon\le1/H$, 
\[
\begin{split}p^{\star}:=p(\Delta(d-1))= & \epsilon+(d-1)\tilde{\Delta}\le\frac{2}{H}.\end{split}
\]
Because $x\mapsto\frac{1-(1-x)^{H/2}}{x}$ is nonincreasing in $(0,1)$,
\[
\begin{split}H-h-V_{h+1}^{\star}\ge & \frac{1-(1-p^{\star})^{H/2}}{p^{\star}}\\
\ge & \frac{1-(1-\frac{2}{H})^{H/2}}{\frac{2}{H}}\\
\ge & \frac{1-e^{-1}}{2}H\\
\ge & \frac{H}{4},
\end{split}
\]
where the second inequality holds by $(1-x^{-1})^{x}\le e^{-1}$ for
$x\ge1$. Plugging in \eqref{eq:lower_bound_regret}, 
\[
V_{1}^{\star}-\widehat{V}_{1}\ge\frac{H}{4}\sum_{h=1}^{\lfloor H/2\rfloor}\Big(p^{\star}-\psi_{h,k}\Big)\prod_{j=1}^{h-1}(1-\epsilon-\psi_{j,k}).
\]
Note that for any $j\in[H]$, we have 
\[
\begin{split}\epsilon+\psi_{j,k}= & \PP_{\Thetab^{\star}+\Ub,\Pi_{K}}(s_{h+1,k}=H+2|s_{h,k}\neq H+2,s_{1,k}=1)\\
\le & p\big((d-1)\Delta\big)\le1/H
\end{split}
\]
and thus, 
\[
\prod_{j=1}^{h-1}(1-\epsilon-\psi_{j,k})\ge(1-\frac{1}{H})^{H/2}\ge\frac{1}{\sqrt{e}}\ge\frac{1}{2}.
\]
It follows that 
\[
V_{1}^{\star}-\widehat{V}_{1}\ge\frac{H}{8}\sum_{h=1}^{\lfloor H/2\rfloor}\Big(p^{\star}-\psi_{h,k}\Big).
\]
By definition of $p$, recall that $p\big(-(d-1)\Delta\big)=\epsilon$
and thus, 
\[
\begin{split}p\Big(\Delta\sum_{i=1}^{d-1}\Sgn(u_{h,i})\Sgn(a_{i})\Big)-\epsilon= & p\Big(\Delta\sum_{i=1}^{d-1}\Sgn(u_{h,i})\Sgn(a_{i})\Big)-p\big(-\Delta(d-1)\big)\\
\le & \big(\epsilon+(d-1)\tilde{\Delta}\big)\Delta\Big(\sum_{i=1}^{d-1}\Sgn(u_{h,i})\Sgn(a_{i})+(d-1)\Big),
\end{split}
\]
where the last inequality holds because $\sup_{x\in[-(d-1)\Delta,(d-1)\Delta]}|p^{\prime}(x)|\le\epsilon+(d-1)\tilde{\Delta}$
and the mean value theorem. By definition of $\tilde{\Delta}$, we
have 
\[
\begin{split}\frac{\Delta}{\tilde{\Delta}}= & \frac{1}{4\tilde{\Delta}(d-1)}\log\frac{(\epsilon+(d-1)\tilde{\Delta})(1-\epsilon)}{(1-\epsilon-(d-1)\tilde{\Delta})\epsilon}\\
= & \frac{1}{4\tilde{\Delta}(d-1)}\log\Big(1+\frac{(d-1)\tilde{\Delta}}{(1-\epsilon-(d-1)\tilde{\Delta})\epsilon}\Big)\\
\le & \frac{1}{4\tilde{\Delta}(d-1)}\cdot\frac{(d-1)\tilde{\Delta}}{(1-\epsilon-(d-1)\tilde{\Delta})\epsilon}\\
= & \frac{1}{4\big(1-\epsilon-(d-1)\tilde{\Delta}\big)\epsilon}.
\end{split}
\]
Because $\Delta\in\big(0,\frac{\log2}{4(d-1)}\big)$, and $(d-1)\tilde{\Delta}\le\epsilon\le1/H$,
we obtain, 
\[
\begin{split}\frac{\big(\epsilon+(d-1)\tilde{\Delta}\big)\Delta}{\tilde{\Delta}}= & \frac{\epsilon+(d-1)\tilde{\Delta}}{4\epsilon\big(1-\epsilon-(d-1)\tilde{\Delta}\big)}\\
\le & \frac{2\epsilon}{4\epsilon\big(1-2\epsilon\big)}\\
\le & \frac{1}{2(1-2/H)}\\
\le & 1,
\end{split}
\]
where the last inequality holds by $H\ge4$. Thus, 
\[
\begin{split}p\Big(\Delta\sum_{i=1}^{d-1}\Sgn(u_{h,i})\Sgn(a_{i})\Big)-\epsilon\le & \frac{\big(\epsilon+(d-1)\tilde{\Delta}\big)\Delta}{\tilde{\Delta}}\tilde{\Delta}\Big(\sum_{i=1}^{d-1}\Sgn(u_{h,i})\Sgn(a_{i})+(d-1)\Big)\\
\le & \tilde{\Delta}\Big(\sum_{i=1}^{d-1}\Sgn(u_{h,i})\Sgn(a_{i})+(d-1)\Big).
\end{split}
\]
Rearranging the terms, 
\[
\tilde{\Delta}(d-1)-p\Big(\Delta\sum_{i=1}^{d-1}\Sgn(u_{h,i})\Sgn(a_{i})\Big)+\epsilon\ge-\tilde{\Delta}\sum_{i=1}^{d-1}\Sgn(u_{h,i})\Sgn(a_{i}).
\]
It follows that $p^{\star}-\psi_{h,k}=p\big(\Delta(d-1)\big)-\psi_{h,k}=\epsilon+\tilde{\Delta}(d-1)-\psi_{h,k}$
and define \(\pi_a:=\PP_{\Thetab^{\star}+\Ub,\Pi_{K}}(a_{h,k}=a|s_{h,k}\neq H+2,s_{1,k}=s_{1})\) 
\[
\begin{split} & \epsilon+\tilde{\Delta}(d-1)-\psi_{h,k}\\
 & =\sum_{a\in\Acal}\bigg(\epsilon+\tilde{\Delta}(d-1)-p\Big(\Delta\sum_{i=1}^{d-1}\Sgn(u_{h,i})\Sgn(a_{i})\Big)+\epsilon\bigg)\pi_a\\
 & \ge\sum_{a\in\Acal}\bigg(\epsilon-\tilde{\Delta}\sum_{i=1}^{d-1}\Sgn(u_{h,i})\Sgn(a_{i})\bigg)\pi_a\\
 & \ge\sum_{a\in\Acal}\frac{\epsilon}{d-1}\bigg(d-1-\sum_{i=1}^{d-1}\Sgn(u_{h,i})\Sgn(a_{i})\bigg) \pi_a,
\end{split}
\]
where the second inequality holds by $(d-1)\tilde{\Delta}\le\epsilon$.
Then, 
\[
\begin{split} & V_{1}^{\star}-\widehat{V}_{1}\\
 & \ge\frac{H}{8}\sum_{h=1}^{\lfloor H/2\rfloor}\Big(p^{\star}-\psi_{h,k}\Big)\\
 & \ge\frac{H\epsilon}{8(d-1)}\sum_{h=1}^{\lfloor H/2\rfloor}\sum_{a\in\Acal}\bigg(d-1-\sum_{i=1}^{d-1}\Sgn(u_{h,i})\Sgn(a_{i})\bigg)\pi_a\\
 & =\frac{H\epsilon}{8(d-1)}\sum_{h=1}^{\lfloor H/2\rfloor}\sum_{a\in\Acal}\sum_{i=1}^{d-1}\bigg(1-\Sgn(u_{h,i})\Sgn(a_{i})\bigg)\pi_a
\end{split}
\]
Because $\Reg(\Thetab^{\star}+\Ub,\Pi_{K})=\EE\Big[\sum_{k=1}^{K}(V_{k}^{\star}-\widehat{V}_{k})\Big]$,
taking the expectation on both sides and summing up over $k\in[K]$
completes the proof. 
\end{proof}
\begin{theorem}[Variance-adaptive lower bound for MNL-MDP] \label{thm:lower_bound_general}
Assume that $d\ge2$, $H\ge4$, and $K\ge\max\{144H(d-1)^{2},\kappa^{-2}H^{-1}\}$.
Given any parameter $\Thetab^{\star}=(\thetab_{1}^{\star},\ldots,\thetab_{H}^{\star})\in\RR^{d\times H}$,
there exists a MNL-MDP instance $\Mcal_{\Thetab^{\star}}:=\Mcal(\Scal,\Acal,H,\Thetab^{\star},\vPhib,r)$
such that with probability at least $\delta\in(0,1/e)$, the regret
of any algorithm $\Pi_{K}$, 
\[
\Reg_{\Pi_{K}}\!(\Mcal_{\Thetab^{\star}})\!\ge\!\frac{2Hd\sqrt{\sum_{h=1}^{H}\!\sum_{k=1}^{K}\!\sigma_{k,h}^{2}}\sqrt{1-\frac{\delta}{1-\delta}\sqrt{\log\frac{1}{\delta}}}}{17}.
\]
\end{theorem}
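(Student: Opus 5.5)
We will derive Theorem~\ref{thm:lower_bound_general} from the reduction in Lemma~\ref{lem:reduction} by a hypercube (Assouad-type) argument over the perturbations $\Ub$, followed by a localization step that shrinks the hypercube onto $\Thetab^{\star}$.

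\textbf{Step 1: averaging over the hypercube.} Start from Lemma~\ref{lem:reduction}. Average its right-hand side over all $\Ub\in(\{\pm\sqrt{\Delta}\}^{d-1}\times\{0\})^{H}$. For each coordinate $(h,i)$, pair $\Ub$ with $\Ub^{(h,i)}$, the vector that differs from $\Ub$ only in $u_{h,i}$. The standard Bretagnolle--Huber inequality gives
\[
\PP_{\Thetab^{\star}+\Ub}(\eb_i^\top a_{h,k}\neq u_{h,i})+\PP_{\Thetab^{\star}+\Ub^{(h,i)}}(\eb_i^\top a_{h,k}\neq -u_{h,i})\ge \tfrac12\exp\big(-\mathrm{KL}(\PP_{\Thetab^{\star}+\Ub}\,\|\,\PP_{\Thetab^{\star}+\Ub^{(h,i)}})\big).
\]
By the chain rule, the KL divergence between the two trajectory laws is a sum over the visits to step $h$ of the per-transition KL between two multinomials. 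Those multinomials have logits $\vPhib(s,\ab)(\thetab_h^\star+\ub_h)$ and $\vPhib(s,\ab)(\thetab_h^\star+\ub_h^{(i)})$, which differ by $2\Delta$ along $(1,-1,-1)^\top$.

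\textbf{Step 2: a variance-dependent KL bound.} This is Lemma~\ref{lem:KL}. Bound the multinomial KL by its second-order expression, $\tfrac12(\delta\etab)^\top\nabla^2L(\etab)\,\delta\etab$ up to a constant factor, where $\delta\etab$ is the logit difference. Since $\delta\etab=2\Delta(1,-1,-1)^\top$ has $\|\cdot\|_\infty=2\Delta$, this is at most $c\Delta^2\sigma^2(s,\ab)$. The constant $c$ is controlled because $\Delta<\log2/(4(d-1))$ keeps the Hessian within a factor of $2$ along the segment. Summing over visits, the total KL is at most $c\Delta^2\sum_k\sigma^2_{k,h}$ in expectation. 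Choose $\Delta$ so that this is $O(1)$, i.e.\ $\Delta\asymp (\sum_k\sigma_{k,h}^2)^{-1/2}$. The hypothesis $K\ge 144H(d-1)^2$ guarantees this choice stays below $\log2/(4(d-1))$ when the variances are of the order set by $\epsilon\approx 1/H$. Then each coordinate's mistake probability sums to $\Omega(K)$.

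\textbf{Step 3: assembling the bound.} Plug Step 2 into Lemma~\ref{lem:reduction}. This gives a regret of order $\frac{H\epsilon}{d-1}\cdot\frac H2\cdot(d-1)\cdot K$, which should be $\Omega(Hd\sqrt{\sum\sigma^2})$ after relating $\epsilon$, $\tilde\Delta$ and $\Delta$ through \eqref{eq:Delta_tilde}. In this construction $\sigma^2_{k,h}\asymp p(1-p)\asymp\epsilon$, so $\epsilon K H\asymp\sum_{k,h}\sigma^2_{k,h}$, and the ratio $\tilde\Delta/\Delta\approx 4\epsilon$ converts the gap into the $\sqrt{\cdot}$ form. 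The condition $K\ge\kappa^{-2}H^{-1}$ is used to keep $\epsilon\ge\kappa$, so that the instance respects Definition~\ref{def:kappa}. To pass from expectation to the event of probability $\delta$, combine a reverse Markov/Paley--Zygmund argument on the bounded per-episode regret with a Freedman bound (Lemma~\ref{lem:freedman}) for $\sum\sigma^2_{k,h}$ around its mean. This is the source of the factor $\sqrt{1-\frac{\delta}{1-\delta}\sqrt{\log(1/\delta)}}$.

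\textbf{Step 4: localization to the given $\Thetab^{\star}$.} The averaging yields some $\Ub$ with large regret, not $\Ub=0$. We use that $\Reg_{\Pi_K}(\Mcal_{\Thetab})$ and $\sigma^2_{k,h}$ are Lipschitz in $\Thetab$ for fixed $K$, since the transition probabilities are smooth in the logits. Letting the perturbation radius $\sqrt\Delta\to0$ while keeping the ratio of the bound fixed then transfers the inequality to $\Thetab^{\star}$ itself.

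\textbf{Main obstacle.} The hard step is exactly this localization together with Step 2. Shrinking $\Delta$ weakens the constructed gap, so the argument must show that the ratio $\text{regret}/\sqrt{\sum\sigma^2}$ stays bounded below uniformly. To handle this, we would first fix $\Delta$ by the KL balance in Step 2 and absorb the vanishing error from the Lipschitz continuity into the constant $2/17$.
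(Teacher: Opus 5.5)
Your outline follows the paper's proof closely: Lemma~\ref{lem:reduction}, hypercube averaging with the variance-dependent KL bound of Lemma~\ref{lem:KL}, a Lipschitz limit $\Delta\to0$, then passing from $\EE[\sum_{k,h}\sigma_{k,h}^2]$ to the realized sum. (The paper uses Pinsker instead of Bretagnolle--Huber, tunes $\epsilon$ instead of $\Delta$, and spends $K\ge\kappa^{-2}H^{-1}$ together with Lemma~\ref{lem:anti} on the last step.) However, the localization you flag in Step~4 is a genuine gap, and in this construction it cannot be closed.

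Since $\vPhib(s,\ab)\thetab_h^\star=-\tfrac{1}{2}\log\phi\,(1,-1,-1)^\top$ for every action $\ab$, and the reward depends only on the state, all policies have the same value under $\Thetab^\star$. Hence $\Reg_{\Pi_K}(\Mcal_{\Thetab^\star})=0$ for every algorithm. Yet under $\Thetab^\star$ each non-absorbing state has next-state law $(p,\tfrac{1-p}{2},\tfrac{1-p}{2})$, ordered as $2H+1,2h+1,2h+2$. The test vector $(1,1,-1)$ gives $\sigma^2\ge 1-p^2>0$. Because $s_{k,1}$ is non-absorbing, the right-hand side is strictly positive on every sample path.

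The hypercube argument only controls $\max_{\Ub}\Reg(\Thetab^\star+\Ub)$. At every vertex, the one-step gap in the probability of reaching the rewarding state is at most $p((d-1)\Delta)-p(-(d-1)\Delta)=(d-1)\tilde\Delta$. So that maximum is at most $KH^2(d-1)\tilde\Delta\to0$ as $\Delta\to0$, while $\sum_{k,h}\sigma_{k,h}^2$ does not vanish. The ratio you want to hold fixed therefore collapses, and Lipschitz continuity can only transport a vanishing bound. Your remedy (freeze $\Delta$ at the KL balance and absorb the Lipschitz error) never reaches $\Thetab^\star$ at all.

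More generally, the algorithm that plays $\pi^\star$ of $\Mcal_{\Thetab^\star}$ in every episode has zero regret. So a bound of this kind is only available in local-minimax form (a bound on $\max_{\Ub}\Reg(\Thetab^\star+\Ub)$ at fixed $\Delta>0$) or over a restricted class of consistent algorithms. The paper closes this step with the same one-line appeal to Lipschitz continuity, so following it does not repair the gap.

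Steps~2--3 also need fixing even for the local-minimax statement.

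\textbf{Linear regret.} With $\epsilon\approx 1/H$, your count $\frac{H\epsilon}{d-1}\cdot\frac{H}{2}\cdot(d-1)\cdot K\asymp HK$ is linear in $K$. That contradicts the $\tilde O(dH^{3/2}\sqrt{K})$ upper bound of Theorem~\ref{thm:regret_bound}. A wrong coordinate lowers the hitting probability by only about $\tilde\Delta$ (the whole range is $(d-1)\tilde\Delta\le\epsilon$), so it costs order $H\tilde\Delta$ per step, not $H\epsilon/(d-1)$. Correspondingly, the recursion in the proof of Lemma~\ref{lem:reduction} should carry $p^\star-\epsilon-\psi_{h,k}$ rather than $p^\star-\psi_{h,k}$.

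\textbf{Variance scale.} The claim $\sigma_{k,h}^2\asymp\epsilon$ is false here: as computed above, $\sigma^2\approx 1$ off the absorbing state, so $\sum_{k,h}\sigma_{k,h}^2\asymp KH$. Bounding the per-visit KL by $c\Delta^2\sigma^2$ then discards the factor $p(1-p)\asymp\epsilon$ that the perturbation direction $(1,-1,-1)$ actually sees; the KL is about $8\Delta^2p(1-p)$. With the coarse bound, your balance gives $\Delta\asymp K^{-1/2}$ and $\tilde\Delta\asymp\epsilon K^{-1/2}$. The resulting regret is of order $H^2d\epsilon\sqrt{K}=Hd\sqrt{K}$, a factor $\sqrt{H}$ short of $Hd\sqrt{\sum_{k,h}\sigma_{k,h}^2}$.

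\textbf{Circular tuning.} Choosing $\Delta$ from the algorithm's own $\sum_k\sigma_{k,h}^2$ is circular. $\Delta$ fixes the action set and $\phi$, hence the instance, before the algorithm runs.
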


\begin{remark} The term in the square root, 
\[
1-\frac{\delta}{1-\delta}\sqrt{\log\!\frac{1}{\delta}}
\]
is nonnegative and non-increasing over $\delta\in(0,1/e)$. This result
cannot be derived by the concentration inequalities and but by our
novel anti-concentration inequality for nonnegative random variables
(Lemma \ref{lem:anti}). Thus, Theorem~\ref{thm:lower_bound_general}
implies Theorem~\ref{thm:lower_bound} for $\delta=1/e$. \end{remark}
\begin{proof}
\medspace{} By the reduction lemma (Lemma \ref{lem:reduction}),
\[
\begin{split}\Reg(\Thetab^{\star}+\Ub,\Pi_{K})\ge & \frac{H\epsilon}{8(d-1)}\sum_{k=1}^{K}\sum_{h=1}^{\lfloor H/2\rfloor}\sum_{i=1}^{d-1}\PP_{\Thetab^{\star}+\Ub,\Pi_{K}}(\eb_{i}^{\top}a_{h,k}\neq u_{h,i})\\
= & \frac{H\epsilon}{8(d-1)}\sum_{h=1}^{\lfloor H/2\rfloor}\sum_{i=1}^{d-1}\EE_{\Thetab^{\star}+\Ub,\Pi_{K}}\Big[\sum_{k=1}^{K}\II(\eb_{i}^{\top}a_{h,k}\neq u_{h,i})\Big].
\end{split}
\]
Let $\tilde{\Ub}:=\sqrt{\tilde{\Delta}/\Delta}\Ub$ and $\tilde{a}_{i}=\sqrt{\tilde{\Delta}/\Delta}a_{i}$
for sufficiently small $\tilde{\Delta}>0$. Define $M_{h,i}(x):=\sum_{k=1}^{K}\II(\eb_{i}^{\top}\tilde{a}_{h,k}\neq\sqrt{\tilde{\Delta}}x)$
for $x\in\{-1,1\}$. Note that $\sum_{k=1}^{K}\II(\eb_{i}^{\top}a_{h,k}\neq u_{h,i})=M_{h,i}\big(\text{sign}(\tilde{u}_{h,i})\big)$
is scale-invariant and the distribution of $\sum_{k=1}^{K}\II(\eb_{i}^{\top}a_{h,k}\neq u_{h,i})$
given $\Thetab^{\star}+\Ub$ is equal to the distribution of $M_{h,i}\big(\text{sign}(\tilde{u}_{h,i})\big)$
given $\Thetab^{\star}+\tilde{\Ub}$. Thus, 
\[
\Reg(\Thetab^{\star}+\Ub,\Pi_{K})\ge\frac{H\epsilon}{8(d-1)}\sum_{h=1}^{\lfloor H/2\rfloor}\sum_{i=1}^{d-1}\EE_{\Thetab^{\star}+\tilde{\Ub},\Pi_{K}}\Big[M_{h,i}\big(\text{sign}(\tilde{u}_{h,i})\big)\Big]
\]
For each $\tilde{\Ub}\in\{\pm\sqrt{\tilde{\Delta}}\}^{(d-1)\times H}\times\{0\}^{H}$,
we define $\bar{\Ub}(\tilde{h},\tilde{i})$ by 
\[
\bar{\Ub}_{h,i}(\tilde{h},\tilde{i})=\begin{cases}
-\tilde{u}_{h,i} & h=\tilde{h},i=\tilde{i}\\
\tilde{u}_{h,i} & h\neq\tilde{h}\text{ or }i\neq\tilde{i}
\end{cases},
\]
where only the sign of $(\tilde{h},\tilde{i})$-th coordinate differs.
Let $\PP_{\tilde{\Ub}}$ and $\PP_{\bar{\Ub}(\tilde{h},\tilde{i})}$
denote the environment/learner interaction measure for $\{s_{k,h},a_{k,h}:k\in[K],h\in[H]\}$
given $\Thetab^{\star}+\tilde{\Ub}$ and $\Thetab^{\star}+\bar{\Ub}(\tilde{h},\tilde{i})$,
respectively. Because $M_{h,i}(1)\le K$, by Pinsker's inequality,
\begin{equation}
\begin{aligned}\EE_{\tilde{\Ub}}[M_{\tilde{h},\tilde{i}}(1)] & \ge\EE_{\bar{\Ub}(\tilde{h},\tilde{i})}[M_{\tilde{h},\tilde{i}}(1)]-K\|\PP_{\tilde{\Ub}}-\PP_{\bar{\Ub}(\tilde{h},\tilde{i})}\|_{1}\\
 & \ge\EE_{\bar{\Ub}(\tilde{h},\tilde{i})}[M_{\tilde{h},\tilde{i}}(1)]-K\sqrt{\frac{1}{2}KL(\PP_{\tilde{\Ub}},\PP_{\bar{\Ub}(\tilde{h},\tilde{i})})}
\end{aligned}
\label{eq:low_expect}
\end{equation}
Note that 
\[
\|\vPhib(s_{k,h},a_{k,h})\big(\Thetab^{\star}+\tilde{\Ub}-\Thetab^{\star}+\bar{\Ub}(\tilde{h},\tilde{i})\big)\eb_{h}\|_{\infty}\le2\II(\tilde{h}=h)\tilde{\Delta}
\]
By the divergence bound (Lemma \ref{lem:KL}), 
\[
KL(\PP_{\tilde{\Ub}},\PP_{\bar{\Ub}(\tilde{h},\tilde{i})})\le2\tilde{\Delta}^{2}\EE_{\tilde{\Ub}}[\sum_{k=1}^{K}\max_{\xb:\|\xb\|_{\infty}\le1}\xb^{\top}\nabla^{2}L_{k,\tilde{h}}\big(\vPhib(s_{k,\tilde{h}},a_{k,\tilde{h}})(\Thetab^{\star}+\tilde{\Ub})\eb_{\tilde{h}}\big)\xb]+\frac{8N\tilde{\Delta}^{3}}{\sqrt{6\kappa}}.
\]
Because the log-sum function is $\sqrt{6}$-self-concordant, by the
ratio bound (Lemma \ref{lem:log_sum_ratio}), 
\[
\begin{split} & \max_{\xb:\|\xb\|_{\infty}\le1}\xb^{\top}\nabla^{2}L_{k,\tilde{h}}\big(\vPhib(s_{k,\tilde{h}},a_{k,\tilde{h}})(\Thetab^{\star}+\tilde{\Ub})\eb_{\tilde{h}}\big)\xb\\
 & \le\exp(\sqrt{6}\|\vPhib(s_{k,\tilde{h}},a_{k,\tilde{h}})\tilde{\Ub}\eb_{\tilde{h}}\|_{2})\Ubrace{\max_{\xb:\|\xb\|_{\infty}\le1}\xb^{\top}\nabla^{2}L_{k,\tilde{h}}\big(\vPhib(s_{k,\tilde{h}},a_{k,\tilde{h}})\Thetab^{\star}\eb_{\tilde{h}}\big)\xb}{\sigma_{k,\tilde{h}}^{2}}.
\end{split}
\]
By definition of $\vPhib$ and $\tilde{u}_{H}$, 
\[
\begin{split}\exp(\sqrt{6}\|\vPhib(s_{k,\tilde{h}},a_{k,\tilde{h}})\tilde{\Ub}\eb_{\tilde{h}}\|_{2})= & e^{3\sqrt{2a_{K,H}^{\top}\tilde{\ub}_{H}}}\\
\le & e^{3\sqrt{2(d-1)}\tilde{\Delta}}\\
\le & 2,
\end{split}
\]
where the last inequality uses a bound for the ratio of variance (Lemma
\ref{lem:log_sum_ratio}) and the last inequality uses sufficiently
small $\tilde{\Delta}\in(0,\frac{\log2}{3\sqrt{2(d-1)}})$. Thus,
\[
KL(\PP_{\tilde{\Ub}},\PP_{\bar{\Ub}(\tilde{h},\tilde{i})})\le4\tilde{\Delta}^{2}\EE_{\tilde{\Ub}}[\sum_{k=1}^{K}\sigma_{k,\tilde{h}}^{2}]+\frac{8N\tilde{\Delta}^{3}}{\sqrt{6\kappa}}.
\]
Plugging in \eqref{eq:low_expect}, 
\[
\EE_{\Ub}[M_{h,i}(1)]\ge\EE_{\bar{\Ub}(h,i)}[M_{h,i}(1)]-K\sqrt{2\tilde{\Delta}^{2}\sum_{k=1}^{K}\EE_{\Thetab^{\star}+\tilde{\Ub},\Pi_{K}}\big[\sigma_{k,h}^{2}]+\frac{4N\tilde{\Delta}^{3}}{\sqrt{6\kappa}}}.
\]
Because $M_{h,i}(1)+M_{h,i}(-1)=K$, 
\[
\begin{split} & \EE_{\Ub}[M_{h,i}(1)]+\EE_{\bar{\Ub}(h,i)}[M_{h,i}(-1)]\\
 & \ge\EE_{\bar{\Ub}(h,i)}[M_{h,i}(1)+M_{h,i}(-1)]-K\sqrt{2\tilde{\Delta}^{2}\sum_{k=1}^{K}\EE_{\Thetab^{\star}+\tilde{\Ub},\Pi_{K}}\big[\sigma_{k,h}^{2}]+\frac{4N\tilde{\Delta}^{3}}{\sqrt{6\kappa}}}\\
 & \ge K-K\sqrt{2\tilde{\Delta}^{2}\sum_{k=1}^{K}\EE_{\Thetab^{\star}+\tilde{\Ub},\Pi_{K}}\big[\sigma_{k,h}^{2}]+\frac{4N\tilde{\Delta}^{3}}{\sqrt{6\kappa}}}
\end{split}
\]
Thus, the sum of regret over $\Ucal:=\{\pm\sqrt{\tilde{\Delta}}\}^{H(d-1)}\times\{0\}^{H}$
and $\{0\}^{H\times d}$, 
\[
\begin{split} & \sum_{\tilde{\Ub}\in\Ucal\cup\{0\}^{H\times d}}\frac{H\epsilon}{8(d-1)}\sum_{h=1}^{\lfloor H/2\rfloor}\sum_{i=1}^{d-1}\EE_{\Thetab^{\star}+\tilde{\Ub},\Pi_{K}}\Big[M_{h,i}\big(\text{sign}(\tilde{u}_{h,i})\big)\Big]\\
 & \ge\sum_{\tilde{\Ub}\in\Ucal}\frac{H\epsilon}{8(d-1)}\sum_{h=1}^{\lfloor H/2\rfloor}\sum_{i=1}^{d-1}\EE_{\Thetab^{\star}+\tilde{\Ub},\Pi_{K}}\Big[M_{h,i}\big(\text{sign}(\tilde{u}_{h,i})\big)\Big]\\
 & \ge\frac{H\epsilon}{8(d-1)}\sum_{h=1}^{\lfloor H/2\rfloor}\sum_{i=1}^{d-1}\sum_{\tilde{\Ub}\in\Ucal}\EE_{\Thetab^{\star}+\tilde{\Ub},\Pi_{K}}\bigg[M_{h,i}\big(\Sgn(\tilde{u}_{h,i})\big)\bigg]\\
 & =\frac{H\epsilon}{16(d-1)}\sum_{h=1}^{\lfloor H/2\rfloor}\sum_{i=1}^{d-1}\sum_{\tilde{\Ub}\in\Ucal}\EE_{\Thetab^{\star}+\tilde{\Ub},\Pi_{K}}\bigg[M_{h,i}\big(-\Sgn(\tilde{u}_{h,i})\big)\bigg]+\EE_{\Thetab^{\star}+\tilde{\Ub},\Pi_{K}}\bigg[M_{h,i}\big(\Sgn(\tilde{u}_{h,i})\big)\bigg]\\
 & \ge\frac{H\epsilon}{16(d-1)}\sum_{h=1}^{\lfloor H/2\rfloor}\sum_{i=1}^{d-1}\sum_{\tilde{\Ub}\in\Ucal}\bigg(K-K\sqrt{2\tilde{\Delta}^{2}\sum_{k=1}^{K}\EE_{\Thetab^{\star}+\tilde{\Ub},\Pi_{K}}\big[\sigma_{k,h}^{2}]+\frac{4N\tilde{\Delta}^{3}}{\sqrt{6\kappa}}}\bigg).
\end{split}
\]
For each $h\in[H]$, define the variance-maximizing pertubation 
\[
\widehat{\Ub}_{h}(\Pi_{K}):=\argmax{\Ub\in\{\pm\sqrt{\tilde{\Delta}}\}^{H(d-1)}\times\{0\}^{H}\cup\{0\}^{H\times d}}\EE_{\Thetab^{\star}+\tilde{\Ub},\Pi_{K}}\big[\sum_{k=1}^{K}\sigma_{k,h}^{2}\big].
\]
Then, 
\[
\begin{split} & \sum_{\tilde{\Ub}\in\Ucal\cup\{0\}^{H\times d}}\frac{H\epsilon}{16(d-1)}\sum_{h=1}^{\lfloor H/2\rfloor}\sum_{i=1}^{d-1}\EE_{\Thetab^{\star}+\tilde{\Ub},\Pi_{K}}\Big[M_{h,i}\big(\text{sign}(\tilde{u}_{h,i})\big)\Big]\\
 & \ge\frac{H\epsilon2^{H(d-1)}}{16(d-1)}\sum_{h=1}^{\lfloor H/2\rfloor}\sum_{i=1}^{d-1}\bigg(K-K\sqrt{2\tilde{\Delta}^{2}\sum_{k=1}^{K}\EE_{\Thetab^{\star}+\widehat{\Ub}_{h}(\Pi_{K}),\Pi_{K}}\big[\sigma_{k,h}^{2}]+\frac{4N\tilde{\Delta}^{3}}{\sqrt{6\kappa}}}\bigg).
\end{split}
\]
Using the averaging hammer, 
\[
\begin{split} & \max_{\tilde{\Ub}\in\Ucal\cup\{0\}^{H\times d}}\frac{H\epsilon}{8(d-1)}\sum_{h=1}^{\lfloor H/2\rfloor}\sum_{i=1}^{d-1}\EE_{\Thetab^{\star}+\tilde{\Ub},\Pi_{K}}\Big[M_{h,i}\big(\text{sign}(\tilde{u}_{h,i})\big)\Big]\\
 & \ge\frac{1}{|\Ucal|+1}\sum_{\tilde{\Ub}\in\Ucal\cup\{0\}^{H\times d}}\frac{H\epsilon}{8(d-1)}\sum_{h=1}^{\lfloor H/2\rfloor}\sum_{i=1}^{d-1}\EE_{\Thetab^{\star}+\tilde{\Ub},\Pi_{K}}\Big[M_{h,i}\big(\text{sign}(\tilde{u}_{h,i})\big)\Big]\\
 & \ge\frac{2^{H(d-1)}}{2^{H(d-1)}+1}\cdot\frac{H\epsilon}{16(d-1)}\sum_{h=1}^{\lfloor H/2\rfloor}\sum_{i=1}^{d-1}\bigg(K-K\sqrt{2\tilde{\Delta}^{2}\sum_{k=1}^{K}\EE_{\Thetab^{\star}+\tilde{\Ub},\Pi_{K}}\big[\sigma_{k,h}^{2}]+\frac{4N\tilde{\Delta}^{3}}{\sqrt{6\kappa}}}\bigg)\\
 & \ge\frac{H\epsilon}{17(d-1)}\sum_{h=1}^{\lfloor H/2\rfloor}\sum_{i=1}^{d-1}\bigg(K-K\sqrt{2\tilde{\Delta}^{2}\sum_{k=1}^{K}\EE_{\Thetab^{\star}+\tilde{\Ub},\Pi_{K}}\big[\sigma_{k,h}^{2}]+\frac{4N\tilde{\Delta}^{3}}{\sqrt{6\kappa}}}\bigg)\\
 & =\frac{HN\epsilon}{17}\sum_{h=1}^{\lfloor H/2\rfloor}\bigg(1-\sqrt{2\tilde{\Delta}^{2}\sum_{k=1}^{K}\EE_{\Thetab^{\star}+\tilde{\Ub},\Pi_{K}}\big[\sigma_{k,h}^{2}]+\frac{4N\tilde{\Delta}^{3}}{\sqrt{6\kappa}}}\bigg),
\end{split}
\]
where the last inequality uses $2^{H(d-1)}\ge16$. By Cauchy-Schwartz
inequality, 
\[
\begin{split} & \max_{\tilde{\Ub}\in\Ucal\cup\{0\}^{H\times d}}\frac{H\epsilon}{8(d-1)}\sum_{h=1}^{\lfloor H/2\rfloor}\sum_{i=1}^{d-1}\EE_{\Thetab^{\star}+\tilde{\Ub},\Pi_{K}}\Big[M_{h,i}\big(\text{sign}(\tilde{u}_{h,i})\big)\Big]\\
 & \ge\frac{HN\epsilon}{17}\sum_{h=1}^{\lfloor H/2\rfloor}\bigg(1-\sqrt{2\tilde{\Delta}^{2}\EE_{\Thetab^{\star}+\widehat{\Ub}_{h}(\Pi_{K}),\Pi_{K}}\big[\sum_{k=1}^{K}\sigma_{k,h}^{2}]+\frac{4N\tilde{\Delta}^{3}}{\sqrt{6\kappa}}}\bigg).\\
 & =\frac{HN\epsilon}{17}\bigg(\lfloor\frac{H}{2}\rfloor-\sum_{h=1}^{\lfloor H/2\rfloor}\sqrt{2\tilde{\Delta}^{2}\EE_{\Thetab^{\star}+\widehat{\Ub}_{h}(\Pi_{K}),\Pi_{K}}\big[\sum_{k=1}^{K}\sigma_{k,h}^{2}]+\frac{4N\tilde{\Delta}^{3}}{\sqrt{6\kappa}}}\bigg)\\
 & \ge\frac{HN\epsilon}{17}\bigg(\frac{H}{3}-\sqrt{\frac{H}{2}}\sqrt{2\tilde{\Delta}^{2}\sum_{h=1}^{\lfloor H/2\rfloor}\EE_{\Thetab^{\star}+\widehat{\Ub}_{h}(\Pi_{K}),\Pi_{K}}\big[\sum_{k=1}^{K}\sigma_{k,h}^{2}]+\frac{2HN\tilde{\Delta}^{3}}{\sqrt{6\kappa}}}\bigg)\\
 & =\frac{H^{2}K\epsilon}{51}\bigg(1-3\sqrt{\frac{\tilde{\Delta}^{2}}{H}\sum_{h=1}^{\lfloor H/2\rfloor}\EE_{\Thetab^{\star}+\widehat{\Ub}_{h}(\Pi_{K}),\Pi_{K}}\big[\sum_{k=1}^{K}\sigma_{k,h}^{2}]+\frac{K\tilde{\Delta}^{3}}{\sqrt{6\kappa}}}\bigg).
\end{split}
\]
It follows that 
\[
\begin{split} & \max_{\tilde{\Ub}\in\Ucal\cup\{0\}^{H\times d}}\frac{H\epsilon}{8(d-1)}\sum_{h=1}^{\lfloor H/2\rfloor}\sum_{i=1}^{d-1}\EE_{\Thetab^{\star}+\tilde{\Ub},\Pi_{K}}\Big[M_{h,i}\big(\text{sign}(\tilde{u}_{h,i})\big)\Big]\\
 & \ge\frac{H^{2}K\epsilon}{51}\bigg(1-3\sqrt{\frac{\tilde{\Delta}^{2}}{H}\sum_{h=1}^{H}\EE_{\Thetab^{\star}+\widehat{\Ub}_{h}(\Pi_{K}),\Pi_{K}}\big[\sum_{k=1}^{K}\sigma_{k,h}^{2}]+\frac{K\tilde{\Delta}^{3}}{\sqrt{6\kappa}}}\bigg).
\end{split}
\]
Because $\sqrt{ax^{2}+bx^{3}}\le\sqrt{a}x+\frac{b}{\sqrt{a}}x^{2}$
for $a,b,x>0$, 
\[
\begin{split} & \sqrt{\frac{\tilde{\Delta}^{2}}{H}\sum_{h=1}^{H}\EE_{\Thetab^{\star}+\widehat{\Ub}_{h}(\Pi_{K}),\Pi_{K}}\big[\sum_{k=1}^{K}\sigma_{k,h}^{2}]+\frac{K\tilde{\Delta}^{3}}{\sqrt{6\kappa}}}\\
 & \le\sqrt{\frac{1}{H}\sum_{h=1}^{H}\EE_{\Thetab^{\star}+\widehat{\Ub}_{h}(\Pi_{K}),\Pi_{K}}\big[\sum_{k=1}^{K}\sigma_{k,h}^{2}]}\tilde{\Delta}+\frac{K\sqrt{H}\tilde{\Delta}^{2}}{\sqrt{6\kappa\sum_{h=1}^{H}\EE_{\Thetab^{\star}+\widehat{\Ub}_{h}(\Pi_{K}),\Pi_{K}}\big[\sum_{k=1}^{K}\sigma_{k,h}^{2}]}}.
\end{split}
\]
Let us write $\Xi(\Pi_{K}):=\sum_{h=1}^{H}\sum_{k=1}^{K}\EE_{\tilde{\Scal}_{\Thetab^{\star}+\widehat{\Ub}_{h}(\Pi_{K})},\Pi_{K}}\big[\sigma_{k,h}^{2}]$.
It follows that 
\[
\begin{split} & \max_{\tilde{\Ub}\in\Ucal\cup\{0\}^{H\times d}}\frac{H\epsilon}{8(d-1)}\sum_{h=1}^{\lfloor H/2\rfloor}\sum_{i=1}^{d-1}\EE_{\Thetab^{\star}+\tilde{\Ub},\Pi_{K}}\Big[M_{h,i}\big(\text{sign}(\tilde{u}_{h,i})\big)\Big]\\
 & \ge\frac{H^{2}K\epsilon}{51}\bigg(1-3\sqrt{\frac{\Xi(\Pi_{K})}{H}}\tilde{\Delta}-\frac{3N\sqrt{H}\kappa^{-1/2}\tilde{\Delta}^{2}}{\sqrt{6\Xi(\Pi_{K})}}\bigg)\\
 & =\frac{H^{2}K\epsilon}{51}\bigg(1-3\sqrt{\frac{\Xi(\Pi_{K})}{H}}\tilde{\Delta}-\frac{\sqrt{3}K\sqrt{H}\kappa^{-1/2}\tilde{\Delta}^{2}}{\sqrt{2\Xi(\Pi_{K})}}\bigg)
\end{split}
\]
Setting $\epsilon=\frac{12(d-1)\sqrt{\Xi(\Pi_{K})}}{NH}$ we have
\[
\begin{split} & \max_{\tilde{\Ub}\in\Ucal\cup\{0\}^{H\times d}}\frac{H\epsilon}{8(d-1)}\sum_{h=1}^{\lfloor H/2\rfloor}\sum_{i=1}^{d-1}\EE_{\Thetab^{\star}+\tilde{\Ub},\Pi_{K}}\Big[M_{h,i}\big(\text{sign}(\tilde{u}_{h,i})\big)\Big]\\
 & \ge\frac{4H^{3/2}(d-1)\sqrt{\Xi(\Pi_{K})}}{17}\bigg(1-3\sqrt{\frac{\Xi(\Pi_{K})}{H}}\tilde{\Delta}-\frac{\sqrt{3}K\sqrt{H}\kappa^{-1/2}\tilde{\Delta}^{2}}{\sqrt{2\Xi(\Pi_{K})}}\bigg).
\end{split}
\]
Because $K\ge144H(d-1)^{2}$, 
\[
\epsilon=\frac{12(d-1)\sqrt{\Xi(\Pi_{K})}}{NH}\le\frac{12(d-1)}{\sqrt{NH}}\le\frac{1}{H},
\]
Thus, 
\[
\begin{split} & \sup_{\Ub\in\{\pm\sqrt{\Delta}\}^{(d-1)H}\times\{0\}^{H\times d}}\Reg(\Thetab^{\star}+\Ub,\Pi_{K})\\
 & \ge\frac{4H(d-1)\sqrt{\Xi(\Pi_{K})}}{17}\bigg(1-3\sqrt{\frac{\Xi(\Pi_{K})}{H}}\tilde{\Delta}-\frac{\sqrt{3}K\sqrt{H}\kappa^{-1/2}\tilde{\Delta}^{2}}{\sqrt{2\Xi(\Pi_{K})}}\bigg)\\
 & \ge\frac{4H(d-1)\sqrt{\Xi(\Pi_{K})}}{17}\bigg(1-3\sqrt{K}\tilde{\Delta}-\frac{\sqrt{3}\sqrt{K}\tilde{\Delta}^{2}}{\kappa}\bigg),
\end{split}
\]
where the last inequality holds by the definition of the Fisher information lower bound (Definition \ref{def:kappa}) and
$\Xi(\Pi_{K})\ge\kappa NH$. By definition of $\widehat{\Thetab}_{h}(\Pi_{K})$.
\[
\begin{split} & \sup_{\Ub:\|\Ub\|_{\infty}\le\sqrt{\Delta}}\Reg(\Thetab^{\star}+\Ub,\Pi_{K})\\
 & \ge\frac{4H(d-1)\sqrt{\Xi(\Pi_{K})}}{17}\bigg(1-3\sqrt{K}\tilde{\Delta}-\frac{\sqrt{3}\sqrt{K}\tilde{\Delta}^{2}}{\kappa}\bigg)\\
 & \ge\frac{4H(d-1)\sqrt{\sum_{h=1}^{H}\EE_{\widehat{\Thetab}_{h}(\Pi_{K}),\Pi_{K}}\big[\sum_{k=1}^{K}\sigma_{k,h}^{2}]}}{17}\bigg(1-3\sqrt{K}\tilde{\Delta}-\frac{\sqrt{3}\sqrt{K}\tilde{\Delta}^{2}}{\kappa}\bigg)\\
 & \ge\frac{4H(d-1)\sqrt{\EE_{\Thetab^{\star},\Pi_{K}}[\sum_{h=1}^{H}\sum_{k=1}^{K}\sigma_{k,h}^{2}]}}{17}\bigg(1-3\sqrt{K}\tilde{\Delta}-\frac{\sqrt{3}\sqrt{K}\tilde{\Delta}^{2}}{\kappa}\bigg).
\end{split}
\]
Because $\Thetab\mapsto\Reg(\Thetab,\Pi_{K})$ is Lipchitz continuous,
letting $\Delta\to0$ implies $\tilde{\Delta}\to0$ and 
\[
\Reg(\Thetab^{\star},\Pi_{K})\ge\frac{4H(d-1)\sqrt{\EE_{\thetab^{\star},\Pi_{K}}[\sum_{h=1}^{H}\sum_{k=1}^{K}\sigma_{k,h}^{2}]}}{17}.
\]
Because $\sigma_{k,h}^{2}\in[0,1]$, by
Lemma~\ref{lem:anti}, for any $\delta\in(0,1/2)$, 
\[
\PP\Big(\sum_{h=1}^{H}\sum_{k=1}^{K}(1-\sigma_{k,h}^{2})\ge\EE_{\thetab^{\star},\Pi_{K}}[\sum_{h=1}^{H}\sum_{k=1}^{K}(1-\sigma_{k,h}^{2})]-\frac{\delta}{1-\delta}\sqrt{HN\log\frac{1}{\delta}}\Big)\ge\delta.
\]
Rearranging the terms, with probability at least $\delta\in(0,1)$,
\[
\EE_{\thetab^{\star},\Pi_{K}}[\sum_{h=1}^{H}\sum_{k=1}^{K}\sigma_{k,h}^{2}]\ge\sum_{h=1}^{H}\sum_{k=1}^{K}\sigma_{k,h}^{2}-\frac{\delta}{1-\delta}\sqrt{HN\log\frac{1}{\delta}}
\]
Because $K\ge H^{-1}\kappa^{-2}$, the right hand side is nonnegative
for all $\delta\in(0,1/e)$. Thus, with probability at least $\delta$,
\[
\begin{aligned}\Reg(\Thetab^{\star},\Pi_{K}) & \ge\frac{4H(d-1)\sqrt{\sum_{h=1}^{H}\sum_{k=1}^{K}\sigma_{k,h}^{2}-\frac{\delta}{1-\delta}\sqrt{HN\log\frac{1}{\delta}}}}{17}\\
 & \ge\frac{2Hd\sqrt{\sum_{h=1}^{H}\sum_{k=1}^{K}\sigma_{k,h}^{2}-\frac{\delta}{1-\delta}\sqrt{HN\log\frac{1}{\delta}}}}{17}.
\end{aligned}
\]
Because 
\[
\frac{\delta}{1-\delta}\sqrt{\log\frac{1}{\delta}}\sum_{h=1}^{H}\sum_{k=1}^{K}\sigma_{k,h}^{2}\ge\frac{\delta}{1-\delta}\sqrt{\log\frac{1}{\delta}}\kappa HN\ge\frac{\delta}{1-\delta}\sqrt{HN\log\frac{1}{\delta}},
\]
we obtain the lower bound. 
\end{proof}

\section{Technical Results}

\subsection{Properties of Self Concordant Functions}

\begin{lemma} \label{lem:log_sum_ratio}(Ratio of the self-concordant
function) Suppose $f$ is $M_{f}$-self-concordant. Then for any $\xb,\yb,\wb\in\RR^{d}$,
\[
\xb^{\top}\nabla^{2}f(\yb+\wb)\xb\le\exp(M_f\|\wb\|_{2})\xb^{\top}\nabla^{2}f(\yb)\xb.
\]
\end{lemma}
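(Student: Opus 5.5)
The plan is to reduce the statement to a scalar differential inequality along the segment from $\yb$ to $\yb+\wb$. Fix $\xb$ with $\xb^{\top}\nabla^{2}f(\yb)\xb>0$; the degenerate case can be handled at the end by continuity or by a separate argument. Define
\[
\phi(t):=\xb^{\top}\nabla^{2}f(\yb+t\wb)\xb,\qquad t\in[0,1].
\]
The claim is then $\phi(1)\le\exp(M_f\|\wb\|_{2})\phi(0)$. It suffices to show $|\phi'(t)|\le M_f\|\wb\|_{2}\,\phi(t)$ for all $t$. Grönwall's inequality applied to $\log\phi$ (valid while $\phi>0$) then gives $\log\phi(1)-\log\phi(0)\le M_f\|\wb\|_{2}$.

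The key step is the derivative bound. Differentiating gives $\phi'(t)=\nabla^{3}f(\yb+t\wb)[\wb,\xb,\xb]$. I will use the generalized self-concordance notion in the sense of \citet{sun2019generalized} and \citet{tran2015composite}, which is the one used for the log-sum-exp function: $|\nabla^{3}f(\zb)[\ub,\vb,\vb]|\le M_f\|\ub\|_{2}\,\vb^{\top}\nabla^{2}f(\zb)\vb$ for all $\zb,\ub,\vb$. With $\zb=\yb+t\wb$, $\ub=\wb$ and $\vb=\xb$, this yields $|\phi'(t)|\le M_f\|\wb\|_{2}\phi(t)$ directly.

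Positivity of $\phi$ along the segment needs care. If $\phi(t_0)=0$ at some $t_0$, the linear differential inequality $|\phi'|\le c\phi$ together with $\phi\ge0$ (convexity of $f$) forces $\phi\equiv0$ by uniqueness for linear ODE comparison. So either $\phi>0$ everywhere, and the Grönwall step applies, or $\phi\equiv0$, in which case the inequality holds trivially.

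The main obstacle is confirming that the self-concordance definition used by the paper (for instance the ``$\sqrt{6}$-self-concordant'' claim for log-sum-exp) is the Euclidean-norm generalized version above, rather than the classical Nesterov--Nemirovski form $|\nabla^3 f[\ub,\ub,\ub]|\le 2(\ub^\top\nabla^2 f\ub)^{3/2}$. With the classical form the bound would not be of exponential-in-$\|\wb\|_2$ type. If only the symmetric form $|\nabla^3 f[\ub,\ub,\ub]|\le M_f\|\ub\|_2\,\ub^\top\nabla^2 f\ub$ is available, I would first polarize it to the mixed form $[\wb,\xb,\xb]$, using the multilinearity and symmetry of $\nabla^3 f$ together with the positive semidefiniteness of $\nabla^2 f$, possibly at the cost of a constant. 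I would try to avoid that constant by citing Lemma 1 of \citet{tran2015composite} directly.
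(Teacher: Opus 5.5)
Your proposal is correct and is essentially the paper's proof. The paper sets $\psi(t)=\xb^{\top}\nabla^{2}f(\yb+t\wb)\xb$, applies the mean value theorem to $\log\psi$ (your Gr\"onwall step), and bounds $\psi'/\psi$ using self-concordance in exactly the mixed form $|\nabla^{3}f(\cdot)[\wb,\xb,\xb]|\le M_f\|\wb\|_2\,\xb^{\top}\nabla^{2}f(\cdot)\xb$ that you adopt, so your polarization fallback is not needed (and a purely pointwise polarization from the $[\ub,\ub,\ub]$ form could not give a universal constant when $\nabla^2 f$ is singular); your handling of the case where $\psi$ vanishes somewhere on $[0,1]$ fills a small gap the paper leaves implicit, since its mean-value step tacitly assumes $\psi>0$ along the segment.
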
 
\begin{proof}
\medspace{} Given $\xb,\yb,\wb\in\RR^{d}$ define $\psi(t)=\xb^{\top}\nabla^{2}f(\yb+t\wb)\xb$.
Then, by the mean value theorem, there exists $\tilde{t}\in[0,1]$
such that 
\[
\log\psi(1)-\log\psi(0)=\frac{\psi^{\prime}(\tilde{t})}{\psi(\tilde{t})}=\frac{\sum_{i=1}^{S}\xb^{\top}w_{i}D_{i}\nabla^{2}f(\yb+\tilde{t}\wb)\xb}{\xb^{\top}\nabla^{2}f(\yb+\tilde{t}\wb)\xb}.
\]
Because $f$ is $M_{f}$-self-concordant, 
\[
\sum_{i=1}^{S}\xb^{\top}w_{i}D_{i}\nabla^{2}f(\yb+\tilde{t}\wb)\xb\le M_{f}\|\wb\|_{2}\xb^{\top}\nabla^{2}f(\yb+\tilde{t}\wb)\xb.
\]
Thus, 
\[
\log\psi(1)-\log\psi(0)\le M_{f}\|\wb\|_{2},
\]
which holds for any $\xb\in\RR^{S}$, Thus, 
\[
\psi(1)\le\psi(0)\exp(M_{f}\|\wb\|_{2}),
\]
which completes the proof. 
\end{proof}
\begin{proposition} \label{prop:1st_hessian} Let $g:\RR^{d}\to\RR$
denote a self-concordant function with $M_{g}$. Then, for any $\xb_{1},\xb_{2}\in\RR^{d}$,
\[
\int_{0}^{1}\nabla^{2}L_{\nu,h}\big(\xb_{1}+t(\xb_{2}-\xb_{1})\big)dt\succeq\frac{\nabla^{2}L_{\nu,h}(\xb_{1})}{1+M_{g}\|\xb_{1}-\xb_{2}\|_{2}},
\]
and 
\[
\int_{0}^{1}\nabla^{2}L_{\nu,h}\big(\xb_{1}+t(\xb_{2}-\xb_{1})\big)dt\succeq\frac{\nabla^{2}L_{\nu,h}(\xb_{2})}{1+M_{g}\|\xb_{1}-\xb_{2}\|_{2}}.
\]
\end{proposition}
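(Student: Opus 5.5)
The plan is to deduce a pointwise lower bound on the Hessian along the segment from the ratio bound of Lemma~\ref{lem:log_sum_ratio}, and then integrate it in closed form. Write $\wb:=\xb_2-\xb_1$ and $a:=M_g\|\wb\|_2$; I read $L_{\nu,h}$ in the statement as the $M_g$-self-concordant function $g$.

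\textbf{Step 1: pointwise lower bound.} Fix $t\in[0,1]$ and an arbitrary direction $\zb$. Apply Lemma~\ref{lem:log_sum_ratio} with base point $\yb=\xb_1+t\wb$ and displacement $-t\wb$; this is the reverse of its natural use, but the lemma holds for any base point and displacement. It gives
\[
\zb^\top\nabla^2 g(\xb_1)\zb\le \exp(M_g t\|\wb\|_2)\,\zb^\top\nabla^2 g(\xb_1+t\wb)\zb .
\]
Since $\zb$ is arbitrary, this is the Loewner inequality $\nabla^2 g(\xb_1+t\wb)\succeq e^{-at}\,\nabla^2 g(\xb_1)$ for every $t\in[0,1]$.

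\textbf{Step 2: integrate.} The Loewner order is preserved under integration, because $\zb^\top(\cdot)\zb$ commutes with $\int_0^1$. Hence
\[
\int_0^1\nabla^2 g(\xb_1+t\wb)\,dt\succeq\Big(\int_0^1 e^{-at}dt\Big)\nabla^2 g(\xb_1)=\frac{1-e^{-a}}{a}\,\nabla^2 g(\xb_1),
\]
with the convention that the coefficient is $1$ when $a=0$.

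\textbf{Step 3: compare coefficients.} It remains to show $\frac{1-e^{-a}}{a}\ge\frac{1}{1+a}$ for $a\ge0$. Multiplying out, this is equivalent to $(1+a)e^{-a}\le 1$, i.e.\ $e^{a}\ge 1+a$, which always holds. This proves the first inequality.

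\textbf{Step 4: second inequality by symmetry.} Substitute $t\mapsto 1-t$ in the integral, which rewrites it as the same integral with the roles of $\xb_1$ and $\xb_2$ exchanged. Then rerun Steps 1--3 with base point $\xb_2$; the norm $\|\xb_1-\xb_2\|_2$ is unchanged.

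The main obstacle is Step 1: Lemma~\ref{lem:log_sum_ratio} is stated as an upper bound, and we must check that it can be applied from the intermediate point $\xb_1+t\wb$ back toward $\xb_1$. This is legitimate because the lemma quantifies over all $\yb,\wb$, and the exponent depends on the displacement only through $\|-t\wb\|_2=t\|\wb\|_2$. Everything after that is elementary calculus.
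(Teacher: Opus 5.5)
Your proof is correct and follows the paper's argument: a pointwise bound $\nabla^2 g(\xb_1+t\wb)\succeq e^{-at}\nabla^2 g(\xb_1)$, integration to the factor $(1-e^{-a})/a$, the comparison $(1-e^{-a})/a\ge 1/(1+a)$ via $e^{a}\ge 1+a$, and the substitution $t\mapsto 1-t$ for the second inequality. The only difference is where the pointwise bound comes from: the paper cites Proposition~8 of Sun and Tran-Dinh, while you apply Lemma~\ref{lem:log_sum_ratio} backwards from the intermediate point, which is valid because that lemma holds for arbitrary base points and displacements.
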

\begin{proof}
\medspace{} By Proposition 8 in \citealt{sun2019generalized}, 
\[
\nabla^{2}L_{\nu,h}(\xb_{1})e^{-M_{g}t\|\xb_{1}-\xb_{2}\|_{2}}\preceq\nabla^{2}L_{\nu,h}\big(\xb_{1}+t(\xb_{2}-\xb_{1})\big).
\]
Then, 
\[
\begin{split}\int_{0}^{1}\nabla^{2}L_{\nu,h}\big(\xb_{1}+t(\xb_{2}-\xb_{1})\big)\succeq & \nabla^{2}L_{\nu,h}(\xb_{1})\int_{0}^{1}e^{-M_{g}t\|\xb_{1}-\xb_{2}\|_{2}}dt\\
= & \nabla^{2}L_{\nu,h}(\xb_{1})\frac{1-e^{-M_{g}\|\xb_{1}-\xb_{2}\|_{2}}}{M_{g}\|\xb_{1}-\xb_{2}\|_{2}}\\
\succeq & \nabla^{2}L_{\nu,h}(\xb_{1})\frac{1}{1+M_{g}\|\xb_{1}-\xb_{2}\|_{2}},
\end{split}
\]
where the last inequality uses $(1-e^{-x})/x\ge1/(1+x)$ which is
implied by $e^{x}\ge1+x$. For the second inequality, changing the
variable $u=1-t$, 
\[
\begin{split}\int_{0}^{1}\nabla^{2}L_{\nu,h}\big(\xb_{1}+t(\xb_{2}-\xb_{1})\big)dt= & \int_{0}^{1}\nabla^{2}L_{\nu,h}\big(\xb_{1}+(1-u)(\xb_{2}-\xb_{1})\big)du\\
= & \int_{0}^{1}\nabla^{2}L_{\nu,h}\big(\xb_{2}+t(\xb_{1}-\xb_{2})\big)dt,
\end{split}
\]
completes the proof. 
\end{proof}
\begin{lemma} \label{lem:con_hessian_upper} (Lemma 16 in \citealt{lee2024improved})
Let $g:\RR^{d}\to\RR$ denote a function that satisfies 
\[
e^{d(x_{1},x_{2})}\nabla^{2}L_{\nu,h}(\xb_{2})\preceq\nabla^{2}L_{\nu,h}(\xb_{1})\preceq e^{d(x_{1},x_{2})}\nabla^{2}L_{\nu,h}(\xb_{2}),
\]
for any $\xb_{1},\xb_{2}\in\RR^{d}$ and a metric $d:\RR^{d}\times\RR^{d}\to\RR_{+}$.
Then for any $\xb_{1},\xb_{2}$ in the domain of $g$, 
\[
\int_{0}^{1}\nabla^{2}L_{\nu,h}\big(\xb_{1}+t(\xb_{2}-\xb_{1})\big)(1-t)dt\succeq\frac{1}{2+d(\xb_{1},\xb_{2})}\nabla^{2}L_{\nu,h}(\xb_{1}),
\]
\end{lemma}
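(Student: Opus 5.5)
The plan is to lower-bound the integrand pointwise along the segment using the assumed two-sided Hessian comparison, and then to integrate a scalar function of $t$ explicitly. Write $\xb_t:=\xb_1+t(\xb_2-\xb_1)$ for $t\in[0,1]$ and $D:=d(\xb_1,\xb_2)$. I read the hypothesis as $e^{-d(\xb_1,\xb_2)}\nabla^2 L_{\nu,h}(\xb_2)\preceq\nabla^2 L_{\nu,h}(\xb_1)\preceq e^{d(\xb_1,\xb_2)}\nabla^2 L_{\nu,h}(\xb_2)$; the minus sign in the left exponent is surely intended.

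I also take $d$ to be homogeneous along segments, $d(\xb_1,\xb_t)=t\,d(\xb_1,\xb_2)$. This holds whenever $d(\xb,\yb)=M\|\xb-\yb\|$ for some norm, which is exactly the self-concordant situation in Lemma~\ref{lem:log_sum_ratio} and Proposition~\ref{prop:1st_hessian}. If only the triangle inequality were available, one could instead pass to the induced length metric. That step is the one I would flag as the main obstacle: the statement quantifies over a general metric, so the proof has to fix this interpretation.

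Applying the lower comparison to the pair $(\xb_t,\xb_1)$ gives
\[
\nabla^2 L_{\nu,h}(\xb_t)\succeq e^{-tD}\nabla^2 L_{\nu,h}(\xb_1).
\]
Multiplying by the nonnegative weight $(1-t)$ and integrating preserves the Loewner order, so
\[
\int_0^1\nabla^2 L_{\nu,h}(\xb_t)(1-t)\,dt\succeq\Big(\int_0^1(1-t)e^{-tD}dt\Big)\nabla^2 L_{\nu,h}(\xb_1).
\]
A direct computation gives
\[
\int_0^1(1-t)e^{-tD}dt=\frac{D-1+e^{-D}}{D^2}\quad(D>0),
\]
with value $1/2$ at $D=0$.

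It remains to show the scalar inequality $\frac{D-1+e^{-D}}{D^2}\ge\frac{1}{2+D}$. This is equivalent to $f(D):=D-2+(2+D)e^{-D}\ge0$. We have $f(0)=0$, and $f'(D)=1-(1+D)e^{-D}\ge0$ because $e^{D}\ge1+D$, so $f$ is nondecreasing and hence nonnegative on $[0,\infty)$. Combining this with the matrix inequality above yields the claim. The case $D=0$ follows directly, or by continuity.
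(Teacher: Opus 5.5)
Your proof is correct and is the standard argument. The paper only cites this lemma, but its proof of the companion Lemma~\ref{lem:con_Hessian_lower} uses exactly your template (pointwise exponential bound along the segment, explicit integral, elementary scalar inequality), and it relies implicitly on the same two readings you make explicit: the sign $e^{-d}$ on the left, and $d$ scaling linearly along segments, where only $d(\xb_1,\xb_t)\le t\,d(\xb_1,\xb_2)$ is needed.

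One caveat on your side remark: the homogeneity is genuinely necessary, and passing to the induced length metric would not rescue a general metric. Take $d(x,y)=c\sqrt{|x-y|}$ on $\RR$, whose length metric is infinite, and $g''(x)=e^{-c\sqrt{|x|}}$. The hypothesis holds, yet $\int_0^1(1-t)e^{-c\sqrt{t}}\,dt\le 2/c^2<1/(2+c)$ for $c\ge 4$, so the stated bound fails.
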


We develop a novel bound for the other side. \begin{lemma} \label{lem:con_Hessian_lower}
Let $g:\RR^{d}\to\RR$ denote a function that satisfies 
\[
e^{d(x_{1},x_{2})}\nabla^{2}L_{\nu,h}(\xb_{2})\preceq\nabla^{2}L_{\nu,h}(\xb_{1})\preceq e^{d(x_{1},x_{2})}\nabla^{2}L_{\nu,h}(\xb_{2}),
\]
for any $\xb_{1},\xb_{2}\in\RR^{d}$ and a metric $d:\RR^{d}\times\RR^{d}\to\RR_{+}$.
Then for any $\xb_{1},\xb_{2}$ in the domain of $g$, 
\[
\int_{0}^{1}\nabla^{2}L_{\nu,h}\big(\xb_{1}+t(\xb_{2}-\xb_{1})\big)(1-t)dt\succeq\frac{1}{3+d(\xb_{1},\xb_{2})^{2}}\nabla^{2}L_{\nu,h}(\xb_{2}).
\]
\end{lemma}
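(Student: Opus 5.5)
The plan is to reduce the matrix inequality to a scalar one along the segment, integrate explicitly, and finish with an elementary inequality. I read the hypothesis in its intended two-sided form, $e^{-d(\xb_1,\xb_2)}\nabla^2 L_{\nu,h}(\xb_2)\preceq \nabla^2 L_{\nu,h}(\xb_1)\preceq e^{d(\xb_1,\xb_2)}\nabla^2 L_{\nu,h}(\xb_2)$. I also assume, as in the self-concordant setting (Lemma~\ref{lem:log_sum_ratio} and Proposition 8 of \citet{sun2019generalized}), that the metric is homogeneous along segments, e.g.\ $d(\xb,\yb)=M\|\xb-\yb\|_2$. Write $D:=d(\xb_1,\xb_2)$ and $\yb_t:=\xb_1+t(\xb_2-\xb_1)$. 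Then $d(\yb_t,\xb_2)=(1-t)D$.

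\textbf{Step 1 (pointwise lower bound).} Applying the hypothesis to the pair $(\yb_t,\xb_2)$ gives
\[
\nabla^2 L_{\nu,h}(\yb_t)\succeq e^{-(1-t)D}\,\nabla^2 L_{\nu,h}(\xb_2)\quad\text{for every } t\in[0,1].
\]
Multiplying by the nonnegative weight $(1-t)$ and integrating preserves the Loewner order. With the substitution $u=1-t$ this yields
\[
\int_0^1 \nabla^2 L_{\nu,h}(\yb_t)(1-t)\,dt\succeq \Big(\int_0^1 u e^{-uD}\,du\Big)\nabla^2 L_{\nu,h}(\xb_2)=\frac{1-e^{-D}(1+D)}{D^2}\,\nabla^2 L_{\nu,h}(\xb_2).
\]
For $D=0$ the coefficient is interpreted as its limit $1/2$.

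\textbf{Step 2 (scalar inequality).} It remains to show $\frac{1-e^{-D}(1+D)}{D^2}\ge \frac{1}{3+D^2}$ for all $D\ge0$. Clearing denominators, this is equivalent to
\[
3e^{D}\ge (3+D^2)(1+D)=3+3D+D^2+D^3 .
\]
This is the step I expect to be the delicate one, because the inequality is fairly tight near $D\approx 1$--$2$. A naive convexity argument on $f(D)=3e^D-(3+3D+D^2+D^3)$ fails, since $f''$ becomes slightly negative near $D=\ln 2$.

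To handle this, I will truncate the exponential series at order four, which is valid for $D\ge0$:
\[
3e^D\ge 3+3D+\tfrac32 D^2+\tfrac12 D^3+\tfrac18 D^4 .
\]
It then suffices to show $\tfrac12 D^2+\tfrac18 D^4\ge \tfrac12 D^3$. Dividing by $\tfrac12 D^2$, this is $1-D+\tfrac14 D^2=(1-D/2)^2\ge0$, which always holds.

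\textbf{Conclusion.} Combining Steps 1 and 2 gives the claimed bound $\frac{1}{3+d(\xb_1,\xb_2)^2}\nabla^2 L_{\nu,h}(\xb_2)$. This complements Lemma~\ref{lem:con_hessian_upper}, which gives the analogous bound anchored at $\xb_1$.
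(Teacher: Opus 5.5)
Your proof is correct and follows the paper's route: the pointwise bound $\nabla^2 L_{\nu,h}(\yb_t)\succeq e^{-(1-t)D}\nabla^2 L_{\nu,h}(\xb_2)$, the explicit integral $(1-(1+D)e^{-D})/D^2$, and the polynomial inequality $3e^{D}\ge 3+3D+D^2+D^3$ (equivalently $e^{x}\ge 1+x+x^2/3+x^3/3$) are exactly the paper's steps. Your version is slightly more complete in three respects. You flag the sign typo in the hypothesis. You make explicit the assumption $d(\yb_t,\xb_2)=(1-t)\,d(\xb_1,\xb_2)$, which the paper uses silently. And you prove the exponential lower bound via the quartic Taylor truncation and $(1-D/2)^2\ge 0$, whereas the paper only asserts it.
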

\begin{proof}
\medspace{} Note that 
\[
\begin{split}\int_{0}^{1}\nabla^{2}L_{\nu,h}\big(\xb_{1}+t(\xb_{2}-\xb_{1})\big)(1-t)dt\succeq & \bigg(\int_{0}^{1}e^{-(1-t)d(\xb_{1},\xb_{2})}(1-t)dt\bigg)\nabla^{2}L_{\nu,h}(\xb_{2})\\
= & \bigg(\frac{1-(1+d(\xb_{1},\xb_{2}))e^{-d(\xb_{1},\xb_{2})}}{d(\xb_{1},\xb_{2})^{2}}\bigg)\nabla^{2}L_{\nu,h}(\xb_{2}).
\end{split}
\]
Note that for $x>0$, we have $e^{x}\ge x^{3}/3+x^{2}/3+x+1$. Rearranging
the terms, 
\[
(x+1)e^{-x}\le\frac{3}{3+x^{2}},
\]
which imples 
\[
\frac{1-(1+x)e^{-x}}{x^{2}}\ge\frac{1}{3+x^{2}}.
\]
Plugging in $x=d(\xb_{1},\xb_{2})$ completes the proof. 
\end{proof}

\subsection{Properties of Log-Sum Function}

\begin{proposition}[Moments of state transition]\label{prop:mean_var}
Let $s_{h+1}$ denote a random variable distributed according to the
transition kernel \eqref{eq:transit_kernel}. For $\Scal^{\prime}\subseteq\Scal$,
let 
\begin{equation}
L_{\Scal^{\prime}}(\ub):=\log\Big(\sum_{s\in\Scal^{\prime}}\exp(\eb_{s}^{\top}\ub)\Big)\label{eq:log-sum-exp}
\end{equation}
denotes the log-sum function. Then the (conditional) mean and covariance
of the Euclidean basis vector $\eb_{s_{h+1}}$ is given by 
\begin{equation}
\begin{aligned}\EE[\eb_{s_{h+1}}\mid s_{h},a_{h},\thetab_{h}^{\star}] & =\nabla L_{\Scal_{h}(s_{h},a_{h})}\big(\vPhib(s_{h},a_{h})\thetab_{h}^{\star}\big),\notag\\
\VV[\eb_{s_{h+1}}\mid s_{h},a_{h},\thetab_{h}^{\star}] & =\nabla^{2}L_{\Scal_{h}(s_{h},a_{h})}\big(\vPhib(s_{h},a_{h})\thetab_{h}^{\star}\big).
\end{aligned}
\label{eq:moment-expressions}
\end{equation}
\end{proposition}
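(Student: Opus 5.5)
The plan is to write the transition law explicitly as a softmax over the reachable set and differentiate the log-sum-exp function directly. Fix $(s_h,a_h)$, write $\Scal':=\Scal_h(s_h,a_h)$ and $\ub:=\vPhib(s_h,a_h)\thetab_h^\star$. By the transition kernel \eqref{eq:transit_kernel},
\[
p_s:=\PP(s_{h+1}=s\mid s_h,a_h,\thetab_h^\star)=\frac{\exp(\eb_s^\top\ub)}{\sum_{\tilde s\in\Scal'}\exp(\eb_{\tilde s}^\top\ub)}
\]
for $s\in\Scal'$, and $p_s=0$ otherwise. All mean and covariance computations will be in $\RR^{|\Scal|}$, with coordinates outside $\Scal'$ identically zero.

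For the mean, I will differentiate $L_{\Scal'}(\ub)=\log\sum_{s\in\Scal'}\exp(u_s)$ coordinatewise. This gives $\partial_{u_s}L_{\Scal'}(\ub)=p_s$ for $s\in\Scal'$ and $0$ otherwise, that is, $\nabla L_{\Scal'}(\ub)=\sum_{s\in\Scal'}p_s\eb_s$. On the other side, $\EE[\eb_{s_{h+1}}]=\sum_{s}p_s\eb_s$, so the two expressions coincide. This proves the first identity.

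For the covariance, I will differentiate once more. From $\partial_{u_t}p_s=p_s(\II(s=t)-p_t)$, the Hessian is $\nabla^2L_{\Scal'}(\ub)=\mathrm{diag}(\mathbf{p})-\mathbf{p}\mathbf{p}^\top$, where $\mathbf{p}=\nabla L_{\Scal'}(\ub)$. On the probabilistic side, $\eb_{s_{h+1}}\eb_{s_{h+1}}^\top=\mathrm{diag}(\eb_{s_{h+1}})$ because the vector is one-hot. Hence $\EE[\eb_{s_{h+1}}\eb_{s_{h+1}}^\top]=\mathrm{diag}(\mathbf{p})$, and
\[
\VV[\eb_{s_{h+1}}]=\mathrm{diag}(\mathbf{p})-\mathbf{p}\mathbf{p}^\top,
\]
which matches the Hessian. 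This proves the second identity.

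No step is a real obstacle. The only care needed is bookkeeping: the kernel is supported on the reachable set $\Scal'$, and $L_{\Scal'}$ only depends on the coordinates indexed by $\Scal'$, so the derivatives in all other coordinates vanish on both sides. I will state this convention explicitly so that the identities hold as equalities of $|\Scal|$-dimensional vectors and $|\Scal|\times|\Scal|$ matrices.
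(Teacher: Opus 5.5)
Your proof is correct. The paper states this proposition without proof, as the standard log-sum-exp/softmax computation, and your derivation is the one it implicitly relies on: the gradient is the softmax vector, $\nabla^2 L = \text{diag}(\nabla L) - \nabla L \nabla L^\top$, and $\eb_{s_{h+1}}\eb_{s_{h+1}}^\top = \text{diag}(\eb_{s_{h+1}})$ for one-hot vectors. The paper invokes exactly this Hessian identity later, e.g.\ in the proofs of Lemma~\ref{lem:diff_first} and Lemma~\ref{lem:diff_second}, and your convention of zero coordinates outside the reachable set makes explicit bookkeeping the paper leaves unstated.
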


\begin{proposition} \label{prop:log_sum} \citep[Lemma 4]{tran2015composite}
The log-sum function $L_{\Scal}(\ub)=\log\big(\sum_{s\in\Scal}\exp(\eb_{s}^{\top}\ub)\big)$
is $\sqrt{6}$-self-concordant. \end{proposition}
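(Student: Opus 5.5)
The plan is to verify directly the (generalized) self-concordance inequality that the paper uses in Lemma~\ref{lem:log_sum_ratio}:
\[
\big|D^{3}L_{\Scal}(\ub)[\wb,\xb,\xb]\big|\le M\|\wb\|_{2}\,\xb^{\top}\nabla^{2}L_{\Scal}(\ub)\xb ,\qquad \forall\,\ub,\wb,\xb,
\]
with $M=\sqrt{6}$. I will do this by writing the derivatives of $L_{\Scal}$ as moments of a categorical random variable, in the spirit of Proposition~\ref{prop:mean_var}.

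First, fix $\ub$ and let $\mathbf{p}=\nabla L_{\Scal}(\ub)$ be the softmax vector. Let $S$ be a random index with $\PP(S=s)=p_s$ for $s\in\Scal$. For directions $\xb,\wb$, define the scalar random variables $Y=x_S$ and $Z=w_S$. Differentiating $L_{\Scal}(\ub+t\wb)$ shows that $L_{\Scal}$ is the cumulant generating function of the coordinate of $S$. The standard cumulant identities then give:
\begin{itemize}
\item $D L_{\Scal}(\ub)[\xb]=\EE[Y]$;
\item $\xb^{\top}\nabla^{2}L_{\Scal}(\ub)\xb=\VV[Y]$, which recovers $\nabla^2L=\mathrm{diag}(\mathbf{p})-\mathbf{p}\mathbf{p}^{\top}$;
\item $D^{3}L_{\Scal}(\ub)[\wb,\xb,\xb]=\EE\big[(Y-\EE Y)^{2}(Z-\EE Z)\big]$, the mixed third central moment.
\end{itemize}
I would check the third-order formula by differentiating $\mathrm{diag}(\mathbf{p})-\mathbf{p}\mathbf{p}^{\top}$ in direction $\wb$. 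This uses $D\mathbf{p}[\wb]=\mathrm{diag}(\mathbf{p})\wb-\mathbf{p}(\mathbf{p}^{\top}\wb)$, and the result is collected into the centered form. This algebraic verification is the only real computation and is where I expect to spend care.

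Second, I bound the third moment by pulling out a sup norm:
\[
\big|\EE[(Y-\EE Y)^{2}(Z-\EE Z)]\big|\le \max_{s}|w_s-\EE Z|\;\VV[Y].
\]
Since $\EE Z$ is a convex combination of the $w_s$, we have $|w_s-\EE Z|\le \max_s w_s-\min_s w_s\le 2\|\wb\|_{\infty}\le 2\|\wb\|_{2}$. This yields the inequality above with $M=2$. Since $2\le\sqrt{6}$, the function is in particular $\sqrt{6}$-self-concordant in the sense used in the paper.

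If the intended definition is instead the one in \citet{tran2015composite}, namely $|\varphi'''(t)|\le M_f\,\varphi''(t)^{3/2}$ along lines or a variant with a different norm on $\wb$, I would redo the same moment bound. That bound can be rescaled using $\|\wb\|_\infty\le\|\wb\|_2$, or via Cauchy–Schwarz on the third moment, and then compared with the cited constant $\sqrt6$. Alternatively, one can simply invoke \citet[Lemma 4]{tran2015composite}, as the statement does.
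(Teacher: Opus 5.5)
Your argument is correct, but it takes a different route from the paper's.

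\textbf{The paper's route.} It works along a single line $\psi(t)=L_{\Scal}(\xb+t\ub)$ and writes
$\psi''(t)=\sum_{i<j}(u_i-u_j)^2p_ip_j$ and $\psi'''(t)=\sum_{i<j}(u_i-u_j)^2p_ip_j\sum_k(u_i+u_j-2u_k)p_k$.
It then bounds $|u_i+u_j-2u_k|\le\sqrt{6}\|\ub\|_2$ by Cauchy--Schwarz with weights $(1,1,-2)$. Finally it invokes Theorem~3 of \citet{tran2015composite} to conclude that this one-direction inequality suffices.

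\textbf{Your route.} You identify $D^3L_{\Scal}(\ub)[\wb,\xb,\xb]$ with the mixed third central moment $\EE[(Y-\EE Y)^2(Z-\EE Z)]$ of the softmax distribution. You then bound it by the range of $\wb$.

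\textbf{What each buys.}
\begin{itemize}
\item Yours proves directly the two-direction inequality $|D^3L_{\Scal}(\ub)[\wb,\xb,\xb]|\le M\|\wb\|_2\,\xb^{\top}\nabla^2L_{\Scal}(\ub)\xb$. This is exactly the form consumed in the proof of Lemma~\ref{lem:log_sum_ratio}, so no external result is needed to pass from the diagonal form to the mixed one.
\item Yours also gives a sharper constant: $2$ with $\|\wb\|_\infty$, and even $\sqrt{2}$ with $\|\wb\|_2$, since $\max_s w_s-\min_s w_s=|w_i-w_j|\le\sqrt{2}\|\wb\|_2$.
\item The paper's pairwise expansion is an elementary computation along one line and matches the self-concordant-like definition of \citet{tran2015composite} verbatim. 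The price is the looser constant $\sqrt{6}$.
\end{itemize}

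\textbf{Correction to your last paragraph.} The fallback notion $|\varphi'''(t)|\le M_f\,\varphi''(t)^{3/2}$ is not the definition used by \citet{tran2015composite}; theirs is $|\varphi'''(t)|\le M_f\,\varphi''(t)\|\ub\|_2$. Moreover, the log-sum function does not satisfy the $3/2$-power inequality for any constant. With two reachable states and direction $\ub=(1,0)$, it reduces to the softplus $t\mapsto\log(1+e^t)$. There, with $\sigma$ the logistic function,
$|\varphi'''(t)|/\varphi''(t)^{3/2}=|1-2\sigma(t)|/\sqrt{\sigma(t)(1-\sigma(t))}\to\infty$ as $t\to\infty$. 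So drop that hedge; your main moment argument is the one that matches how the paper uses the property.
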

\begin{proof}
\medspace{} For $t\in\RR$ and $\xb,\ub\in\RR^{d}$, let $\psi(t):=L_{\Scal}(\xb+t\ub)$.
According to Theorem 3 in \citep{tran2015composite} it is sufficient
to prove 
\[
\abs{\psi^{\prime\prime\prime}(t)}\le \sqrt{6}\psi^{\prime\prime}(t)\norm{\ub}_{2}.
\]
Let us fix $t\in\RR$ and $\xb,\ub\in\RR^{d}$ throughout the proof.
For each $i\in[d]$, let $p_{i}=e^{x_{i}+tu_{i}}/(\sum_{j=1}^{d}e^{x_{j}+tu_{j}})$.
According to Section A.6 in \citet{tran2015composite}, 
\[
\begin{split}\psi^{\prime\prime}(t) & :=\sum_{i<j}(u_{i}-u_{j})^{2}p_{i}p_{j}\\
\psi^{\prime\prime\prime}(t) & :=\sum_{i<j}(u_{i}-u_{j})^{2}p_{i}p_{j}\big\{\sum_{k=1}^{d}(u_{i}+u_{j}-2u_{k})p_{k}\big\}.
\end{split}
\]
Suppose $k\in[d]\setminus\{i,j\}$. Then, by Cauchy-Schwarz inequality,
\[
|u_{i}+u_{j}-2u_{k}|\le\sqrt{1+1+4}\sqrt{u_{i}^{2}+u_{j}^{2}+u_{k}^{2}}\le\sqrt{6}\norm{\ub}_{2}.
\]
Suppose $k=i$, then 
\[
|u_{i}+u_{j}-2u_{k}|=|u_{i}-u_{j}|\le\sqrt{2}\sqrt{u_{i}^{2}+u_{j}^{2}}\le\sqrt{6}\norm{\ub}_{2}.
\]
Thus, 
\[
\begin{split}\abs{\psi^{\prime\prime\prime}(t)}\le & \sum_{i<j}(u_{i}-u_{j})^{2}p_{i}p_{j}\big\{\sum_{k=1}^{d}|u_{i}+u_{j}-2u_{k}|p_{k}\big\}\\
\le & \sqrt{6}\|\ub\|_{2}\sum_{i<j}(u_{i}-u_{j})^{2}p_{i}p_{j}\\
= & \sqrt{6}\|\ub\|_{2}\psi^{\prime\prime}(t),
\end{split}
\]
which completes the proof. 
\end{proof}
\begin{proposition} \label{prop:multi_lower_bound}For $L_{\Scal}(\etab):=\log\big(\sum_{s\in\Scal}\exp(\eb_{s}^{\top}\etab)\big)$
and $\etab_{1},\etab_{2}\in[-C,C]^{d}$ for some $C>0$, define $\psi(t):=(\etab_{2}-\etab_{1})^{\top}\nabla^{2}L_{\Scal}\big(\etab_{1}+t(\etab_{2}-\etab_{1})\big)(\etab_{2}-\etab_{1})$
for $t\in[0,1]$. Then we have 
\[
\int_{0}^{1}\psi(t)(1-t)dt\ge\frac{\psi(0)}{2+2C},
\]
and 
\[
\int_{0}^{1}\psi(t)(1-t)dt\ge\frac{\psi(1)}{3+4C^{2}}.
\]
\end{proposition}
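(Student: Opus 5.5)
The plan is to reduce both inequalities to a one-dimensional log-Lipschitz property of $\psi$ and then integrate against the weight $(1-t)$, exactly as in Lemma~\ref{lem:con_hessian_upper} and Lemma~\ref{lem:con_Hessian_lower}. Write $\ub:=\etab_2-\etab_1$ and $\etab(t):=\etab_1+t\ub$. Then $\psi(t)=\ub^\top\nabla^2L_{\Scal}(\etab(t))\ub$ is the variance of the random variable $X_t:=\eb_{s}^\top\ub$ under $s\sim\nabla L_{\Scal}(\etab(t))$, by Proposition~\ref{prop:mean_var}. Differentiating in $t$ gives
\[
\psi'(t)=\EE_t\big[(X_t-\EE_t X_t)^3\big],
\]
the third central moment, exactly as in the computation of $\psi'''$ in the proof of Proposition~\ref{prop:log_sum}.

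\textbf{Step 1 (log-Lipschitz bound).} Using $|X_t-\EE_tX_t|\le \max_s u_s-\min_s u_s$, we get
\[
|\psi'(t)|\le \big(\max_s u_s-\min_s u_s\big)\,\psi(t)=:M\,\psi(t).
\]
Hence $|\tfrac{d}{dt}\log\psi(t)|\le M$ wherever $\psi>0$. If $\psi$ vanishes at one point it vanishes identically, since the support of the softmax never changes; in that case both claims are trivial. So we may assume $\psi>0$ and obtain
\[
\psi(t)\ge e^{-Mt}\psi(0),\qquad \psi(t)\ge e^{-M(1-t)}\psi(1).
\]
This is precisely the hypothesis of Lemmas~\ref{lem:con_hessian_upper} and~\ref{lem:con_Hessian_lower} with $d(\xb_1,\xb_2)=M$, specialised to the scalar function $\psi$.

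\textbf{Step 2 (integration).} Plugging the first inequality into $\int_0^1\psi(t)(1-t)\,dt$ and using $\int_0^1 e^{-Mt}(1-t)\,dt\ge 1/(2+M)$ gives the first bound. Plugging the second inequality in and using
\[
\int_0^1 e^{-M(1-t)}(1-t)\,dt=\frac{1-(1+M)e^{-M}}{M^2}\ge \frac{1}{3+M^2},
\]
which is the computation already carried out in the proof of Lemma~\ref{lem:con_Hessian_lower}, gives the second bound.

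\textbf{Step 3 (the constant, main obstacle).} It remains to show $M\le 2C$, which would yield $2+2C$ and $3+4C^2$. The crude estimate only gives $\max u_s-\min u_s\le 2\|\ub\|_\infty\le 4C$, which produces $2+4C$ and $3+16C^2$. I would first try to sharpen the third-moment bound itself rather than the span. For a variable supported in an interval of length $R$, split $\EE(X-m)^3$ into positive and negative parts and use $\EE(X-m)=0$, aiming for $|\EE(X-m)^3|\le \tfrac{R}{2}\,\EE(X-m)^2$ up to a harmless factor. Failing that, I would exploit that only the coordinates in the reachable set $\Scal$ matter, and that the shift-invariance of $L_{\Scal}$ lets us recenter $\etab_1,\etab_2$ so that the effective span of $\ub$ is controlled by $2C$. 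If neither works, the statement holds with the slightly weaker constants from the crude estimate, which suffice for all downstream uses up to absolute constants.
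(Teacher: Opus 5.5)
\textbf{Steps 1--2 match the paper.} The paper likewise writes $\psi(t)=\EE[(X-\EE X)^2]$ and $\psi'(t)=\EE[(X-\EE X)^3]$, derives a log-Lipschitz bound on $\psi$, and feeds it into Lemmas~\ref{lem:con_hessian_upper} and~\ref{lem:con_Hessian_lower}. Your version of this is correct. It yields $1/(2+M)$ and $1/(3+M^2)$ with $M=\max_s u_s-\min_s u_s\le 4C$.

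\textbf{The gap is Step 3, and it cannot be closed.} The paper reaches $M=2C$ only by asserting $|\psi'(t)|\le 2C\,\psi(t)$ ``because $\etab_1\in[-C,C]$''. But $X$ takes the values $\eb_i^\top(\etab_2-\etab_1)\in[-2C,2C]$, so that step is unjustified, and your suspicion is well founded. Your two proposed repairs also fail.
\begin{itemize}
\item The half-span third-moment bound is false. For $X\in\{0,R\}$ with $\PP(X=R)=p$, one has $\EE(X-m)^3/\EE(X-m)^2=(1-2p)R\to R$ as $p\to 0$. So the full span, not $R/2$, is the sharp pointwise constant.
\item Recentering cannot help. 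The span of $\ub$ is invariant under $\ub\mapsto\ub+c\mathbf{1}$, and it genuinely reaches $4C$, e.g.\ with $\etab_1=(C,-C)$ and $\etab_2=(-C,C)$.
\end{itemize}

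\textbf{The stated constants are false for large $|\Scal|$.} So no Step 3 exists. Take one state with $(\eta_{1,s},\eta_{2,s})=(C,-C)$ and $N$ states with $(-C,C)$. Then $X\in\{-2C,2C\}$ and $\psi(t)=16C^2\sigma'(\log N-2C+4Ct)$, with $\sigma$ the logistic function. As $N\to\infty$, $\psi(t)/\psi(0)\to e^{-4Ct}$, hence
\[
\frac{1}{\psi(0)}\int_0^1(1-t)\psi(t)\,dt\;\longrightarrow\;\frac{4C-1+e^{-4C}}{16C^2}.
\]
This limit is strictly below $1/(2+2C)$ for every $C>0$; at $C=1$ it is about $0.189$ versus $0.25$.

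The mirrored instance (one state $(-C,C)$, $N$ states $(C,-C)$) gives $\psi(t)/\psi(1)\to e^{-4C(1-t)}$. The ratio then tends to $(1-(1+4C)e^{-4C})/(16C^2)$, which is about $0.057<1/7$ at $C=1$, breaking the second bound.

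The mechanism is that $N$ identical states act like one state whose $\etab_1$-coordinate is shifted by $\log N$. This decouples the starting distribution from the span. With two states that coupling survives, since the logits lie in $[-2C,2C]$, and one can check $\psi(t)\ge e^{-2Ct}\psi(0)$ and $\psi(t)\ge e^{-2C(1-t)}\psi(1)$. So the stated constants hold only under extra structure of that kind.

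\textbf{What your argument actually proves.} The correct conclusion is your fallback: $1/(2+4C)$ and $1/(3+16C^2)$. Downstream, this amounts to replacing $C$ by $2C$ in $\eta_{\vPhib,\thetab}$ and $C_{\vPhib,\thetab}$, which changes those constants by absolute factors only.
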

\begin{proof}
\medspace{} The proof follows that in Lemma 4 in \citealt{jezequel2021mixability}.
Let $X$ denote a random variable which takes value $\eb_{i}^{\top}(\etab_{2}-\etab_{1})$
with probability 
\[
p_{i}(t):=\frac{\exp\big(\eb_{i}^{\top}\etab_{1}+t\eb_{i}^{\top}(\etab_{2}-\etab_{1})\big)}{\sum_{j=1}^{d}\exp\big(\eb_{j}^{\top}\etab_{1}+t\eb_{j}^{\top}(\etab_{2}-\etab_{1})\big)}.
\]
By (24) in \citealt{jezequel2021mixability}, 
\[
\begin{split}\psi^{\prime}(t)= & \sum_{i_{1},i_{2},i_{3}}\Big\{\nabla^{3}g\big(\etab_{1}+t(\etab_{2}-\etab_{1})\big)\Big\}_{i,j,k}(\eta_{2,i_{1}}-\eta_{1,i_{1}})(\eta_{2,i_{2}}-\eta_{1,i_{2}})(\eta_{2,i_{3}}-\eta_{1,i_{3}})\\
= & \EE\big[(X-\EE(X))^{3}\big].
\end{split}
\]
By (25) in \citealt{jezequel2021mixability}, 
\[
\psi(t)=\EE\big[(X-\EE(X))^{2}\big].
\]
Because $\etab_{1}\in[-C,C]$, 
\[
\abs{\psi^{\prime}(t)}\le2C\EE\big[(X-\EE(X))^{2}\big]\le2C\psi(t),
\]
Setting $h^{\prime\prime}(t)=\psi(t),$ we obtain an $2C$-self-concordant
function $h$. By Proposition 8 in \citealt{sun2019generalized},
\[
h^{\prime\prime}(t_{1})e^{-2C|t_{1}-t_{2}|}\le h^{\prime\prime}(t_{2})\le h^{\prime\prime}(t_{1})e^{2C|t_{1}-t_{2}|},
\]
for $t_{1},t_{2}\in[0,1]$. Applying Lemma \ref{lem:con_hessian_upper}
and Lemma \ref{lem:con_Hessian_lower} on $h^{\prime\prime}=\psi$
completes the proof. 
\end{proof}

\subsection{Elliptical Potential Lemmas}

\begin{lemma} \label{lem:abbasi_elliptical} (Eliptical Potential
Lemma; Lemma 11 in \citet{abbasi2011improved}) Let $\{\xb_{t}:t\ge1\}$
be a sequence in $\RR^{d}$ such that $\|\xb_{t}\|_{2}\le L$ for
some $L>0$. Define $\bar{\Vb}_{t}=\sum_{s=1}^{t}\xb_{s}\xb_{s}^{\top}+V$
for a positive definite matrix $V$ such that $\lambda_{\min}(V)\ge L^{2}$.
Then, 
\[
\sum_{t=1}^{k}\|\xb_{t}\|_{\bar{\Vb}_{t-1}^{-1}}^{2}\le2d\log\frac{(k+1)}{d}.
\]
\end{lemma}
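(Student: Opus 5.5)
The plan is the standard determinant-telescoping argument of \citet{abbasi2011improved}, with the normalization $\lambda_{\min}(V)\ge L^{2}$ used so that no truncation $\min\{1,\cdot\}$ is needed. I flag at the outset that this route reaches the stated right-hand side $2d\log\frac{k+1}{d}$ only after a final comparison that I cannot close for general $d$; the details are in the last paragraph.

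\emph{Per-step bound.} Since $\bar{\Vb}_{t-1}\succeq V\succeq L^{2}\Ib_{d}$, each term satisfies $\|\xb_{t}\|_{\bar{\Vb}_{t-1}^{-1}}^{2}\le L^{-2}\|\xb_{t}\|_{2}^{2}\le 1$. On $[0,1]$ we have $x\le 2\log(1+x)$, so every summand is at most $2\log\big(1+\|\xb_{t}\|_{\bar{\Vb}_{t-1}^{-1}}^{2}\big)$.

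\emph{Telescoping.} The matrix determinant lemma gives
\[
\det\bar{\Vb}_{t}=\det\bar{\Vb}_{t-1}\big(1+\|\xb_{t}\|_{\bar{\Vb}_{t-1}^{-1}}^{2}\big),
\]
hence
\[
\sum_{t=1}^{k}\|\xb_{t}\|_{\bar{\Vb}_{t-1}^{-1}}^{2}\le 2\log\frac{\det\bar{\Vb}_{k}}{\det V}.
\]

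\emph{Determinant ratio.} By AM--GM on the eigenvalues, $\det\bar{\Vb}_{k}\le\big(\Tr(\bar{\Vb}_{k})/d\big)^{d}$, and $\Tr(\bar{\Vb}_{k})\le\Tr(V)+kL^{2}$. For the isotropic choice $V=\lambda\Ib_{d}$ with $\lambda\ge L^{2}$, which is how the lemma is applied (the $B_{\vPhib}^{2}\Ib_{d}$ regularizers), this gives
\[
\frac{\det\bar{\Vb}_{k}}{\det V}\le\Big(1+\frac{kL^{2}}{d\lambda}\Big)^{d}\le\Big(1+\frac{k}{d}\Big)^{d}.
\]
Therefore the sum is at most $2d\log\frac{d+k}{d}$.

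\emph{Main obstacle.} Converting $2d\log\frac{d+k}{d}$ into the stated $2d\log\frac{k+1}{d}$. For $d=1$ the two coincide. For $d\ge2$ the stated quantity is strictly smaller, and it is even negative when $k+1<d$, whereas the left-hand side is nonnegative. So the literal inequality cannot follow from this argument, and indeed cannot hold in that regime. What I would try first is to extract an extra factor from the trace step: bound $\Tr(\bar{\Vb}_{k})\le\lambda d+kL^{2}$ jointly with $\det V=\lambda^{d}$, and check whether in the regime where the lemma is actually invoked ($k\gg d$) one has $\log\frac{d+k}{d}\le\log\frac{k+1}{d}+O(1)$. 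It does, with the gap at most $\log d$. If no sharper argument appears, I would record the bound in the proven form $2d\log\frac{d+k}{d}\le 2d\log(k+1)$ and verify that downstream uses only need this $\tilde O(d)$ rate.
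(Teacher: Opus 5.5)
Your diagnosis is right: the inequality as printed is false for $d\ge 2$, and not only when $k+1<d$. For example, take $d=10$, $k=10$, $L=1$, $V=\Ib_{10}$ and $\xb_t=\eb_t$. Every summand equals $1$, so the left-hand side is $10$, while $2d\log\frac{k+1}{d}=20\log 1.1\approx 1.91$.

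The paper gives no proof beyond the citation. Lemma 11 of \citet{abbasi2011improved} actually concludes $\sum_{t}\|\xb_t\|^2_{\bar{\Vb}_{t-1}^{-1}}\le 2\log\frac{\det\bar{\Vb}_k}{\det V}\le 2\big(d\log\frac{\Tr(V)+kL^2}{d}-\log\det V\big)$. For $V=L^2\Ib_d$ this is exactly your $2d\log\frac{d+k}{d}$, so the ``$\frac{k+1}{d}$'' in the statement is a transcription error. Your determinant-telescoping argument is precisely the cited one. Recording the result as $2d\log(1+k/d)\le 2d\log(k+1)$ is the correct repair, and it is all the downstream $O(d\log K)$ uses need.

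There are three small refinements.
\begin{enumerate}
\item You do not need isotropy. Apply AM--GM to $V^{-1/2}\bar{\Vb}_kV^{-1/2}$ rather than to $\bar{\Vb}_k$. Its trace is $d+\sum_{s\le k}\|\xb_s\|^2_{V^{-1}}\le d+k$, because $\lambda_{\min}(V)\ge L^2$. Hence $\det\bar{\Vb}_k/\det V\le(1+k/d)^d$ for every admissible $V$, not just $V=\lambda\Ib_d$.
\item The slack that recovers the literal form for large $k$ sits in the per-step inequality, not in the trace step. On $[0,1]$ one has $x\le \log(1+x)/\log 2$, so the sum is at most $\frac{d}{\log 2}\log(1+\frac{k}{d})$. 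Since $(1+r)^{1/(2\log 2)}\le r$ for all $r\ge 3$ (at $r=3$ the left side equals $e$, and the difference is increasing), this is at most $2d\log\frac{k}{d}<2d\log\frac{k+1}{d}$ whenever $k\ge 3d$.
\item You estimate the gap as ``at most $\log d$'', but it is actually $\log\frac{d+k}{d}-\log\frac{k+1}{d}=\log\frac{d+k}{k+1}\le\frac{d-1}{k+1}$. This vanishes as $k\to\infty$.
\end{enumerate}
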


\begin{lemma} \label{lem:vPhib_potential} (Eliptical Potential Lemma
for Features) For the feature matrix $\vPhib_{k,h}:=\vPhib(s_{k,h},a_{k,h})$
let $\Ab_{k,h}:=\sum_{\nu=1}^{k}\vPhib_{\nu,h}^{\top}\vPhib_{\nu,h}+B_{\vPhib}^{2}\Ib_{d}$
denote the Gram matrix at episode $k$ and step $h$. Then, for each
$h\in[H]$, 
\[
\sum_{k=1}^{K}\max_{s\in\Scal_{k,h}}\|\vPhib_{k,h}^{\top}\eb_{s}\|_{\Ab_{k-1,h}^{-1}}^{2}\le2d\log\frac{K+1}{d}.
\]
\end{lemma}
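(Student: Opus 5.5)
The plan is to reduce this matrix-valued potential to the vector elliptical potential lemma (Lemma~\ref{lem:abbasi_elliptical}) by keeping, in each episode, only the single row of $\vPhib_{k,h}$ that attains the maximum. Fix $h\in[H]$. For each $k\in[K]$, let $\tilde{s}_k\in\Scal_{k,h}$ be a maximizer of $s\mapsto\|\vPhib_{k,h}^{\top}\eb_{s}\|_{\Ab_{k-1,h}^{-1}}^{2}$, and set $\xb_k:=\vPhib_{k,h}^{\top}\eb_{\tilde{s}_k}\in\RR^d$. By the boundedness assumption, $\|\xb_k\|_2\le B_{\vPhib}$.

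The key observation is a Loewner-order comparison. Since $\sum_{s}\eb_s\eb_s^{\top}=\Ib$ over all states, we have
\[
\vPhib_{\nu,h}^{\top}\vPhib_{\nu,h}=\sum_{s\in\Scal}\vPhib_{\nu,h}^{\top}\eb_s\eb_s^{\top}\vPhib_{\nu,h}\succeq \xb_\nu\xb_\nu^{\top},
\]
because every summand is positive semidefinite. Summing over $\nu\le k-1$ and adding the regularizer gives
\[
\Ab_{k-1,h}\succeq \bar{\Vb}_{k-1}:=\sum_{\nu=1}^{k-1}\xb_\nu\xb_\nu^{\top}+B_{\vPhib}^{2}\Ib_d .
\]
Matrix inversion reverses the order on positive definite matrices, so $\Ab_{k-1,h}^{-1}\preceq\bar{\Vb}_{k-1}^{-1}$. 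Hence
\[
\max_{s\in\Scal_{k,h}}\|\vPhib_{k,h}^{\top}\eb_s\|_{\Ab_{k-1,h}^{-1}}^{2}=\|\xb_k\|_{\Ab_{k-1,h}^{-1}}^{2}\le\|\xb_k\|_{\bar{\Vb}_{k-1}^{-1}}^{2}.
\]

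Summing over $k\in[K]$, I then apply Lemma~\ref{lem:abbasi_elliptical} with $L=B_{\vPhib}$ and $V=B_{\vPhib}^{2}\Ib_d$. Its hypothesis $\lambda_{\min}(V)\ge L^{2}$ holds with equality, which yields the bound $2d\log\frac{K+1}{d}$.

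I do not expect a real obstacle. The main points to check are that the row selection $\tilde{s}_k$ may depend on $\Ab_{k-1,h}$, which is harmless because Lemma~\ref{lem:abbasi_elliptical} holds for arbitrary (even adaptively chosen) bounded sequences. I would also confirm that the indexing $\bar{\Vb}_{t-1}$ in Lemma~\ref{lem:abbasi_elliptical} matches $\Ab_{k-1,h}$ with $\Ab_{0,h}=B_{\vPhib}^{2}\Ib_d$. Finally, the comparison must use all rows of $\vPhib_{\nu,h}$, not only the reachable ones, which is fine since dropping positive semidefinite terms only decreases the matrix.
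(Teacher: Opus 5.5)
Your reduction is the paper's proof step for step. You select the maximizing row $\tilde{s}_k$, drop the remaining (positive semidefinite) row contributions to get $\Ab_{k-1,h}\succeq\bar{\Vb}_{k-1}$, invert, and apply Lemma~\ref{lem:abbasi_elliptical} with $L=B_{\vPhib}$ and $V=B_{\vPhib}^{2}\Ib_d$. The Loewner comparison and your remarks on adaptivity and indexing are correct.

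The last step does not deliver the stated bound, and the paper has the same defect: Lemma~\ref{lem:abbasi_elliptical} is misquoted, and $2d\log\frac{K+1}{d}$ is false. It is negative as soon as $K+1<d$, while the left-hand side is nonnegative. It can also be exceeded at $K=d$. Take $d\ge2$ and orthonormal $\ub_1,\dots,\ub_d\in\RR^d$, and suppose that in episode $k\le d$ the only nonzero row of $\vPhib_{k,h}$ is a reachable one equal to $B_{\vPhib}\ub_k^{\top}$. Then $\Ab_{k-1,h}\ub_k=B_{\vPhib}^{2}\ub_k$, so every summand equals $1$ and the left-hand side is $d$. But $2d\log\frac{d+1}{d}<2\le d$, so the claimed inequality fails.

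What Lemma~11 of \citet{abbasi2011improved} actually gives here uses two facts. First, $\lambda_{\min}(V)\ge L^2$ makes each summand at most $1$, so $u\le 2\log(1+u)$ applies. Second, the determinant--trace inequality gives $\det\bar{\Vb}_K\le\big(B_{\vPhib}^{2}(d+K)/d\big)^{d}$. Together,
\[
\sum_{k=1}^{K}\|\xb_k\|_{\bar{\Vb}_{k-1}^{-1}}^{2}\le 2\log\frac{\det\bar{\Vb}_{K}}{\det(B_{\vPhib}^{2}\Ib_d)}\le 2d\log\Big(1+\frac{K}{d}\Big).
\]
So your argument proves the lemma with $\log\frac{K+d}{d}$ in place of $\log\frac{K+1}{d}$, and that is the statement you should claim. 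Using this corrected bound in the downstream regret analysis changes only logarithmic factors.
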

\begin{proof}
\medspace{} For each $k\in[K]$ and $h\in[H]$ let $\tilde{s}_{k,h}:=\argmax{s\in\Scal_{k,h}}\|\vPhib_{k,h}^{\top}\eb_{s}\|_{\Ab_{k-1,h}^{-1}}^{2}$.
Define $\tilde{A}_{k-1,h}:=\sum_{\nu=1}^{k-1}\vPhib_{\nu,h}^{\top}\eb_{\tilde{s}_{\nu,h}}\eb_{\tilde{s}_{\nu,h}}^{\top}\vPhib_{\nu,h}+B_{\vPhib}^{2}\Ib_{d}$.
Then $\Ab_{k-1,h}\succeq\tilde{\Ab}_{k-1,h}$ and 
\[
\sum_{k=1}^{K}\max_{s\in\Scal_{k,h}}\|\vPhib_{k,h}^{\top}\eb_{s}\|_{\Ab_{k-1,h}^{-1}}^{2}=\sum_{k=1}^{K}\|\vPhib_{k,h}^{\top}\eb_{\tilde{s}_{k,h}}\|_{\Ab_{k-1,h}^{-1}}^{2}\le\sum_{k=1}^{K}\|\vPhib_{k,h}^{\top}\eb_{\tilde{s}_{k,h}}\|_{\tilde{\Ab}_{k-1,h}^{-1}}^{2}.
\]
Because $\|\vPhib_{k,h}^{\top}\eb_{\tilde{s}_{k,h}}\|_{2}\le B_{\vPhib}$,
applying Lemma \ref{lem:abbasi_elliptical}, gives, 
\[
\sum_{k=1}^{K}\|\vPhib_{k,h}^{\top}\eb_{\tilde{s}_{k,h}}\|_{\tilde{\Ab}_{k-1,h}^{-1}}^{2}\le2d\log\frac{K+1}{d},
\]
which completes the proof. 
\end{proof}

\subsection{Regret Bound for Online Newton Step}

\begin{lemma}[Regret of the ONS estimate $\thetab_{k,h}$] \label{lem:online_regret}
Let $\thetab_{k,h}$ be the ONS estimate of $\thetab_{h}^{\star}$
obtained with the initial matrix $\widehat{\Hb}_{0,h}:=\epsilon I_{d}$,
where $\epsilon>4B_{\vPhib}^{2}$. Set the learning rate $\eta_{\vPhib,\thetab}:=(e-1)(3+4B_{\vPhib}^{2}B_{\thetab}^{2})$.
Then, with probability at least $1-\delta$, 
\[
\sum_{\nu=1}^{k}\big(\ell_{\nu,h}(\thetab_{\nu-1,h})-\ell_{\nu,h}(\thetab_{h}^{\star})\big)\le\frac{2B_{\thetab}^{2}\epsilon}{\eta_{\vPhib,\thetab}}+d\eta_{\vPhib,\thetab}\log\frac{k+1}{d}+\frac{8B_{\vPhib}^{2}B_{\thetab}^{2}}{\eta_{\vPhib,\thetab}}\log\frac{1}{\delta}
\]
for all $h\in[H]$ and $k\in[K]$. \end{lemma}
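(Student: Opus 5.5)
The plan is to adapt the standard Online Newton Step analysis of \citet{hazan2007logarithmic}. The usual exp-concavity step is replaced by a high-probability curvature lower bound that comes from the self-concordance of the log-sum-exp function. Write $\gb_\nu:=\nabla\ell_{\nu,h}(\thetab_{\nu-1,h})$ and $\widehat{\Hb}_\nu=\widehat{\Hb}_{\nu-1}+\gb_\nu\gb_\nu^\top$.

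\emph{Step 1: the projected Newton step.} Because the projection is taken in the $\widehat{\Hb}_\nu$-norm and $\|\thetab_h^\star\|_2\le B_{\thetab}$, the generalized Pythagorean inequality gives
\[
\|\thetab_{\nu,h}-\thetab_h^\star\|_{\widehat{\Hb}_\nu}^2\le\|\thetab_{\nu-1,h}-\thetab_h^\star\|_{\widehat{\Hb}_\nu}^2-2\eta\,\gb_\nu^\top(\thetab_{\nu-1,h}-\thetab_h^\star)+\eta^2\|\gb_\nu\|_{\widehat{\Hb}_\nu^{-1}}^2 .
\]
Rearranging and summing over $\nu$ telescopes the $\widehat{\Hb}$-weighted distances. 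The extra rank-one pieces contribute $\sum_\nu(\gb_\nu^\top(\thetab_{\nu-1,h}-\thetab_h^\star))^2$, which yields
\[
\sum_\nu \gb_\nu^\top(\thetab_{\nu-1,h}-\thetab^\star_h)\le \frac{\epsilon\|\thetab_{0}-\thetab^\star_h\|^2}{2\eta}+\frac{1}{2\eta}\sum_\nu(\gb_\nu^\top(\thetab_{\nu-1,h}-\thetab^\star_h))^2+\frac{\eta}{2}\sum_\nu\|\gb_\nu\|^2_{\widehat{\Hb}_\nu^{-1}} .
\]
By the elliptical potential lemma (Lemma~\ref{lem:abbasi_elliptical}, using $\|\gb_\nu\|_2^2\le 2B_{\vPhib}^2<\epsilon$), the last sum is at most $2d\log\frac{k+1}{d}$. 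This produces the $d\eta\log\frac{k+1}{d}$ term and the $B_{\thetab}^2\epsilon/\eta$ term.

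\emph{Step 2: the curvature lower bound.} Write $\etab,\etab^\star$ for the two linear predictors $\vPhib\thetab_{\nu-1,h},\vPhib\thetab_h^\star$. Taylor's theorem with integral remainder gives
\[
\ell_{\nu,h}(\thetab_{\nu-1,h})-\ell_{\nu,h}(\thetab^\star_h)=\gb_\nu^\top(\thetab_{\nu-1,h}-\thetab^\star_h)-\int_0^1(1-t)\psi(t)\,dt,
\]
where $\psi$ is the quadratic form of $\nabla^2L$ along the segment from $\etab$ to $\etab^\star$. Both predictors lie in $[-B_{\vPhib}B_{\thetab},B_{\vPhib}B_{\thetab}]^{|\Scal|}$, so Proposition~\ref{prop:multi_lower_bound} (second inequality) lower-bounds the integral by $\psi(\text{endpoint at }\etab^\star)/(3+4B_{\vPhib}^2B_{\thetab}^2)$. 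That endpoint value equals $(\thetab_{\nu-1,h}-\thetab^\star_h)^\top\nabla^2\ell_{\nu,h}(\thetab^\star_h)(\thetab_{\nu-1,h}-\thetab^\star_h)$. By Lemma~\ref{lem:exp_to_quad}, this is $\EE[(\gb_\nu^\top(\thetab_{\nu-1,h}-\thetab^\star_h))^2\mid\Fcal_{\nu,h}]$ minus the nonnegative term $((\nabla\ell(\thetab_{\nu-1})-\nabla\ell(\thetab^\star))^\top(\thetab_{\nu-1}-\thetab^\star))^2$. To control the residual I plan to use convexity plus the opposite-direction Taylor bound, or to absorb it via the first inequality of Proposition~\ref{prop:multi_lower_bound}.

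\emph{Step 3: converting conditional expectations into realized squares.} The quantities $(\gb_\nu^\top(\thetab_{\nu-1,h}-\thetab^\star_h))^2$ are nonnegative and bounded by $8B_{\vPhib}^2B_{\thetab}^2$. A multiplicative Chernoff/Freedman bound (Lemma~\ref{lem:chernoff}) therefore gives, with probability $1-\delta$ and uniformly in $k$,
\[
\sum_\nu(\gb_\nu^\top\cdot)^2\le(e-1)\sum_\nu\EE[(\gb_\nu^\top\cdot)^2\mid\Fcal_{\nu,h}]+8B_{\vPhib}^2B_{\thetab}^2\log\tfrac1\delta .
\]
Combining Steps 1 and 2 with this bound, the choice $\eta=(e-1)(3+4B_{\vPhib}^2B_{\thetab}^2)$ makes the $\frac{1}{2\eta}\sum(\gb^\top\cdot)^2$ term from Step 1 cancel against the curvature credit from Step 2. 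What remains is the stated bound, including the $\frac{8B_{\vPhib}^2B_{\thetab}^2}{\eta}\log\frac1\delta$ term. Since each step $h$ uses its own independent filtration, no union bound over $h$ is needed beyond what the statement allows.

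The main obstacle is Step 2. The Taylor remainder naturally produces the true Hessian at $\thetab^\star_h$, but Step 1 produces squares of gradients taken at the iterate. Lemma~\ref{lem:exp_to_quad} introduces the quadratic gradient-difference term with the unfavorable sign, so the constants must be arranged so that this term is dominated. My first attempt would be to bound it by the Hessian integral along the segment, then use $\psi\ge\psi(\text{end})/(3+4C^2)$ together with the factor $(e-1)$ slack.
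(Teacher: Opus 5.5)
Your Steps 1 and 3, and the $Q_\nu$ half of Step 2, follow the paper's proof. The ingredients are the same:
\begin{itemize}
\item the projected-Newton telescoping, whose rank-one increments produce $\frac{1}{2\eta}\sum_\nu X_\nu$ with $X_\nu:=(\gb_\nu^\top(\thetab_{\nu-1,h}-\thetab_h^\star))^2$;
\item the Bregman remainder $\mathcal{B}_\nu:=B_{\ell_{\nu,h}}(\thetab_h^\star,\thetab_{\nu-1,h})\ge Q_\nu/(3+4C^2)$ from the second inequality of Proposition~\ref{prop:multi_lower_bound}, with $C:=B_{\vPhib}B_{\thetab}$ and $Q_\nu:=\|\thetab_{\nu-1,h}-\thetab_h^\star\|^2_{\nabla^2\ell_{\nu,h}(\thetab_h^\star)}$;
\item the Chernoff step $\sum_\nu X_\nu\le(e-1)\sum_\nu\EE[X_\nu\mid\Fcal_{\nu,h}]+R\log\frac1\delta$;
\item the split $\EE[X_\nu\mid\Fcal_{\nu,h}]=D_\nu+Q_\nu$ from Lemma~\ref{lem:exp_to_quad}, where $D_\nu:=\big((\nabla\ell_{\nu,h}(\thetab_{\nu-1,h})-\nabla\ell_{\nu,h}(\thetab_h^\star))^\top(\thetab_{\nu-1,h}-\thetab_h^\star)\big)^2$.
\end{itemize}

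\textbf{The gap.} The argument closes only if $D_\nu\le c(C)\,\mathcal{B}_\nu$ with $c(C)$ polynomial, and you leave this step open. The routes you sketch cannot supply it. Both inequalities of Proposition~\ref{prop:multi_lower_bound} compare $\mathcal{B}_\nu$ with the Hessian form at a single endpoint, but $D_\nu$ can be exponentially larger than both endpoint forms. Take three reachable states with linear predictors $\etab\propto(c,-c,-c)$ and $\etab^\star\propto(-c,-c,c)$, $c\asymp C$, which is realizable with $d=2$. Then both endpoint forms are $O(c^2e^{-2c})$, whereas $\sqrt{D_\nu}\approx 4c$ and $\mathcal{B}_\nu\approx 2c$. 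Also, the factor $(e-1)$ is not slack: it is the multiplicative loss from Chernoff. The only room is the factor $2$ between the stated $\eta$ and the value $(e-1)(3+4C^2)/2$ that $Q_\nu$ alone requires.

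\textbf{The missing idea.} The paper, inside Lemma~\ref{lem:Information_matrix_bound}, bounds $D_\nu$ by the Bregman divergence itself rather than by any pointwise Hessian. Write $\sqrt{D_\nu}=|\EE_{P_{\etab}}[f]-\EE_{P_{\etab^\star}}[f]|$ for $f(s)=\eb_s^\top\vPhib_{\nu,h}(\thetab_{\nu-1,h}-\thetab_h^\star)$, whose range is $O(C)$. Then $\sqrt{D_\nu}\le O(C)\,\mathrm{TV}(P_{\etab},P_{\etab^\star})\le O(C)\sqrt{\mathrm{KL}(P_{\etab}\,\|\,P_{\etab^\star})}$ by Pinsker, and for the MNL family this KL is \emph{exactly} $\mathcal{B}_\nu$. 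Hence $D_\nu\le O(C^2)\,\mathcal{B}_\nu$ deterministically. Both parts of the conditional second moment are then dominated by the $\mathcal{B}_\nu$ that the regret subtracts, and the choice of $\eta$ cancels the $\frac{1}{2\eta}\sum_\nu X_\nu$ term.

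Your ``opposite-direction Taylor'' idea, using that $\sqrt{D_\nu}$ is the symmetrized Bregman divergence, could also be completed. It needs an additional lemma bounding the reverse divergence by a polynomial in $C$ times the forward one, which you do not supply.

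Two smaller points:
\begin{itemize}
\item In general $\|\gb_\nu\|_2^2\le4B_{\vPhib}^2$, not $2B_{\vPhib}^2$; this is still $<\epsilon$. So the Chernoff range is $R=16C^2$, which is exactly what yields the stated $\frac{8B_{\vPhib}^2B_{\thetab}^2}{\eta}\log\frac1\delta$.
\item The events for different $h$ are not independent, and even independence would not give probability $1-\delta$ for their intersection. Uniformity over $h$ needs a union bound; the paper is equally loose here.
\end{itemize}
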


Lemma~\ref{lem:online_regret} provides, to our knowledge, the first
\emph{$\kappa$-free} regret bound for the multinomial ONS estimator.
Earlier $\kappa$-free results in this setting (e.g., \citealp{cho2024randomized,li2024provably})
are derived for Online Mirror Descent based on \citet{zhang2024online},
while the ONS bound for generalized linear models in \citet{jun2017scalable}
retains a $\kappa$-dependence. The regret bound in \citet{hazan2007logarithmic}
is for $\kappa^{-1}$-exp concave loss functions, and the resulting
regret bound has $\kappa^{-1}$; the $\kappa$-free regret guarantees
on MNL regression have not been established yet. 
\begin{proof}
By definition of the Bregman divergence, for $\nu\in[k]$ and $h\in[H]$
\begin{equation}
\ell_{\nu,h}(\thetab_{\nu-1,h})-\ell_{\nu,h}(\thetab_{h}^{\star})=\nabla\ell_{\nu,h}(\thetab_{\nu-1,h})^{\top}(\thetab_{\nu-1,h}-\thetab_{h}^{\star})-B_{\ell_{\nu,h}}(\thetab_{h}^{\star},\thetab_{\nu-1,h}).\label{eq:likelihood_Bregman}
\end{equation}
For brevity, let $L_{\Scal_{h}(s_{\nu,h},a_{\nu,h})}:=L_{\nu,h}$
denote the log-sum function for $\Scal_{\nu,h}:=\Scal_{h}(s_{\nu,h},a_{\nu,h})$
and let $\vPhib_{\nu,h}:=\vPhib(s_{\nu,h},a_{\nu,h})$. Then, by definition
\[
\begin{split}\nabla\ell_{\nu,h}(\thetab_{\nu-1,h}) & =\vPhib_{\nu,h}^{\top}\big(\nabla L_{\nu,h}(\vPhib_{\nu,h}\thetab_{\nu-1,h})-\eb_{s_{\nu,h+1}}\big)\end{split}
.
\]
By the information matrix bound (Lemma~\ref{lem:Information_matrix_bound})
\begin{align*}
&\sum_{\nu=1}^{k}\big(\nabla\ell_{\nu,h}(\thetab_{\nu-1,h})^{\top}(\thetab_{\nu-1,h}-\thetab_{h}^{\star})\big)^{2}\\
&\le
 2(e-1)B_{\vPhib}^{2}B_{\thetab}^{2}\sum_{\nu=1}^{k}B_{\ell_{\nu,h}}(\thetab_{h}^{\star},\thetab_{\nu-1,h})\\
 &+(e-1)\|\thetab_{\nu-1,h}-\thetab_{h}^{\star}\|_{\nabla^{2}\ell_{\nu,h}(\thetab_{h}^{\star})}^{2}\\
 & +16B_{\vPhib}^{2}B_{\thetab}^{2}\log\frac{1}{\delta}.
\end{align*}
Let $\psi_{\nu,h}(t):=L_{\nu,h}\big(\vPhib_{\nu,h}\thetab_{\nu-1,h}+t\vPhib_{\nu,h}(\thetab_{h}^{\star}-\thetab_{\nu-1,h})\big).$
By definition of Bregman divergence, 
\begin{align*}
B_{\ell_{\nu,h}}(\thetab_{h}^{\star},\thetab_{\nu-1,h})= & \ell_{\nu,h}(\thetab_{h}^{\star})-\ell_{\nu,h}(\thetab_{\nu-1,h})-\nabla\ell_{\nu,h}(\thetab_{\nu-1,h})(\thetab_{h}^{\star}-\thetab_{\nu-1,h})\\
= & \frac{1}{2}\int_{0}^{1}\psi_{\nu,h}^{\prime\prime}(t)(1-t)dt.
\end{align*}
Because $\|\vPhib_{\nu,h}\thetab_{\nu-1,h}\|_{\infty}\le B_{\vPhib}B_{\thetab}$
and $\|\vPhib_{\nu,h}\thetab_{h}^{\star}\|_{\infty}\le B_{\vPhib}B_{\thetab}$,
by the lower bound for the log-sum function (Proposition~\ref{prop:multi_lower_bound}),
\begin{align*}
&\int_{0}^{1}\psi_{\nu,h}^{\prime\prime}(t)(1-t)dt \\
&\ge \frac{\|\vPhib_{\nu,h}(\thetab_{h}^{\star}-\thetab_{\nu-1,h})\|_{\nabla^{2}L_{\nu,h}(\thetab_{h}^{\star})}}{3+4B_{\vPhib}^{2}B_{\thetab}^{2}}\\
&= \frac{\|\thetab_{\nu-1,h}-\thetab_{h}^{\star}\|_{\nabla^{2}\ell_{\nu,h}(\thetab_{h}^{\star})}}{3+4B_{\vPhib}^{2}B_{\thetab}^{2}}.
\end{align*}
Thus, $\|\thetab_{\nu-1,h}-\thetab_{h}^{\star}\|_{\nabla^{2}\ell_{\nu,h}(\thetab_{h}^{\star})}\le(3+4B_{\vPhib}^{2}B_{\thetab}^{2})B_{\ell_{\nu,h}}(\thetab_{h}^{\star},\thetab_{\nu-1,h})$.
Now we have 
\begin{align*}
&\sum_{\nu=1}^{k}\big(\nabla\ell_{\nu,h}(\thetab_{\nu-1,h})^{\top}(\thetab_{\nu-1,h}-\thetab_{h}^{\star})\big)^{2}\\
&\le(e-1)(3+4B_{\vPhib}^{2}B_{\thetab}^{2})\sum_{\nu=1}^{k}B_{\ell_{\nu,h}}(\thetab_{h}^{\star},\thetab_{\nu-1,h})+16B_{\vPhib}^{2}B_{\thetab}^{2}\log\frac{1}{\delta}.
\end{align*}
Rearranging the terms, 
\begin{align*}
&-\sum_{\nu=1}^{k}B_{\ell_{\nu,h}}(\thetab_{h}^{\star},\thetab_{\nu-1,h})\\
&\le\frac{-\sum_{\nu=1}^{k}\big(\nabla\ell_{\nu,h}(\thetab_{\nu-1,h})^{\top}(\thetab_{\nu-1,h}-\thetab_{h}^{\star})\big)^{2}+16B_{\vPhib}^{2}B_{\thetab}^{2}\log\frac{1}{\delta}}{(e-1)(3+4B_{\vPhib}^{2}B_{\thetab}^{2})}.
\end{align*}
From \eqref{eq:likelihood_Bregman}, 
\begin{equation}
\begin{split}
&\sum_{\nu=1}^{k}\big(\ell_{\nu,h}(\thetab_{\nu-1,h})-\ell_{\nu,h}(\thetab_{h}^{\star})\big)\\
&=  \sum_{\nu=1}^{k}\nabla\ell_{\nu,h}(\thetab_{\nu-1,h})^{\top}(\thetab_{\nu-1,h}-\thetab_{h}^{\star})-\sum_{\nu=1}^{k}B_{\ell_{\nu,h}}(\thetab_{h}^{\star},\thetab_{\nu-1,h})\\
&\le  \sum_{\nu=1}^{k}\nabla\ell_{\nu,h}(\thetab_{\nu-1,h})^{\top}(\thetab_{\nu-1,h}-\thetab_{h}^{\star})\\
 &\quad +\frac{-\sum_{\nu=1}^{k}\big(\nabla\ell_{\nu,h}(\thetab_{\nu-1,h})^{\top}(\thetab_{\nu-1,h}-\thetab_{h}^{\star})\big)^{2}+16B_{\vPhib}^{2}B_{\thetab}^{2}\log\frac{1}{\delta}}{(e-1)(3+4B_{\vPhib}^{2}B_{\thetab}^{2})}.
\end{split}
\label{eq:loss_2nd}
\end{equation}
By definition of $\tilde{\thetab}_{\nu,h}$ in Algorithm~\ref{alg:ONS-MST},
\[
\tilde{\thetab}_{\nu,h}-\thetab_{h}^{\star}=\thetab_{\nu-1,h}-\thetab_{h}^{\star}-\eta\widehat{\Hb}_{\nu,h}^{-1}\nabla\ell_{\nu,h}(\thetab_{\nu-1,h}),
\]
which implies 
\[
\begin{split}\norm{\tilde{\thetab}_{\nu,h}-\thetab_{h}^{\star}}_{\widehat{\Hb}_{\nu,h}}^{2}= & \norm{\thetab_{\nu-1,h}-\thetab_{h}^{\star}}_{\widehat{\Hb}_{k,h}}^{2}-2\eta\big(\nabla\ell_{\nu,h}(\thetab_{\nu-1,h})\big)^{\top}(\thetab_{\nu-1,h}-\thetab_{h}^{\star})\\
 & +\eta^{2}\norm{\nabla\ell_{\nu,h}(\thetab_{\nu-1,h})}_{\widehat{\Hb}_{\nu,h}^{-1}}^{2}.
\end{split}
\]
By the property of the generalized projection (see e.g., Lemma 8 in
\citealt{hazan2007logarithmic}), 
\[
\norm{\tilde{\thetab}_{\nu,h}-\thetab_{h}^{\star}}_{\widehat{\Hb}_{\nu,h}}^{2}\ge\norm{\thetab_{\nu,h}-\thetab_{h}^{\star}}_{\widehat{\Hb}_{\nu,h}}^{2}.
\]
Thus, 
\[
\begin{split}\norm{\thetab_{\nu,h}-\thetab_{h}^{\star}}_{\widehat{\Hb}_{\nu,h}}^{2}\le & \norm{\thetab_{\nu-1,h}-\thetab_{h}^{\star}}_{\widehat{\Hb}_{\nu,h}}^{2}-2\eta\big(\nabla\ell_{\nu,h}(\thetab_{\nu-1,h})\big)^{\top}(\thetab_{\nu-1,h}-\thetab_{h}^{\star})\\
 & +\eta^{2}\norm{\nabla\ell_{\nu,h}(\thetab_{\nu-1,h})}_{(\widehat{\Hb}_{\nu,h})^{-1}}^{2}.
\end{split}
\]
For $\eta>0$, rearranging the terms, 
\[
\begin{aligned}
&\nabla\ell_{\nu,h}(\thetab_{\nu-1,h})^{\top}(\thetab_{\nu-1,h}-\thetab_{h}^{\star})\\
&\le\frac{\norm{\thetab_{\nu-1,h}-\thetab_{h}^{\star}}_{\widehat{\Hb}_{\nu,h}}^{2}-\norm{\thetab_{\nu,h}-\thetab_{h}^{\star}}_{\widehat{\Hb}_{\nu,h}}^{2}}{2\eta}+\frac{\eta}{2}\norm{\nabla\ell_{\nu,h}(\thetab_{\nu-1,h})}_{\widehat{\Hb}_{\nu,h}^{-1}}^{2}.
\end{aligned}
\]
Plugging in \eqref{eq:loss_2nd}, 
\[
\begin{split}\sum_{\nu=1}^{k}\Big(\ell_{\nu,h}(\thetab_{\nu-1,h})-\ell_{\nu,h}(\thetab_{h}^{\star})\Big)\le & \frac{\eta}{2}\norm{\nabla\ell_{\nu,h}(\thetab_{\nu-1,h})}_{\widehat{\Hb}_{\nu,h}^{-1}}^{2}+\frac{16B_{\vPhib}^{2}B_{\thetab}^{2}\log\frac{1}{\delta}}{(e-1)(3+4B_{\vPhib}^{2}B_{\thetab}^{2})}\\
 & +\sum_{\nu=1}^{k}\frac{\norm{\thetab_{\nu-1,h}-\thetab_{h}^{\star}}_{\widehat{\Hb}_{\nu,h}}^{2}-\norm{\thetab_{\nu,h}-\thetab_{h}^{\star}}_{\widehat{\Hb}_{\nu,h}}^{2}}{2\eta}\\
 & -\frac{1}{(e-1)(3+4B_{\vPhib}^{2}B_{\thetab}^{2})}\sum_{\nu=1}^{k}\big(\nabla\ell_{\nu,h}(\thetab_{\nu-1,h})^{\top}(\thetab_{\nu-1,h}-\thetab_{h}^{\star})\big)^{2}.
\end{split}
\]
Setting $\eta=(e-1)(3+4B_{\vPhib}^{2}B_{\thetab}^{2})/2$, we obtain,
\[
\begin{aligned} & \sum_{\nu=1}^{k}\frac{\norm{\thetab_{\nu-1,h}-\thetab_{h}^{\star}}_{\widehat{\Hb}_{k,h}}^{2}-\norm{\thetab_{\nu,h}-\thetab_{h}^{\star}}_{\widehat{\Hb}_{k,h}}^{2}}{2\eta}-\frac{1}{2\eta}\sum_{\nu=1}^{k}\big(\nabla\ell_{\nu,h}(\thetab_{\nu-1,h})^{\top}(\thetab_{\nu-1,h}-\thetab_{h}^{\star})\big)^{2}\\
 & =\sum_{\nu=1}^{k}\frac{\norm{\thetab_{\nu-1,h}-\thetab_{h}^{\star}}_{\widehat{\Hb}_{\nu-1,h}}^{2}-\norm{\thetab_{\nu,h}-\thetab_{h}^{\star}}_{\widehat{\Hb}_{h}^{(\nu)}}^{2}}{2\eta}.
\end{aligned}
\]
Thus, 
\[
\begin{split} & \sum_{\nu=1}^{k}\Big(\ell_{\nu,h}(\thetab_{\nu-1,h})-\ell_{\nu,h}(\thetab_{h}^{\star})\Big)\\
 & \le\sum_{\nu=1}^{k}\frac{\norm{\thetab_{\nu-1,h}-\thetab_{h}^{\star}}_{\widehat{\Hb}_{\nu-1,h}}^{2}\!\!\!\!-\norm{\thetab_{\nu,h}-\thetab_{h}^{\star}}_{\widehat{\Hb}_{\nu,h}}^{2}}{2\eta}\!+\!\frac{\eta}{2}\sum_{\nu=1}^{k}\norm{\nabla\ell_{\nu,h}(\thetab_{\nu-1,h})}_{\widehat{\Hb}_{\nu,h}^{-1}}^{2}\!+\!\frac{16B_{\vPhib}^{2}B_{\thetab}^{2}\log\frac{1}{\delta}}{2\eta}\\
 & =\frac{\norm{\thetab_{0,h}-\thetab_{h}^{\star}}_{\widehat{\Hb}_{0,h}}^{2}-\norm{\thetab_{k,h}-\thetab_{h}^{\star}}_{\widehat{\Hb}_{k,h}}^{2}}{2\eta}+\frac{\eta}{2}\sum_{\nu=1}^{k}\norm{\nabla\ell_{\nu,h}(\thetab_{\nu-1,h})}_{\widehat{\Hb}_{\nu,h}^{-1}}^{2}+\frac{16B_{\vPhib}^{2}B_{\thetab}^{2}\log\frac{1}{\delta}}{2\eta}\\
 & \le\frac{4B_{\thetab}^{2}\epsilon}{2\eta}+\frac{\eta}{2}\sum_{\nu=1}^{k}\norm{\nabla\ell_{\nu,h}(\thetab_{\nu-1,h})}_{\widehat{\Hb}_{\nu,h}^{-1}}^{2}+\frac{16B_{\vPhib}^{2}B_{\thetab}^{2}\log\frac{1}{\delta}}{2\eta}
\end{split}
\]
Now, to bound the self-normalized gradient, recall that 
\[
\widehat{\Hb}_{\nu,h}:=\sum_{\nu^{\prime}=1}^{\nu}\nabla\ell_{\nu^{\prime},h}(\thetab_{\nu^{\prime}-1,h})\nabla\ell_{\nu^{\prime},h}(\thetab_{\nu^{\prime}-1,h})^{\top}+\epsilon\Ib_{d}.
\]
Because $\|\eb_{s_{\nu,h+1}}-\nabla L_{\nu,h}(\etab_{\nu,h})\|_{1}\le2$,
\begin{equation}
\begin{aligned}\lambda_{\max}\big(\nabla\ell_{\nu,h}(\thetab_{\nu-1,h})\nabla\ell_{\nu,h}(\thetab_{\nu-1,h})^{\top}\big)= & \Big\Vert\nabla\ell_{\nu,h}(\thetab_{\nu-1,h})\Big\Vert_{2}^{2}\\
= & \Big\Vert\big(\eb_{s_{\nu,h+1}}-\nabla L_{\nu,h}(\etab_{\nu,h})\big)^{\top}\vPhib(s_{\nu,h},a_{\nu,h})\Big\Vert_{2}^{2}\\
\le & 4\max_{\xb:\|\xb\|_{1}\le1}\Big\Vert\xb^{\top}\vPhib(s_{\nu,h},a_{\nu,h})\Big\Vert_{2}^{2}.
\end{aligned}
\label{eq:ell_norm_bound_1}
\end{equation}
Writing $\xb=(x_{1},\ldots,x_{|\Scal_{\nu,h}|})$, since $\max_{s^{\prime}\in\Scal_{\nu,h}}\|\vPhib(s_{\nu,h},a_{\nu,h})^{\top}\eb_{s^{\prime}}\|_{2}\le B_{\vPhib}$,
\begin{equation}
\begin{aligned}
&4\max_{\xb:\|\xb\|_{1}\le1}\Big\Vert\xb^{\top}\vPhib(s_{\nu,h},a_{\nu,h})\Big\Vert_{2}^{2}\\
&=  4\max_{\xb:\|\xb\|_{1}\le1}\Big\Vert\sum_{s^{\prime}\in\Scal_{\nu,h}}x_{s^{\prime}}\eb_{s^{\prime}}^{\top}\vPhib(s_{\nu,h},a_{\nu,h})\Big\Vert_{2}^{2}\\
&=  4\max_{\xb:\|\xb\|_{1}\le1}\sum_{s^{\prime}\in\Scal_{\nu,h}}\sum_{s^{\prime\prime}\in\Scal_{\nu,h}}x_{s^{\prime}}x_{s^{\prime\prime}}\eb_{s^{\prime}}^{\top}\vPhib(s_{\nu,h},a_{\nu,h})\vPhib(s_{\nu,h},a_{\nu,h})^{\top}\eb_{s^{\prime\prime}}\\
&\le  4B_{\vPhib}^{2}\max_{\xb:\|\xb\|_{1}\le1}\Big(\sum_{s^{\prime}\in\Scal_{\nu,h}}|x_{s^{\prime}}|\Big)^{2}\\
&\le  4B_{\vPhib}^{2}.
\end{aligned}
\label{eq:ell_norm_bound_2}
\end{equation}
Thus, $\Vert\nabla\ell_{\nu^{\prime},h}(\thetab_{\nu^{\prime}-1,h})\Vert_{2}\le2B_{\vPhib}$.
Since $\lambda_{\min}(\widehat{\Hb}_{0,h})=\epsilon\ge4B_{\vPhib}^{2}$.
Thus, by the elliptical potential bound (Lemma \ref{lem:abbasi_elliptical}),
\[
\sum_{\nu=1}^{k}\norm{\nabla\ell_{\nu,h}(\thetab_{\nu-1,h})}_{\widehat{\Hb}_{\nu,h}^{-1}}^{2}\le\sum_{\nu=1}^{k}\norm{\nabla\ell_{\nu,h}(\thetab_{\nu-1,h})}_{\widehat{\Hb}_{\nu-1,h}^{-1}}^{2}\le2d\log\frac{k+1}{d},
\]
which completes the proof. 
\end{proof}

\subsection{Hessian Approximation Error Bound}
\begin{lemma}[Hessian Approximation Error Bound]
\label{lem:2nd_deriv_approx} For each
$h\in[H]$ and $k\in[K]$, with probability at least $1-\delta,$
\begin{align*}
\Big\Vert \sum_{s^{\prime}\in\Scal_{k,h}} \widehat{V}_{k,h+1}(s^{\prime})
\cdot \eb_{s^{\prime}}^{\top} \widehat{\Vb}_{k,h} \vPhib_{k,h} 
\Big\Vert_{\widehat{\Hb}_{k-1,h}^{-1}} \le& \Big\Vert \sum_{s^{\prime}\in\Scal_{k,h}} \widehat{V}_{k,h+1}(s^{\prime})
\cdot \eb_{s^{\prime}}^{\top} \Vb^{\star}_{k,h} \vPhib_{k,h} 
\Big\Vert_{\widehat{\Hb}_{k-1,h}^{-1}}\\
&+\frac{6\beta_{k-1}H}{(1-e^{-1})\kappa}\max_{s\in\Scal_{k,h}}\norm{\vPhib_{k,h}\eb_{s}}_{\Ab_{k-1,h}^{-1}}^{2}
\end{align*}
\end{lemma}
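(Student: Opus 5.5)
The plan is a triangle inequality followed by a bound on the difference term. Write $\vb:=\sum_{s'\in\Scal_{k,h}}\widehat{V}_{k,h+1}(s')\eb_{s'}$, so $\|\vb\|_\infty\le H$. Then
\[
\Big\Vert \vPhib_{k,h}^{\top}\widehat{\Vb}_{k,h}\vb\Big\Vert_{\widehat{\Hb}_{k-1,h}^{-1}}\le \Big\Vert \vPhib_{k,h}^{\top}\Vb^{\star}_{k,h}\vb\Big\Vert_{\widehat{\Hb}_{k-1,h}^{-1}}+\Big\Vert \vPhib_{k,h}^{\top}(\widehat{\Vb}_{k,h}-\Vb^{\star}_{k,h})\vb\Big\Vert_{\widehat{\Hb}_{k-1,h}^{-1}} ,
\]
and it remains to show the second term is at most $\frac{6\beta_{k-1}H}{(1-e^{-1})\kappa}\max_s\|\vPhib_{k,h}^\top\eb_s\|^2_{\Ab_{k-1,h}^{-1}}$.

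For the difference, I will use the mean value (integral) form along the segment $\etab_t:=\vPhib_{k,h}(\thetab_h^\star+t(\widehat{\thetab}_{k-1,h}-\thetab_h^\star))$ with $\zb:=\vPhib_{k,h}(\widehat{\thetab}_{k-1,h}-\thetab_h^\star)$:
\[
(\widehat{\Vb}_{k,h}-\Vb^\star_{k,h})\vb=\int_0^1\nabla^3L_{k,h}(\etab_t)[\zb,\vb,\cdot]\,dt .
\]
For a softmax with probabilities $p(t)$, the third-derivative tensor applied to $(\zb,\vb)$ is the vector with entries $p_i\big((z_i-\bar z)(v_i-\bar v)-\mathrm{Cov}_p(z,v)\big)$. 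Its entries are therefore bounded in magnitude by $p_i$ times a quantity of order $H\|\zb\|_\infty$. I will then pair this with a test vector $\wb$ through the duality $\|\vPhib^\top\xb\|_{\Mb^{-1}}=\sup_{\|\wb\|_{\Mb}\le1}\xb^\top\vPhib\wb$. After taking absolute values this gives a bound of order $H\max_s|\eb_s^\top\zb|\cdot\max_s|\eb_s^\top\vPhib_{k,h}\wb|$, up to a constant (I expect around $4$ to $6$).

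Each factor is then controlled by Cauchy--Schwarz. First, $|\eb_s^\top\zb|\le\|\widehat{\thetab}_{k-1,h}-\thetab_h^\star\|_{\widehat{\Hb}_{k-1,h}}\|\vPhib_{k,h}^\top\eb_s\|_{\widehat{\Hb}_{k-1,h}^{-1}}\le\beta_{k-1}\|\vPhib_{k,h}^\top\eb_s\|_{\widehat{\Hb}_{k-1,h}^{-1}}$, using Theorem~\ref{thm:self}; this is where the $1-\delta$ probability enters. Second, $|\eb_s^\top\vPhib_{k,h}\wb|\le\|\vPhib_{k,h}^\top\eb_s\|_{\widehat{\Hb}_{k-1,h}^{-1}}$ since $\|\wb\|_{\widehat{\Hb}_{k-1,h}}\le 1$. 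Together these give a squared norm $\max_s\|\vPhib_{k,h}^\top\eb_s\|^2_{\widehat{\Hb}_{k-1,h}^{-1}}$.

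Finally, I convert to $\Ab_{k-1,h}$. Lemma~\ref{lem:Hessian_bound} gives $\widehat{\Hb}_{k-1,h}\succeq(1-e^{-1})\Hb^\star_{k-1,h}$, and Definition~\ref{def:kappa} gives $\Hb^\star_{k-1,h}\succeq\kappa\Ab_{k-1,h}$ (using $\kappa\le1$ for the regularizer). Hence $\|\cdot\|^2_{\widehat{\Hb}^{-1}_{k-1,h}}\le\frac{1}{(1-e^{-1})\kappa}\|\cdot\|^2_{\Ab^{-1}_{k-1,h}}$.

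The main obstacle is the constant in the third-derivative step. I need to show $|\nabla^3L[\zb,\vb,\wb']|\le 6H\|\zb\|_\infty\|\wb'\|_\infty$ uniformly in $t$. Because the bound is expressed through $\ell_\infty$ norms and $\|p\|_1=1$, there is no dependence on the point $\etab_t$ and no exponential self-concordance factor. I would verify this by expanding $\sum_ip_i(z_i-\bar z)(v_i-\bar v)(w_i-\bar w)$ and bounding each centered factor by twice its sup norm, which gives $8$. To reach $6$, I would instead center only one factor and bound the rest crudely, or simply accept a slightly larger constant.
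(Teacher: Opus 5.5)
Your proposal is correct and follows the paper's proof: the same triangle inequality, a third-derivative bound on the difference term through an $\ell_1$/$\ell_\infty$ pairing, Cauchy--Schwarz with $\beta_{k-1}$ from Theorem~\ref{thm:self}, and the conversion $\widehat{\Hb}_{k-1,h}\succeq(1-e^{-1})\kappa\Ab_{k-1,h}$ via Lemma~\ref{lem:Hessian_bound}. Your duality step is the dual form of the paper's bound through $\|\widehat{\Hb}_{k-1,h}^{-1/2}\vPhib_{k,h}^{\top}\|_{1,2}$ and Lemma~\ref{lem:matrix_norm_bound}, and your integral form of Taylor's theorem cleanly replaces the paper's pointwise mean value step, which is not literally valid for vector-valued maps.

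The constant you flag is not an obstacle, but drop the idea of centering only one factor: the centered product $\sum_i p_i(z_i-\bar z)(v_i-\bar v)(w_i-\bar w)$ is in general not preserved when factors are left uncentered. Instead, use $\widehat{V}_{k,h+1}\in[0,H]$ to get $|v_i-\bar v|\le H$, and use Cauchy--Schwarz to get $\sum_i p_i|z_i-\bar z||w_i-\bar w|\le\|\zb\|_\infty\|\wb\|_\infty$. The trilinear form is then at most $H\|\zb\|_\infty\|\wb\|_\infty$, well below the paper's $6H$, which comes from summing the absolute entries ($6p_s$ per row) of the third-derivative tensor.
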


\begin{proof}
Define $\Db_{k-1,h}:=\nabla^{2}L_{k,h}\big(\vPhib(s_{k,h},a_{k,h})\widehat{\thetab}_{k-1,h}\big)-\nabla^{2}L_{k,h}\big(\vPhib(s_{k,h},a_{k,h})\thetab_{h}^{\star}\big)$.
For simplicity we write $\vPhib_{k,h}:=\vPhib(s_{k,h},a_{k,h})$.
By triangular inequality, 
\[
\begin{split} & \norm{\sum_{\tilde{s}\in\Scal_{k,h}}\widehat{V}_{k,h+1}(\tilde{s})\vPhib_{k,h}^{\top}\nabla^{2}L_{\nu,h}\big(\vPhib_{k,h}\widehat{\thetab}_{k-1,h}\big)\eb_{\tilde{s}}}_{\widehat{\Hb}_{k-1,h}^{-1}}\\
 & \le\norm{\sum_{\tilde{s}\in\Scal_{k,h}}\widehat{V}_{k,h+1}(\tilde{s})\vPhib_{k,h}^{\top}\nabla^{2}L_{\nu,h}\big(\vPhib_{k,h}\thetab_{h}^{\star}\big)\eb_{\tilde{s}}}_{\widehat{\Hb}_{k-1,h}^{-1}}+\norm{\sum_{\tilde{s}\in\Scal_{k,h}}\widehat{V}_{k,h+1}(\tilde{s})\vPhib_{k,h}^{\top}\Db_{k-1,h}\eb_{\tilde{s}}}_{\widehat{\Hb}_{k-1,h}^{-1}}
\end{split}
\]
For the second term, 
\begin{equation}
\norm{\sum_{\tilde{s}\in\Scal_{k,h}}\widehat{V}_{k,h+1}(\tilde{s})\vPhib_{k,h}^{\top}\Db_{k-1,h}\eb_{\tilde{s}}}_{\widehat{\Hb}_{k-1,h}^{-1}}\le\norm{\widehat{\Hb}_{k-1,h}^{-1/2}\vPhib_{k,h}^{\top}}_{1,2}\norm{\sum_{\tilde{s}\in\Scal_{k,h}}\widehat{V}_{k,h+1}(\tilde{s})\Db_{k-1,h}\eb_{\tilde{s}}}_{1},\label{eq:confidence_second_term}
\end{equation}
where $\|\Ab\|_{1,2}=\sup\{\|\Ab\xb\|_{2}:\|\xb\|_{1}\le1\}$ is the
matrix norm. We carefully bound the $\ell_{1}$ norm using the third
derivative of $L_{k,h}$. By the mean value theorem, there exists
$\bar{\etab}$ such that 
\[
\begin{split}
&\norm{\sum_{\tilde{s}\in\Scal_{k,h}}\widehat{V}_{k,h+1}(\tilde{s})\Db_{k-1,h}\eb_{\tilde{s}}}_{1}\\
&=  \norm{\sum_{\tilde{s}\in\Scal_{k,h}}\widehat{V}_{k,h+1}(\tilde{s})\Db_{k-1,h}\big(\nabla(\nabla^{2}L_{k,h}(\bar{\etab})\eb_{\tilde{s}})\big)^{\top}\vPhib_{k,h}(\widehat{\thetab}_{k-1,h}-\thetab_{h}^{\star})}_{1}\\
&=  \sum_{s\in\Scal_{k,h}}\abs{\sum_{\tilde{s}\in\Scal_{k,h}}\widehat{V}_{k,h+1}(\tilde{s})\eb_{s}^{\top}\big(\nabla(\nabla^{2}L_{k,h}(\bar{\etab})\eb_{\tilde{s}})\big)^{\top}\vPhib_{k,h}(\widehat{\thetab}_{k-1,h}-\thetab_{h}^{\star})}\\
&\le  \sum_{s\in\Scal_{k,h}}\norm{\sum_{\tilde{s}\in\Scal_{k,h}}\widehat{V}_{k,h+1}(\tilde{s})\big(\nabla(\nabla^{2}L_{k,h}(\bar{\etab})\eb_{\tilde{s}})\big)\eb_{s}}_{1}\norm{\vPhib_{k,h}(\widehat{\thetab}_{k-1,h}-\thetab_{h}^{\star})}_{\infty},
\end{split}
\]
where the last inequality uses H{ö}lder's inequality. For each $s\in\Scal_{k,h}$,
\[
\begin{split}\norm{\sum_{\tilde{s}\in\Scal_{k,h}}\widehat{V}_{k,h+1}(\tilde{s})\big(\nabla(\nabla^{2}L_{k,h}(\bar{\etab})\eb_{\tilde{s}})\big)\eb_{s}}_{1}= & \norm{\sum_{\tilde{s}\in\Scal_{k,h}}\widehat{V}_{k,h+1}(\tilde{s})D_{s}\nabla^{2}L_{k,h}(\bar{\etab})\eb_{\tilde{s}}}_{1}\\
= & \sum_{s^{\prime}\in\Scal_{k,h}}\abs{\sum_{\tilde{s}\in\Scal_{k,h}}\widehat{V}_{k,h+1}(\tilde{s})\eb_{s^{\prime}}^{\top}D_{s}\nabla^{2}L_{k,h}(\bar{\etab})\eb_{\tilde{s}}}\\
\le & \max_{s\in\Scal_{k,h}}\widehat{V}_{k,h+1}(s)\sum_{s^{\prime},\tilde{s}\in\Scal_{k,h}}\abs{\eb_{\tilde{s}}^{\top}D_{s}\nabla^{2}L_{k,h}(\bar{\etab})\eb_{s^{\prime}}}.
\end{split}
\]
Let us write $\nabla L_{k,h}(\bar{\etab})=(p_{1},\ldots,p_{|\Scal_{k,h}|})^{\top}$.
Then, (23) in \citet{jezequel2021mixability} implies 
\[
\begin{split}\eb_{\tilde{s}}^{\top}D_{s}\nabla^{2}L_{k,h}(\bar{\etab})\eb_{s^{\prime}}= & p_{s}\II(s=\tilde{s}=s^{\prime})\\
 & -p_{s}p_{\tilde{s}}\II(s=s^{\prime})-p_{s}p_{s^{\prime}}\II(s=\tilde{s})-p_{s}p_{s^{\prime}}\II(s^{\prime}=\tilde{s})\\
 & +2p_{s}p_{s^{\prime}}p_{\tilde{s}}.
\end{split}
\]
Thus, 
\[
\begin{split}\sum_{s^{\prime},\tilde{s}\in\Scal_{k,h}}\abs{\eb_{\tilde{s}}^{\top}D_{s}\nabla^{2}L_{k,h}(\bar{\etab})\eb_{s^{\prime}}}= & p_{s}+3p_{s}+2p_{s}=6p_{s}.\end{split}
\]
Gathering the terms, 
\[
\begin{split}\norm{\sum_{\tilde{s}\in\Scal_{k,h}}\widehat{V}_{k,h+1}(\tilde{s})\Db_{k-1,h}\eb_{\tilde{s}}}_{1}\le & 6\max_{s\in\Scal_{k,h}}\widehat{V}_{k,h+1}(s)\sum_{s\in\Scal_{k,h}}p_{s}\norm{\vPhib_{k,h}(\widehat{\thetab}_{k-1,h}-\thetab_{h}^{\star})}_{\infty}\\
= & 6\max_{s\in\Scal_{k,h}}\widehat{V}_{k,h+1}(s)\norm{\vPhib_{k,h}(\widehat{\thetab}_{k-1,h}-\thetab_{h}^{\star})}_{\infty}
\end{split}
\]
By Cauchy-Schwartz inequality, 
\[
\begin{split}\norm{\vPhib_{k,h}(\widehat{\thetab}_{k-1,h}-\thetab_{h}^{\star})}_{\infty}= & \max_{s\in\Scal_{k,h}}\abs{\eb_{s}^{\top}\vPhib_{k,h}(\widehat{\thetab}_{k-1,h}-\thetab_{h}^{\star})}\\
\le & \max_{s\in\Scal_{k,h}}\norm{\widehat{\Hb}_{k-1,h}^{-1/2}\vPhib_{k,h}\eb_{s}}_{2}\norm{\widehat{\thetab}_{k-1,h}-\thetab_{h}^{\star}}_{\widehat{\Hb}_{k-1,h}}\\
\le & \max_{s\in\Scal_{k,h}}\norm{\vPhib_{k,h}\eb_{s}}_{\widehat{\Hb}_{k-1,h}^{-1}}\beta_{k-1},
\end{split}
\]
where the second inequality uses the confidence ellipsoid (Theorem
\ref{thm:self}). Plugging in \eqref{eq:confidence_second_term} 
\begin{align*}
&\norm{\sum_{\tilde{s}\in\Scal_{k,h}}\widehat{V}_{k,h+1}(\tilde{s})\vPhib_{k,h}^{\top}\Db_{k-1,h}\eb_{\tilde{s}}}_{\widehat{\Hb}_{k-1,h}^{-1}} \\
& \le6\max_{s\in\Scal_{k,h}}\widehat{V}_{k,h+1}(s)\norm{\widehat{\Hb}_{k-1,h}^{-1/2}\vPhib_{k,h}^{\top}}_{1,2}\max_{s\in\Scal_{k,h}}\norm{\widehat{\Hb}_{k-1,h}^{-1/2}\vPhib_{k,h}\eb_{s}}_{2}\beta_{k-1}.
\end{align*}
By the bound for the matrix norm (Lemma \ref{lem:matrix_norm_bound}),
\[
\norm{\sum_{\tilde{s}\in\Scal_{k,h}}\widehat{V}_{k,h+1}(\tilde{s})\vPhib_{k,h}^{\top}\Db_{k-1,h}\eb_{\tilde{s}}}_{\widehat{\Hb}_{k-1,h}^{-1}}\le6\max_{s\in\Scal_{k,h}}\widehat{V}_{k,h+1}(s)\max_{s\in\Scal_{k,h}}\norm{\vPhib_{k,h}\eb_{s}}_{\widehat{\Hb}_{k-1,h}^{-1}}^{2}\beta_{k-1}.
\]
By the Hessian lower bound (Lemma \ref{lem:Hessian_bound}), and the
definition of the Fisher information lower bound, 
\[
\norm{\vPhib_{k,h}\eb_{s}}_{\widehat{\Hb}_{k-1,h}^{-1}}^{2}\le\frac{1}{(1-e^{-1})^{2}}\norm{\vPhib_{k,h}\eb_{s}}_{(\Hb_{k-1,h}^{\star})^{-1}}^{2}\le\frac{1}{(1-e^{-1})^{2}\kappa}\norm{\vPhib_{k,h}\eb_{s}}_{\Ab_{k-1,h}^{-1}}^{2}
\]
Thus,
\[
\norm{\sum_{\tilde{s}\in\Scal_{k,h}}\widehat{V}_{k,h+1}(\tilde{s})\vPhib_{k,h}^{\top}\Db_{k-1,h}\eb_{\tilde{s}}}_{\widehat{\Hb}_{k-1,h}^{-1}}\le\frac{6\beta_{k-1}\max_{s\in\Scal_{k,h}}\widehat{V}_{k,h+1}(s)}{(1-e^{-1})\kappa^{1}}\max_{s\in\Scal_{k,h}}\norm{\vPhib_{k,h}\eb_{s}}_{\Ab_{k-1,h}^{-1}}^{2}.
\]
Since $\widehat{V}_{k,h+1}(s)\le H$ for all $s\in\Scal_{k,h}$, the proof is completed.
\end{proof}

\subsection{Online regret to self-normalized bound}

\label{sec:proof_self} To prove the ellipsoid confidence for our
proposed estimator $\widehat{\thetab}_{k,h}$, we first prove the
following result which relates the online regret to the self-normalized
bound. \begin{lemma} \label{lem:regret_to_self} (Online regret to
self-normalized bound) Define $C_{\vPhib,\thetab}:=(e-1)(6+8B_{\vPhib}B_{\thetab}+2B_{\vPhib}^{2}B_{\thetab}^{2})$.
For each $h\in[H]$, with probability at least $1-\delta$, 
\[
\begin{split}\sum_{\nu=1}^{k}\ell_{\nu,h}(\thetab_{\nu-1,h})-\ell_{\nu,h}(\thetab_{h}^{\star})\ge & \frac{1}{2C_{\vPhib,\thetab}}\sum_{\nu=1}^{k}\Big(\nabla\ell_{\nu,h}(\thetab_{\nu-1,h})^{\top}(\thetab_{\nu-1,h}-\thetab_{h}^{\star})\Big)^{2}\\
 & -\frac{16B_{\vPhib}^{2}B_{\thetab}^{2}}{C_{\vPhib,\thetab}}\log\frac{d}{\delta}-2(e-1)(6+8B_{\vPhib}B_{\thetab})\log\frac{1}{\delta},
\end{split}
\]
for any $k\in[K]$ and $h\in[H]$. \end{lemma}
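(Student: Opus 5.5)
We prove the lemma by comparing both sides of the inequality through their conditional expectations given $\Fcal_{\nu,h}$. Fix $h$ and write $\Delta_\nu:=\thetab_{\nu-1,h}-\thetab_h^\star$, $\etab_\nu:=\vPhib_{\nu,h}\thetab_{\nu-1,h}$ and $\etab_\nu^\star:=\vPhib_{\nu,h}\thetab_h^\star$. The projection in Algorithm~\ref{alg:ONS-MST} gives $\|\etab_\nu\|_\infty,\|\etab^\star_\nu\|_\infty\le B_{\vPhib}B_{\thetab}$.

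\emph{Step 1 (excess loss in expectation).} By Proposition~\ref{prop:mean_var}, the conditional expectation of the excess loss is the Bregman divergence of the log-sum function:
\[
\EE[\ell_{\nu,h}(\thetab_{\nu-1,h})-\ell_{\nu,h}(\thetab_h^\star)\mid\Fcal_{\nu,h}]
=L_{\nu,h}(\etab_\nu)-L_{\nu,h}(\etab^\star_\nu)-\nabla L_{\nu,h}(\etab^\star_\nu)^\top(\etab_\nu-\etab^\star_\nu)=:D_\nu .
\]
Proposition~\ref{prop:multi_lower_bound} then yields $D_\nu\ge \|\Delta_\nu\|^2_{\nabla^2\ell_{\nu,h}(\thetab_h^\star)}/(2+4B_{\vPhib}B_{\thetab})$.

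\emph{Step 2 (squared gradient term in expectation).} By Lemma~\ref{lem:exp_to_quad},
\[
\EE[(\gb_{\nu,h}^\top\Delta_\nu)^2\mid\Fcal_{\nu,h}]=\|\Delta_\nu\|^2_{\nabla^2\ell_{\nu,h}(\thetab^\star_h)}+\big((\nabla L_{\nu,h}(\etab_\nu)-\nabla L_{\nu,h}(\etab^\star_\nu))^\top(\etab_\nu-\etab^\star_\nu)\big)^2 .
\]
For the second term, Hölder's inequality gives $|(\nabla L(\etab_\nu)-\nabla L(\etab^\star_\nu))^\top(\etab_\nu-\etab_\nu^\star)|\le 2\cdot 2B_{\vPhib}B_{\thetab}$. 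Hence the square is at most $4B_{\vPhib}B_{\thetab}$ times the symmetrized Bregman divergence $D_\nu+\bar D_\nu$, where $\bar D_\nu$ is the Bregman divergence at $\etab_\nu$. Using both inequalities of Proposition~\ref{prop:multi_lower_bound} along the segment (the $\psi(0)$ and $\psi(1)$ forms), we bound $\bar D_\nu$ by a constant multiple of $D_\nu$ that is polynomial in $B_{\vPhib}B_{\thetab}$. Collecting constants, the target is
\[
\EE[(\gb_{\nu,h}^\top\Delta_\nu)^2\mid\Fcal_{\nu,h}]\le \tfrac{C_{\vPhib,\thetab}}{e-1}\,D_\nu .
\]
This is the main obstacle: the constant must remain polynomial rather than exponential in $B_{\vPhib}B_{\thetab}$. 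If the symmetrization route loses too much, I would instead bound the cross term directly by $2B_{\vPhib}B_{\thetab}\int_0^1\psi$ and compare $\int_0^1\psi$ with $\int_0^1\psi(1-t)\,dt$ via the $2C$-self-concordance of $\psi$.

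\emph{Step 3 (concentration).} We pass from conditional expectations to the realized sums in two directions.
\begin{itemize}
\item[(a)] The terms $(\gb_{\nu,h}^\top\Delta_\nu)^2$ are nonnegative and bounded by $16B_{\vPhib}^2B_{\thetab}^2$, by \eqref{eq:ell_norm_bound_1}--\eqref{eq:ell_norm_bound_2}. The Chernoff bound (Lemma~\ref{lem:chernoff}), applied uniformly in $k$, gives $\sum_\nu(\gb_{\nu,h}^\top\Delta_\nu)^2\le (e-1)\sum_\nu\EE[\cdot\mid\Fcal_{\nu,h}]+16B_{\vPhib}^2B_{\thetab}^2\log\frac d\delta$. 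This produces the $\log\frac d\delta$ term.
\item[(b)] The excess-loss martingale $\ell_{\nu,h}(\thetab_{\nu-1,h})-\ell_{\nu,h}(\thetab^\star_h)-D_\nu$ equals $-\xib_{\nu,h}^\top(\etab_\nu-\etab^\star_\nu)$. Its increments are bounded by $4B_{\vPhib}B_{\thetab}$, and its conditional variance is $\|\Delta_\nu\|^2_{\nabla^2\ell(\thetab^\star)}\le(2+4B_{\vPhib}B_{\thetab})D_\nu$ by Step 1. A Freedman/Chernoff-type inequality then gives $\sum_\nu(\ell_{\nu,h}(\thetab_{\nu-1,h})-\ell_{\nu,h}(\thetab_h^\star))\ge\frac12\sum_\nu D_\nu-(e-1)(6+8B_{\vPhib}B_{\thetab})\log\frac1\delta$.
\end{itemize}
Chaining (b), Step 2 and (a) gives
\[
\sum_\nu\big(\ell_{\nu,h}(\thetab_{\nu-1,h})-\ell_{\nu,h}(\thetab_h^\star)\big)\ge\frac{1}{2C_{\vPhib,\thetab}}\Big(\sum_\nu(\gb_{\nu,h}^\top\Delta_\nu)^2-16B_{\vPhib}^2B_{\thetab}^2\log\tfrac d\delta\Big)-O(\log\tfrac1\delta).
\]
Rescaling the logarithmic terms to the stated constants completes the proof.
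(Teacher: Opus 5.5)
Your Steps 1 and 3 coincide with the paper's proof. These are Freedman applied to the residual martingale, together with the self-concordant lower bound $D_\nu\gtrsim\|\Delta_\nu\|^2_{\nabla^2\ell_{\nu,h}(\thetab_h^\star)}$, and the Chernoff step plus the Lemma~\ref{lem:exp_to_quad} decomposition inside Lemma~\ref{lem:Information_matrix_bound}.

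The gap is Step 2. The bound $\EE[(\gb_{\nu,h}^\top\Delta_\nu)^2\mid\Fcal_{\nu,h}]\le c\,D_\nu$, with $c$ polynomial in $B_{\vPhib}B_{\thetab}$, is only announced as a target, and your primary route for it cannot work. Write $\psi(t)$ for the curvature along the segment from $\etab^\star_\nu$ to $\etab_\nu$, so that $D_\nu=\int_0^1(1-t)\psi(t)\,dt$ and $\bar D_\nu=\int_0^1 t\,\psi(t)\,dt$. Proposition~\ref{prop:multi_lower_bound} only \emph{lower}-bounds $D_\nu$ by the endpoint curvatures $\psi(0),\psi(1)$. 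To get $\bar D_\nu\le cD_\nu$ from it, you would have to \emph{upper}-bound $\bar D_\nu$ by endpoint curvatures, which costs $e^{\Theta(B_{\vPhib}B_{\thetab})}$. For example, take two reachable states, $\etab^\star_\nu=(0,-B)$, $\etab_\nu=(0,B)$ and $B=B_{\vPhib}B_{\thetab}$. Then $\psi(0)=\psi(1)\approx 4B^2e^{-B}$, while $\psi(1/2)=B^2$ and $D_\nu=\bar D_\nu=B\tanh(B/2)$. Your fallback, comparing $\int_0^1\psi$ with $\int_0^1(1-t)\psi$, is true. Since $|(\log\psi)'|\le M:=2\|\etab_\nu-\etab^\star_\nu\|_\infty$, one can show $\int_0^1\psi\le(1+e)\max\{M,2\}\int_0^1(1-t)\psi$. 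However, Lemmas~\ref{lem:con_hessian_upper} and~\ref{lem:con_Hessian_lower} do not give this, since they are endpoint comparisons. You do not carry it out, and it yields a worse numerical constant.

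The missing idea, which the paper uses in Lemma~\ref{lem:Information_matrix_bound}, is Pinsker's inequality, and it makes comparing $D_\nu$ with $\bar D_\nu$ unnecessary. The cross term equals $\EE_{\PP_{\etab_\nu}}[f]-\EE_{\PP_{\etab^\star_\nu}}[f]$ for $f(s)=\eb_s^\top(\etab_\nu-\etab^\star_\nu)$. Since $f$ has oscillation at most $4B_{\vPhib}B_{\thetab}$, the cross term is at most $4B_{\vPhib}B_{\thetab}\,\mathrm{TV}\le 4B_{\vPhib}B_{\thetab}\sqrt{\mathrm{KL}/2}$ in absolute value. Because total variation is symmetric, you may take $\mathrm{KL}(\PP_{\etab^\star_\nu}\,\|\,\PP_{\etab_\nu})=L_{\nu,h}(\etab_\nu)-L_{\nu,h}(\etab^\star_\nu)-\nabla L_{\nu,h}(\etab^\star_\nu)^\top(\etab_\nu-\etab^\star_\nu)=D_\nu$. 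This is exactly your Bregman divergence; the paper writes the arguments in the opposite order in Lemma~\ref{lem:Information_matrix_bound}, but this is the direction actually needed. Hence the squared cross term is at most $8B_{\vPhib}^2B_{\thetab}^2D_\nu$, and Step 2 holds with $c=2+4B_{\vPhib}B_{\thetab}+8B_{\vPhib}^2B_{\thetab}^2$. Your chaining of (a) and (b) then proves the lemma with a constant of the same quadratic form as $C_{\vPhib,\thetab}$.
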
 
\begin{proof}
By the definition of Bregman divergence, 
\[
\ell_{\nu,h}(\thetab_{\nu-1,h})=\ell_{\nu,h}(\thetab_{h}^{\star})+\nabla\ell_{\nu,h}(\thetab_{h}^{\star})^{\top}(\thetab_{\nu-1,h}-\thetab_{h}^{\star})+B_{\ell_{\nu,h}}(\thetab_{\nu-1,h},\thetab_{h}^{\star})
\]
Let us write the residual 
\[
\xi_{\nu,h}:=\big\{\eb_{s_{\nu,h+1}}-\EE\big[\eb_{s_{\nu,h+1}}\big|s_{\nu,h},a_{\nu,h}\big]\big\}^{\top}\vPhib(s_{\nu,h},a_{\nu,h})(\thetab_{\nu-1,h}-\thetab_{h}^{\star})
\]
Then rearranging the terms, 
\[
\begin{aligned} & \ell_{\nu,h}(\thetab_{\nu,h})-\ell_{\nu,h}(\thetab_{h}^{\star})=-\xi_{\nu,h}+B_{\ell_{\nu,h}}(\thetab_{\nu-1,h},\thetab_{h}^{\star})\end{aligned}
\]
By Freedman's concentration inequality (Lemma \ref{lem:freedman}),
for any $v\in[0,\frac{1}{2B_{\vPhib}B_{\thetab}}]$, with probability
at least $1-\delta$ 
\[
\sum_{\nu=1}^{k}\xi_{\nu,h}\le(e-2)v\sum_{\nu=1}^{k}\EE\Big[\xi_{\nu,h}^{2}\bigg|s_{\nu,h},a_{\nu,h}\Big]+\frac{1}{v}\log\frac{1}{\delta},
\]
for all $k$ and for each $h\in[H]$. Because $\|\thetab_{\nu-1,h}-\thetab_{h}^{\star}\|_{\nabla^{2}\ell_{\nu,h}(\thetab_{h}^{\star})}^{2}=\EE[\xi_{\nu,h}^{2}|s_{\nu,h},a_{\nu,h}],$
summing up, 
\begin{align*}
&\sum_{\nu=1}^{k}\ell_{\nu,h}(\thetab_{\nu-1,h})-\ell_{\nu,h}(\thetab_{h}^{\star})\\
&\ge-(e-2)v\sum_{\nu=1}^{k}\|\thetab_{\nu-1,h}-\thetab_{h}^{\star}\|_{\nabla^{2}\ell_{\nu,h}(\thetab_{h}^{\star})}^{2}\\
&\quad+\sum_{\nu=1}^{k}B_{\ell_{\nu,h}}(\thetab_{\nu-1,h},\thetab_{h}^{\star})-\frac{1}{v}\log\frac{1}{\delta}.
\end{align*}
By the second order of the Taylor expansion, 
\[
B_{\ell_{\nu,h}}(\thetab_{\nu-1,h},\thetab_{h}^{\star})=\frac{1}{2}\int_{0}^{1}\psi_{\nu,h}^{\prime\prime}(t)tdt,
\]
where $\ensuremath{\psi_{\nu,h}(t):=L_{\nu,h}\big(\vPhib_{\nu,h}\thetab_{\nu-1,h}+t\vPhib_{\nu,h}(\thetab_{h}^{\star}-\thetab_{\nu-1,h})\big).}$By
Proposition \ref{prop:multi_lower_bound}, 
\[
\frac{1}{2}\int_{0}^{1}\psi_{\nu,h}^{\prime\prime}(t)tdt\ge\frac{\psi_{\nu,h}^{\prime\prime}(1)}{6+8B_{\vPhib}B_{\thetab}}=\frac{\|\thetab_{\nu-1,h}-\thetab_{h}^{\star}\|_{\nabla^{2}\ell_{\nu,h}(\thetab_{h}^{\star})}^{2}}{6+8B_{\vPhib}B_{\thetab}}
\]
Thus, 
\begin{equation}
\sum_{\nu=1}^{k}\ell_{\nu,h}(\thetab_{\nu-1,h})-\ell_{\nu,h}(\thetab_{h}^{\star})\ge\big(1-(e-2)(6+8B_{\vPhib}B_{\thetab})v\big)\sum_{\nu=1}^{k}B_{\ell_{\nu,h}}(\thetab_{\nu-1,h},\thetab_{h}^{\star})-\frac{1}{v}\log\frac{1}{\delta}.\label{eq:likelihood_lower_breg}
\end{equation}
By Lemma \ref{lem:Information_matrix_bound}, 
\begin{align*}
 & \sum_{\nu=1}^{k}\big(\nabla\ell_{\nu,h}(\thetab_{\nu-1,h})^{\top}(\thetab_{\nu-1,h}-\thetab_{h}^{\star})\big)^{2}\\
 & \le2(e-1)B_{\vPhib}^{2}B_{\thetab}^{2}\sum_{\nu=1}^{k}B_{\ell_{\nu,h}}(\thetab_{\nu-1,h},\thetab_{h}^{\star})\\
 & +(e-1)\sum_{\nu=1}^{k}\norm{\thetab_{\nu-1,h}-\thetab_{h}^{\star}}_{\nabla^{2}\ell_{\nu,h}(\thetab_{h}^{\star})}^{2}+16B_{\vPhib}^{2}B_{\thetab}^{2}\log\frac{d}{\delta}\\
 & \le(e-1)(6+8B_{\vPhib}B_{\thetab}+2B_{\vPhib}^{2}B_{\thetab}^{2})\sum_{\nu=1}^{k}B_{\ell_{\nu,h}}(\thetab_{\nu-1,h},\thetab_{h}^{\star})+16B_{\vPhib}^{2}B_{\thetab}^{2}\log\frac{d}{\delta}
\end{align*}
Rearranging the terms, 
\begin{small}
\[
\sum_{\nu=1}^{k}B_{\ell_{\nu,h}}(\thetab_{\nu-1,h},\thetab_{h}^{\star})\ge\frac{\sum_{\nu=1}^{k}\big(\nabla\ell_{\nu,h}(\thetab_{\nu-1,h})^{\top}(\thetab_{\nu-1,h}-\thetab_{h}^{\star})\big)^{2}}{(e-1)(6+8B_{\vPhib}B_{\thetab}+2B_{\vPhib}^{2}B_{\thetab}^{2})}-\frac{16B_{\vPhib}^{2}B_{\thetab}^{2}\log\frac{d}{\delta}}{(e-1)(6+8B_{\vPhib}B_{\thetab}+2B_{\vPhib}^{2}B_{\thetab}^{2})}.
\]
\end{small}
Plugging in \eqref{eq:likelihood_lower_breg}, 
\begin{align*}
&\sum_{\nu=1}^{k}\ell_{\nu,h}(\thetab_{\nu-1,h})-\ell_{\nu,h}(\thetab_{h}^{\star})\\
&\ge  \frac{\big(1-(e-2)(6+8B_{\vPhib}B_{\thetab})v\big)}{(e-1)(6+8B_{\vPhib}B_{\thetab}+2B_{\vPhib}^{2}B_{\thetab}^{2})}\sum_{\nu=1}^{k}\big(\nabla\ell_{\nu,h}(\thetab_{\nu-1,h})^{\top}(\thetab_{\nu-1,h}-\thetab_{h}^{\star})\big)^{2}\\
 &\quad -\frac{16B_{\vPhib}^{2}B_{\thetab}^{2}\log\frac{d}{\delta}}{(e-1)(6+8B_{\vPhib}B_{\thetab}+2B_{\vPhib}^{2}B_{\thetab}^{2})}-\frac{1}{v}\log\frac{1}{\delta}
\end{align*}
Setting $v=(1/2)(e-2)^{-1}(6+8B_{\vPhib}B_{\thetab})^{-1}$ , 
\begin{align*}
\sum_{\nu=1}^{k}\ell_{\nu,h}(\thetab_{\nu-1,h})-\ell_{\nu,h}(\thetab_{h}^{\star})\ge & \frac{\sum_{\nu=1}^{k}\big(\nabla\ell_{\nu,h}(\thetab_{\nu-1,h})^{\top}(\thetab_{\nu-1,h}-\thetab_{h}^{\star})\big)^{2}}{2(e-1)(6+8B_{\vPhib}B_{\thetab}+2B_{\vPhib}^{2}B_{\thetab}^{2})}\\
 & -\frac{16B_{\vPhib}^{2}B_{\thetab}^{2}\log\frac{d}{\delta}}{(e-1)(6+8B_{\vPhib}B_{\thetab}+2B_{\vPhib}^{2}B_{\thetab}^{2})}-2(e-1)(6+8B_{\vPhib}B_{\thetab})\log\frac{1}{\delta}
\end{align*}
which completes the proof. 
\end{proof}

\subsection{Information Matrix Bound}
The following key concentration result for our analysis in an immediate
consequence of~\eqref{eq:diff}. 
\begin{lemma}[Information matrix bound] \label{lem:Information_matrix_bound} For each step $h\in[H]$
and episode $\nu\in[K]$, let $\ell_{\nu,h}$ denote the negative
log-likelihood defined in \eqref{eq:log_likelihood}. 
Then with probability
at least $1-\delta$, 
\begin{align*}
&\sum_{\nu=1}^{k}\big(\nabla\ell_{\nu,h}(\thetab_{\nu-1,h})^{\top}(\thetab_{\nu-1,h}-\thetab_{h}^{\star})\big)^{2}\\
&\le  2(e-1)B_{\vPhib}^{2}B_{\thetab}^{2}\sum_{\nu=1}^{k}B_{\ell_{\nu,h}}(\thetab_{h}^{\star},\thetab_{\nu-1,h})\\
&\quad+(e-1)\sum_{\nu=1}^{k}\|\thetab_{\nu-1,h}-\thetab_{h}^{\star}\|_{\nabla^{2}\ell_{\nu,h}(\thetab_{h}^{\star})}^{2}+16B_{\vPhib}^{2}B_{\thetab}^{2}\log\frac{1}{\delta}.
\end{align*}
\end{lemma}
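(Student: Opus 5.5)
Write $X_\nu:=\big(\nabla\ell_{\nu,h}(\thetab_{\nu-1,h})^{\top}(\thetab_{\nu-1,h}-\thetab_{h}^{\star})\big)^{2}$. The plan has three parts: a scalar Chernoff-type concentration for $\sum_\nu X_\nu$, an evaluation of its conditional mean through Lemma~\ref{lem:exp_to_quad}, and a Bregman bound for the remaining bias term.

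\textbf{Concentration.} The quantity $X_\nu$ is nonnegative and $\Fcal_{\nu,h}$-adapted once we condition on $(s_{\nu,h},a_{\nu,h})$. It is also bounded. By \eqref{eq:ell_norm_bound_1}--\eqref{eq:ell_norm_bound_2}, $\|\nabla\ell_{\nu,h}(\thetab_{\nu-1,h})\|_2\le 2B_{\vPhib}$. Projection in Algorithm~\ref{alg:ONS-MST} gives $\|\thetab_{\nu-1,h}-\thetab_h^\star\|_2\le 2B_{\thetab}$. Together these yield $X_\nu\le 16B_{\vPhib}^2B_{\thetab}^2$. The scalar Chernoff bound for bounded nonnegative martingale sequences (Lemma~\ref{lem:chernoff}, used the same way as in Lemma~\ref{lem:Hessian_bound}, with a union over $k$ via a stopping-time argument) then gives, with probability at least $1-\delta$ and uniformly in $k$,
\[
\sum_{\nu=1}^{k}X_\nu\le (e-1)\sum_{\nu=1}^{k}\EE[X_\nu\mid\Fcal_{\nu,h}]+16B_{\vPhib}^2B_{\thetab}^2\log\tfrac1\delta .
\]

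\textbf{Conditional mean.} Sandwiching \eqref{eq:diff} with the $\Fcal_{\nu,h}$-measurable vector $\thetab_{\nu-1,h}-\thetab_h^\star$ gives
\[
\EE[X_\nu\mid\Fcal_{\nu,h}]=\|\thetab_{\nu-1,h}-\thetab_h^\star\|^2_{\nabla^2\ell_{\nu,h}(\thetab_h^\star)}+\Big(\big(\nabla\ell_{\nu,h}(\thetab_{\nu-1,h})-\nabla\ell_{\nu,h}(\thetab_h^\star)\big)^{\top}(\thetab_{\nu-1,h}-\thetab_h^\star)\Big)^2 .
\]
After multiplying by $(e-1)$, the first term is exactly the second term of the claimed bound.

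\textbf{Bias term.} The second term, a squared symmetrized Bregman quantity, has to be controlled by $2B_{\vPhib}^2B_{\thetab}^2\,B_{\ell_{\nu,h}}(\thetab_h^\star,\thetab_{\nu-1,h})$, and I expect this to be the main obstacle. Set $\etab_1=\vPhib_{\nu,h}\thetab_{\nu-1,h}$, $\etab_2=\vPhib_{\nu,h}\thetab_h^\star$ and $\wb=\etab_1-\etab_2$. The inner product equals $(\nabla L(\etab_1)-\nabla L(\etab_2))^\top\wb=\int_0^1\psi(t)\,dt$, where $\psi$ is the curvature along the segment as in Proposition~\ref{prop:multi_lower_bound}. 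Bound the square of this integral as (scalar) $\cdot$ (integral). By Hölder with $\|\nabla L(\etab_1)-\nabla L(\etab_2)\|_1\le 2$ and $\|\wb\|_\infty\le 2B_{\vPhib}B_{\thetab}$, one factor is at most $4B_{\vPhib}B_{\thetab}$. It then remains to show $\int_0^1\psi(t)\,dt\lesssim B_{\vPhib}B_{\thetab}\cdot 2\int_0^1\psi(t)(1-t)\,dt$, i.e.\ to compare the full integral with the Bregman integral.

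For this comparison I would use the self-concordance ratio control $\psi(t)\le e^{2C|t-s|}\psi(s)$ from the proof of Proposition~\ref{prop:multi_lower_bound}. If that comparison costs an extra constant, I would split the square differently, writing it as $\int\psi\cdot\int\psi$ and using $\int_0^1\psi\le 2B_{\vPhib}B_{\thetab}$ directly via $\psi(t)\le\|\wb\|_\infty^2$. The constants should be tuned so that the resulting coefficient is $2B_{\vPhib}^2B_{\thetab}^2$; if it comes out slightly larger, that only changes $C_{\vPhib,\thetab}$ downstream.
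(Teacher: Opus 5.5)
Your concentration step and your conditional-mean step are the same as in the paper: Lemma~\ref{lem:chernoff} with $R=16B_{\vPhib}^2B_{\thetab}^2$, then \eqref{eq:diff} sandwiched with $\thetab_{\nu-1,h}-\thetab_h^\star$. The gap is the bias term. You leave it open, and the route you sketch does not close.

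With your $\etab_1,\etab_2,\wb$ and $\psi(t)=\wb^\top\nabla^2L(\etab_1+t(\etab_2-\etab_1))\wb$, one has $B_{\ell_{\nu,h}}(\thetab_h^\star,\thetab_{\nu-1,h})=\int_0^1(1-t)\psi(t)\,dt$. Its weight vanishes at the $\thetab_h^\star$ end, which is exactly where $\psi$ may concentrate. The ratio control you invoke is $\psi(t)\le e^{M|t-s|}\psi(s)$, with $M\le\mathrm{osc}(\wb):=\max_s w_s-\min_s w_s\le 4B_{\vPhib}B_{\thetab}$. Used naively, it only bounds $\int_0^1\psi\big/\int_0^1(1-t)\psi$ by order $e^{M}$, which brings back the exponential dependence this lemma is meant to remove. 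An $O(1+M)$ ratio does hold, but it needs an argument you have not given (e.g.\ split off $[1-\delta,1]$ with $\delta=\min\{1/M,1/2\}$ and shift it back by $\delta$ at cost $e^{M\delta}\le e$).

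Your fallback does not work. It still needs $\int_0^1\psi\le c\int_0^1(1-t)\psi$ with $c$ independent of $B_{\vPhib}B_{\thetab}$, and this fails. Take two reachable states with $\etab_2=(0,0)$ and $\etab_1=(-C,C)$, realizable with $C=B_{\vPhib}B_{\thetab}$. Then $\int_0^1\psi\approx C$, while $\int_0^1(1-t)\psi\to\log 2$ as $C$ grows. Also, $\psi\le\|\wb\|_\infty^2$ gives $\int_0^1\psi\le 4B_{\vPhib}^2B_{\thetab}^2$, not $2B_{\vPhib}B_{\thetab}$.

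The missing idea is that no self-concordance is needed. Let $p_i:=\nabla L(\etab_i)$ be the two multinomial laws. The bias is then $\big(\sum_s w_s(p_1(s)-p_2(s))\big)^2$, a squared difference of means of one bounded function. By Pinsker it is at most $\mathrm{osc}(\wb)^2\,\mathrm{TV}(p_1,p_2)^2\le\tfrac12\mathrm{osc}(\wb)^2\,\mathrm{KL}(p_1\|p_2)$. The linear part of $\ell_{\nu,h}$ cancels in its Bregman divergence, so $\mathrm{KL}(p_1\|p_2)=L(\etab_2)-L(\etab_1)-\nabla L(\etab_1)^\top(\etab_2-\etab_1)=B_{\ell_{\nu,h}}(\thetab_h^\star,\thetab_{\nu-1,h})$ exactly. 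This is the paper's argument.

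Done carefully, it gives the coefficient $\tfrac12\mathrm{osc}(\wb)^2\le 8B_{\vPhib}^2B_{\thetab}^2$. The paper's total-variation step drops a factor: with $\etab_1=-\etab_2=(c,-c)$ and $c\to0$, the per-step ratio is about $8c^2$, so $2B_{\vPhib}^2B_{\thetab}^2$ fails per step. This only rescales $C_{\vPhib,\thetab}$. So matching the exact constant is not your problem; the missing comparison step is.
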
 Lemma~\ref{lem:Information_matrix_bound} furnishes
a \emph{lower bound} on $\sum_{\nu=1}^{k}\|\thetab_{\nu-1,h}-\thetab_{h}^{\star}\|_{\nabla^{2}\ell_{\nu,h}(\thetab_{h}^{\star})}^{2}$
that is independent of $\kappa$. The proof of Lemma \ref{lem:online_regret}
also involves creatively combining standard argument for the online
step update with a new lower bound for self-concordant function~(Lemma
\ref{lem:con_hessian_upper}). 
\begin{proof}
For $\nu\in[k]$ and $h\in[H]$, 
\begin{align*}
&|\nabla\ell_{\nu,h}(\thetab_{\nu-1,h})^{\top}(\thetab_{\nu-1,h}-\thetab_{h}^{\star})|\\
&\le  \big|\eb_{s_{\nu,h+1}}^{\top}\vPhib_{\nu,h}(\thetab_{\nu-1,h}-\thetab_{h}^{\star})\big|+\big|\nabla L_{\nu,h}(\vPhib_{\nu,h}\thetab_{\nu-1,h})^{\top}\vPhib_{\nu,h}(\thetab_{\nu-1,h}-\thetab_{h}^{\star})\big|\\
&\le  \big(\|\eb_{s_{\nu,h+1}}^{\top}\vPhib_{\nu,h}\|_{2}+\|\nabla L_{\nu,h}(\vPhib_{\nu,h}\thetab_{\nu-1,h})^{\top}\vPhib_{\nu,h}\|_{2}\big)\|\thetab_{\nu-1,h}-\thetab_{h}^{\star}\|_{2}\\
&\le  4B_{\vPhib}B_{\thetab}.
\end{align*}
By the concentration inequality (Lemma~\ref{lem:chernoff}), for
any $k\ge1$, with probability at least $1-\delta$, 
\begin{align*}
&\sum_{\nu=1}^{k}\big(\nabla\ell_{\nu,h}(\thetab_{\nu-1,h})^{\top}(\thetab_{\nu-1,h}-\thetab_{h}^{\star})\big)^{2}\\
&\le(e-1)\sum_{\nu=1}^{k}\EE\big[\big(\nabla\ell_{\nu,h}(\thetab_{\nu-1,h})^{\top}(\thetab_{\nu-1,h}-\thetab_{h}^{\star})\big)^{2}\big|\Fcal_{\nu,h}\big]+16B_{\vPhib}^{2}B_{\thetab}^{2}\log\frac{1}{\delta}.
\end{align*}
For each $\nu\in[k]$ and $h\in[H]$, we have $\EE[\nabla\ell_{\nu,h}(\thetab_{h}^{\star})|\Fcal_{\nu,h}]=0$
and $\EE[\nabla\ell_{\nu,h}(\thetab_{h}^{\star})\nabla\ell_{\nu,h}(\thetab_{h}^{\star})^{\top}|\Fcal_{\nu,h}]=\nabla^{2}\ell_{\nu,h}(\thetab_{h}^{\star})$.
Thus, 
\begin{align*}
 & \EE\big[\big(\nabla\ell_{\nu,h}(\thetab_{\nu-1,h})^{\top}(\thetab_{\nu-1,h}-\thetab_{h}^{\star})\big)^{2}\big|\Fcal_{\nu,h}\big]\\
 & =\EE\big[\Big(\big(\nabla\ell_{\nu,h}(\thetab_{\nu-1,h})-\nabla\ell_{\nu,h}(\thetab_{h}^{\star})+\nabla\ell_{\nu,h}(\thetab_{h}^{\star})\big)^{\top}(\thetab_{\nu-1,h}-\thetab_{h}^{\star})\Big)^{2}\big|\Fcal_{\nu,h}\big]\\
 & =\EE\big[\Big(\big(\vPhib_{\nu,h}^{\top}\big(\nabla L_{\nu,h}(\vPhib_{\nu,h}\thetab_{\nu-1,h})-\nabla L_{\nu,h}(\vPhib_{\nu,h}\thetab_{h}^{\star})\big)+\nabla\ell_{\nu,h}(\thetab_{h}^{\star})\big)^{\top}(\thetab_{\nu-1,h}-\thetab_{h}^{\star})\Big)^{2}\big|\Fcal_{\nu,h}\big]\\
 & =\Big(\big(\nabla L_{\nu,h}(\vPhib_{\nu,h}\thetab_{\nu-1,h})-\nabla L_{\nu,h}(\vPhib_{\nu,h}\thetab_{h}^{\star})\big)^{\top}\vPhib_{\nu,h}(\thetab_{\nu-1,h}-\thetab_{h}^{\star})\Big)^{2}+\|\thetab_{\nu-1,h}-\thetab_{h}^{\star}\|_{\nabla^{2}\ell_{\nu,h}(\thetab_{h}^{\star})}^{2}.
\end{align*}
By the property of the mean-variance property, $\nabla L_{\nu,h}(\vPhib_{\nu,h}\thetab_{\nu-1,h})=\EE_{\vPhib_{\nu,h}\thetab_{\nu-1,h}}[\eb_{s}]$.
Thus, 
\begin{align*}
 & \big(\nabla L_{\nu,h}(\vPhib_{\nu,h}\thetab_{\nu-1,h})-\nabla L_{\nu,h}(\vPhib_{\nu,h}\thetab_{h}^{\star})\big)^{\top}\vPhib_{\nu,h}(\thetab_{\nu-1,h}-\thetab_{h}^{\star})\\
 & =\EE_{\vPhib_{\nu,h}\thetab_{\nu-1,h}}\big[\eb_{s}^{\top}\vPhib_{\nu,h}(\thetab_{\nu-1,h}-\thetab_{h}^{\star})\big]-\EE_{\vPhib_{\nu,h}\thetab_{h}^{\star}}\big[\eb_{s}^{\top}\vPhib_{\nu,h}(\thetab_{\nu-1,h}-\thetab_{h}^{\star})\big]\\
 & \le2B_{\vPhib}B_{\thetab}\cdot TV(\PP_{\vPhib_{\nu,h}\thetab_{\nu-1,h}},\PP_{\vPhib_{\nu,h}\thetab_{h}^{\star}}),
\end{align*}
By Pinsker's inequality, 
\begin{align*}
 & \Big|\big(\nabla L_{\nu,h}(\vPhib_{\nu,h}\thetab_{\nu-1,h})-\nabla L_{\nu,h}(\vPhib_{\nu,h}\thetab_{h}^{\star})\big)^{\top}\vPhib_{\nu,h}(\thetab_{\nu-1,h}-\thetab_{h}^{\star})\Big|\\
 & \le B_{\vPhib}B_{\thetab}\sqrt{2KL(\PP_{\vPhib_{\nu,h}\thetab_{\nu-1,h}},\PP_{\vPhib_{\nu,h}\thetab_{h}^{\star}})}\\
 & =B_{\vPhib}B_{\thetab}\sqrt{2B_{\ell_{\nu,h}}(\thetab_{h}^{\star},\thetab_{\nu-1,h})},
\end{align*}
where the last inequality holds by the relationship between the Bregman
divergence and KL-divergence for multinomial distribution. Thus, 
\[
\Big(\big(\nabla L_{\nu,h}(\vPhib_{\nu,h}\thetab_{\nu-1,h})-\nabla L_{\nu,h}(\vPhib_{\nu,h}\thetab_{h}^{\star})\big)^{\top}\vPhib_{\nu,h}(\thetab_{\nu-1,h}-\thetab_{h}^{\star})\Big)^{2}\le2B_{\vPhib}^{2}B_{\thetab}^{2}B_{\ell_{\nu,h}}(\thetab_{h}^{\star},\thetab_{\nu-1,h}),
\]
which completes the proof. 
\end{proof}

\subsection{Concentration Inequalities}

\begin{lemma} \label{lem:freedman} (Freedman's inequality) \citep[Lemma 3]{lee2024improved}
Let $\{D_{s}:s\ge1\}$ denote a martingale difference sequence adapted
to $\Fcal_{s}$ with $\max|D_{s}|\le R$ almost surely. Then for any
$\delta\in(0,1)$ and any $\eta\in[0,1/R]$, with probability at least
$1-\delta$, 
\[
\sum_{s=1}^{t}D_{s}\le(e-2)\eta\sum_{s=1}^{t}\EE\big[D_{s}^{2}\big|\Fcal_{s-1}\big]+\frac{1}{\eta}\log\frac{1}{\delta},
\]
for all $t\ge1$. \end{lemma}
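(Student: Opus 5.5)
We prove the uniform-in-time statement with the standard exponential supermartingale and Ville's maximal inequality. Fix $\eta\in[0,1/R]$ and define
\[
M_t:=\exp\Big(\eta\sum_{s=1}^{t}D_s-(e-2)\eta^{2}\sum_{s=1}^{t}\EE\big[D_s^2\big|\Fcal_{s-1}\big]\Big),\qquad M_0=1.
\]
The goal is to show $(M_t)$ is a nonnegative supermartingale with respect to $(\Fcal_t)$. Ville's inequality then gives $\PP(\sup_{t\ge1}M_t\ge 1/\delta)\le\delta$. On the complement event, taking logarithms and dividing by $\eta$ yields
\[
\sum_{s=1}^{t}D_s\le(e-2)\eta\sum_{s=1}^{t}\EE[D_s^2|\Fcal_{s-1}]+\tfrac{1}{\eta}\log\tfrac1\delta
\]
simultaneously for all $t\ge1$. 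The case $\eta=0$ is vacuous (interpreting $1/\eta=\infty$), so we take $\eta>0$.

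The key step is the one-step bound $\EE[\exp(\eta D_s-(e-2)\eta^2\EE[D_s^2|\Fcal_{s-1}])\,|\,\Fcal_{s-1}]\le1$. For this we use the elementary inequality $e^{x}\le 1+x+(e-2)x^2$ for all $x\le1$. It holds because $g(x)=(e^x-1-x)/x^2$ is nondecreasing and $g(1)=e-2$. Since $|D_s|\le R$ and $\eta\le1/R$, we have $x=\eta D_s\le1$, so
\[
\EE[e^{\eta D_s}|\Fcal_{s-1}]\le1+\eta\EE[D_s|\Fcal_{s-1}]+(e-2)\eta^2\EE[D_s^2|\Fcal_{s-1}].
\]
The martingale-difference property removes the linear term, and then $1+y\le e^{y}$ gives $\EE[e^{\eta D_s}|\Fcal_{s-1}]\le\exp((e-2)\eta^2\EE[D_s^2|\Fcal_{s-1}])$. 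Because $\EE[D_s^2|\Fcal_{s-1}]$ is $\Fcal_{s-1}$-measurable, multiplying by its exponential factor gives the supermartingale property $\EE[M_s|\Fcal_{s-1}]\le M_{s-1}$.

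The main point needing care is the uniformity over all $t$, as opposed to a fixed $t$. Markov's inequality handles a fixed $t$. For the uniform statement we either invoke Ville's inequality directly or argue with the stopping time $\tau=\inf\{t:M_t\ge1/\delta\}$. Optional stopping for the nonnegative supermartingale gives $\EE[M_{\tau\wedge n}]\le1$, hence $\PP(\tau\le n)\le\delta$ for every $n$, and letting $n\to\infty$ gives $\PP(\tau<\infty)\le\delta$. Only the one-sided bound $D_s\le R$ is actually used; the two-sided assumption $|D_s|\le R$ is stronger than needed. This matches the cited Lemma 3 of \citet{lee2024improved}.
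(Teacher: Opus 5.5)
Your proof is correct. The paper gives no proof of this lemma and only cites Lemma 3 of \citet{lee2024improved}, but your argument (the exponential supermartingale from $e^{x}\le 1+x+(e-2)x^{2}$ for $x\le 1$, plus Ville's inequality or optional stopping for uniformity in $t$) is the standard proof of that result and mirrors the argument the paper itself gives for its companion Chernoff bound (Lemma~\ref{lem:chernoff}), with the quadratic bound taking the place of $e^{x}\le 1+(e-1)x$ on $[0,1]$.
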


\begin{lemma} \label{lem:chernoff} (Chernoff Bound) Let $\{X_{s}:s\ge1\}$
denote a sequence of nonnegative random variables adapted to $\Fcal_{s}$
with $\max X_{s}\le R$ almost surely. Then for any $\delta\in(0,1)$,
with probability at least $1-\delta$, 
\[
\sum_{s=1}^{t}X_{s}\le(e-1)\sum_{s=1}^{t}\EE\big[X_{s}\big|\Fcal_{s-1}\big]+R\log\frac{1}{\delta},
\]
and 
\[
\sum_{s=1}^{t}X_{s}\ge(1-e^{-1})\sum_{s=1}^{t}\EE\big[X_{s}\big|\Fcal_{s-1}\big]-R\log\frac{1}{\delta},
\]
for all $t\ge1$. 
\end{lemma}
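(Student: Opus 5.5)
The statement to prove is the Chernoff bound (Lemma~\ref{lem:chernoff}) for nonnegative adapted variables bounded by $R$. I would use the standard exponential supermartingale argument combined with Ville's maximal inequality, which gives the bound uniformly over all $t$. By rescaling $X_s \mapsto X_s/R$ it suffices to treat $R=1$, so $X_s\in[0,1]$. Multiplying the resulting inequalities back by $R$ then gives the stated forms, including the $R\log\frac1\delta$ term.

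\emph{Upper tail.} I use convexity of $x\mapsto e^{x}$ on $[0,1]$: for $x\in[0,1]$ we have $e^{x}\le 1+(e-1)x$. Hence
\[
\EE[e^{X_s}\mid\Fcal_{s-1}]\le 1+(e-1)\EE[X_s\mid\Fcal_{s-1}]\le \exp\big((e-1)\EE[X_s\mid\Fcal_{s-1}]\big).
\]
It follows that $M_t:=\exp\big(\sum_{s\le t}X_s-(e-1)\sum_{s\le t}\EE[X_s\mid\Fcal_{s-1}]\big)$ is a nonnegative supermartingale with $M_0=1$. Ville's inequality gives $\PP(\exists t: M_t\ge 1/\delta)\le\delta$. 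Taking logarithms on the complement yields $\sum X_s\le (e-1)\sum\EE[X_s\mid\Fcal_{s-1}]+\log\frac1\delta$ for all $t$.

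\emph{Lower tail.} I repeat the argument with $e^{-x}$. By convexity on $[0,1]$, $e^{-x}\le 1-(1-e^{-1})x$. Therefore
\[
\EE[e^{-X_s}\mid\Fcal_{s-1}]\le \exp\big(-(1-e^{-1})\EE[X_s\mid\Fcal_{s-1}]\big),
\]
so $N_t:=\exp\big(-\sum_{s\le t}X_s+(1-e^{-1})\sum_{s\le t}\EE[X_s\mid\Fcal_{s-1}]\big)$ is again a nonnegative supermartingale starting at $1$. Ville's inequality then gives $\sum X_s\ge(1-e^{-1})\sum\EE[X_s\mid\Fcal_{s-1}]-\log\frac1\delta$ uniformly in $t$, with probability at least $1-\delta$.

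The only delicate step is getting time-uniformity without a union bound. This is handled by Ville's maximal inequality for nonnegative supermartingales, or equivalently by optional stopping at the first time the bound fails. Each tail holds separately with probability $1-\delta$, which matches how the lemma is applied in the paper. The convexity bounds and the rescaling by $R$ are routine.
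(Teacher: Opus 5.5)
Your proposal is correct and follows essentially the same route as the paper. Both use the convexity bound $e^{x}\le 1+(e-1)x$ on $[0,1]$, the resulting exponential supermartingale, and a time-uniform maximal argument; your Ville's inequality is the paper's stopping-time argument borrowed from Abbasi-Yadkori et al. The paper's written proof stops after the upper tail, so your symmetric argument with $e^{-x}\le 1-(1-e^{-1})x$ supplies the lower tail, which the paper uses (e.g.\ in the proof of Lemma~\ref{lem:variance_bound}) but never proves.
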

\begin{proof}
\medspace{} For any $x>0$ and $y>0$, by Markov inequality, 
\[
\PP\Big(\sum_{s=1}^{t}X_{s}>(e-1)\sum_{s=1}^{t}\EE\big[X_{s}|\Fcal_{s-1}\big]+R\log\frac{1}{\delta}\Big)\le\delta\EE\exp\Big(\sum_{s=1}^{t}\frac{X_{s}}{R}-(e-1)\sum_{s=1}^{t}\frac{\EE\big[X_{s}|\Fcal_{s-1}\big]}{R}\Big)
\]
For $x\in[0,1]$, we have $1+x\le e^{x}\le1+(e-1)x$. For each $s\in[t]$,
\[
\begin{split}\EE\Big[\exp\Big(\frac{X_{s}}{R}\Big)\Big|\Fcal_{s-1}\Big]\le & 1+(e-1)\frac{\EE\Big[X_{s}\Big|\Fcal_{s-1}\Big]}{R}\\
\le & \exp\left((e-1)\frac{\EE\Big[X_{s}\Big|\Fcal_{s-1}\Big]}{R}\right).
\end{split}
\]
Thus, the sequence 
\[
\left\{ \exp\left(\sum_{s=1}^{t}\frac{X_{s}}{R}-(e-1)\sum_{s=1}^{t}\frac{\EE\big[X_{s}|\Fcal_{s-1}\big]}{R}\right):t\ge1\right\} ,
\]
is a super-martingale. Following the stopping time argument (see e.g.,
proof of Theorem 1 in \citealt{abbasi2011improved}) we have 
\[
\PP\left(\bigcup_{t\ge1}\left\{ \sum_{s=1}^{t}X_{s}>(e-1)\sum_{s=1}^{t}\EE\big[X_{s}|\Fcal_{s-1}\big]+R\log\frac{1}{\delta}\right\} \right)\le\delta.
\]
\end{proof}

\begin{lemma}[Matrix Chernoff Bound \citep{tropp2012user}] \label{lem:chernoff_matrix-1}
Let $\{\Xb_{s}:s\geq1\}$ be a sequence of nonnegative semi-definite
random matrices in $\ensuremath{\mathbb{R}^{d\times d}},$ adapted
to $\ensuremath{\mathcal{F}_{s}},$ with $\lambda_{\max}(\Xb_{s})\leq R$
almost surely. Then, for any $\delta\in(0,1)$, the following inequalities
hold with probability at least $1-\delta$:
\[
\sum_{s=1}^{t}\Xb_{s}\preceq(e-1)\sum_{s=1}^{t}\mathbb{E}\big[\Xb_{s}\big|\mathcal{F}_{s-1}\big]+R\log\frac{d}{\delta}
\]
and 
\[
\sum_{s=1}^{t}\Xb_{s}\succeq(1-e^{-1})\sum_{s=1}^{t}\mathbb{E}\big[\Xb_{s}\big|\mathcal{F}_{s-1}\big]-R\log\frac{d}{\delta}
\]
for all $t\geq1$. \end{lemma}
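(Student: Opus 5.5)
The plan is to mirror the scalar Chernoff argument (Lemma~\ref{lem:chernoff}), replacing the exponential supermartingale by Tropp's trace-exponential supermartingale, which is built on Lieb's concavity theorem. Each of the two inequalities is treated separately, each holding with probability at least $1-\delta$. First normalize: set $\Yb_s:=\Xb_s/R$, so every eigenvalue of $\Yb_s$ lies in $[0,1]$. It then suffices to show, for $\theta\in\{1,-1\}$ and a suitable scalar $g(\theta)$, that with probability at least $1-\delta$,
\[
\lambda_{\max}\Big(\theta\sum_{s=1}^{t}\Yb_s-g(\theta)\sum_{s=1}^{t}\EE[\Yb_s\mid\Fcal_{s-1}]\Big)\le\log\frac{d}{\delta}\quad\text{for all }t\ge1 .
\]
Multiplying through by $R$ and rearranging then gives the two displayed inequalities, with $R\log\frac{d}{\delta}$ understood as $R\log\frac{d}{\delta}\,\Ib_d$.

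\textbf{Step 1: one-step matrix mgf bound.} By convexity, the scalar inequality $e^{\theta x}\le 1+(e^{\theta}-1)x$ holds on $[0,1]$. Apply it through the spectral decomposition of $\Yb_s$; this is legitimate because $\Ib$, $\Yb_s$ and $e^{\theta\Yb_s}$ share an eigenbasis. This gives $e^{\theta\Yb_s}\preceq \Ib+(e^{\theta}-1)\Yb_s$. Taking conditional expectations preserves the semidefinite order, and $\Ib+\Ab\preceq e^{\Ab}$ for symmetric $\Ab$. Together these yield
\[
\EE\big[e^{\theta\Yb_s}\mid\Fcal_{s-1}\big]\preceq \exp\big(g(\theta)\,\EE[\Yb_s\mid\Fcal_{s-1}]\big),\qquad g(\theta):=e^{\theta}-1 .
\]
For $\theta=1$ this gives $g=e-1$. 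For $\theta=-1$ it gives $g=-(1-e^{-1})$, which produces the lower-tail constant $1-e^{-1}$.

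\textbf{Step 2: supermartingale via Lieb.} Define
\[
M_t:=\Tr\exp\Big(\theta\sum_{s\le t}\Yb_s-g(\theta)\sum_{s\le t}\EE[\Yb_s\mid\Fcal_{s-1}]\Big),\qquad M_0=d .
\]
Condition on $\Fcal_{t-1}$ and write the exponent as $\Hb+\log e^{\theta\Yb_t}$, where $\Hb$ is $\Fcal_{t-1}$-measurable. By Lieb's theorem, $\Ab\mapsto\Tr\exp(\Hb+\log\Ab)$ is concave on positive definite matrices. Jensen's inequality therefore gives $\EE[M_t\mid\Fcal_{t-1}]\le\Tr\exp\big(\Hb+\log\EE[e^{\theta\Yb_t}\mid\Fcal_{t-1}]\big)$. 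Since $\log$ is operator monotone and $\Tr\exp$ is monotone in the semidefinite order, Step 1 shows this is at most $M_{t-1}$. Hence $(M_t)$ is a nonnegative supermartingale.

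\textbf{Step 3: uniform-in-time tail bound.} Ville's maximal inequality, or equivalently the stopping-time argument of \citet{abbasi2011improved} used in Lemma~\ref{lem:chernoff}, gives $\PP(\exists t: M_t\ge d/\delta)\le\delta$. Finally, $M_t\ge\exp\big(\lambda_{\max}(\cdot)\big)$, where $\lambda_{\max}(\cdot)$ is the largest eigenvalue of the exponent. So on the complementary event the desired eigenvalue bound holds for every $t$, which completes both inequalities.

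I expect Step 2 to be the main obstacle. The summands do not commute, so the scalar factorization of the mgf breaks down, and the supermartingale property can only be obtained through Lieb's concavity theorem, as in Tropp's matrix Freedman and Chernoff inequalities. Steps 1 and 3 are routine adaptations of the scalar proof.
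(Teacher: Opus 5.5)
Your proof is correct. The one-step bound $\EE[e^{\theta\mathbf{Y}_s}\mid\Fcal_{s-1}]\preceq\exp\big(g(\theta)\,\EE[\mathbf{Y}_s\mid\Fcal_{s-1}]\big)$, the trace-exponential supermartingale from Lieb's concavity theorem, and Ville's maximal inequality together give each one-sided bound uniformly in $t$. This is exactly the matrix analogue of the paper's own proof of the scalar Chernoff bound (Lemma~\ref{lem:chernoff}).

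The paper itself gives no argument beyond citing \citet{tropp2012user}, whose matrix Chernoff theorem is stated for independent summands with a different tail form. Your supermartingale construction is therefore precisely what is needed to justify the adapted, time-uniform version used here.
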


\begin{lemma}[Theorem 1 in \citealt{faury2020improved}] \label{lem:faury_self}
Let $\{\Fcal_{t}\}_{t=1}^{\infty}$ be a filtration. Let $\{x_{t}\}_{t=1}^{\infty}$
be a stochastic process in $B_{2}(d)$ such that $x_{t}$ is $\Fcal_{t}$
measurable. Let $\{\epsilon_{t+1}\}_{t=2}^{\infty}$ be a martingale
difference sequence such that $\epsilon_{t+1}$ is $\Fcal_{t+1}$
measurable. Furthermore, assume that conditionally on $\Fcal_{t}$
we have $|\epsilon_{t+1}|\le1$ almost surely, and note $\sigma_{t}^{2}:=\EE[\epsilon_{t+1}^{2}|\Fcal_{t}]$.
Let $\lambda>0$ and for any $t\geq1$ define: 
\[
\Hb_{t}:=\sum_{s=1}^{t-1}\sigma_{s}^{2}x_{s}x_{s}^{\top}+\lambda I_{d},\qquad S_{t}:=\sum_{s=1}^{t-1}\epsilon_{s}x_{s}.
\]

Then for any $\delta\in(0,1]$: 
\[
\PP\!\left(\exists t\geq1,\ \|S_{t}\|_{\Hb_{t}^{-1}}\geq\frac{\sqrt{\lambda}}{2}+\frac{2}{\sqrt{\lambda}}\log\left(\frac{\det(\Hb_{t})^{1/2}\lambda^{-d/2}}{\delta}\right)+\frac{2}{\sqrt{\lambda}}d\log(2)\right)\leq\delta.
\]
\end{lemma}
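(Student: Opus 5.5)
This is Theorem~1 of \citet{faury2020improved}, a Bernstein-type analogue of the self-normalized bound of \citet{abbasi2011improved}. I would prove it by the method of mixtures, with one change: the mixing distribution is restricted to a bounded ball, because the exponential supermartingale only exists for bounded tilts.

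\textbf{Step 1 (supermartingale for a fixed tilt).} Fix $\xi\in\RR^d$ with $\|\xi\|_2\le 1$. Since $\|x_s\|_2\le1$ and $|\epsilon_{s+1}|\le1$, the product $y=\xi^\top x_s\,\epsilon_{s+1}$ satisfies $|y|\le1$. On $[-1,1]$ we have $e^{y}\le 1+y+y^2$, and $\epsilon_{s+1}$ has conditional mean zero, so
\[
\EE\big[e^{\xi^\top x_s\epsilon_{s+1}}\,\big|\,\Fcal_s\big]\le 1+\sigma_s^2(\xi^\top x_s)^2\le \exp\big(\sigma_s^2(\xi^\top x_s)^2\big).
\]
Hence $M_t(\xi):=\exp\big(\xi^\top S_t-\|\xi\|^2_{\Hb_t-\lambda I_d}\big)$ is a nonnegative supermartingale with $M_1(\xi)=1$.

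\textbf{Step 2 (mixture over a ball).} Mix over $\xi$ with density proportional to $\exp(-\lambda\|\xi\|_2^2)$, truncated to the ball $\|\xi\|_2\le 1/2$. Write $N$ for the normalizer of this truncated density. The mixture $\bar M_t:=\int M_t(\xi)\,dh(\xi)$ is again a nonnegative supermartingale with $\bar M_1\le 1$. Ville's maximal inequality, or equivalently the stopping-time argument of \citet{abbasi2011improved}, then gives
\[
\PP\big(\exists t:\ \bar M_t\ge 1/\delta\big)\le\delta.
\]
Combining the Gaussian weight with the quadratic term of $M_t(\xi)$ gives
\[
\bar M_t=\frac{1}{N}\int_{\|\xi\|\le1/2}\exp\big(\xi^\top S_t-\|\xi\|^2_{\Hb_t}\big)\,d\xi .
\]

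\textbf{Step 3 (lower bound on the mixture; main obstacle).} The difficulty is that the unconstrained maximizer $\xi^\star=\tfrac12\Hb_t^{-1}S_t$ may lie outside the ball, so the standard Gaussian completion of the square is unavailable. I would lower bound the integral by restricting it to a small ellipsoid around a point $\xi_0$ chosen as follows.
\begin{itemize}
\item If $\xi^\star$ lies well inside the ball, take $\xi_0=\xi^\star$. The exponent then stays close to $\|S_t\|^2_{\Hb_t^{-1}}/4$, and the volume of the ellipsoid relative to $N$ is controlled by $\det(\Hb_t)^{-1/2}\lambda^{d/2}$ up to a $2^{d}$-type factor; this is the source of the $d\log 2$ term.
\item Otherwise, take $\xi_0$ on the segment toward $\xi^\star$ at a radius of order $1/2$. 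Concavity of the exponent along that segment bounds it below by a constant times $\|S_t\|_{\Hb_t^{-1}}/\sqrt{\lambda}$. Here I use $\Hb_t\succeq\lambda I_d$ to pass from $\ell_2$ to $\Hb_t$-norms; this produces the linear rather than quadratic dependence, i.e.\ the $2/\sqrt{\lambda}$ factor in front of the logarithm.
\end{itemize}

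\textbf{Step 4 (inversion).} In either case, taking logarithms in $\bar M_t<1/\delta$ yields a quadratic inequality in $\|S_t\|_{\Hb_t^{-1}}$ of the form $x^2\le a x+b$ or $x\le c$, with the log-determinant and $d\log2$ terms in $b$ and $c$. Solving it and taking the worse of the two regimes gives
\[
\|S_t\|_{\Hb_t^{-1}}\le \frac{\sqrt{\lambda}}{2}+\frac{2}{\sqrt{\lambda}}\log\frac{\det(\Hb_t)^{1/2}\lambda^{-d/2}}{\delta}+\frac{2}{\sqrt{\lambda}}\,d\log 2,
\]
uniformly in $t$. The constants $\tfrac12$ and $2$ come from the ball radius $1/2$. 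Step~3, the geometry of the truncated mixture when the maximizer sits outside the ball, is where all the care is needed.
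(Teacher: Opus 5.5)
The paper does not prove this lemma; it imports it from \citet{faury2020improved}. Your Steps~1--2 match that argument: the fixed-tilt supermartingale via $e^{y}\le 1+y+y^2$ on $[-1,1]$, the Gaussian mixture truncated to a ball, and Ville's inequality. You also correctly read $S_t$ as $\sum_{s<t}\epsilon_{s+1}x_s$, which the printed $\epsilon_s x_s$ gets wrong.

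\textbf{The gap is in Step~3.} Write $f(\xi):=\xi^\top S_t-\|\xi\|_{\Hb_t}^2$. In your second regime $\xi_0$ is not a stationary point, so $f(\xi_0+u)=f(\xi_0)+\nabla f(\xi_0)^\top u-\|u\|_{\Hb_t}^2$ with $\nabla f(\xi_0)=S_t-2\Hb_t\xi_0$. Its $\Hb_t^{-1}$-norm is of order $\|S_t\|_{\Hb_t^{-1}}$ exactly when $\xi^\star$ is far outside the ball.
\begin{itemize}
\item Concavity along the segment controls only the value $f(\xi_0)$. On a $d$-dimensional ellipsoid around $\xi_0$, the linear term can lower the exponent by $\|\nabla f(\xi_0)\|_{\Hb_t^{-1}}\|u\|_{\Hb_t}$.
\item Shrinking the ellipsoid until that is $O(1)$ adds a term of order $d\log(\sqrt{\lambda}\,\|S_t\|_{\Hb_t^{-1}})$, which is not the stated bound.
\item With the mixture supported on the ball of radius $1/2$, a neighbourhood of a point near radius $1/2$ lies largely outside the support. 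So you cannot integrate over a full neighbourhood there at all.
\end{itemize}

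\textbf{The missing idea is symmetrization.} Support the mixture on the \emph{unit} ball, which Step~1 permits. Take any center with $\|\xi_0\|_2\le 1/2$ and integrate over $\xi_0+\{\|u\|_2\le 1/2\}$, which lies inside the unit ball. This region and the weight $e^{-\|u\|_{\Hb_t}^2}$ are invariant under $u\mapsto -u$, so
\[
\int_{\|u\|_2\le 1/2}e^{\nabla f(\xi_0)^\top u-\|u\|_{\Hb_t}^2}\,du=\int_{\|u\|_2\le 1/2}\cosh\big(\nabla f(\xi_0)^\top u\big)\,e^{-\|u\|_{\Hb_t}^2}\,du\ge\int_{\|u\|_2\le 1/2}e^{-\|u\|_{\Hb_t}^2}\,du .
\]
The linear term therefore disappears for every choice of $\xi_0$. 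The substitutions $u=\sqrt{\lambda}\,\Hb_t^{-1/2}v$ (using $\Hb_t\succeq\lambda I_d$) and then $v=w/2$ show the last integral is at least
\[
2^{-d}\lambda^{d/2}\det(\Hb_t)^{-1/2}\int_{\|w\|_2\le 1}e^{-\lambda\|w\|_2^2}\,dw .
\]
This produces both the log-determinant term and the $d\log 2$ term.

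\textbf{Finishing the argument.} Take $\xi_0=\tfrac{\sqrt{\lambda}}{2}\Hb_t^{-1}S_t/\|S_t\|_{\Hb_t^{-1}}$. It is feasible because $\|\Hb_t^{-1}S_t\|_2\le\lambda^{-1/2}\|S_t\|_{\Hb_t^{-1}}$. Evaluating exactly gives $f(\xi_0)=\tfrac{\sqrt{\lambda}}{2}\|S_t\|_{\Hb_t^{-1}}-\tfrac{\lambda}{4}$. Inserting this into $\log\bar M_t<\log(1/\delta)$ yields the claim directly, with no case split and no quadratic to invert.

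\textbf{Two corrections to your sketch.}
\begin{itemize}
\item The exponent scales as $\sqrt{\lambda}\,\|S_t\|_{\Hb_t^{-1}}$, not $\|S_t\|_{\Hb_t^{-1}}/\sqrt{\lambda}$ as you wrote. Only the former produces the $2/\sqrt{\lambda}$ prefactor.
\item The constants $\tfrac12$ and $2$ come from splitting the unit radius as $\tfrac12+\tfrac12$ between the center and the integration ball. With your support radius $1/2$, the same argument gives $\tfrac{\sqrt{\lambda}}{4}$ and $\tfrac{4}{\sqrt{\lambda}}$ instead.
\end{itemize}
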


\begin{lemma}[Anti-concentration inequality for nonnegative bounded
random variables] \label{lem:anti} Let $\{X_{i}:i\ge1\}$ denote
the sequence of nonnegative random variables adapted to filtration
$\{\Fcal_{i}:i\ge1\}$ such that $0\le X_{i}\le b$. Let $\mu_{k}:=\EE[\sum_{i=1}^{k}X_{i}]$.
Then for any $\delta\in(0,1)$, 
\[
\PP\bigg(S_{k}\ge\mu_{k}-\frac{\delta}{1-\delta}\sqrt{2kb^2\log\frac{1}{\delta}}\bigg)\ge\delta.
\]
\end{lemma}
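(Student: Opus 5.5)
The plan is to argue by contradiction, comparing the positive and negative parts of the centered sum. Set $Y:=\mu_k-S_k$, so that $\EE[Y]=0$, and put
\[
u_0:=\sqrt{2kb^2\log\tfrac1\delta},\qquad t:=\frac{\delta}{1-\delta}\,u_0 .
\]
Since $S_k\ge \mu_k-t$ is the event $\{Y\le t\}$, we must show $\PP(Y\le t)\ge\delta$. Suppose instead that $\PP(Y\le t)<\delta$, and let $A:=\{Y\le t\}$. Splitting the mean,
\[
0=\EE[Y]=\EE[Y\II(A^c)]+\EE[Y\II(A)]> t(1-\delta)-\EE[(-Y)_+\II(A)],
\]
because $Y>t\ge 0$ on $A^c$, and $\PP(A^c)>1-\delta$. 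So the contradiction follows once we show
\[
\EE[(-Y)_+\II(A)]\le t(1-\delta)=\delta u_0 .
\]

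The upper tail of $S_k-\mu_k$ controls $\EE[(-Y)_+\II(A)]$. Using the layer-cake formula and $\PP(A)<\delta$,
\[
\EE[(-Y)_+\II(A)]=\int_0^\infty \PP\big(-Y\ge u,\,A\big)\,du\le\int_0^\infty\min\Big\{\delta,\ \PP(S_k-\mu_k\ge u)\Big\}\,du .
\]
Each $X_i$ lies in $[0,b]$, so a Hoeffding/Azuma bound should give $\PP(S_k-\mu_k\ge u)\le\exp(-u^2/(2kb^2))$. The two terms in the minimum cross exactly at $u=u_0$. The integral then splits as
\[
\delta u_0+\int_{u_0}^\infty e^{-u^2/(2kb^2)}\,du\;\le\;\delta u_0+\frac{kb^2}{u_0}\,\delta
=\delta u_0\Big(1+\frac{1}{2\log(1/\delta)}\Big),
\]
where the middle step is the Mills-ratio bound.

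I expect two obstacles. The first is justifying the sub-Gaussian tail for $S_k-\mu_k$ when the $X_i$ are only adapted, not independent. Azuma applies to $S_k-\sum_i\EE[X_i\mid\Fcal_{i-1}]$, not to $S_k-\mu_k$. I would handle this through the Doob martingale $M_j:=\EE[S_k\mid\Fcal_j]$. Its increments have range at most $(k-j+1)b$ in general, which is too crude. So I would fall back on the structure actually present where the lemma is applied, namely sums of $\sigma^2_{k,h}\in[0,1]$, or assume bounded differences $b$ for $M_j$. That assumption is what is needed for the $\sqrt{2kb^2}$ scale to hold.

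The second obstacle is the constant. The crude Gaussian-tail estimate leaves an extra factor $1+1/(2\log(1/\delta))$ on top of $\delta u_0$, whereas the statement needs exactly $\delta u_0$. My first attempt would be to tighten the tail integral: use the strict inequality $\PP(A)<\delta$ to shift the crossover slightly below $u_0$ and optimize the split point. Failing that, I would accept a slightly larger $t$, which is harmless for the use of this lemma in Theorem~\ref{thm:lower_bound_general}.
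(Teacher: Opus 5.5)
Your opening step matches the paper's. Splitting $\EE[\mu_k-S_k]=0$ over $A=\{S_k\ge\mu_k-t\}$ and $A^c$ is exactly its inequality $\mu_k-\theta\le\PP(S_k>\theta)\big(\EE[S_k\mid S_k>\theta]-\theta\big)$.

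The genuine gap is in how you then control the contribution of $A$. Integrating a Gaussian tail cannot give the stated constant, and your proposed repairs do not help. Even with the optimal crossover, $\int_0^\infty\min\{\delta,e^{-u^2/(2kb^2)}\}\,du=\delta u_0+\int_{u_0}^\infty e^{-u^2/(2kb^2)}\,du>\delta u_0$. By continuity, the same overshoot persists when $\PP(A)$ is only slightly below $\delta$. So neither moving the split point nor using the strict inequality $\PP(A)<\delta$ produces a contradiction.

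Moreover, because you discard the negative part of $S_k-\mu_k$ on $A$, the scheme collapses for $\delta$ near $1$. There $\delta u_0\to0$, while your bound on $\EE[(S_k-\mu_k)_+\II(A)]$ tends to $\int_0^\infty e^{-u^2/(2kb^2)}\,du>0$. Enlarging $t$ proves a different statement.

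The missing idea, which is what the paper uses, is to bound the conditional mean on the event directly from the moment generating function instead of integrating a tail bound. Let $B=\{S_k>\theta\}$ and $p=\PP(B)$, and apply the Donsker--Varadhan (entropic variational) formula to $Q=\PP(\cdot\mid B)$, for which $KL(Q\,\|\,\PP)=\log(1/p)$. This gives $\EE[S_k-\mu_k\mid B]\le\frac{\lambda kb^2}{2}+\frac{1}{\lambda}\log\frac{1}{p}$, hence $\EE[S_k-\mu_k\mid B]\le\sqrt{2kb^2\log(1/p)}$, with no Mills-ratio loss. Keep $1-p$ rather than $1-\delta$ in the first step; you then get $\mu_k-\theta\le\frac{p}{1-p}\sqrt{2kb^2\log(1/p)}$. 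Since $p\mapsto\frac{p}{1-p}\sqrt{\log(1/p)}$ is increasing on $(0,1)$, the choice $\theta=\mu_k-t$ forces $p\ge\delta$ for every $\delta\in(0,1)$.

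If you only need $\delta\le e^{-1/2}$, which covers the use with $\delta<1/e$, your own route also closes for independent summands. Hoeffding's sharp tail $e^{-2u^2/(kb^2)}$ turns your computation into $\frac{\delta u_0}{2}\big(1+\frac{1}{2\log(1/\delta)}\big)\le\delta u_0$.

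Your first obstacle is real, and it is not a technicality: for merely adapted $X_i$ the statement is false. Take $\Fcal_i=\sigma(X_1)$ and $X_i=X_1$ for all $i$, with $\PP(X_1=b)=\delta/2$ and $\PP(X_1=0)=1-\delta/2$. Then $\mu_k=kb\delta/2$. Once $\sqrt{k}>2\sqrt{2\log(1/\delta)}/(1-\delta)$ we have $\mu_k>t$, so $\{S_k\ge\mu_k-t\}=\{X_1=b\}$ has probability $\delta/2<\delta$.

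The paper's proof passes over this in its ghost-sample step $\EE[e^{\lambda(S_k-\mu_k)}]\le\EE[e^{\lambda\sum_i(X_i-Y_i)}]$. Jensen's inequality there needs $\EE[\sum_i Y_i\mid X_1,\dots,X_k]=\mu_k$. That holds for independent $X_i$, but for adapted ones this conditional mean is $\sum_i\EE[X_i\mid\Fcal_{i-1}]$.

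So an additional hypothesis is unavoidable. Either independence, or Doob increments of $S_k$ bounded by $b$, would give the MGF bound $e^{\lambda^2kb^2/2}$ needed above. You also cannot count on the application to provide it. There the $\sigma^2_{k,h}$ are produced by an adaptive algorithm, and one early transition can shift all later ones, so their Doob martingale need not have $O(1)$ increments.
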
 
\begin{proof}
Let us write $S_{k}:=\sum_{i=1}^{k}X_{i}$ and $\mu_{k}:=\EE[S_{k}]$.
Then for any $k\ge1$, for $\theta\in\RR$ to be determined later, 
\begin{align*}
\mu_{k}= & \EE[S_{k}\II(S_{k}\le\theta)]+\EE[S_{k}\II(S_{k}>\theta)]\\
\le & \theta\PP(S_{k}\le\theta)+\EE[S_{k}\II(S_{k}>\theta)].
\end{align*}
Rearranging the terms, 
\[
\mu_{k}-\theta\le-\theta\PP(S_{k}>\theta)+\EE[S_{k}\II(S_{k}>\theta)].
\]
By the definition of the conditional expectation, 
\begin{equation}
\mu_{k}-\theta\le\PP(S_{k}>\theta)\big(\EE[S_{k}|S_{k}>\theta]-\theta\big)\label{eq:anti_con}
\end{equation}
For any $\lambda>0$, by the entropic variational principle (Lemma
~\ref{lem:variation_principle}) 
\[
\EE\big[S_{k}\big|S_{k}>\theta\big]\le\frac{\log\EE[e^{\lambda S_{k}}]+KL(\PP_{S_{k}|S_{k}>\theta},\PP_{S_{k}})}{\lambda}.
\]
The Kullback-Libeler divergence,
\[
KL(\PP_{S_{k}|S_{k}>\theta},\PP_{S_{k}})=\EE\big[\log\frac{\II(S_{k}>\theta)}{\PP(S_{k}>\theta)}\big|S_{k}\ge\mu_{k}\big]=\log\frac{1}{\PP(S_{k}>\theta)}.
\]
It follows that
\[
\EE\big[S_{k}\big|S_{k}>\theta\big]\le\frac{\log\EE[e^{\lambda S_{k}}]+\log\frac{1}{\PP(S_{k}>\theta)}}{\lambda}.
\]
Note that $\log\EE[e^{\lambda S_{k}}]=\log\EE[e^{\lambda S_{k}-\lambda\mu_{k}}]+\lambda\mu_{k}$.
For each $i=1,\ldots,k$ let $Y_{i}$ denote a random variable such
that the distribution of $Y_{i}|\Fcal_{i-1}$ is equal to $X_{i}|\Fcal_{i-1}$
and $X_{i},Y_{i}$ are conditionally independent given $\Fcal_{i-1}$.
Let us write the conditional expectation $\EE_{i}:=\EE[\cdot|\Fcal_{i-1}]$.
By the tower property of the conditional expectation, $\mu_{k}=\EE\big[\sum_{i=1}^{k}\EE_{i}[X_{i}]\big]=\EE[\sum_{i=1}^{k}\EE_{i}[Y_{i}]]=\EE[\sum_{i=1}^{k}Y_{i}]$
and by Jensen's inequality,
\[
\EE[e^{\lambda S_{k}-\lambda\mu_{k}}]\le\EE[e^{\lambda\sum_{i=1}^{k}(X_{i}-Y_{i})}].
\]
Define $M_{i}(\lambda)=\exp\big(\lambda\sum_{j=1}^{i}(X_{j}-Y_{j})\big)$
and $M_{0}(\lambda)=1$. By the tower property of the conditional
expectation, because $Y_{1},\ldots,Y_{k-1}$ are independent of $X_{k}$
and $Y_{k}$ 
\begin{align*}
\EE[M_{k}(\lambda)]= & \EE\bigg[\EE[M_{k}(\lambda)|\Fcal_{k-1}]\bigg]\\
= & \EE\bigg[\EE[M_{k}(\lambda)|Y_{1},\ldots,Y_{k-1},\Fcal_{k-1}]\bigg]\\
= & \EE\bigg[M_{k-1}(\lambda)\EE[\exp(\lambda X_{k}-\lambda Y_{k})|Y_{1},\ldots,Y_{k-1},\Fcal_{k-1}]\bigg]\\
= & \EE\bigg[M_{k-1}(\lambda)\EE[\exp(\lambda X_{k}-\lambda Y_{k})|\Fcal_{k-1}]\bigg].
\end{align*}
Since $X_{k}$ and $Y_{k}$ are conditionally identically distributed
and independent given $\Fcal_{k-1}$, for a Radamacher random variable
$\epsilon_{k}$, 
\begin{align*}
\EE[\exp(\lambda X_{k}-\lambda Y_{k})|\Fcal_{k-1}]= & \EE\big[\exp\big(\lambda\epsilon_{k}(X_{k}-Y_{k})\big)\big|\Fcal_{k-1}\big]\\
\le & \exp(\frac{\lambda b^{2}}{2}),
\end{align*}
where the last inequality uses Hoeffing's inequality with the fact
that $\epsilon_{k}(X_{k}-Y_{k})\in[-b,b]$. Thus,
\begin{align*}
\EE[M_{k}(\lambda)]\le & \EE\big[M_{k-1}(\lambda)\big]\exp(\frac{\lambda^{2}b^{2}}{2})\\
\vdots & \vdots\\
\le & \EE\big[M_{0}(\lambda)\big]\exp(\frac{\lambda^{2}kb^{2}}{2}).
\end{align*}
It follows that 
\[
\EE\big[S_{k}\big|S_{k}>\theta\big]\le\mu_{k}+\frac{\lambda kb^{2}}{2}+\frac{1}{\lambda}\log\frac{1}{\PP(S_{k}>\theta)}.
\]
Setting $\lambda=\sqrt{\frac{2\log\frac{1}{\PP(S_{k}>\theta)}}{kb^{2}}}$,
\[
\EE\big[S_{k}\big|S_{k}>\theta\big]\le\mu_{k}+\sqrt{2kb^{2}\log\frac{1}{\PP(S_{k}>\theta)}}
\]
 Plugging in \eqref{eq:anti_con}, 
\[
\mu_{k}-\theta\le\PP(S_{k}>\theta)\big(\mu_{k}+\sqrt{2kb^{2}\log\frac{1}{\PP(S_{k}>\theta)}}-\theta\big)
\]
Rearranging the terms,
\[
\theta\ge\mu_{k}-\frac{\PP(S_{k}>\theta)}{1-\PP(S_{k}>\theta)}\sqrt{2kb^{2}\log\frac{1}{\PP(S_{k}>\theta)}}.
\]
Given $\delta\in(0,1)$, setting $\theta=\mu_{k}-\frac{\delta}{(1-\delta)}\sqrt{2kb^{2}\log\frac{1}{\delta}}$ yields
\[
\frac{\delta}{(1-\delta)}\sqrt{2kb^{2}\log\frac{1}{\delta}}\le\frac{\PP(S_{k}>\theta)}{1-\PP(S_{k}>\theta)}\sqrt{2kb^{2}\log\frac{1}{\PP(S_{k}>\theta)}}.
\]
Because $p\mapsto\frac{p}{1-p}\sqrt{2kb^{2}\log\frac{1}{p}}$ is nondecreasing
in $p\in(0,1)$, we conclude $\delta\le\PP(S_{k}>\theta)$. 
\end{proof}

\subsection{Matrix Potential Lemma}
\begin{lemma}[Matrix Elliptical Potential Lemma]
\label{lem:matrix_potent}  
Let $\{\Db_{\nu}\in\RR^{d\times d}:\nu\ge0\}$ denote a sequence of positive
definite symmetric matrices with $\Tr(\Db_{\nu})\le b$ for all $\nu\ge1$
for some $b>0$. Let $\Mb_{n}=\sum_{\nu=1}^{n}\Db_{\nu}+\Db_{0}$
such that $\lambda_{\min}(\Db_{0})\ge\max\{1,\lambda_{\max}(\Db_{\nu})\}$.
Then, for any $n\ge1$
\[
\sum_{\nu=1}^{n}\Tr(\Mb_{n-1}^{-1/2}\Db_{n}\Mb_{n-1}^{-1/2})\le2d\log(n+1).
\]
\end{lemma}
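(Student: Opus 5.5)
The plan is the standard log-determinant telescoping argument, run in matrix form. I read the sum in the statement as $\sum_{\nu=1}^{n}\Tr(\Mb_{\nu-1}^{-1/2}\Db_{\nu}\Mb_{\nu-1}^{-1/2})$, i.e.\ each summand is normalized by the matrix accumulated before step $\nu$.

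\emph{Step 1 (one-step factorization).} Write
\[
\Xb_\nu:=\Mb_{\nu-1}^{-1/2}\Db_\nu\Mb_{\nu-1}^{-1/2}\succeq 0,
\]
so that $\Mb_\nu=\Mb_{\nu-1}^{1/2}(\Ib_d+\Xb_\nu)\Mb_{\nu-1}^{1/2}$. Consequently $\log\det\Mb_\nu-\log\det\Mb_{\nu-1}=\log\det(\Ib_d+\Xb_\nu)$.

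\emph{Step 2 (eigenvalues of $\Xb_\nu$ lie in $[0,1]$).} Since $\Mb_{\nu-1}\succeq\Db_0$, we have $\lambda_{\min}(\Mb_{\nu-1})\ge\lambda_{\min}(\Db_0)\ge\lambda_{\max}(\Db_\nu)$. Therefore
\[
\lambda_{\max}(\Xb_\nu)\le \lambda_{\max}(\Db_\nu)/\lambda_{\min}(\Mb_{\nu-1})\le 1.
\]

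\emph{Step 3 (trace versus log-determinant).} On $[0,1]$ the scalar inequality $x\le 2\log(1+x)$ holds, because $\log(1+x)\ge x/2$ there. Applying it to each eigenvalue of $\Xb_\nu$ gives
\[
\Tr(\Xb_\nu)\le 2\log\det(\Ib_d+\Xb_\nu).
\]
Summing over $\nu$ and using Step 1, the right-hand side telescopes to $2\log\bigl(\det\Mb_n/\det\Db_0\bigr)$.

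\emph{Step 4 (bounding the log-determinant ratio).} Write
\[
\det\Mb_n/\det\Db_0=\det\Bigl(\Ib_d+\sum_{\nu=1}^n\Db_0^{-1/2}\Db_\nu\Db_0^{-1/2}\Bigr).
\]
By AM--GM on the eigenvalues, this is at most $\bigl(1+\tfrac1d\sum_{\nu}\Tr(\Db_0^{-1/2}\Db_\nu\Db_0^{-1/2})\bigr)^d$. Each trace satisfies
\[
\Tr(\Db_0^{-1/2}\Db_\nu\Db_0^{-1/2})\le d\,\lambda_{\max}(\Db_\nu)/\lambda_{\min}(\Db_0)\le d .
\]
Hence the ratio is at most $(1+n)^d$, and the sum is at most $2d\log(n+1)$. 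The trace bound $b$ is not needed for this argument.

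The only delicate step is Step 2. The inequality $x\le2\log(1+x)$ fails for large $x$, so we must ensure $\lambda_{\max}(\Xb_\nu)\le1$. This is exactly what the hypothesis $\lambda_{\min}(\Db_0)\ge\lambda_{\max}(\Db_\nu)$ guarantees, via the monotonicity $\Mb_{\nu-1}\succeq\Db_0$. The remaining steps are routine.
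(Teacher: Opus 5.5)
Your proof is correct and follows the paper's argument essentially step for step: the same log-determinant telescoping, the bound $\log(1+x)\ge x/2$ applied to eigenvalues in $[0,1]$, and AM--GM with $\Db_0\succeq\Db_\nu$ to get $\det\Mb_n/\det\Db_0\le(n+1)^d$; your reading of the summand with index $\nu$ also matches the paper's proof. The only minor difference is in how you show the eigenvalues are at most $1$. You bound $\lambda_{\max}(\Xb_\nu)$ directly by $\lambda_{\max}(\Db_\nu)/\lambda_{\min}(\Mb_{\nu-1})$, whereas the paper shows $\Xb_\nu^2\preceq\Xb_\nu$ via $\Mb_{\nu-1}^{-1}\preceq\Db_0^{-1}\preceq\Db_\nu^{-1}$ and then invokes a contraction lemma; your route is slightly cleaner and does not need $\Db_\nu$ to be invertible.
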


\begin{proof}
Let us fix $n\ge1$ throughout the proof. Then
\begin{equation*}\begin{aligned}
\det(\Mb_{n})= & \det(\Mb_{n-1}+\Db_{n})\\
= & \det(\Mb_{n-1})\det(\Ib_{d}+\Mb_{n-1}^{-1/2}\Db_{n}\Mb_{n-1}^{-1/2})\\
\vdots & \vdots\\
= & \det(\Db_{0})\prod_{\nu=1}^{n}\det(\Ib_{d}+\Mb_{\nu-1}^{-1/2}\Db_{\nu}\Mb_{\nu-1}^{-1/2}).
\end{aligned}\end{equation*}
Taking logarithm on both sides,
\[
\log\frac{\det(\Mb_{n})}{\det(\Db_{0})}=\sum_{\nu=1}^{n}\log\det(\Ib_{d}+\Mb_{\nu-1}^{-1/2}\Db_{\nu}\Mb_{\nu-1}^{-1/2})
\]
Let $\lambda_{\nu,1},\ldots,\lambda_{\nu,d}$ denote the eigenvalues
of $\Mb_{\nu-1}^{-1/2}\Db_{\nu}\Mb_{\nu-1}^{-1/2}$. For each $\nu\in[n]$,
because $\Db_{0}\succeq\sup_{\nu\ge1}\max\{1,\lambda_{\max}(\Db_{\nu})\}\succeq\Db_{\nu}$
\begin{equation*}\begin{aligned}
(\Mb_{\nu-1}^{-1/2}\Db_{\nu}\Mb_{\nu-1}^{-1/2})^{2}= & \Mb_{\nu-1}^{-1/2}\Db_{\nu}\Mb_{\nu-1}^{-1}\Db_{\nu}\Mb_{\nu-1}^{-1/2}\\
\preceq & \Mb_{\nu-1}^{-1/2}\Db_{\nu}\Db_{0}^{-1}\Db_{\nu}\Mb_{\nu-1}^{-1/2}\\
\preceq & \Mb_{\nu-1}^{-1/2}\Db_{\nu}\Db_{\nu}^{-1}\Db_{\nu}\Mb_{\nu-1}^{-1/2}\\
= & \Mb_{\nu-1}^{-1/2}\Db_{\nu}\Mb_{\nu-1}^{-1/2}.
\end{aligned}\end{equation*}
By the property of contraction matrix (Lemma \ref{lem:contraction_matrix}),
$\lambda_{\nu,i}\in[0,1]$ for $i\in[d]$. Thus,
\begin{equation*}\begin{aligned}
\log\frac{\det(\Mb_{n})}{\det(\Db_{0})}= & \sum_{\nu=1}^{n}\log\det(\Ib_{d}+\Mb_{\nu-1}^{-1/2}\Db_{\nu}\Mb_{\nu-1}^{-1/2})\\
= & \sum_{\nu=1}^{n}\log\prod_{i=1}^{d}(1+\lambda_{\nu,i})\\
= & \sum_{\nu=1}^{n}\sum_{i=1}^{d}\log(1+\lambda_{\nu,i})\\
\ge & \frac{1}{2}\sum_{\nu=1}^{n}\sum_{i=1}^{d}\lambda_{\nu,i}.
\end{aligned}\end{equation*}
where the last inequality holds because $\log(1+x)\ge x/2$ for $x\in[0,1]$.
Equating $\sum_{i=1}^{d}\lambda_{\nu,i}=\Tr(\Mb_{\nu-1}^{-1/2}\Db_{\nu}\Mb_{\nu-1}^{-1/2})$,
\[
\sum_{\nu=1}^{n}\Tr(\Mb_{\nu-1}^{-1/2}\Db_{\nu}\Mb_{\nu-1}^{-1/2})\le\log\frac{\det(\Mb_{n})}{\det(\Db_{0})}
\]
By GM-AM inequality, $\{\det(\Mb_{n})\}^{1/d}\le\Tr(\Mb_{n})/d$ and,
\[
\begin{aligned}
\log\frac{\det(\Mb_{n})}{\det(\Db_{0})}&=\log\det(\Db_0^{-1/2}\Mb_{n}\Db_0^{-1/2})\\
&\le d\log\frac{\Tr(\Db_0^{-1/2}\Mb_{n}\Db_0^{-1/2})}{d}.  
\end{aligned}
\]
Since $\Db_0 \succeq \Db_\nu$ for all $\nu\ge1$, 
\[
\begin{aligned}
\Tr(\Db_0^{-1/2}\Mb_{n}\Db_0^{-1/2}) &=d+\sum_{\nu=1}^{n}\Tr(\Db_0^{-1/2}\Db_{\nu}\Db_0^{-1/2}) \\
&\le d(n+1),
\end{aligned}
\]
which completes the proof.
\end{proof}

\subsection{Bounds for the Multinomial Transition Kernel Difference}

\label{subsec:2nd_diff} \begin{lemma}[The Second Order Difference
Bound for Transition Kernel Difference] \label{lem:diff_second}
(Second Order Difference between the two transition kernels) Let $\PP(\eb_{s^{\prime}}|s,a,\thetab)$
denote the multinomial transition kernel defined in \eqref{eq:transit_kernel}.
Then for any sequence $v_{s}\in\RR$ for $s\in\tilde{\Scal}\subseteq\Scal$
and $\thetab_{1},\thetab_{2}\in\RR^{d}$, 
\[
\begin{aligned} & \sum_{s^{\prime}\in\tilde{\Scal}}v_{s^{\prime}}\big(\PP(s^{\prime}|s,a,\thetab_{1})-\PP(s^{\prime}|s,a,\thetab_{2})\big)\\
 & \le\sum_{\tilde{s}\in\tilde{\Scal}}v_{\tilde{s}}\eb_{\tilde{s}}^{\top}\nabla^{2}L_{\tilde{\Scal}}\big(\vPhib(s,a)\thetab_{2}\big)\vPhib(s,a)(\thetab_{1}-\thetab_{2})\\
 & +\max_{\tilde{s}\in\tilde{\Scal}}|v_{\tilde{s}}|\max_{\tilde{s}\in\tilde{\Scal}}\big((\thetab_{1}-\thetab_{2})^{\top}\vPhib(s,a)^{\top}\eb_{\tilde{s}}\big)^{2}
\end{aligned}
\]
\end{lemma}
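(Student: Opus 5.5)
I would reduce the statement to a Taylor expansion of the softmax along the segment between the two parameters. Write $\Phi:=\vPhib(s,a)$, $\etab_i:=\Phi\thetab_i$, $\ub:=\etab_1-\etab_2=\Phi(\thetab_1-\thetab_2)$, and let $\vb:=\sum_{\tilde s\in\tilde\Scal}v_{\tilde s}\eb_{\tilde s}$. By Proposition~\ref{prop:mean_var}, $\PP(\cdot\mid s,a,\thetab)=\nabla L_{\tilde\Scal}(\Phi\thetab)$. Hence the left-hand side equals $\vb^\top\big(\nabla L_{\tilde\Scal}(\etab_1)-\nabla L_{\tilde\Scal}(\etab_2)\big)$.

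Define the scalar function $g(t):=\vb^\top\nabla L_{\tilde\Scal}(\etab_2+t\ub)$ on $[0,1]$. Taylor's theorem with integral remainder gives
\[
g(1)-g(0)=g'(0)+\int_0^1(1-t)g''(t)\,dt .
\]
Here $g'(0)=\vb^\top\nabla^2L_{\tilde\Scal}(\etab_2)\ub$, which is exactly the first term on the right-hand side of the lemma. So it only remains to show
\[
\int_0^1(1-t)g''(t)\,dt\le\max_{\tilde s}|v_{\tilde s}|\,\|\ub\|_\infty^2 .
\]
Since $\int_0^1(1-t)\,dt=1/2$, it suffices to prove the pointwise bound $|g''(t)|\le 2\|\vb\|_\infty\|\ub\|_\infty^2$.

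The main step is this third-derivative bound for log-sum-exp, and I would get it probabilistically. Fix $t$ and let $p=p(t)$ be the softmax weights at $\etab_2+t\ub$. Let $X$ be a random index with law $p$, and write $V:=v_X$ and $U:=u_X$. Then $g(t)=\EE_p[V]$. Differentiating, $g'(t)=\mathrm{Cov}_p(V,U)$ and
\[
g''(t)=\EE_p\big[(V-\EE V)(U-\EE U)^2\big],
\]
the mixed third central moment. This is consistent with the explicit third-derivative formula quoted in Lemma~\ref{lem:2nd_deriv_approx} from \citet{jezequel2021mixability}.

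To bound it, note that $g''$ is unchanged when a constant is added to $\vb$ and that $\|\vb-c\mathbf 1\|_\infty\le\|\vb\|_\infty$ for suitable $c$. Since $|V-\EE V|\le 2\|\vb\|_\infty$, we get $|g''(t)|\le 2\|\vb\|_\infty\,\mathrm{Var}_p(U)$. Also $\mathrm{Var}_p(U)\le\EE_p[U^2]\le\|\ub\|_\infty^2$. Hence $|g''(t)|\le 2\|\vb\|_\infty\|\ub\|_\infty^2$, and integrating against $(1-t)$ gives exactly $\max|v|\,\|\ub\|_\infty^2=\max|v|\max_{\tilde s}\big((\thetab_1-\thetab_2)^\top\Phi^\top\eb_{\tilde s}\big)^2$.

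The only delicate point I anticipate is the constant: a cruder bound $|V-\EE V|\le 2\max|v|$ must be paired with the factor $1/2$ from the remainder, and that pairing is exactly what produces the coefficient $1$ in the statement.
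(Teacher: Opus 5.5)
Your proof is correct and follows essentially the paper's route. Both use a second-order Taylor expansion whose linear term is $\mathbf{v}^\top\nabla^2 L(\Phi\thetab_2)\mathbf{u}$ and whose remainder is the mixed third central moment of the softmax, bounded by $2\max|v|\cdot\mathrm{Var}_p(U)\le 2\max|v|\,\|\mathbf{u}\|_\infty^2$, with the $1/2$ from the remainder cancelling the $2$.

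Two of your choices are actually cleaner than the paper's. Your scalar function $g(t)$ with an integral remainder avoids the paper's per-state mean-value expansion, which tacitly uses one intermediate point $\bar{\thetab}$ for every $s'$. Your bound $\mathrm{Var}_p(U)\le\EE_p[U^2]$ replaces the paper's entrywise comparison of $\nabla^2 L$ with $\mathrm{diag}(\nabla L)$, which does not hold as written once there are three or more reachable states.
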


\textit{Proof.} For $\thetab\in\RR^{d}$ define the residual $\xib_{s^{\prime},\thetab}:=\eb_{s^{\prime}}-\EE[\eb_{s^{\prime}}|s,a,\thetab]$
of the multinomial distribution given $\thetab$. Applying the second
order Taylor expansion on 
\[
\PP(s^{\prime}|s,a,\thetab)=\exp\big(\eb_{s^{\prime}}^{\top}\vPhib(s,a)\thetab-g(\vPhib(s,a)\thetab)\big),
\]
for any $\thetab_{1},\thetab_{2}\in\RR^{d}$, there exists $v\in[0,1]$
such that $\bar{\thetab}:=v\thetab_{1}+(1-v)\thetab_{2}$ and 
\[
\begin{aligned} & \PP(s^{\prime}|s,a,\thetab_{1})-\PP(s^{\prime}|s,a,\thetab_{2})\\
 & =\xib_{s^{\prime},\thetab_{2}}^{\top}\vPhib(s,a)(\thetab_{1}-\thetab_{2})\PP(s^{\prime}|s,a,\thetab_{2})\\
 & +\frac{1}{2}(\thetab_{1}-\thetab_{2})^{\top}\vPhib(s,a)^{\top}\big\{\xib_{s^{\prime},\bar{\thetab}}\xib_{s^{\prime},\bar{\thetab}}^{\top}-\nabla^{2}L_{\nu,h}\big(\vPhib(s,a)^{\top}\bar{\thetab}\big)\big\}\vPhib(s,a)(\thetab_{1}-\thetab_{2})\PP(s^{\prime}|s,a,\bar{\thetab}).
\end{aligned}
\]
Summing up over $s^{\prime}\in\Scal$, 
\begin{small}
\[
\begin{split} 
& \sum_{s^{\prime}\in\Scal}v_{s^{\prime}}\big(\PP(s^{\prime}|s,a,\thetab_{1})-\PP(s^{\prime}|s,a,\thetab_{2})\big)\\
 & =\sum_{s^{\prime}\in\Scal}v_{s^{\prime}}\xib_{s^{\prime},\thetab_{2}}^{\top}\vPhib(s,a)(\thetab_{1}-\thetab_{2})\PP(s^{\prime}|s,a,\thetab_{2})\\
 & +\frac{1}{2}(\thetab_{1}-\thetab_{2})^{\top}\vPhib(s,a)^{\top}\bigg(\sum_{s^{\prime}\in\Scal}v_{s^{\prime}}\Big(\xib_{s^{\prime},\bar{\thetab}}\xib_{s^{\prime},\bar{\thetab}}^{\top}-v_{s^{\prime}}\nabla^{2}L_{\nu,h}\big(\vPhib(s,a)^{\top}\bar{\thetab}\big)\Big)\bigg)\vPhib(s,a)(\thetab_{1}-\thetab_{2})\PP(s^{\prime}|s,a,\bar{\thetab}).
\end{split}
\]
\end{small}
For the first term, because $\EE[\xib_{s^{\prime},\thetab}|s,a,\thetab]=\boldsymbol{0}$.
\[
\begin{split} & \sum_{s^{\prime}\in\Scal}v_{s^{\prime}}\xib_{s^{\prime},\thetab_{2}}^{\top}\vPhib(s,a)(\thetab_{1}-\thetab_{2})\PP(s^{\prime}|s,a,\thetab_{2})\\
 & =\sum_{\tilde{s}\in\Scal}v_{\tilde{s}}\eb_{\tilde{s}}^{\top}\EE\big[\eb_{s^{\prime}}\xib_{s^{\prime},\thetab_{2}}^{\top}|s,a,\thetab_{2}\big]\vPhib(s,a)(\thetab_{1}-\thetab_{2})\\
 & =\sum_{\tilde{s}\in\Scal}v_{\tilde{s}}\eb_{\tilde{s}}^{\top}\EE\big[\xib_{s^{\prime},\thetab_{2}}\xib_{s^{\prime},\thetab_{2}}^{\top}|s,a,\thetab_{2}\big]\vPhib(s,a)(\thetab_{1}-\thetab_{2})\\
 & =\sum_{\tilde{s}\in\Scal}v_{\tilde{s}}\eb_{\tilde{s}}^{\top}\nabla^{2}L_{\nu,h}\big(\vPhib(s,a)\thetab_{2}\big)\vPhib(s,a)(\thetab_{1}-\thetab_{2}).
\end{split}
\]
For the second term, because $\EE[\xib_{s^{\prime},\bar{\thetab}}\xib_{s^{\prime},\bar{\thetab}}^{\top}|s,a,\bar{\thetab}]-\nabla^{2}L_{\nu,h}\big(\vPhib(s,a)^{\top}\bar{\thetab}\big)=\Ob$,
\[
\begin{split} & \sum_{s^{\prime}\in\Scal}v_{s^{\prime}}\Big(\xib_{s^{\prime},\bar{\thetab}}\xib_{s^{\prime},\bar{\thetab}}^{\top}-\nabla^{2}L_{\nu,h}\big(\vPhib(s,a)^ {}\bar{\thetab}\big)\Big)\PP(s^{\prime}|s,a,\bar{\thetab})\\
 & =\sum_{\tilde{s}\in\Scal}v_{\tilde{s}}\eb_{\tilde{s}}^{\top}\EE\big[\eb_{s^{\prime}}\big(\xib_{s^{\prime},\bar{\thetab}}\xib_{s^{\prime},\bar{\thetab}}^{\top}-\nabla^{2}L_{\nu,h}\big(\vPhib(s,a)^ {}\bar{\thetab}\big)\big)\big|s,a,\bar{\thetab}\big]\\
 & =\sum_{\tilde{s}\in\Scal}v_{\tilde{s}}\eb_{\tilde{s}}^{\top}\EE\big[\xib_{s^{\prime},\bar{\thetab}}\big(\xib_{s^{\prime},\bar{\thetab}}\xib_{s^{\prime},\bar{\thetab}}^{\top}-\nabla^{2}L_{\nu,h}\big(\vPhib(s,a)^ {}\bar{\thetab}\big)\big)\big|s,a,\bar{\thetab}\big]\\
 & =\sum_{\tilde{s}\in\Scal}v_{\tilde{s}}\eb_{\tilde{s}}^{\top}\EE\big[\xib_{s^{\prime},\bar{\thetab}}\xib_{s^{\prime},\bar{\thetab}}\xib_{s^{\prime},\bar{\thetab}}^{\top}\big|s,a,\bar{\thetab}\big]\\
 & \preceq\max_{\tilde{s}\in\Scal}|v_{\tilde{s}}|\EE[\|\xib_{s^{\prime},\bar{\thetab}}\|_{1}\xib_{s^{\prime},\bar{\thetab}}\xib_{s^{\prime},\bar{\thetab}}^{\top}\big|s,a,\bar{\thetab}\big],
\end{split}
\]
where the last matrix ordering holds by H{ö}lder's inequality. Because
$\|\xib_{s^{\prime},\bar{\thetab}}\|_{1}\le2$ almost surely, 
\[
\begin{split} & \sum_{s^{\prime}\in\Scal}v_{s^{\prime}}\Big(\xib_{s^{\prime},\bar{\thetab}}\xib_{s^{\prime},\bar{\thetab}}^{\top}-\nabla^{2}L_{\nu,h}\big(\vPhib(s,a)^ {}\bar{\thetab}\big)\Big)\PP(s^{\prime}|s,a,\bar{\thetab})\\
 & \preceq2\max_{\tilde{s}\in\Scal}|v_{\tilde{s}}|\EE[\xib_{s^{\prime},\bar{\thetab}}\xib_{s^{\prime},\bar{\thetab}}^{\top}\big|s,a,\bar{\thetab}\big]\\
 & =2\max_{\tilde{s}\in\Scal}|v_{\tilde{s}}|\nabla^{2}L_{\nu,h}\big(\vPhib(s,a)\bar{\thetab}\big).
\end{split}
\]
Thus, 
\[
\begin{split} & \sum_{s^{\prime}\in\Scal}v_{s^{\prime}}\big(\PP(s^{\prime}|s,a,\thetab_{1})-\PP(s^{\prime}|s,a,\thetab_{2})\big)\\
 & \le\sum_{\tilde{s}\in\Scal}v_{\tilde{s}}\eb_{\tilde{s}}^{\top}\nabla^{2}L_{\nu,h}\big(\vPhib(s,a)\thetab_{2}\big)\vPhib(s,a)(\thetab_{1}-\thetab_{2})+\max_{\tilde{s}\in\Scal}|v_{\tilde{s}}|\big\Vert\vPhib(s,a)(\thetab_{1}-\thetab_{2})\big\Vert_{\nabla^{2}L_{\nu,h}\big(\vPhib(s,a)\bar{\thetab}\big)}^{2}.
\end{split}
\]
Because 
\[
\begin{aligned}\nabla^{2}L_{\nu,h}\big(\vPhib(s,a)\bar{\thetab}\big) & =\text{diag}\Big(\nabla L_{\nu,h}\big(\vPhib(s,a)\bar{\thetab}\big)\Big)-\nabla L_{\nu,h}\big(\vPhib(s,a)\bar{\thetab}\big)\nabla L_{\nu,h}\big(\vPhib(s,a)\bar{\thetab}\big)^{\top}\\
 & \preceq\text{diag}\Big(\nabla L_{\nu,h}\big(\vPhib(s,a)\bar{\thetab}\big)\Big),
\end{aligned}
\]
we obtain, 
\[
\begin{split} & \big\Vert\vPhib(s,a)(\thetab_{1}-\thetab_{2})\big\Vert_{\nabla^{2}L_{\nu,h}\big(\vPhib(s,a)\bar{\thetab}\big)}^{2}\\
 & =(\thetab_{1}-\thetab_{2})^{\top}\vPhib(s,a)^{\top}\nabla^{2}L_{\nu,h}\big(\vPhib(s,a)\bar{\thetab}\big)\vPhib(s,a)(\thetab_{1}-\thetab_{2})\\
 & =\sum_{s_{1},s_{2}\in\Scal}(\thetab_{1}-\thetab_{2})^{\top}\vPhib(s,a)^{\top}\eb_{s_{1}}\eb_{s_{1}}^{\top}\nabla^{2}L_{\nu,h}\big(\vPhib(s,a)\bar{\thetab}\big)\eb_{s_{2}}\eb_{s_{2}}^{\top}\vPhib(s,a)(\thetab_{1}-\thetab_{2})\\
 & \le\max_{s_{1}\in\Scal}\big((\thetab_{1}-\thetab_{2})^{\top}\vPhib(s,a)^{\top}\eb_{s_{1}}\big)^{2}\sum_{s_{1},s_{2}\in\Scal}|\eb_{s_{1}}^{\top}\nabla^{2}L_{\nu,h}\big(\vPhib(s,a)\bar{\thetab}\big)\eb_{s_{2}}|\\
 & \le\max_{s_{1}\in\Scal}\big((\thetab_{1}-\thetab_{2})^{\top}\vPhib(s,a)^{\top}\eb_{s_{1}}\big)^{2}\sum_{s_{1},s_{2}\in\Scal}\Big|\eb_{s_{1}}^{\top}\text{diag}\Big(\nabla L_{\nu,h}\big(\vPhib(s,a)\bar{\thetab}\big)\Big)\eb_{s_{2}}\Big|\\
 & =\max_{s_{1}\in\Scal}\big((\thetab_{1}-\thetab_{2})^{\top}\vPhib(s,a)^{\top}\eb_{s_{1}}\big)^{2}\norm{\nabla L_{\nu,h}\big(\vPhib(s,a)\bar{\thetab}\big)}_{1}\\
 & =\max_{s_{1}\in\Scal}\big((\thetab_{1}-\thetab_{2})^{\top}\vPhib(s,a)^{\top}\eb_{s_{1}}\big)^{2},
\end{split}
\]
which completes the proof. \hfill{}$\Box$

\begin{lemma} (First Order Difference between the two transition
kernels) \label{lem:diff_first} For $k\in[K]$ and $h\in[H]$, for
the estimator $\widehat{\thetab}_{k,h}$ , we have Then for any sequence
$v_{\tilde{s}}\in\RR$ for $\tilde{s}\in\Scal$ and $\thetab_{1},\thetab_{2}\in\RR^{d}$,
\[
\sum_{s^{\prime}\in\Scal}v_{s^{\prime}}\big(\PP(s^{\prime}|s,a,\thetab_{1})-\PP(s^{\prime}|s,a,\thetab_{2})\big)\le\frac{\max_{\tilde{s}\in\Scal}|v_{\tilde{s}}|}{2}\max_{\tilde{s}\in\Scal}(\thetab_{1}-\thetab_{2})^{\top}\vPhib(s,a)^{\top}\eb_{\tilde{s}}.
\]
\end{lemma}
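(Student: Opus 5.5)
The plan is to reduce the kernel difference to a covariance under an intermediate softmax distribution and then bound that covariance by a product of half-ranges. One caveat comes first. The right-hand side, read literally as $\max_{\tilde s}(\thetab_1-\thetab_2)^\top\vPhib(s,a)^\top\eb_{\tilde s}$ without absolute values, is not shift-invariant, while the left-hand side is. For instance, $\ub=(-10,-11)$ with a suitable $v$ gives a positive left side and a negative right side. So I will prove the bound in the shift-invariant form
\[
\sum_{s'}v_{s'}\big(\PP(s'|s,a,\thetab_1)-\PP(s'|s,a,\thetab_2)\big)\le\frac{\max_{\tilde s}|v_{\tilde s}|}{2}\Big(\max_{\tilde s}u_{\tilde s}-\min_{\tilde s}u_{\tilde s}\Big),\qquad \ub:=\vPhib(s,a)(\thetab_1-\thetab_2).
\]
The right-hand side here is at most $\max_{\tilde s}|v_{\tilde s}|\max_{\tilde s}|u_{\tilde s}|$. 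That is the form actually used in the proof of Lemma~\ref{lem:variance_bound}, where the quantity is bounded in absolute value by $HU_{k,h}$.

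\textbf{Step 1 (exact integral representation).} Write $\etab_i=\vPhib(s,a)\thetab_i$ and $\PP(\cdot|s,a,\thetab)=\nabla L_{\tilde\Scal}(\vPhib(s,a)\thetab)$, using Proposition~\ref{prop:mean_var}. By the fundamental theorem of calculus,
\[
\nabla L(\etab_1)-\nabla L(\etab_2)=\int_0^1\nabla^2L\big(\etab_2+t\ub\big)\,\ub\,dt .
\]
Hence the left-hand side equals $\int_0^1 \mathbf v^\top\nabla^2L(\etab_2+t\ub)\ub\,dt$, with $\mathbf v=(v_{s'})_{s'}$. It therefore suffices to bound $\mathbf v^\top\nabla^2L(\etab)\ub$ uniformly in $\etab$.

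\textbf{Step 2 (covariance identity).} By Proposition~\ref{prop:mean_var}, $\nabla^2L(\etab)=\mathrm{diag}(\mathbf p)-\mathbf p\mathbf p^\top$, which is the covariance of $\eb_{X}$ when $X\sim\mathbf p=\nabla L(\etab)$. Thus $\mathbf v^\top\nabla^2L(\etab)\ub=\mathrm{Cov}_{\mathbf p}(v_X,u_X)$. Covariance is unchanged by shifting either argument by a constant. Let $m_v$ and $m_u$ be the midpoints of the ranges of $v$ and $u$. Then Cauchy--Schwarz (or directly $|\mathrm{Cov}|\le \EE|v_X-m_v||u_X-\EE u_X|$) gives
\[
|\mathrm{Cov}_{\mathbf p}(v_X,u_X)|\le\sqrt{\VV(v_X)\VV(u_X)}\le \frac{\max v-\min v}{2}\cdot\frac{\max u-\min u}{2},
\]
by Popoviciu's inequality. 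Since $\max v-\min v\le 2\max|v|$, the first factor is at most $\max|v|$, and the claimed bound follows.

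\textbf{Step 3 (integrate).} The bound from Step 2 does not depend on $t$, so integrating over $t\in[0,1]$ finishes the proof.

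The main obstacle is the constant $1/2$. For an arbitrary real sequence $v$, the $1/2$ has to come from the range of $u$ rather than from $v$: $v$ may take both signs, so its half-range can be as large as $\max|v|$. This is why the statement has to be read with the oscillation of $\ub$ (equivalently, at most $\max|v|\cdot\max|u|$) rather than the signed maximum.
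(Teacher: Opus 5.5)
Your proof is correct, and your reformulation is needed. It is not just cosmetic: the statement as written is false for a second reason beyond the missing absolute value. Even the version $\frac12\max_{\tilde s}|v_{\tilde s}|\max_{\tilde s}|u_{\tilde s}|$, which the paper's proof is aiming at, fails. Take two reachable states, $\thetab_2=\boldsymbol{0}$, $\vPhib(s,a)\thetab_1=(U,-U)^\top$ and $v=(1,-1)$. The left-hand side equals $\tanh U$, which exceeds the claimed $\frac U2$ (e.g.\ at $U=1$). Your bound gives $\frac12(\max_{\tilde s}u_{\tilde s}-\min_{\tilde s}u_{\tilde s})=U$, which holds.

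The paper's argument has four steps:
\begin{enumerate}
\item Hölder: the sum is at most $\max_{\tilde s}|v_{\tilde s}|\,\|\PP(\cdot|s,a,\thetab_1)-\PP(\cdot|s,a,\thetab_2)\|_1$.
\item Pinsker, stated as $\|P-Q\|_1\le\sqrt{\mathrm{KL}/2}$.
\item The KL is written as a Bregman divergence of $L$.
\item A second-order Taylor expansion together with $\nabla^2L\preceq\mathrm{diag}(\nabla L)$ gives $\mathrm{KL}\le\frac12\max_{\tilde s}u_{\tilde s}^2$.
\end{enumerate}
For the $\ell_1$ norm used in the Hölder step, Pinsker actually reads $\|P-Q\|_1\le\sqrt{2\,\mathrm{KL}}$. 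So the paper's route really yields $\max_{\tilde s}|v_{\tilde s}|\max_{\tilde s}|u_{\tilde s}|$. As you note, that is the form invoked in the proof of Lemma~\ref{lem:variance_bound}.

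Your route is genuinely different. You do not drop the pairing between $v$ and the kernel via Hölder and pass through total variation and KL. Instead you write the difference exactly as $\int_0^1\mathbf{v}^\top\nabla^2L(\etab_2+t\ub)\,\ub\,dt$, read the integrand as a covariance under the intermediate softmax, and finish with Cauchy--Schwarz and Popoviciu.

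What your approach buys:
\begin{itemize}
\item It gives a two-sided, shift-invariant bound, $\max_{\tilde s}|v_{\tilde s}|\cdot\frac12(\max_{\tilde s}u_{\tilde s}-\min_{\tilde s}u_{\tilde s})$.
\item That bound is never worse than the corrected paper bound.
\item It is the first-order analogue of the exact expansion behind Lemma~\ref{lem:diff_second}.
\end{itemize}

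What the paper's route buys is reuse of the Pinsker--Bregman step from Lemma~\ref{lem:Information_matrix_bound}. Done with the correct Pinsker constant, and with Popoviciu in place of the $\mathrm{diag}(\nabla L)$ domination, it recovers exactly your bound. The reason is that the KL is half the variance of $u_X$ under the softmax at the Taylor point, hence at most $\frac18(\max_{\tilde s}u_{\tilde s}-\min_{\tilde s}u_{\tilde s})^2$.
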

\begin{proof}
\medspace{} By the definition of total variation, 
\[
\sum_{s^{\prime}\in\Scal}v_{s^{\prime}}\big(\PP(s^{\prime}|s,a,\thetab_{1})-\PP(s^{\prime}|s,a,\thetab_{2})\big)\le\max_{\tilde{s}\in\Scal}|v_{\tilde{s}}|\|\PP(\cdot|s,a,\thetab_{1})-\PP(\cdot|s,a,\thetab_{2})\|_{1}.
\]
Applying Pinsker's inequality, 
\[
\|\PP(\cdot|s,a,\thetab_{1})-\PP(\cdot|s,a,\thetab_{2})\|_{1}\le\sqrt{\frac{1}{2}KL\big(\PP(\cdot|s,a,\thetab_{1}),\PP(\cdot|s,a,\thetab_{2})\big)}.
\]
Because $\PP(\cdot|s,a,\thetab)$ is multinomial probability measure,
\[
\begin{split} & KL\big(\PP(\cdot|s,a,\thetab_{1}),\PP(\cdot|s,a,\thetab_{2})\big)\\
 & =\EE\big[\eb_{s^{\prime}}^{\top}\vPhib(s,a)(\thetab_{1}-\thetab_{2})\big|s,a,\thetab_{1}\big]-g\big(\vPhib(s,a)\thetab_{1}\big)+g\big(\vPhib(s,a)\thetab_{2}\big)\\
 & =\nabla L_{\nu,h}\big(\vPhib(s,a)\thetab_{1}\big)^{\top}\vPhib(s,a)(\thetab_{1}-\thetab_{2})-g\big(\vPhib(s,a)\thetab_{1}\big)+g\big(\vPhib(s,a)\thetab_{2}\big),
\end{split}
\]
where the last equality uses the identity for the mean of multinomial
(Proposition \ref{prop:mean_var}). By second order Taylor series,
there exists $\bar{\thetab}$ such that 
\[
\begin{split}
&KL\big(\PP(\cdot|s,a,\thetab_{1}),\PP(\cdot|s,a,\thetab_{2})\big)\\
& =\frac{1}{2}(\thetab_{1}-\thetab_{2})^{\top}\vPhib(s,a)^{\top}\nabla^{2}L_{\nu,h}(\vPhib(s,a)\bar{\thetab})\vPhib(s,a)(\thetab_{1}-\thetab_{2})\\
&\le  \frac{1}{2}(\thetab_{1}-\thetab_{2})^{\top}\vPhib(s,a)^{\top}\text{diag}\big(\nabla L_{\nu,h}(\vPhib(s_{k,h},a_{k,h})\bar{\thetab}\big)\vPhib(s,a)(\thetab_{1}-\thetab_{2})\\
&=  \frac{1}{2}\sum_{\tilde{s}\in\Scal}\eb_{\tilde{s}}^{\top}\nabla L_{\nu,h}(\vPhib(s,a)\bar{\thetab})\big((\thetab_{1}-\thetab_{2})^{\top}\vPhib(s,a)^{\top}\eb_{\tilde{s}}\big)^{2}
\end{split}
\]
where the first inequality uses the fact that $\nabla^{2}L_{\nu,h}(\etab)=\text{diag}\big(\nabla L_{\nu,h}(\etab)\big)-\nabla L_{\nu,h}(\etab)\nabla L_{\nu,h}(\etab)^{\top}\preceq\text{diag}\big(\nabla L_{\nu,h}(\etab)\big)$.
Because $\|\nabla L_{\nu,h}(\vPhib(s,a)\bar{\thetab})\|_{1}=1$,by
H{ö}lder's inequality, 
\[
\begin{split} & \frac{1}{2}\sum_{\tilde{s}\in\Scal}\eb_{\tilde{s}}^{\top}\nabla L_{\nu,h}(\vPhib(s,a)\bar{\thetab})\big((\thetab_{1}-\thetab_{2})^{\top}\vPhib(s,a)^{\top}\eb_{\tilde{s}}\big)^{2}\\
 & \le\frac{1}{2}\max_{\tilde{s}\in\Scal}\big((\thetab_{1}-\thetab_{2})^{\top}\vPhib(s,a)^{\top}\eb_{\tilde{s}}\big)^{2}\|\nabla L_{\nu,h}(\vPhib(s,a)\bar{\thetab})\|_{1}\\
 & =\frac{1}{2}\max_{\tilde{s}\in\Scal}\big((\thetab_{1}-\thetab_{2})^{\top}\vPhib(s,a)^{\top}\eb_{\tilde{s}}\big)^{2}.
\end{split}
\]
which completes the proof. 
\end{proof}

\begin{proof}
\medspace{} For the first term, 
\[
\begin{split} & \norm{\vPhib_{k,h}^{\top}\nabla^{2}L_{\nu,h}\big(\vPhib_{k,h}\thetab_{h}^{\star}\big)\Big(\sum_{\tilde{s}\in\Scal_{k,h}}\widehat{V}_{k,h+1}(\tilde{s})\eb_{\tilde{s}}\Big)}_{\widehat{\Hb}_{k-1,h}^{-1}}\\
 & \le H\max_{\xb:\|\xb\|_{\infty}\le1}\norm{\vPhib_{k,h}\nabla^{2}L_{\nu,h}\big(\vPhib_{k,h}\thetab_{h}^{\star}\big)\xb}_{\widehat{\Hb}_{k-1,h}^{-1}}\\
 & \le H\|\widehat{\Hb}_{k-1,h}^{-1/2}\vPhib_{k,h}\big(\nabla^{2}L_{k,h}(\vPhib_{k,h}\thetab_{h}^{\star})\big)^{1/2}\|_{2}\max_{\xb:\|\xb\|_{\infty}\le1}\|\Big(\nabla^{2}L_{k,h}(\vPhib_{k,h}\thetab_{h}^{\star})\Big)^{1/2}\xb\|_{2}\\
 & =H\sigma_{k,h}\|\widehat{\Hb}_{k-1,h}^{-1/2}\vPhib_{k,h}\big(\nabla^{2}L_{k,h}(\vPhib_{k,h}\thetab_{h}^{\star})\big)^{1/2}\|_{2}\\
 & \le H\sigma_{k,h}\|\widehat{\Hb}_{k-1,h}^{-1/2}\vPhib_{k,h}\big(\nabla^{2}L_{k,h}(\vPhib_{k,h}\thetab_{h}^{\star})\big)^{1/2}\|_{F}\\
 & =H\sigma_{k,h}\sqrt{\Tr\Big(\nabla^{2}\ell_{k,h}(\thetab_{h}^{\star})\widehat{\Hb}_{k-1,h}^{-1}\Big)}
\end{split}
\]
where the last equality uses the fact that $\nabla^{2}\ell_{k,h}(\thetab_{h}^{\star})=\vPhib_{k,h}^{\top}\nabla^{2}L_{k,h}(\vPhib_{k,h}\thetab_{h}^{\star})\vPhib_{k,h}$.
By Lemma \ref{lem:Hessian_bound}, we have $\widehat{\Hb}_{k,h}\succeq\Hb_{k,h}^{\star}:=\sum_{\nu=1}^{k}\nabla^{2}\ell_{\nu,h}(\thetab_{h}^{\star})+4B_{\vPhib}^{2}\Ib_{d}$
and 
\[
\norm{\sum_{\tilde{s}\in\Scal_{k,h}}\widehat{V}_{k,h+1}^ {}(\tilde{s})\vPhib_{k,h}^{\top}\nabla^{2}L_{\nu,h}\big(\vPhib_{k,h}\thetab_{h}^{\star}\big)\eb_{\tilde{s}}}_{\widehat{\Hb}_{k-1,h}^{-1}}\le H\sqrt{\sigma_{k,h}^{2}\Tr\Big(\nabla^{2}\ell(\thetab_{h}^{\star})\big(\Hb_{k-1,h}^{\star}\big)^{-1}\Big)}.
\]
\end{proof}

\subsection{Matrix Algebras}

\begin{proposition} \label{prop:inv_matrix} (Inverse Matrix Differentiation)
Let $\Fb:\RR^{d}\to\RR^{d\times d}$ denote an invertible matrix map.
Then for any $\xb\in\RR^{d}$ and $i\in[d]$, 
\[
D_{i}\big\{\Fb(\xb)^{-1}\big\}=-\Fb(\xb)^{-1}D_{i}\Fb(\xb)\Fb(\xb)^{-1}.
\]
\end{proposition}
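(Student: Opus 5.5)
The statement is the inverse-derivative identity $D_i\{\Fb(\xb)^{-1}\} = -\Fb(\xb)^{-1} D_i\Fb(\xb)\,\Fb(\xb)^{-1}$. I will prove it by differentiating the defining relation $\Fb(\xb)\Fb(\xb)^{-1} = \Ib_d$, which holds identically in $\xb$.

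Before differentiating, I will check that $\Fb(\xb)^{-1}$ is itself differentiable whenever $\Fb$ is. By Cramer's rule, the entries of $\Fb^{-1}$ are ratios of polynomials in the entries of $\Fb$ (cofactors divided by $\det\Fb$). The denominator $\det\Fb(\xb)$ is nonzero by the invertibility assumption. So each entry of $\Fb(\xb)^{-1}$ is a composition of differentiable maps and is differentiable in each coordinate $x_i$. This is the only point needing care, and it is routine.

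Next I apply the partial derivative $D_i$ to both sides of $\Fb(\xb)\Fb(\xb)^{-1} = \Ib_d$, using the entrywise product rule for matrix products: $D_i(\Ab\Bb) = (D_i\Ab)\Bb + \Ab(D_i\Bb)$. This follows directly from $(\Ab\Bb)_{jl} = \sum_m A_{jm}B_{ml}$. The right-hand side $\Ib_d$ is constant, so its derivative is $\Ob$. This yields
\[
D_i\Fb(\xb)\,\Fb(\xb)^{-1} + \Fb(\xb)\,D_i\{\Fb(\xb)^{-1}\} = \Ob .
\]

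Finally, I left-multiply this equation by $\Fb(\xb)^{-1}$ and rearrange, which gives
\[
D_i\{\Fb(\xb)^{-1}\} = -\Fb(\xb)^{-1} D_i\Fb(\xb)\,\Fb(\xb)^{-1},
\]
as claimed.
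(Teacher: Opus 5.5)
Your proof is correct and is essentially the paper's argument: the paper differentiates $\Fb(\xb)^{-1}\Fb(\xb)=\Ib_d$ and right-multiplies by $\Fb(\xb)^{-1}$, while you differentiate $\Fb(\xb)\Fb(\xb)^{-1}=\Ib_d$ and left-multiply, which is a cosmetic difference. Your Cramer's-rule check that $\Fb(\xb)^{-1}$ is differentiable whenever $\Fb$ is supplies a step the paper leaves implicit.
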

\begin{proof}
\medspace{} Note that $D_{i}\{\Fb(\xb)^{-1}\Fb(\xb)\}=\Ob$. By the
chain rule, 
\[
D_{i}\big\{\Fb(\xb)^{-1}\big\}\Fb(\xb)+\Fb(\xb)^{-1}D_{i}\Fb(\xb)=\Ob,
\]
and rearranging the term completes the proof. 
\end{proof}
\begin{lemma}[A Property of a contraction
matrix]
\label{lem:contraction_matrix} Let $\Mb\in\RR^{d\times d}$ denote a positive definite symmetric matrix such that $\Mb^{2}\preceq\Mb$. Then the eigenvalues $\lambda_{1},\ldots,\lambda_{d}$
of $\Mb$ are in $[0,1]$. \end{lemma}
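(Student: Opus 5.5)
We need to prove Lemma \ref{lem:contraction_matrix}: if $\Mb$ is symmetric positive definite with $\Mb^{2}\preceq\Mb$, then every eigenvalue of $\Mb$ lies in $[0,1]$.

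The plan is to diagonalize $\Mb$ and test the matrix inequality on eigenvectors. By the spectral theorem, since $\Mb$ is real symmetric, we can write $\Mb=\Ub\,\mathrm{diag}(\lambda_{1},\ldots,\lambda_{d})\,\Ub^{\top}$ with $\Ub$ orthogonal. Its columns $\ub_{1},\ldots,\ub_{d}$ are unit eigenvectors with $\Mb\ub_{i}=\lambda_{i}\ub_{i}$. Positive (semi-)definiteness immediately gives $\lambda_{i}=\ub_{i}^{\top}\Mb\ub_{i}\ge0$, which is the lower endpoint.

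For the upper endpoint, apply the Loewner inequality $\Mb-\Mb^{2}\succeq\Ob$ to the vector $\ub_{i}$. Using symmetry of $\Mb$, we have $\ub_{i}^{\top}\Mb^{2}\ub_{i}=\|\Mb\ub_{i}\|_{2}^{2}=\lambda_{i}^{2}$. Therefore
\[
0\le\ub_{i}^{\top}(\Mb-\Mb^{2})\ub_{i}=\lambda_{i}-\lambda_{i}^{2}=\lambda_{i}(1-\lambda_{i}).
\]
If $\lambda_{i}>0$, dividing by $\lambda_{i}$ gives $\lambda_{i}\le1$. If $\lambda_{i}=0$, the eigenvalue is already in $[0,1]$. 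Either way $\lambda_{i}\in[0,1]$ for every $i$, which proves the claim.

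Nothing here is a real obstacle. The only point to be careful about is that the quadratic form $\ub_{i}^{\top}\Mb^{2}\ub_{i}$ equals $\lambda_{i}^{2}$, and this relies on symmetry and the eigenvector relation. The argument also works verbatim for positive semidefinite $\Mb$. This matters for the application in Lemma \ref{lem:matrix_potent}, where $\Mb_{\nu-1}^{-1/2}\Db_{\nu}\Mb_{\nu-1}^{-1/2}$ is symmetric PSD.
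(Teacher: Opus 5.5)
Your proof is correct, but it takes a different route from the paper. The paper argues by congruence: since $\Mb$ is positive definite, $\Mb^{-1/2}$ exists. Conjugating $\Mb^{2}\preceq\Mb$ by $\Mb^{-1/2}$ gives $\Mb\preceq\Ib_{d}$ in one line, and nonnegativity of the eigenvalues comes from positive definiteness.

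You instead evaluate the quadratic form of $\Mb-\Mb^{2}$ at the eigenvectors and obtain $\lambda_i(1-\lambda_i)\ge 0$. The paper's version is shorter, but it genuinely uses invertibility. Yours uses no inverse, so it covers positive semidefinite $\Mb$. In fact it needs only symmetry: $\lambda_i(1-\lambda_i)\ge 0$ by itself forces $\lambda_i\in[0,1]$. Your separate appeal to definiteness and the case split on $\lambda_i=0$ are therefore unnecessary.

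As you point out, this extra generality is the one that matters downstream. Lemma~\ref{lem:matrix_potent} is stated for positive definite $\Db_\nu$. In the regret analysis, however, it is applied with $\Db_\nu=\nabla^{2}\ell_{\nu,h}(\thetab_h^{\star})=\vPhib_{\nu,h}^{\top}\nabla^{2}L_{\nu,h}(\cdot)\,\vPhib_{\nu,h}$. Since $\nabla^{2}L_{\nu,h}$ annihilates the all-ones vector, this $\Db_\nu$ is generally only semidefinite. Your eigenvector argument applies to $\Mb_{\nu-1}^{-1/2}\Db_{\nu}\Mb_{\nu-1}^{-1/2}$ in that case, while the paper's $\Mb^{-1/2}$ step does not directly.
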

\begin{proof}
\medspace{} Since $\Mb^{-1/2}$ exists, we have $\Mb^{-1/2}\Mb^{2}\Mb^{-1/2}\preceq\Mb^{-1/2}\Mb\Mb^{-1/2}$
and thus $\Mb\preceq\Ib_{d}$ which completes the proof. 
\end{proof}
\begin{lemma}[A matrix norm bound] \label{lem:matrix_norm_bound}
Let $\Ab\in\RR^{d\times d}$ denote a positive definite symmetric
matrix and $\Bb\in\RR^{d\times s}$ denote an arbitral matrix. Then,
\[
\norm{\Ab^{-1/2}\Bb}_{1,2}:=\sup_{\xb:\|\xb\|_{1}\le1}\norm{\Ab^{-1/2}\Bb\xb}_{2}\le\max_{s^{\prime}\in[s]}\norm{\Ab^{-1/2}\Bb\eb_{s^{\prime}}}_{2}
\]
\end{lemma}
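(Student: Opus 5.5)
The inequality is linear in $\xb$, so the supremum over the $\ell_1$ unit ball is attained at an extreme point. The plan is to bound $\|\Ab^{-1/2}\Bb\xb\|_2$ for an arbitrary $\xb$ with $\|\xb\|_1\le1$ using only the triangle inequality and homogeneity of the Euclidean norm, and then take the supremum.

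First, write $\xb=\sum_{s'\in[s]}x_{s'}\eb_{s'}$. Linearity of $\Ab^{-1/2}\Bb$ gives $\Ab^{-1/2}\Bb\xb=\sum_{s'}x_{s'}\Ab^{-1/2}\Bb\eb_{s'}$.

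Second, apply the triangle inequality for $\|\cdot\|_2$:
\[
\norm{\Ab^{-1/2}\Bb\xb}_{2}\le\sum_{s'\in[s]}|x_{s'}|\,\norm{\Ab^{-1/2}\Bb\eb_{s'}}_{2}.
\]

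Third, bound each column norm by $\max_{s'\in[s]}\norm{\Ab^{-1/2}\Bb\eb_{s'}}_{2}$, which is finite because $[s]$ is finite. This leaves $\|\xb\|_1$ times that maximum, and $\|\xb\|_1\le1$ removes the first factor. Taking the supremum over $\xb$ gives the claim.

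The only other point to note is that $\Ab^{-1/2}$ is well defined because $\Ab$ is symmetric positive definite. No part of the argument is hard. Equality also holds, since each $\eb_{s'}$ is itself a feasible choice of $\xb$. This is the fact used in the proof of Lemma~\ref{lem:2nd_deriv_approx} with $\Ab=\widehat{\Hb}_{k-1,h}$ and $\Bb=\vPhib_{k,h}^{\top}$.
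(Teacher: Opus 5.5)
Your proof is correct. It takes a slightly different and shorter route than the paper.

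The paper squares the norm and expands it as the quadratic form $\xb^{\top}\Bb^{\top}\Ab^{-1}\Bb\xb=\sum_{s_1,s_2}x_{s_1}x_{s_2}\,\eb_{s_1}^{\top}\Bb^{\top}\Ab^{-1}\Bb\eb_{s_2}$. It bounds this by the largest absolute entry of the Gram matrix $\Bb^{\top}\Ab^{-1}\Bb$ times $\|\xb\|_1^2\le 1$. It then needs a second step, Cauchy--Schwarz on each off-diagonal entry, to reduce that maximum to the largest diagonal entry $\max_{s'}\|\Ab^{-1/2}\Bb\eb_{s'}\|_2^2$.

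You instead apply the triangle inequality directly to the column expansion $\Ab^{-1/2}\Bb\xb=\sum_{s'}x_{s'}\Ab^{-1/2}\Bb\eb_{s'}$. This reaches the column norms in one step and never involves the off-diagonal entries. The paper's version stays in the quadratic-form language in which $\|\cdot\|_{\widehat{\Hb}_{k-1,h}^{-1}}$ appears elsewhere, but gains nothing extra for this lemma.

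Your remark that equality holds, because each $\pm\eb_{s'}$ is feasible, is a correct strengthening that the paper does not state.

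One wording fix: $\xb\mapsto\|\Ab^{-1/2}\Bb\xb\|_2$ is convex, not linear. Convexity is what justifies your opening comment about extreme points, though your argument never actually uses that comment.
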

\begin{proof}
\medspace{} Writing $\xb=(x_{1},\ldots,x_{s})$, 
\[
\begin{split}\sup_{\xb:\|\xb\|_{1}\le1}\norm{\Ab^{-1/2}\Bb\xb}_{2}^{2}= & \sup_{\xb:\|\xb\|_{1}\le1}\xb^{\top}\Bb^{\top}\Ab^{-1}\Bb\xb\\
= & \sup_{\xb:\|\xb\|_{1}\le1}\sum_{s_{1},s_{2}\in[s]}x_{s_{1}}x_{s_{2}}\eb_{s_{1}}^{\top}\Bb^{\top}\Ab^{-1}\Bb\eb_{s_{2}}^{\top}\\
\le & \max_{s_{1},s_{2}\in[s]}\abs{\eb_{s_{1}}^{\top}\Bb^{\top}\Ab^{-1}\Bb\eb_{s_{2}}^{\top}}\sup_{\xb:\|\xb\|_{1}\le1}\sum_{s_{1},s_{2}\in[s]}|x_{s_{1}}x_{s_{2}}|\\
\le & \max_{s_{1},s_{2}\in[s]}\abs{\eb_{s_{1}}^{\top}\Bb^{\top}\Ab^{-1}\Bb\eb_{s_{2}}^{\top}}.
\end{split}
\]
By Cauchy-Schwarz inequality, 
\[
\begin{split}\max_{s_{1},s_{2}\in[s]}\abs{\eb_{s_{1}}^{\top}\Bb^{\top}\Ab^{-1}\Bb\eb_{s_{2}}^{\top}}\le & \max_{s_{1},s_{2}\in[s]}\norm{\Ab^{-1/2}\Bb\eb_{s_{1}}}_{2}\norm{\Ab^{-1/2}\Bb\eb_{s_{2}}}_{2}\\
= & \max_{s^{\prime}\in[s]}\norm{\Ab^{-1/2}\Bb\eb_{s^{\prime}}}_{2}^{2},
\end{split}
\]
which completes the proof. 
\end{proof}

\subsection{Divergence Between the Two Interaction Measures}

\begin{lemma} \label{lem:KL} (Kullback-Liebler Divergence Between
the Two Interaction Measures) Given an MNL-MDP instance $\Mcal(\Scal,\Acal,H,\Thetab,\vPhib,r)$
and an algorithm $\Pi_{K}$, let 
\[
\begin{split}\yb_{k,h} & :=(s_{1,1,}a_{1,1},s_{1,2},a_{1,2},\ldots,s_{1,H+1},s_{2,1},a_{2,1},\ldots,s_{k,h},a_{k,h}),\\
\zb_{k,h} & :=(s_{1,1,}a_{1,1},s_{1,2},a_{1,2},\ldots,s_{1,H+1},s_{2,1},a_{2,1},\ldots,s_{k,h}),
\end{split}
\]
denote the sequence of action-state random variables generated by
the interaction measure $\PP_{\Thetab,\Pi_{K}}$. Then for any parameters
$\Thetab_{1},\Thetab_{2}\in\{(\thetab_{1},\ldots,\thetab_{h})\in\RR^{d\times H}:\max_{h\in[H]}\|\thetab_{h}\|_{2}\le B_{\thetab}\}$,
the Kullback-Leibler divergence, 
\[
\begin{split} & KL(\PP_{\Thetab_{1},\Pi_{K}},\PP_{\Thetab_{2},\Pi_{K}})\\
 & \le\frac{\EE_{\Thetab_{1},\Pi_{K}}\big[\sum_{k=1}^{K}\sum_{h=1}^{H}\|\vPhib(s_{k,h},a_{k,h})\big(\Thetab_{1}-\Thetab_{2}\big)\eb_{h}\|_{\infty}^{2}\sigma_{k,h}^{2}]}{2}\\
 & \quad+\frac{\EE_{\Thetab_{1},\Pi_{K}}\big[\sum_{k=1}^{K}\sum_{h=1}^{H}\|\vPhib(s_{k,h},a_{k,h})\big(\Thetab_{1}-\Thetab_{2}\big)\eb_{h}\|_{\infty}^{3}\big]}{\sqrt{6\kappa}},
\end{split}
\]
where $\sigma_{k,h}^{2}:=\max_{\xb:\|\xb\|_{\infty}\le1}\xb^{\top}\nabla^{2}L_{k,h}\big(\vPhib(s_{k,h},a_{k,h})\Thetab_{1}\eb_{H}\big)\xb$
and $L_{k,h}(\ub):=\log\big(\sum_{s\in\Scal_{h}(s_{k,h},a_{k,h})}\exp(\eb_{s}^{\top}\ub)\big)$
is the log-sum function. \end{lemma}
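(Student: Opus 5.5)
The plan is to use the chain rule for KL divergence over the interaction sequence. The measure $\PP_{\Thetab,\Pi_K}$ factorizes into policy kernels $\Pi_K(a_{k,h}\mid \yb_{k,h-1},s_{k,h})$, which do not depend on $\Thetab$, and transition kernels $\PP_h(s_{k,h+1}\mid s_{k,h},a_{k,h},\thetab_h)$. The initial states $s_{k,1}$ are chosen by the environment independently of $\Thetab$. The log-likelihood ratio therefore only involves transitions, and the chain rule gives
\[
KL(\PP_{\Thetab_1,\Pi_K},\PP_{\Thetab_2,\Pi_K})=\EE_{\Thetab_1,\Pi_K}\Big[\sum_{k=1}^{K}\sum_{h=1}^{H}KL\big(\PP(\cdot\mid s_{k,h},a_{k,h},\Thetab_1\eb_h),\PP(\cdot\mid s_{k,h},a_{k,h},\Thetab_2\eb_h)\big)\Big].
\]
This reduces the lemma to a bound on the per-step multinomial KL.

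For a fixed $(s,a)$, let $\etab_j:=\vPhib(s,a)\Thetab_j\eb_h$ and $\ub:=\etab_2-\etab_1$. Exactly as in the proof of Lemma~\ref{lem:diff_first}, the KL equals the Bregman divergence of the log-sum function:
\[
KL=L(\etab_2)-L(\etab_1)-\nabla L(\etab_1)^\top\ub=\int_0^1(1-t)\,\psi''(t)\,dt,\qquad \psi(t):=L(\etab_1+t\ub).
\]
Expand $\psi''(t)=\psi''(0)+\int_0^t\psi'''(r)\,dr$. The first piece contributes $\tfrac12\psi''(0)=\tfrac12\ub^\top\nabla^2L(\etab_1)\ub$. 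Writing $\ub=\|\ub\|_\infty\xb$ with $\|\xb\|_\infty\le1$, this is at most $\tfrac12\|\ub\|_\infty^2\sigma^2$. Here $\sigma^2$ is evaluated at $\Thetab_1$, matching the $\sigma_{k,h}^2$ in the statement. This yields the first term.

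The remainder is $\int_0^1(1-t)\int_0^t\psi'''(r)\,dr\,dt=\tfrac12\int_0^1(1-r)^2\psi'''(r)\,dr$, and it must be shown to be at most $\|\ub\|_\infty^3/\sqrt{6\kappa}$. I expect this third-order term to be the main obstacle. The route I would try uses the representation from the proof of Proposition~\ref{prop:multi_lower_bound}: $\psi'''(r)=\EE[(X-\EE X)^3]$, where $X=\ub^\top\eb_{s'}$ and $s'$ follows the softmax at $\etab_1+r\ub$. This gives $|\psi'''(r)|\le 2\|\ub\|_\infty\psi''(r)$, and since $\psi''(r)\le\|\ub\|_\infty^2$ the remainder is $O(\|\ub\|_\infty^3)$.

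The stated constant $1/\sqrt{6\kappa}$, however, suggests a different combination. The idea is to invoke the $\sqrt6$-self-concordance of Proposition~\ref{prop:log_sum}, $|\psi'''|\le\sqrt6\|\ub\|_2\psi''$, and trade an $\ell_2$ norm for Hessian curvature via $\kappa$. Concretely, one would use $\psi''\ge\kappa\|\ub\|^2$-type relations so that $\|\ub\|_2\,\psi''\le\psi''^{3/2}/\sqrt{\kappa}\le\|\ub\|_\infty^3/\sqrt\kappa$, with numerical factors absorbed. If the exact constant fails to come out, the direct bound $\tfrac13\|\ub\|_\infty^3$ already dominates $\|\ub\|_\infty^3/\sqrt{6\kappa}$ whenever $\kappa\le 3/2$. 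That holds automatically, because the range of $\sigma^2$ stated earlier is $[4\kappa,1]$, which gives $\kappa\le1/4$.

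Summing the per-step bounds over $k$ and $h$ and taking $\EE_{\Thetab_1,\Pi_K}$ completes the proof.
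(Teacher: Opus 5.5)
Your proof is correct. Its skeleton matches the paper's: the chain rule over the interaction sequence, the per-step KL written as the Bregman divergence of the log-sum function, and the quadratic term bounded by $\tfrac12\|\ub\|_\infty^2\sigma^2$ at $\Thetab_1$. The difference is in the cubic remainder. The paper uses the Lagrange form $\tfrac16\psi'''(\bar t)$ and the $\sqrt6$-self-concordance bound $|\psi'''|\le\sqrt6\|\ub\|_2\psi''$. It then converts $\|\ub\|_2\le\kappa^{-1/2}\|\ub\|_{\nabla^2 L}$ via Definition~\ref{def:kappa} at the intermediate point, and finally uses $\psi''\le\|\ub\|_\infty^2$. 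This is exactly the route you sketch second, and it gives $\tfrac16\sqrt{6/\kappa}=1/\sqrt{6\kappa}$ on the nose, so no numerical factors need absorbing. Your primary route instead reads $\psi'''$ as the third central moment of $X=u_{s'}$, which gives $|\psi'''|\le 2\|\ub\|_\infty\psi''\le 2\|\ub\|_\infty^3$ and a remainder of at most $\tfrac13\|\ub\|_\infty^3$. This is strictly better than the paper's bound. It is $\kappa$-free, and it does not need the intermediate parameter to lie in the $B_{\thetab}$-ball, so $\Thetab_1,\Thetab_2$ can be arbitrary. It also avoids the fragile conversion $\|\ub\|_2\le\kappa^{-1/2}\|\ub\|_{\nabla^2 L}$: the log-sum Hessian always annihilates the all-ones direction on the reachable set, so that step rests entirely on the literal form of Definition~\ref{def:kappa}. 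For the final comparison you only need $\kappa\le 3/2$. Rather than relying on the range $[4\kappa,1]$ quoted in the text, it is cleaner to note that $\nabla^2L=\mathrm{diag}(p)-pp^\top\preceq\mathrm{diag}(p)\preceq\Ib$, so any admissible $\kappa$ is at most $1$. Your $\kappa$-free version would also remove the $\kappa^{-1/2}$ factor from the cubic term where the lemma is used in the lower-bound proof.
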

\begin{proof}
By definition of $KL(\cdot,\cdot)$, 
\begin{small}
\[
\begin{split} & KL(\PP_{\Thetab_{1},\Pi_{K}},\PP_{\Thetab_{2},\Pi_{K}})\\
 & =\int\!\log\frac{d\PP_{\Thetab_{1},\Pi_{K}}(\zb_{K,H+1})}{d\PP_{\Thetab_{2},\Pi_{K}}(\zb_{K,H+1})}d\PP_{\Thetab_{1},\Pi_{K}}(\zb_{K,H+1})\\
 & =\int\!\Big(\!\int\!\log\frac{d\PP_{\Thetab_{1},\Pi_{K}}(s_{K,H+1}|\yb_{K,H})}{d\PP_{\Thetab_{2},\Pi_{K}}(s_{K,H+1}|\yb_{K,H})}\!+\!\log\frac{d\PP_{\Thetab_{1},\Pi_{K}}(\yb_{K,H})}{d\PP_{\Thetab_{2},\Pi_{K}}(\yb_{K,H})}d\PP_{\Thetab_{1},\Pi_{K}}(s_{K,H+1}|\yb_{K,H})\Big)d\PP_{\Thetab_{1},\Pi_{K}}(\yb_{K,H}).
\end{split}
\]
\end{small}
For each $k\in[K]$ and $h\in[H]$, we write the log-sum function
$L_{k,h}:=L_{\Scal_{h}(s_{k,h},a_{k,h})}$ and the feature $\vPhib_{k,h}:=\vPhib(s_{k,h},a_{k,h})$.
Because $\PP_{\Thetab_{1},\Pi_{K}}(s_{k,h+1}|\yb_{k,h})=\PP_{\Thetab_{1},\Pi_{K}}(s_{k,h+1}|s_{k,h},a_{k,h})$
is a multinomial distribution, 
\[
\begin{split} & \int\log\frac{d\PP_{\Thetab_{1},\Pi_{K}}(s_{K,H+1}|\yb_{K,H})}{d\PP_{\Thetab_{2},\Pi_{K}}(s_{K,H+1}|\yb_{K,H})}d\PP_{\Thetab_{1},\Pi_{K}}(s_{K,H+1}|\yb_{K,H})\\
 & =\EE_{\Thetab_{1},\Pi_{K}}[\eb_{s_{K,H+1}}]^{\top}\vPhib_{K,H}\big(\Thetab_{1}-\Thetab_{2})\eb_{H}+L_{K,H}\big(\vPhib_{K,H}\Thetab_{2}\eb_{H}\big)-L_{K,H}\big(\vPhib_{K,H}\Thetab_{1}\eb_{H}\big)\\
 & =\nabla L_{\nu,h}\big(\vPhib_{K,H}\Thetab_{1}\eb_{H}\big)^{\top}\vPhib_{K,H}\big(\Thetab_{1}-\Thetab_{2})\eb_{H}+L_{K,H}\big(\vPhib_{K,H}\Thetab_{2}\eb_{H}\big)-L_{K,H}\big(\vPhib_{K,H}\Thetab_{1}\eb_{H}\big)
\end{split}
\]
where the last equality uses Proposition \ref{prop:mean_var}. By
the third order Taylor expansion, there exists $\bar{\thetab}_{H}\in\{\thetab\in\RR^{d}:\|\thetab\|_{2}\le B_{\thetab}\}$
such that 
\[
\begin{split} & \nabla L_{\nu,h}\big(\vPhib_{K,H}\Thetab_{1}\eb_{H}\big)^{\top}\vPhib_{K,H}\big(\Thetab_{1}-\Thetab_{2})\eb_{H}+L_{K,H}\big(\vPhib_{K,H}\Thetab_{2}\eb_{H}\big)-L_{K,H}\big(\vPhib_{K,H}\Thetab_{1}\eb_{H}\big)\\
 & =\frac{1}{2}\Big\Vert\vPhib_{K,H}\big(\Thetab_{1}-\Thetab_{2})\eb_{H}\Big\Vert_{\nabla^{2}L_{K,H}\big(\vPhib_{K,H}\Thetab_{1}\eb_{H}\big)}^{2}\\
 & +\frac{1}{6}\sum_{s\in\Scal_{K,H}}\eb_{s}^{\top}\vPhib_{K,H}\big(\Thetab_{1}-\Thetab_{2}\big)\eb_{H}\Big\Vert\vPhib_{K,H}\big(\Thetab_{1}-\Thetab_{2}\big)\eb_{H}\Big\Vert_{D_{s}\nabla^{2}L_{K,H}(\vPhib_{K,H}\bar{\thetab}_{H}\big)}^{2}.
\end{split}
\]
In the first term, 
\[
\begin{split} & \Big\Vert\vPhib_{K,H}\big(\Thetab_{1}-\Thetab_{2})\eb_{H}\Big\Vert_{\nabla^{2}L_{K,H}\big(\vPhib_{K,H}\Thetab_{1}\eb_{H}\big)}^{2}\\
 & \le\Big(\vPhib_{K,H}\big(\Thetab_{1}-\Thetab_{2})\eb_{H}\Big)^{\top}\nabla^{2}L_{K,H}\big(\vPhib_{K,H}\Thetab_{1}\eb_{H}\big)\Big(\vPhib_{K,H}\big(\Thetab_{1}-\Thetab_{2})\eb_{H}\Big)\\
 & \le\norm{\vPhib_{K,H}\big(\Thetab_{1}-\Thetab_{2})\eb_{H}}_{\infty}^{2}\Ubrace{\max_{\xb:\|\xb\|_{\infty}\le1}\xb^{\top}\nabla^{2}L_{K,H}\big(\vPhib_{K,H}\Thetab_{1}\eb_{H}\big)\xb}{\sigma_{K,H}^{2}}
\end{split}
\]
Because $L_{K,H}$ is $\sqrt{6}$-self concordant, the third order
term, 
\[
\begin{split} & \sum_{s\in\Scal_{K,H}}\eb_{s}^{\top}\vPhib_{K,H}\big(\Thetab_{1}-\Thetab_{2}\big)\eb_{H}\Big\Vert\vPhib_{K,H}\big(\Thetab_{1}-\Thetab_{2}\big)\eb_{H}\Big\Vert_{D_{s}\nabla^{2}L_{K,H}(\vPhib_{K,H}\bar{\thetab}_{H}\big)}^{2}\\
 & \le\sqrt{6}\|\vPhib_{K,H}\big(\Thetab_{1}-\Thetab_{2}\big)\eb_{H}\|_{2}\|\vPhib_{K,H}\big(\Thetab_{1}-\Thetab_{2}\big)\eb_{H}\|_{\nabla^{2}L_{K,H}(\vPhib_{K,H}\bar{\thetab}_{H}\big)}^{2}\\
 & \le\sqrt{6\kappa^{-1}}\|\vPhib_{K,H}\big(\Thetab_{1}-\Thetab_{2}\big)\eb_{H}\|_{\nabla^{2}L_{K,H}(\vPhib_{K,H}\bar{\thetab}_{H}\big)}^{3},
\end{split}
\]
where the last inequality uses Definition \ref{def:kappa}. 
Because
$\max_{\xb:\|\xb\|_{\infty}\le1}\xb^{\top}\nabla^{2}L_{\nu,h}\big(\vPhib(s_{K,H},a_{K,H})\bar{\thetab}_{H}\big)\xb\le1$,
\[
\begin{split} & \sqrt{6\kappa^{-1}}\|\vPhib_{K,H}\big(\Thetab_{1}-\Thetab_{2}\big)\eb_{H}\|_{\nabla^{2}L_{K,H}(\vPhib_{K,H}\bar{\thetab}_{H}\big)}^{3}\\
 & \le\sqrt{6\kappa^{-1}}\|\vPhib_{K,H}\big(\Thetab_{1}-\Thetab_{2}\big)\eb_{H}\|_{\infty}^{3}\bigg(\max_{\xb:\|\xb\|_{\infty}\le1}\xb^{\top}\nabla^{2}L_{\nu,h}\big(\vPhib(s_{K,H},a_{K,H})\bar{\thetab}_{H}\big)\xb\bigg)^{3/2}\\
 & \le\sqrt{6\kappa^{-1}}\|\vPhib_{K,H}\big(\Thetab_{1}-\Thetab_{2}\big)\eb_{H}\|_{\infty}^{3}.
\end{split}
\]
Recall that $\Delta_{k,h}:=\|\vPhib_{k,h}\big(\Thetab_{1}-\Thetab_{2}\big)\eb_{h}\|_{\infty}.$
Gathering the terms, 
\[
\int\log\frac{d\PP_{\tilde{\Ub}}(s_{K,H+1}|\yb_{K,H})}{d\PP_{\bar{\Ub}(h,i)}(s_{K,H+1}|\yb_{K,H})}d\PP_{\tilde{\Ub}}(s_{K,H+1}|\yb_{K,H})\le\frac{\Delta_{K,H}^{2}\sigma_{K,H}^{2}}{2}+\frac{\Delta_{K,H}^{3}}{\sqrt{6\kappa}}.
\]
Thus, 
\[
\begin{split} 
& KL(\PP_{\Thetab_{1},\Pi_{K}},\PP_{\Thetab_{2},\Pi_{K}})\\
 & \le\!\int\!\!\Big(\frac{\Delta_{K,H}^{2}\sigma_{K,H}^{2}}{2}\!+\!\frac{\Delta_{K,H}^{3}}{\sqrt{6\kappa}}\!+\!\int\log\frac{d\PP_{\Thetab_{1},\Pi_{K}}(\yb_{K,H})}{d\PP_{\Thetab_{2},\Pi_{K}}(\yb_{K,H})}d\PP_{\Thetab_{1},\Pi_{K}}(s_{K,H+1}|\yb_{K,H})\!\Big)d\PP_{\Thetab_{1},\Pi_{K}}(\yb_{K,H})\\
 & \le\frac{\EE_{\Thetab_{1},\Pi_{K}}\big[\Delta_{K,H}^{2}\sigma_{K,H}^{2}]}{2}+\frac{\EE_{\Thetab_{1},\Pi_{K}}\big[\Delta_{K,H}^{3}\big]}{\sqrt{6\kappa}}+\int\log\frac{d\PP_{\Thetab_{1},\Pi_{K}}(\yb_{K,H})}{d\PP_{\Thetab_{2},\Pi_{K}}(\yb_{K,H})}d\PP_{\Thetab_{1},\Pi_{K}}(\yb_{K,H}).
\end{split}
\]
Note that the last term, 
\[
\begin{split} & \int\log\frac{d\PP_{\Thetab_{1},\Pi_{K}}(\yb_{K,H})}{d\PP_{\Thetab_{2},\Pi_{K}}(\yb_{K,H})}d\PP_{\Thetab_{1},\Pi_{K}}(\yb_{K,H})\\
 & =\int\int\log\frac{d\pi_{K,H}(a_{K,H}|\zb_{K,H})}{d\pi_{K,H}(a_{K,H}|\zb_{K,H})}+\log\frac{d\PP_{\Thetab_{1},\Pi_{K}}(\zb_{K,H})}{d\PP_{\Thetab_{2},\Pi_{K}}(\zb_{K,H})}d\pi_{K,H}(a_{K,H}|\zb_{K,H})d\PP_{\Thetab_{1},\Pi_{K}}(\zb_{K,H})\\
= & \int\log\frac{d\PP_{\Thetab_{1},\Pi_{K}}(\zb_{K,H})}{d\PP_{\Thetab_{2},\Pi_{K}}(\zb_{K,H})}\Big(\int d\pi_{K,H}(a_{K,H}|\zb_{K,H})\Big)d\PP_{\Thetab_{1},\Pi_{K}}(\zb_{K,H})\\
= & \int\log\frac{d\PP_{\Thetab_{1},\Pi_{K}}(\zb_{K,H})}{d\PP_{\Thetab_{2},\Pi_{K}}(\zb_{K,H})}d\PP_{\Thetab_{1},\Pi_{K}}(\zb_{K,H})
\end{split}
\]
Recursively, 
\[
\begin{split} & KL(\PP_{\Thetab_{1},\Pi_{K}},\PP_{\Thetab_{2},\Pi_{K}})\\
 & =\int\log\frac{d\PP_{\Thetab_{1},\Pi_{K}}(\zb_{K,H+1})}{d\PP_{\Thetab_{2},\Pi_{K}}(\zb_{K,H+1})}d\PP_{\Thetab_{1},\Pi_{K}}(\zb_{K,H+1})\\
 & \le\frac{\EE_{\Thetab_{1},\Pi_{K}}\big[\Delta_{K,H}^{2}\sigma_{K,H}^{2}]}{2}+\frac{\EE_{\Thetab_{1},\Pi_{K}}\big[\Delta_{K,H}^{3}\big]}{\sqrt{6\kappa}}+\int\log\frac{d\PP_{\Thetab_{1},\Pi_{K}}(\zb_{K,H})}{d\PP_{\Thetab_{2},\Pi_{K}}(\zb_{K,H})}d\PP_{\Thetab_{1},\Pi_{K}}(\zb_{K,H})\\
 & \le\frac{\EE_{\Thetab_{1},\Pi_{K}}\big[\Delta_{K,H-1}^{2}\sigma_{K,H-1}^{2}+\Delta_{K,H}\sigma_{K,H}^{2}]}{2}+\frac{\EE_{\Thetab_{1},\Pi_{K}}\big[\Delta_{K,H-1}^{3}+\Delta_{K,H}^{3}\big]}{\sqrt{6\kappa}}\\
 & \quad+\int\log\frac{d\PP_{\Thetab_{1},\Pi_{K}}(\zb_{K,H-1})}{d\PP_{\Thetab_{2},\Pi_{K}}(\zb_{K,H-1})}d\PP_{\Thetab_{1},\Pi_{K}}(\zb_{K,H-1})\\
 & \vdots\\
 & \le\frac{\EE_{\Thetab_{1},\Pi_{K}}\big[\sum_{k=1}^{K}\sum_{h=1}^{H}\Delta_{k,h}^{2}\sigma_{k,h}^{2}]}{2}+\frac{\EE_{\Thetab_{1},\Pi_{K}}\big[\sum_{k=1}^{K}\sum_{h=1}^{H}\Delta_{k,h}^{3}\big]}{\sqrt{6\kappa}},
\end{split}
\]
which completes the proof. 
\end{proof}

\subsection{Covariance Inequality}

\begin{proposition} \label{prop:var_inequality}(Covariance inequality)
Let $\yb\in\RR^{d}$ denote a vector valued random variable. Then
for any deterministic vector $\ab\in\RR^{d}$ 
\[
\EE\Big[\big(\yb-\EE[\yb]\big)\big(\yb-\EE[\yb]\big)^{\top}\Big]\preceq\EE\Big[\big(\yb-\ab\big)\big(\yb-\ab\big)^{\top}\Big].
\]
\end{proposition}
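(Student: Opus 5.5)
The covariance inequality is the bias–variance decomposition, and I will prove it by expanding around the mean. Set $\mu:=\EE[\yb]$ and write $\yb-\ab=(\yb-\mu)+(\mu-\ab)$. Because $\ab$ is deterministic, $\mu-\ab$ is a constant vector. Expanding the outer product gives
\[
(\yb-\ab)(\yb-\ab)^{\top}=(\yb-\mu)(\yb-\mu)^{\top}+(\yb-\mu)(\mu-\ab)^{\top}+(\mu-\ab)(\yb-\mu)^{\top}+(\mu-\ab)(\mu-\ab)^{\top}.
\]

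Next I take expectations term by term. Linearity gives $\EE[\yb-\mu]=\boldsymbol{0}$, so both cross terms vanish; this uses only that $\mu-\ab$ is constant and can be pulled out of the expectation. The result is the exact identity
\[
\EE\big[(\yb-\ab)(\yb-\ab)^{\top}\big]=\EE\big[(\yb-\mu)(\yb-\mu)^{\top}\big]+(\mu-\ab)(\mu-\ab)^{\top}.
\]

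Finally, the remaining term $(\mu-\ab)(\mu-\ab)^{\top}$ is positive semidefinite, since $\xb^{\top}(\mu-\ab)(\mu-\ab)^{\top}\xb=\big((\mu-\ab)^{\top}\xb\big)^{2}\ge0$ for every $\xb$. This yields the claimed Loewner-order inequality.

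I see no real obstacle. The one point to check is integrability: $\yb$ needs finite second moments. In the paper's application $\yb$ is a basis vector, hence bounded, so this holds. The same argument goes through verbatim conditionally on $\Fcal_{\nu,h}$, which is the form used in the proof of Lemma~\ref{lem:Hessian_bound}, with $\ab$ taken to be $\Fcal_{\nu,h}$-measurable.
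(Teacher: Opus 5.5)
Your proof is correct and is essentially the paper's argument: both show that the difference of the two second-moment matrices is the rank-one matrix $(\EE[\yb]-\ab)(\EE[\yb]-\ab)^{\top}\succeq\Ob$. You get there by expanding $\yb-\ab$ around the mean, while the paper expands both matrices directly; your remarks on square-integrability and on the conditional version with an $\Fcal_{\nu,h}$-measurable $\ab$ match how the result is used in Lemma~\ref{lem:Hessian_bound}.
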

\begin{proof}
\medspace{} For any $\ab\in\RR^{d}$, 
\[
\begin{split}\EE\Big[\big(\yb-\ab\big)\big(\yb-\ab\big)^{\top}\Big]-\EE\Big[\big(\yb-\EE[\yb]\big)\big(\yb-\EE[\yb]\big)^{\top}\Big]= & \ab\ab^{\top}-\EE[\yb]\ab^{\top}-\ab\EE[\yb]^{\top}+\EE[\yb]\EE[\yb]^{\top}\\
= & \big(\ab-\EE[\yb]\big)\big(\ab-\EE[\yb]\big)^{\top}\\
\succeq & \Ob,
\end{split}
\]
which completes the proof. 
\end{proof}

\subsection{Entropic Variational Principle}
\begin{proposition}[Entropic Variational Principle]
\label{lem:variation_principle}
Let $(\Omega, \mathcal{F}, P)$ be a probability space and $X: \Omega \to \mathbb{R}$ 
be a measurable function such that $\mathbb{E}_P[e^X] < \infty$. Let $\mathcal{Q}$ 
be the set of all probability measures on $(\Omega, \mathcal{F})$ that are 
absolutely continuous with respect to $P$. Then,
\begin{equation}
    \log \mathbb{E}_P[e^X] = \sup_{Q \in \mathcal{Q}} \left\{ \mathbb{E}_Q[X] - D_{KL}(Q \| P) \right\}
\end{equation}
where $KL(Q \| P)$ denotes the Kullback-Leibler divergence. Furthermore, 
the supremum is uniquely attained by the Gibbs measure $P^*$ defined by the 
Radon-Nikodym derivative:
\begin{equation}
    \frac{dP^*}{dP} = \frac{e^X}{\mathbb{E}_P[e^X]}
\end{equation}
\end{proposition}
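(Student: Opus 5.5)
The statement is the Entropic Variational Principle (Donsker--Varadhan). I will establish the inequality $\log \EE_P[e^X] \ge \EE_Q[X] - KL(Q\|P)$ for every $Q \in \mathcal{Q}$, and then show that equality holds at the Gibbs measure $P^\ast$. Throughout, write $Z := \EE_P[e^X] \in (0,\infty)$. Note that $Z>0$ because $e^X>0$. Define $P^\ast$ by $dP^\ast/dP = e^X/Z$; this is a probability measure that is mutually absolutely continuous with $P$.

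\textbf{Key identity.} Fix $Q \in \mathcal{Q}$ with $KL(Q\|P)<\infty$. If $KL(Q\|P)=\infty$, the bracket equals $-\infty$ by convention, as long as $\EE_Q[X]$ is not $+\infty$; I will handle that case below. Since $Q \ll P$ and $P \ll P^\ast$, we have $Q\ll P^\ast$. The chain rule for Radon--Nikodym derivatives then gives
\[
\log\frac{dQ}{dP^\ast} = \log\frac{dQ}{dP} - X + \log Z \quad Q\text{-a.s.}
\]
Integrating against $Q$ yields
\[
KL(Q\|P^\ast) = KL(Q\|P) - \EE_Q[X] + \log Z,
\]
equivalently
\[
\EE_Q[X] - KL(Q\|P) = \log Z - KL(Q\|P^\ast).
\]
Because $KL(Q\|P^\ast)\ge 0$ (Gibbs' inequality, i.e.\ Jensen's inequality applied to the convex function $t\log t$), we get $\EE_Q[X]-KL(Q\|P)\le \log Z$. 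Equality holds if and only if $KL(Q\|P^\ast)=0$, which happens if and only if $Q=P^\ast$. This gives both the supremum value and the uniqueness of the maximizer.

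\textbf{Attainment.} I verify directly that $P^\ast\in\mathcal{Q}$ and that $KL(P^\ast\|P)=\EE_{P^\ast}[X]-\log Z$ is finite. Then the bracket evaluated at $P^\ast$ equals $\log Z$ exactly.

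\textbf{Main obstacle: integrability.} The identity requires $\EE_Q[X]$ to be well defined, and one must not subtract infinities. I would first treat the negative part. Since $e^{X}\ge 0$ and $Z<\infty$, the log-sum inequality, or Young's inequality $xy\le e^x + y\log y - y$ applied with $y = dQ/dP$, gives
\[
\EE_Q[X^+] \le KL(Q\|P) + Z.
\]
Hence $\EE_Q[X^+]<\infty$ whenever $KL(Q\|P)<\infty$, so $\EE_Q[X]\in[-\infty,\infty)$ is well defined. If $\EE_Q[X]=-\infty$, the inequality is trivial. Otherwise every term in the identity is finite and the computation above is legitimate. For $P^\ast$, finiteness of $\EE_{P^\ast}[X^+]$ follows because $xe^x$ is bounded by $Ce^{2x}$ only loosely; I would instead use the fact that $\EE_{P^\ast}[X]\le$ the expression derived from the identity, and truncate $X$ at level $n$, applying monotone convergence to justify the equality in the limit.
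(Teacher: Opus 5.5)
Your Gibbs-identity argument is the standard proof and it is correct as far as it goes. The identity $\mathbb{E}_Q[X]-KL(Q\|P)=\log Z-KL(Q\|P^\ast)$ holds for every $Q$ with $KL(Q\|P)<\infty$ and $\mathbb{E}_Q[X]>-\infty$. Together with your Young-inequality bound $\mathbb{E}_Q[X^+]\le KL(Q\|P)+Z$, it gives the upper bound $\sup\le\log Z$ and uniqueness of the maximizer among finite-divergence $Q$. The paper gives no proof of this proposition. It only uses the upper-bound direction, in Lemma~\ref{lem:anti}, with $Q$ the conditional law of $S_k$ given $S_k>\theta$; that $Q$ has $KL=\log(1/\PP(S_k>\theta))<\infty$, so your argument covers everything the paper needs.

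\textbf{The gap is in the attainment step.} You assert that $KL(P^\ast\|P)=\mathbb{E}_{P^\ast}[X]-\log Z=\mathbb{E}_P[Xe^X]/Z-\log Z$ is finite. However, $\mathbb{E}_P[e^X]<\infty$ does not imply $\mathbb{E}_P[X^+e^X]<\infty$. For example, if $e^X$ has density proportional to $y^{-2}(\log y)^{-2}$ on $[e,\infty)$, then $\mathbb{E}_P[e^X]<\infty$ while $\mathbb{E}_P[Xe^X]=\infty$.

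In that case $\mathbb{E}_{P^\ast}[X]=KL(P^\ast\|P)=+\infty$, so the bracket at $P^\ast$ is $\infty-\infty$. By your own identity, every $Q$ with $KL(Q\|P)<\infty$ then has value $\log Z-KL(Q\|P^\ast)<\log Z$ strictly, since such $Q$ cannot equal $P^\ast$. So the supremum is attained by nothing. Truncation plus monotone convergence cannot produce finiteness that is not there; at best it recovers the value of the supremum. The ``Furthermore'' clause as stated therefore needs the extra hypothesis $\mathbb{E}_P[X^+e^X]<\infty$, equivalently $KL(P^\ast\|P)<\infty$, and you should state it.

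\textbf{The lower bound also needs a separate argument.} Without attainment, your proof of $\sup\ge\log Z$ disappears. Supply it by approximation. Set $X_n:=X\wedge n$, $Z_n:=\mathbb{E}_P[e^{X_n}]$ and $dQ_n/dP=e^{X_n}/Z_n$. Then $KL(Q_n\|P)=\mathbb{E}_{Q_n}[X_n]-\log Z_n<\infty$ and $\mathbb{E}_{Q_n}[X^+]\le e^n\,\mathbb{E}_P[X^+]/Z_n<\infty$. Hence $\mathbb{E}_{Q_n}[X]-KL(Q_n\|P)=\mathbb{E}_{Q_n}[X-X_n]+\log Z_n\ge\log Z_n\uparrow\log Z$.

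\textbf{One convention is left open.} You say you will handle the case $KL(Q\|P)=\infty$ with $\mathbb{E}_Q[X]=+\infty$, but you never do. This case can occur, since your Young bound only forces $KL=\infty$ there, and the bracket is then undefined. Adopt the convention that the supremum runs over $Q$ with $KL(Q\|P)<\infty$.
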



\end{document}